\documentclass[preprint,nonblindrev]{informs4}

\OneAndAHalfSpacedXI

\usepackage{amsmath,amssymb,amsfonts,mathtools}
\usepackage{algorithm,algpseudocode}
\usepackage{booktabs,enumitem}
\usepackage{graphicx}
\usepackage{endnotes}
\usepackage{array}
\usepackage{tabularx}
\let\footnote=\endnote

\usepackage{natbib}
\usepackage{multibib}
\newcites{EC}{References}

\makeatletter
\renewcommand{\bibcite}[2]{%
  \@ifundefined{b@#1\@extra@binfo}
    {}
    {%
      \def\NAT@temp{#2}%
      \expandafter\ifx
        \csname b@#1\@extra@binfo\endcsname\NAT@temp
      \else
        \NAT@citemultiple
        \PackageWarningNoLine{natbib}
          {Citation `#1' multiply defined}%
      \fi
    }%
  \global\@namedef{b@#1\@extra@binfo}{#2}%
}
\makeatother

\usepackage[hidelinks]{hyperref}
\bibpunct[, ]{(}{)}{,}{a}{}{,}

\def\bibfont{\small}
\def\bibsep{\smallskipamount}

\newcommand{\R}{\mathbb{R}}
\newcommand{\op}{\mathrm{op}}
\newcommand{\E}{\mathbb{E}}
\newcommand{\diag}{\operatorname{diag}}
\newcommand{\vecc}{\operatorname{vec}}
\newcommand{\tr}{\operatorname{tr}}
\newcommand{\T}{\mathcal{T}}
\newcommand{\C}{\mathcal{C}}

\newcommand{\MT}{\mathcal{M}_{\T}}
\newcommand{\MC}{\mathcal{M}_{\C}}
\newcommand{\N}{\mathcal{N}}
\newcommand{\core}{\mathrm{core}}
\newcommand{\con}{\mathrm{con}}
\newcommand{\mask}{\mathrm{mask}}
\newcommand{\MTClass}{\mathrm{MT}}

\newcommand{\od}{\odot}
\newcommand{\os}{\oslash}
\newcommand{\indep}{\perp\!\!\!\perp}
\newcommand{\norm}[1]{\left\lVert #1\right\rVert}

\usepackage{xspace}
\newcommand{\OU}{\textsc{ou}\xspace}

\newcommand{\DID}{\textsc{did}\xspace}
\newcommand{\SC}{\textsc{sc}\xspace}
\newcommand{\SDID}{\textsc{sdid}\xspace}

\newcommand{\NMC}{\textsc{nmc}\xspace}
\newcommand{\TFI}{\textsc{tfi}\xspace}
\newcommand{\ConvDiff}{\textsc{conv-diff}\xspace}
\newcommand{\TuckerDiff}{\textsc{tucker-diff}\xspace}
\newcommand{\CFTDiff}{\textsc{cft-diff}\xspace}

\newcommand{\CompactDisplaySkips}{%
  \setlength{\abovedisplayskip}{6pt plus 2pt minus 2pt}%
  \setlength{\belowdisplayskip}{6pt plus 2pt minus 2pt}%
  \setlength{\abovedisplayshortskip}{4pt plus 2pt minus 1pt}%
  \setlength{\belowdisplayshortskip}{6pt plus 2pt minus 2pt}%
}

\TheoremsNumberedBySection
\EquationsNumberedBySection
\makeatletter
\renewcommand{\ECEquationsNumberedBySection}{%
  \setcounter{equation}{0}%
  \def\theequation{EC.\arabic{section}.\arabic{equation}}%
}

\renewcommand{\subsection}{%
  \@startsection{subsection}{2}{\z@}%
  {-6pt plus -1pt minus -1pt}
  {2pt}
  {\TEN\sffamily\bfseries\RAGG}%
}
\makeatother
\JOURNAL{}
\makeatletter
\long\def\theARTICLETOPLEFT{%
{\tabcolsep0pt\fs.10.12.\bf\sffamily%
\begin{tabular}[t]{@{}l}%
\rule{0pt}{14pt}\\[-4pt]
{\fontsize{7}{8}\selectfont\strut}\\
\HD{25}{0}\strut\\
\vphantom{manuscript no.}\end{tabular}}}%
\def\theARTICLETOPRIGHT{\llap{{\tabcolsep0pt\bf\sffamily%
\begin{tabular}[t]{r@{}}%
\text{}\\[-22pt]
{\fs.12.14.\strut}\\[0pt]
{\fs.7.9.\strut}\\
{\fs.7.9.\strut}
\end{tabular}}}}%
\makeatother
\RRHSecondLine{}
\LRHSecondLine{}
\MANUSCRIPTNO{}
\RUNAUTHOR{Kong et al.}
\RUNTITLE{Conditional Tensor Diffusion}
\TITLE{Conditional Tensor Diffusion: Distributional Counterfactual Learning and Inference}

\ARTICLEAUTHORS{%
\AUTHOR{Xinbing Kong,\textsuperscript{a} Zeyu Li,\textsuperscript{a} Junfan Mao,\textsuperscript{a} Bin Wu\textsuperscript{b}}
\AFF{\textsuperscript{a}Southeast University; \textsuperscript{b}University of Science and Technology of China}
\AFF{{\bf Contact:} xinbingkong@126.com (XK), zeyuli@seu.edu.cn (ZL), maojf@seu.edu.cn (JM), bin.w@ustc.edu.cn (BW)}
}

\ABSTRACT{
Causal inference guides operational and managerial decisions but remains challenging in high-dimensional panel or tensor settings, where decisions may depend on the joint conditional distribution of missing control outcomes. We develop \emph{Counterfactual Tucker Diffusion} (\CFTDiff), which integrates the treatment mask and latent Tucker structure into conditional diffusion to recover this distribution given observed control outcomes through efficient nonlinear score learning in a low-dimensional core. The masked Tucker score preserves dependence across tensor modes while reducing the dimension of nonlinear score learning from the product of mode dimensions to the much smaller product of Tucker ranks. We establish high-probability error bounds for conditional score estimation that depend on the Tucker ranks, largest mode dimension, and the factor-strength-adjusted number of missing outcomes, and show how these bounds translate into recovery guaranties for the conditional distribution of the missing control outcomes. Across missing rates, simulations show more accurate point recovery than common causal panel and matrix/tensor completion methods; comparisons with nested diffusion specifications further demonstrate the gains from masked conditioning and Tucker dimension reduction. In Norway's iFlex experiment, \CFTDiff recovers missing outcomes more accurately than competing methods; when applied to causal analysis, its estimated conditional distributions yield counterfactual prediction intervals and target-attainment probabilities, allowing pricing interventions to be evaluated by demand-reduction magnitude and reliability.
}
\KEYWORDS{Diffusion model; counterfactual; tensor completion; generative AI; distributional recovery.}

\begin{document}
\maketitle
\fontsize{11pt}{16pt}\selectfont
\CompactDisplaySkips

\section{Introduction}\label{sec:intro}

Causal inference is central to pricing, policy evaluation, and intervention design, yet the outcome needed to evaluate an intervention is missing where treatment occurs. At a treated entry in a panel, the intervention outcome is observed, but the corresponding control outcome is missing. Point estimates of the missing control outcomes may be sufficient for reporting an average effect. They are not sufficient when a decision maker must assess the probability of meeting a demand, service, or risk target, quantify uncertainty in the missing outcomes, or compare the reliability of alternative interventions. The central statistical problem is therefore to recover the joint conditional distribution of the missing control outcomes given the observed control outcomes, rather than a single point estimate.

Recovering this distribution is difficult in high-dimensional panel or tensor settings. Outcomes may be indexed by units, dates, hours, products, or locations, so the full dimension grows as the product of the mode dimensions. A tensor formulation preserves these indexing dimensions and their dependence, while including vector and matrix settings. Treatment determines where the control outcomes are missing, often producing a structured missingness pattern. The missing outcomes may remain dependent across units, time periods, and other dimensions. Estimating each missing outcome separately does not capture this dependence and therefore cannot evaluate probabilities involving multiple missing outcomes. Directly learning a conditional distribution in the tensor space is also statistically and computationally demanding. A useful method must therefore incorporate the treatment mask, use information from the observed control outcomes, and exploit the intrinsic structure of the control-outcome tensor.

We develop \emph{Counterfactual Tucker Diffusion} (\CFTDiff), a masked conditional diffusion model for recovering the joint conditional distribution of the missing control outcomes. Its central construction is a masked Tucker score that incorporates the treatment pattern and observed control outcomes while confining nonlinear score learning to a low-dimensional Tucker core. The method therefore provides joint conditional generation rather than point recovery and avoids learning an unrestricted nonlinear score in the full tensor space. During training, the treatment mask is synthetically applied to complete training tensors, allowing the model to learn the distribution of the masked outcomes given the observed outcomes. During causal analysis, the observed control outcomes remain fixed and only the missing control outcomes are generated; observed treated outcomes enter subsequently when causal contrasts are formed. Repeated conditional draws provide point estimates, quantiles, counterfactual prediction intervals, target-attainment probabilities, and other summaries from a single learned distribution.

The key structural result explains why this conditional distribution can be learned efficiently. Mode-specific linear encoders combine the noised outcomes in the treated region and the observed control outcomes into a core-level statistic, and a Tucker decoder maps the learned core representation back to the original coordinates. Only the conditional map within the core is nonlinear. Let $D$ denote the number of modes, $p_d$ the dimension of mode $d$, and $r_d$ its Tucker rank. The full tensor has $p=\prod_{d=1}^{D}p_d$ coordinates, whereas the Tucker core has only $r=\prod_{d=1}^{D}r_d$ coordinates, with $r\ll p$. Nonlinear learning therefore operates in dimension $r$ rather than $p$, while the separately parameterized loading matrices contribute through $p_{\max}=\max_d p_d$ rather than the product $p$.

We establish nonasymptotic, high-probability error bounds for conditional-score estimation. Let $n$ denote the number of training tensors. The bound separates the errors from learning the nonlinear core regression and estimating the mode-specific loading spaces. The dominant sample-size term for the nonlinear component is $n^{-2/(r+5)}$, whose exponent depends on the intrinsic core dimension $r$ rather than the full tensor dimension $p$. Because $r\ll p$ under low Tucker rank, this rate yields more accurate score estimation from the same training sample size than a rate governed by the full tensor dimension. Estimating the loading spaces contributes $p_{\max}n^{-(r+3)/(r+5)}$, so this component depends on the largest mode dimension $p_{\max}$ rather than the product of all mode dimensions $p$. The number of missing control outcomes $p_{\T}$ and the strength of the low-rank signal $\beta$ enter through the effective dimension $d_{\T,\beta}=r+p_{\T}p^{-\beta}$. Thus, the contribution of the missing region enters as $p_{\T}p^{-\beta}$, rather than $p_{\T}$ itself, and remains bounded when $p_{\T}\lesssim p^\beta$ and the Tucker ranks are fixed. We further show that conditional-score accuracy transfers to recovery of the conditional distribution of the missing outcomes. Under the stated growth and regularity conditions, prediction intervals constructed from samples drawn from the estimated conditional distribution attain nominal conditional coverage as the training and generation sample sizes increase, without requiring a separate Gaussian approximation or asymptotic variance estimation.

Simulations with known conditional distributions show that \CFTDiff improves point recovery over the causal panel and matrix/tensor completion methods considered. Comparisons with two nested diffusion specifications attribute the additional gains to Tucker dimension reduction and conditioning on the observed control outcomes through the treatment mask; \CFTDiff also improves distributional recovery and produces sharper prediction intervals while retaining coverage close to the nominal level.

We apply \CFTDiff to Norway's iFlex dynamic-pricing experiment. Before estimating demand responses, we artificially mask fully observed no-policy (control) outcomes and evaluate recovery under switchback, staggered-adoption, and simultaneous-adoption designs. \CFTDiff attains the lowest point-recovery errors among the causal panel and matrix/tensor completion methods and the nested diffusion specifications, and it achieves more accurate distributional recovery than both nested diffusion specifications across the treatment patterns and missing proportions considered. For intervention days, the estimated conditional distributions yield counterfactual prediction intervals, characterize variation in demand reductions across interventions, and quantify the probability that a price signal achieves a specified peak-demand reduction. The results show that price signals with similar average demand reductions can differ substantially in their probability of achieving a specified reduction target. This information allows price signals to be compared by both their average demand reduction and how reliably they achieve a desired peak reduction.

\subsection{Contributions}\label{subsec:contributions}

We make four contributions. First, we formulate counterfactual distribution recovery for general multi-way outcome arrays and develop a conditional diffusion framework for this problem. The formulation includes vectors and matrices as special cases, thereby covering conventional panels as well as higher-order tensors. \CFTDiff incorporates prespecified intervention masks into training, allowing the model to learn the distribution of masked outcomes given the observed control outcomes. Because the construction does not rely on random missingness or a particular treatment pattern, it accommodates switchback, staggered-adoption, simultaneous-adoption, and other structured intervention patterns under the stated conditions. During counterfactual generation, the observed control outcomes remain fixed and only the missing control outcomes are generated, producing samples from their estimated joint conditional distribution. Existing tensor-based causal methods and common causal panel and completion methods primarily target point counterfactuals, common components, or low-dimensional causal summaries. By recovering the conditional distribution, \CFTDiff also permits distributional comparisons and provides quantiles, counterfactual prediction intervals, target-attainment probabilities, and other summaries without requiring a separate model for each quantity.

Second, we introduce a masked Tucker score that makes high-dimensional conditional generation for tensor data statistically tractable and computationally efficient. Generic diffusion networks learn nonlinear scores in the full data space, while conventional low-rank completion methods do not recover a joint conditional distribution. Under the control-outcome factor model, we show that the conditional score admits a representation in which mode-wise linear maps preserve the tensor structure and nonlinear learning takes place through a core-dimensional conditional regression. The neural approximation burden, therefore, depends on the Tucker core dimension $r$, rather than the full tensor dimension $p$. Our nonasymptotic score bounds formalize this intrinsic-dimensional advantage: nonlinear estimation is governed by the Tucker core, loading-space estimation by $p_{\max}$, and the joint effect of the size of the treated region and factor strength by $d_{\T,\beta}$. The result provides a statistical foundation for using low-rank Tucker structure in conditional diffusion.

Third, we establish end-to-end guaranties that carry conditional-score accuracy through to the recovery of the missing-outcome distribution, weighted causal summaries, and counterfactual prediction intervals. We derive high-probability bounds for the recovery of the joint conditional distribution of the missing control outcomes and extend these bounds to weighted summaries of the missing outcomes. Combined with the corresponding observed treated outcomes, these summaries yield familiar causal estimands (e.g., average treatment effects). We further establish coverage guaranties for counterfactual prediction intervals derived from the estimated conditional distribution and explicitly account for the additional error arising from a finite generation sample. The interval theory requires neither a Gaussian approximation nor asymptotic variance estimation. These intervals quantify uncertainty about the missing control outcomes conditional on the observed control outcomes. The results therefore carry score-learning accuracy through to the causal estimands and counterfactual uncertainty measures used in subsequent analysis.

Finally, simulations and the iFlex experiment identify the gains from the two components of \CFTDiff and demonstrate the decision value of recovering a conditional distribution. Simulations with known counterfactual distributions compare \CFTDiff with established causal panel and matrix/tensor completion methods in point recovery and use distributional measures to evaluate \CFTDiff and its nested diffusion specifications beyond point estimates. The two nested specifications separately identify the gains from Tucker dimension reduction and conditioning on the observed control outcomes. We then validate the method on observed no-policy outcomes in the iFlex data under switchback, staggered-adoption, and simultaneous-adoption patterns. In the causal application, the recovered distribution is used to compare dynamic-pricing signals in terms of both the magnitude of demand reduction and the probability of attaining specified peak-reduction targets. The results show that price signals with similar estimated average effects can differ materially in their reliability, illustrating the additional decision-relevant information provided by counterfactual distribution recovery.

\subsection{Related Literature}\label{subsec:related_literature}

\paragraph{Causal panel methods and counterfactual recovery.}
Existing causal panel methods address important parts of this problem, but their primary targets are different. Difference-in-differences and event-study estimators recover average effects under restrictions such as parallel trends and no anticipation, with modern procedures allowing heterogeneous treatment effects \citep{borusyak2024revisiting}. Synthetic-control methods use pretreatment outcomes and a suitable donor pool to construct point estimates under stable preintervention relationships. Synthetic difference-in-differences combines these comparison strategies through outcome balancing and time and unit weighting \citep{arkhangelsky2021synthetic}. Low-rank causal panel methods represent the untreated outcome surface using interactive factors and view treated coordinates as missing entries \citep{athey2021matrix,bai2021matrix}. Subsequent work permits nonrandom or more general observation patterns and develops entrywise inference \citep{choi2024matrix,xiong2023large,duan2024factor}. A small recent literature uses tensor completion directly for causal recovery with treatment histories, multiple outcomes, or multiple exposures \citep{mandal2019weighted,auerbach2022tensor,zhen2024nonnegative,gao2025causal,zhou2026spatial}. Synthetic interventions extend this logic to several treatments and tensor-valued potential outcomes and estimate potential outcomes or low-dimensional causal summaries \citep{agarwal2026synthetic}. \CFTDiff shares the potential-outcome and low-rank viewpoints but targets a different statistical object: the joint conditional distribution of the missing control outcomes given the observed control outcomes. The treatment design and identifying assumptions determine the causal interpretation of the resulting comparisons; \CFTDiff addresses estimation of this conditional distribution.

\paragraph{Matrix and tensor completion.}
Matrix and tensor completion methods exploit low rank structures to recover missing entries, but their guaranties often depend on how the observed coordinates cover the matrices or tensors \citep{xia2019polynomial,xia2021statistically,xia2021statistical}. Causal panels are especially demanding because treatment creates structured and potentially nonrandom observation patterns rather than dispersed incidental missingness. Factor methods have been adapted to these patterns, but their outputs remain point imputations or entrywise sampling distributions. Treatment designs can also create very different patterns, including switchback schedules and staggered or simultaneous adoption \citep{bojinov2023design,chen2026efficient}. Tensor factor models preserve mode-specific dependence in matrix sequences and tensors \citep{wang2019factor,chang2023modelling,chang2026identification,chang2026cp,han2024tensor}, and tensor time-series imputation allows broad missing patterns \citep{cen2025tensor}. \CFTDiff uses the same multilinear structure for a different purpose. It incorporates the treatment mask into training and uses the observed control outcomes to generate the missing control outcomes, thereby providing their joint conditional distribution in addition to point estimates. The fixed-mask formulation does not require a random-missingness model for estimation. 

\paragraph{Marginal counterfactual distributions.}
A separate literature studies how interventions change outcome distributions. Changes-in-changes models identify nonlinear and quantile effects under distributional restrictions \citep{athey2006changes}; counterfactual distribution methods provide inference for population distributions constructed from conditional models \citep{chernozhukov2013inference}; and distributional synthetic controls match the distribution of an aggregate treated unit over time \citep{gunsilius2023distributional}. Related work studies treatment-effect risk, formalized through the conditional value at risk of the individual treatment-effect distribution, and develops bounds and inference based on the conditional average treatment effect \citep{kallus2023treatment}. Bayesian and semiparametric procedures characterize posterior or sampling uncertainty for low-dimensional causal parameters such as the average treatment effect \citep{breunig2025double}. These methods answer questions about marginal counterfactual distributions, distributional treatment effects, treatment-effect risk, or estimator uncertainty. Our target is the joint conditional distribution of the missing control outcomes given the observed control outcomes. Recovering this distribution provides prediction intervals for individual and aggregate counterfactuals, tail probabilities, and probabilities of joint target attainment, enabling interventions to be evaluated by both effect magnitude and reliability.

\paragraph{Diffusion models and conditional generation.}
Diffusion models can simulate from complex multivariate distributions by estimating scores along a noise-perturbation path and running the reverse dynamics \citep{song2019generative,ho2020denoising,song2021score}. Conditional score models have been used for probabilistic time-series imputation \citep{tashiro2021csdi} and conditional response generation for bootstrap inference \citep{chang2026deep}. Causal diffusion methods have generated structural counterfactuals for images or learned individual potential-outcome distributions conditional on covariates, with the latter using an orthogonal loss to address treatment-selection bias \citep{sanchez2022diffusion,ma2024diffpo}. General conditional-diffusion theory and recovery results under low-dimensional structure supply important foundations \citep{fu2024unveil,chen2023score,fan2025optimal}. Related work integrates factor structure into high-dimensional financial simulation \citep{chen2025diffusion}, develops diffusion methods for heavy-tailed and conditional simulation in operations research \citep{liu2026learning}, and studies unconditional Tucker diffusion \citep{guo2026tucker}. These studies establish the feasibility of using diffusion models for related missing-data and causal tasks. For the target studied here, however, an unconditional diffusion model cannot directly exploit the observed control outcomes. A general conditional diffusion model permits such conditioning, but does not by itself determine how the treatment pattern enters training and generation or how the resulting distribution is linked to the missing control outcomes. Learning the conditional score directly in the full tensor space would also require a high-dimensional nonlinear model whose input and output dimensions grow with the total number of tensor entries. Building on these foundations, \CFTDiff incorporates the treatment mask and observed control outcomes into a conditional score for the missing control outcomes, with nonlinear learning conducted in the Tucker core.

\paragraph{Organization.}
Section~\ref{sec:main-method} formulates counterfactual distribution recovery for tensor data, develops \CFTDiff, and establishes its score-learning guaranties. Section~\ref{sec:distributional_recovery} studies recovery of the conditional distribution of the missing control outcomes, weighted summaries, and prediction intervals. Sections~\ref{sec:simulation} and~\ref{sec:empirical} report the simulation and empirical studies. Section~\ref{sec:conclusion} concludes. The Appendix contains complete proofs and additional results.

\paragraph{Notation.} 
For random elements $X$ and $Y$, $\mathcal L(X)$ denotes the probability law of $X$, $\mathcal L(X\mid Y)$ denotes its conditional law given $Y$, and $\mathcal L(X\mid Y=y)$ denotes the conditional law evaluated at $Y=y$. For $m\in\mathbb N$, let $[m]=\{1,\ldots,m\}$. The operator $\vecc(\cdot)$ denotes vectorization under a fixed convention. The symbol $\mathbf 1$ denotes an all-ones array whose dimensions are determined by context. For a vector $g$, $\operatorname{reshape}\{g;(m_1,\ldots,m_D)\}$ denotes its rearrangement into an $m_1\times\cdots\times m_D$ tensor according to the vectorization convention used by $\vecc$. We write $\|\cdot\|_2$, $\|\cdot\|_F$, and $\|\cdot\|_{\op}$ for the Euclidean norm of a vector, the Frobenius norm of a matrix or tensor, and the operator norm of a matrix, respectively. The notation $O(\cdot)$ suppresses constants depending only on fixed model parameters, whereas $\widetilde{\mathcal O}(\cdot)$ additionally suppresses polylogarithmic factors in the asymptotic quantities specified in the result. In this paper, $\od$ and $\os$ denote entrywise multiplication and division. We write $\operatorname{KL}(\mu\|\nu)$ for relative entropy and $\operatorname{TV}(\mu,\nu)=\sup_A|\mu(A)-\nu(A)|$ for total variation distance. For probability measures $\mu$ and $\nu$ on $\mathbb R$, we use the bounded--Lipschitz metric $d_{\mathrm{BL}}(\mu,\nu):=\sup_{\substack{\|\varphi\|_\infty\leq1\\ \operatorname{Lip}(\varphi)\leq1}}\left|\int \varphi\,\mathrm d\mu-\int \varphi\,\mathrm d\nu\right|$, where $\operatorname{Lip}(\varphi)$ denotes the Lipschitz constant of $\varphi$.



\section{Diffusion Models: Preliminaries}\label{sec:Preliminaries}

Diffusion models represent a target distribution through two linked stochastic processes: a known forward process that gradually perturbs data into noise and a reverse-time process that transforms noise back into data. The reverse-time process depend on the scores of the noise-perturbed distributions along the forward path; these scores must be learned from data. By outlining the basic principles of diffusion models in a simple scalar setting, this section provides the background needed to understand our framework for tensor generation developed in the paper.

\subsection{Forward Diffusion and Reverse-Time Generation}\label{subsec:forward_reverse_diffusion}

\paragraph{Forward diffusion.}
Let $X_0\in\mathbb R$ have data distribution $P_0$. We consider the variance-preserving Ornstein--Uhlenbeck (\OU) diffusion,\footnote{Alternative forward processes, including variance-exploding and sub-variance-preserving diffusions, are also possible. We adopt the variance-preserving \OU process because its transition law is available in closed form and it drives the data distribution toward a standard Gaussian reference distribution, thereby simplifying score estimation and theoretical analysis. It is also a standard and widely used specification in diffusion modeling \citep{ho2020denoising,song2021score}.} a continuous-time formulation of the Gaussian noising mechanism used in diffusion models \citep{ho2020denoising,song2021score}:
\begin{equation}
    \mathrm dX_t=-\frac{1}{2}\eta(t)X_t\,\mathrm dt+\eta(t)^{1/2}\,\mathrm dW_t,\qquad X_0\sim P_0,\qquad t\in[0,T],
    \label{eq:scalar_forward_sde}
\end{equation}
where $\{W_t\}_{t\geq0}$ is a standard Wiener process and $\eta(t)>0$ is a deterministic noise schedule. Define $\alpha_t=\exp(-\frac{1}{2}\int_0^t\eta(v)\,\mathrm dv)$ and $h_t=1-\alpha_t^2$. For each $t$, the linear structure of \eqref{eq:scalar_forward_sde} gives the explicit perturbation representation
\begin{equation*}
    X_t=\alpha_tX_0+h_t^{1/2}Z_t,\qquad Z_t\sim\mathcal N(0,1),\qquad Z_t\indep X_0.
\end{equation*}

Let $q_t(\cdot\mid x_0)$ denote the Gaussian transition density of $X_t$ conditional on $X_0=x_0$. The corresponding time-$t$ marginal density is
\[
    p_t(x)=\int_{\mathbb R}q_t(x\mid x_0)\,P_0(\mathrm dx_0),\qquad P_t=\mathcal L(X_t).
\]
The distinction between $q_t$ and $p_t$ is important. The transition density $q_t(\cdot\mid x_0)$ is the known Gaussian perturbation kernel selected by the modeler, whereas $p_t$ is the generally unknown marginal density obtained by mixing this kernel over the data distribution $P_0$. Although $q_t$ remains Gaussian, its mean and variance vary with $t$; the marginal density $p_t$ need not belong to a fixed parametric family, and its form generally changes with $t$.

As $t$ increases, the contribution of $\alpha_tX_0$ diminishes and the Gaussian component becomes dominant. Consequently, when $\alpha_T$ is sufficiently small, the terminal distribution $P_T$ is close to the standard normal distribution $\mathcal N(0,1)$.

\paragraph{Reverse-time generation.}
For $t>0$, define the time-$t$ score of the marginal distribution $P_t$ by $s_t(x):=\nabla_x\log p_t(x)$. Thus, the score used in the reverse-time process is the gradient of the log marginal density $p_t$, not the score of the Gaussian transition kernel $q_t(\cdot\mid x_0)$.

Under standard regularity conditions, the forward diffusion admits a reverse-time representation \citep{anderson1982reverse,song2021score}. Using $u$ to denote time elapsed in the reverse direction, the exact reverse-time process satisfies
\begin{equation}
\begin{aligned}
    \mathrm dX_u^{\leftarrow}=\biggl[\frac{1}{2}\eta(T-u)X_u^{\leftarrow}+\eta(T-u)s_{T-u}
        \bigl(X_u^{\leftarrow}\bigr)\biggr]\mathrm du+\eta(T-u)^{1/2}\,\mathrm d\overline W_u,\qquad u\in[0,T-t_0],
    \label{eq:scalar_reverse_sde}
\end{aligned}
\end{equation}
where $\{\overline W_u\}_{u\geq0}$ is a standard Wiener process and $t_0>0$ is a small early-stopping time. If
$X_0^{\leftarrow}\sim P_T$, then $X_u^{\leftarrow}\sim P_{T-u}$ for $u\in[0,T-t_0]$, and hence $X_{T-t_0}^{\leftarrow}\sim P_{t_0}$.

The forward process therefore provides analytically noised observations for learning, whereas the reverse-time process provides the mechanism for generation.\footnote{The reverse-time equality is distributional: the process reproduces the forward marginal distributions but does not retrace an individual forward sample path.} Once the forward coefficients are specified, the only unknown quantity in the reverse drift is the time-dependent marginal score.

\subsection{Conditional Scores, Denoising Score Matching, and Generation}
\label{subsec:conditional_score_dsm}

\paragraph{Conditional target and reverse dynamics.}
Let $C$ denote observed conditioning information, and suppose that the target is the conditional distribution $P_0(\,\cdot\mid c)=\mathcal L(X_0\mid C=c)$. Conditional diffusion perturbs the target variable $X_0$ while leaving the conditioning information unchanged. Thus,
\[
    X_t=\alpha_tX_0+h_t^{1/2}Z_t,\qquad Z_t\sim\mathcal N(0,1),\qquad Z_t\indep(X_0,C).
\]
For a realized condition $C=c$, the conditional time-$t$ density is $p_t(x\mid c)=\int_{\mathbb R}q_t(x\mid x_0)\,P_0(\mathrm dx_0\mid c)$. It is therefore a conditional mixture of the known perturbation kernel. Its score is $s_t(x\mid c) =\nabla_x\log p_t(x\mid c)$. Both arguments are essential: the functional form of $p_t(\cdot\mid c)$, and hence that of $s_t(\cdot\mid c)$, generally changes with the diffusion time $t$ and the conditioning value $c$.

Conditional on $C=c$, replacing the marginal score in \eqref{eq:scalar_reverse_sde} with $s_t(x\mid c)$ gives the exact conditional reverse-time process:
\begin{equation}
\begin{aligned}
    \mathrm dX_u^{\leftarrow}=\biggl[\frac{1}{2}\eta(T-u)X_u^{\leftarrow}+\eta(T-u)s_{T-u}\bigl(X_u^{\leftarrow}\mid c\bigr)\biggr]\mathrm du+\eta(T-u)^{1/2}\,\mathrm d\overline W_u,\qquad u\in[0,T-t_0].
    \label{eq:scalar_conditional_reverse}
\end{aligned}
\end{equation}
If $X_0^{\leftarrow}\sim P_T(\,\cdot\mid c)$, then $X_u^{\leftarrow}\sim P_{T-u}(\,\cdot\mid c)$ and $X_{T-t_0}^{\leftarrow}\sim P_{t_0}(\,\cdot\mid c)$. For each fixed $c$, the term $\alpha_TX_0$ becomes negligible when $T$ is sufficiently large. Hence, $P_T(\cdot\mid c)$ is close to the standard normal distribution $\mathcal N(0,1)$, which supplies a tractable initialization for conditional generation.

\paragraph{Denoising score matching.}
The conditional reverse process cannot be implemented directly because
$p_t(\cdot\mid c)$, and therefore its score, is unknown. For a candidate score function $s(x,c,t)$, the natural population score-matching criterion is
\begin{equation}
    \mathcal L_{\mathrm{SM}}(s)=\int_{t_0}^T \lambda(t)\,\mathbb E\left[\left(s(X_t,C,t)-s_t(X_t\mid C)\right)^2\right]\mathrm dt,
    \label{eq:scalar_conditional_score_matching}
\end{equation}
where $\lambda(t)>0$ and $\int_{t_0}^T \lambda(t)\,\mathrm dt=1$.\footnote{Throughout, a time subscript in score denotes the population counterpart, whereas the final time argument denotes a candidate or estimated score function.} The expectation is taken under the joint distribution of $(X_0,C)$ and the forward perturbation.

The forward perturbation kernel, by contrast, is known. Since $q_t(\cdot\mid x_0)$ is the density of $\mathcal N(\alpha_tx_0,h_t)$, its score (transition score) is available in closed form:
\begin{equation}
    \nabla_{x_t}\log q_t(x_t\mid x_0)=\frac{\alpha_tx_0-x_t}{h_t},\qquad\nabla_{X_t}\log q_t(X_t\mid X_0)=-\frac{Z_t}{h_t^{1/2}}.
    \label{eq:scalar_transition_score}
\end{equation}
The second expression follows by substituting $X_t=\alpha_tX_0+h_t^{1/2}Z_t$ into the first. Because $Z_t$ is independent of $C$ conditional on $X_0$, the conditional denoising identity gives $s_t(x\mid c)=\mathbb E\left[\frac{\alpha_tX_0-x}{h_t}\,\middle|\,X_t=x,\ C=c\right]$. In other words, the conditional marginal score is the regression function of the analytically available transition score on $(X_t,C)$.

The conditional-expectation projection identity then implies that replacing $s_t(X_t\mid C)$ in \eqref{eq:scalar_conditional_score_matching} by the transition score in \eqref{eq:scalar_transition_score} changes the loss only by an additive term that does not depend on $s$. The two population objectives therefore have the same minimizer. This yields the conditional denoising score-matching criterion
\begin{equation}
\begin{aligned}
    \mathcal L_{\mathrm{DSM}}(s)=\int_{t_0}^T \lambda(t)\,\mathbb E \left[ \left\{ s(X_t,C,t)-\frac{\alpha_tX_0-X_t}{h_t}\right\}^2\right]\mathrm dt.
    \label{eq:scalar_conditional_dsm}
\end{aligned}
\end{equation}

\paragraph{Empirical score learning.}
Given finite training observations $\{(X_0^{(i)},C^{(i)})\}_{i=1}^n$, we take the uniform time weight $\lambda(t)=1/(T-t_0)$. The corresponding empirical counterpart of \eqref{eq:scalar_conditional_dsm} is
\begin{equation*}
\begin{aligned}
    \widehat{\mathcal L}_{\mathrm{DSM}}(s)=\frac{1}{n}\sum_{i=1}^n\frac{1}{T-t_0}\int_{t_0}^T\mathbb E_{Z_t}\left[\left\{s\left(\alpha_tX_0^{(i)}+h_t^{1/2}Z_t,C^{(i)},t\right)+\frac{Z_t}{h_t^{1/2}}\right\}^2\right]\mathrm dt.
\end{aligned}
\end{equation*}
For a parameterized score class $\mathcal S$, the trained conditional score is defined by $\widehat s\in\arg\min_{s\in\mathcal S}\widehat{\mathcal L}_{\mathrm{DSM}}(s)$. In computation, the diffusion time $t$ and Gaussian noise $Z_t$ are sampled, so this optimization can be carried out by stochastic gradient methods. The cutoff $t_0>0$ excludes the near-zero-noise region, where the variance $h_t^{-1}$ of the denoising target diverges as $t\downarrow0$ (making the score-learning problem increasingly ill-conditioned).

\paragraph{Conditional generation.}
Let $\widehat s$ denote the learned (trained) conditional score. Replacing the unknown conditional score in \eqref{eq:scalar_conditional_reverse} with $\widehat s$ and initializing the reverse process from the Gaussian approximation to $P_T(\cdot\mid c)$ yields the implementable generator
\begin{equation}
\begin{aligned}
    \mathrm d\widehat X_u^{\leftarrow}&=\biggl[\frac{1}{2}\eta(T-u)\widehat X_u^{\leftarrow}+\eta(T-u)\widehat s\bigl(\widehat X_u^{\leftarrow},c,T-u\bigr)\biggr]\mathrm du+\eta(T-u)^{1/2}\,\mathrm d\overline W_u,\quad u\in[0,T-t_0],
    \label{eq:scalar_implementable_reverse}
\end{aligned}
\end{equation}
with $\widehat X_0^{\leftarrow}\sim\mathcal N(0,1)$. For a fixed condition $c$, repeated independent solutions of \eqref{eq:scalar_implementable_reverse} produce draws from the learned conditional distribution. Omitting the conditioning variable $C$ recovers the unconditional diffusion model as a special case \citep{chen2023score,fu2024unveil}.

Section~\ref{sec:main-method} extends this conditional-diffusion principle to tensor data for counterfactual distribution recovery. The framework keeps the observed control outcomes fixed while generating the missing control outcomes conditional on them. For notational simplicity, from Section~\ref{sec:main-method} onward we set $\eta(t)\equiv1$, so that $\alpha_t=e^{-t/2}$ and $h_t=1-e^{-t}$.

\section{Learning the Missing Control-World Distribution}\label{sec:main-method}

This section develops \CFTDiff, a tensor conditional-diffusion framework for recovering the conditional distribution of missing control outcomes. We formulate the counterfactual problem under a fixed treatment mask and introduce a masked diffusion that perturbs only the missing outcomes while holding the observed control outcomes fixed. Tucker structure then reduces nonlinear score learning to the latent core, leading to the estimation procedure and its intrinsic-dimensional guaranties. Figure~\ref{fig:conditional-tensor-counterfactual} summarizes its implementation.

\begin{figure}[t]
\FIGURE
{\includegraphics[width=1.02\linewidth]
{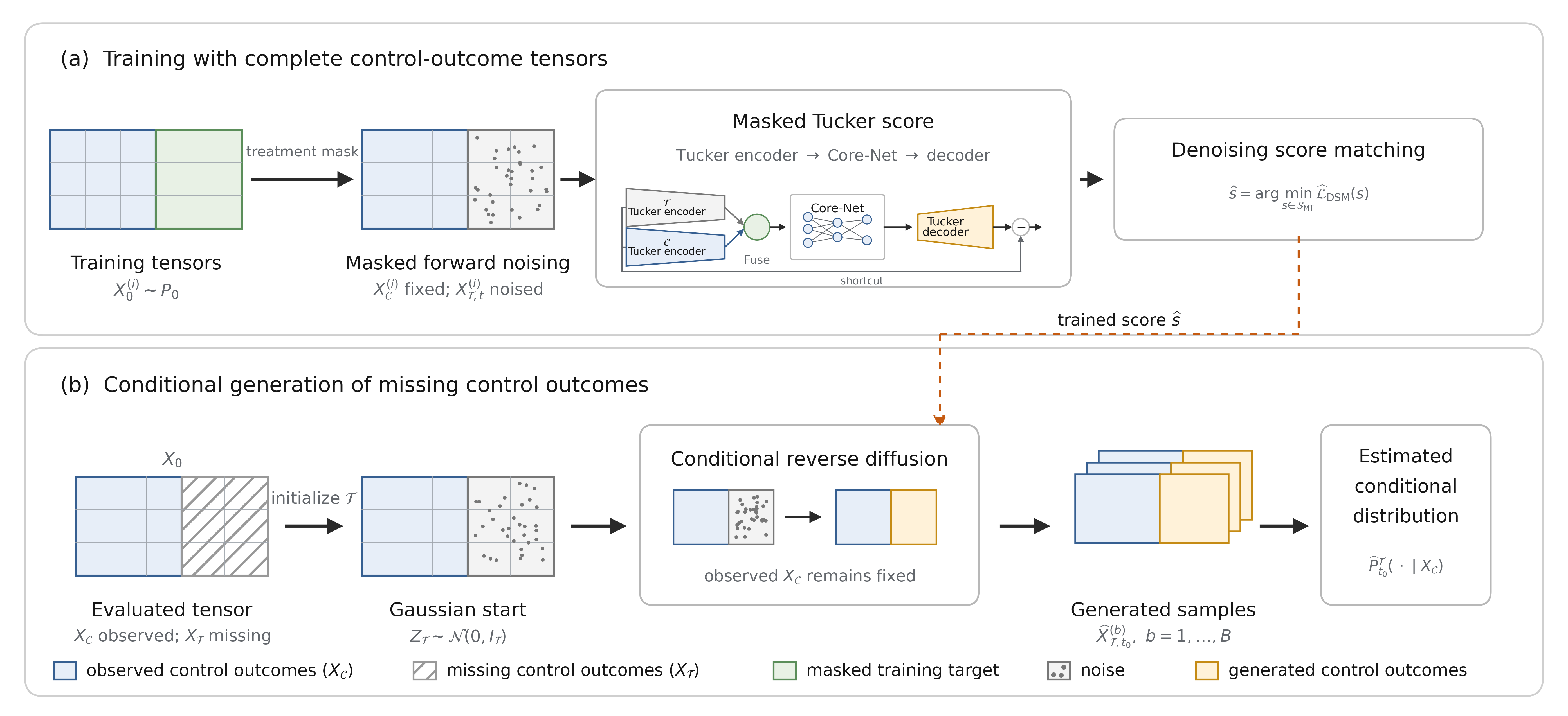}}
{Training and conditional generation in \CFTDiff.\label{fig:conditional-tensor-counterfactual}}
{Panel (a) shows the masked Tucker score architecture and its estimation from training tensors. The missing-outcome and observed-outcome branches are encoded into the Tucker core, combined by the Core-Net, and decoded back to the treated region; only the missing outcomes are noised during training. Panel (b) holds the observed control outcomes fixed and applies the learned score to transform Gaussian noise in the missing region into samples from $\widehat P_{t_0}^{\mathcal T}(\cdot\mid X_{\mathcal C})$. Observed treated outcomes are reserved for subsequent causal analysis.}
\end{figure}

\subsection{Counterfactual Target and Training Sample}\label{subsec:target_training}

\paragraph{Counterfactual learning as tensor completion.}
Consider a $D$-way outcome array with $p=\prod_{d=1}^Dp_d$ entries and index set $\mathcal I=[p_1]\times\cdots\times[p_D]$. For each coordinate $\boldsymbol i=(i_1,\ldots,i_D)\in\mathcal I$, let $Y_{\boldsymbol i}(a)$ denote the potential outcome under intervention $a\in\{0,1,\ldots,\mathbb A\}$, where $a=0$ denotes control. The tensor modes may represent units, time periods, outcome dimensions, or other sources of multi-way variation. 

Let the mutually disjoint sets $\mathcal T_a\subseteq\mathcal I$ collect the coordinates assigned to intervention $a$, and define $\mathcal T=\bigcup_{a=1}^{\mathbb A}\mathcal T_a$ and $\mathcal C=\mathcal I\setminus\mathcal T$. Under consistency, the observed outcome satisfies
\begin{equation*}
Y_{\boldsymbol i}^{\mathrm{obs}}
=
\begin{cases}
Y_{\boldsymbol i}(0),
    & \boldsymbol i\in\mathcal C,\\[1mm]
Y_{\boldsymbol i}(a),
    & \boldsymbol i\in\mathcal T_a,\quad a=1,\ldots,\mathbb A.
\end{cases}
\end{equation*}
Thus, every coordinate has an observed outcome, but a treated coordinate does not reveal the outcome that would have occurred under control. Equivalently, the missing control outcomes are $\{Y_{\boldsymbol i}(0):\boldsymbol i\in\mathcal T\}$.

Collect all control potential outcomes in the control-world tensor $X_0=\bigl[Y_{\boldsymbol i}(0)\bigr]_{\boldsymbol i\in\mathcal I} \in\mathbb R^{p_1\times\cdots\times p_D}$. Let $P_0=\mathcal L(X_0)$ denote its joint distribution. Let $M_{\mathcal T}$ be the binary tensor equal to one on $\mathcal T$ and zero elsewhere, and set $M_{\mathcal C}=\mathbf 1-M_{\mathcal T}$. The treatment design partitions $X_0$ into
\begin{equation}
    X_{\mathcal C}
    =M_{\mathcal C}\odot X_0
    =M_{\mathcal C}\odot Y^{\mathrm{obs}},
    \qquad
    X_{\mathcal T}
    =M_{\mathcal T}\odot X_0.
    \label{eq:counterfactual_tensor_partition}
\end{equation}
Both $X_{\mathcal C}$ and $X_{\mathcal T}$ retain the dimensions of $X_0$ and are padded with zeros outside the coordinates selected by their respective masks. Because $M_{\mathcal C}+M_{\mathcal T}=\mathbf 1$, this partition satisfies $X_0=X_{\mathcal C}+X_{\mathcal T}$. When vector notation is needed, we write $\MT=\diag\{\vecc(M_{\mathcal T})\}$ and $\MC=I_p-\MT$. The block $X_{\mathcal C}$ contains the observed control outcomes, whereas $X_{\mathcal T}$ contains the control outcomes missing at treated coordinates. Throughout, ``treated region'' refers to the coordinate set $\mathcal T$; $X_{\mathcal T}$ therefore contains outcomes under control for coordinates at which an intervention occurs. Treatment creates a structured missing region in the control-outcome tensor even though the realized-outcome tensor $Y^{\mathrm{obs}}$ is fully observed. Unlike conventional tensor completion, our objective is to recover the conditional distribution of the missing control outcomes rather than merely impute each missing entry. This distribution directly yields conditional means, quantiles, and other distributional summaries.

Specifically, conditional on the observed control outcomes, our target is
\begin{equation}
    P_0^{\mathcal T}(\,\cdot\mid X_{\mathcal C}):=\mathcal L(X_{\mathcal T}\mid X_{\mathcal C}).
    \label{eq:counterfactual_conditional_law}
\end{equation}
The treatment mask is prespecified and held fixed,\footnote{The tensor-completion step conditions on the specified treatment mask. Thus, even when the mask arises from randomized assignment, its sampling distribution need not be modeled by the conditional generator.} so it is suppressed from the conditioning set. Recovering \eqref{eq:counterfactual_conditional_law} provides the full conditional distribution of the missing control outcomes in the treated region.

To connect this distributional target to downstream causal summaries, let $M_{\mathcal T_a}$ denote the binary mask for the intervention-specific support $\mathcal T_a$. For intervention $a$, a deterministic weight vector $w\in\mathbb R^p$ supported on $\mathcal T_a$ defines the control-outcome summary
\begin{equation}
U_{0,a}(w)=w^\top\vecc\bigl(M_{\mathcal T_a}\odot X_0\bigr).
\end{equation}
The vector $w$ specifies which coordinates enter the comparison and how they are aggregated. Because $U_{0,a}(w)$ is a linear functional of $X_{\mathcal T}$, its conditional distribution is determined by \eqref{eq:counterfactual_conditional_law}. A coordinate-selection weight isolates the control outcome at a specified tensor coordinate, whereas equal weights yield an average over selected coordinates.\footnote{For example, if $\mathcal T_a=\{\boldsymbol i_1,\boldsymbol i_2,\boldsymbol i_3\}$ and $w$ assigns weight $1/3$ to each coordinate, then $U_{0,a}(w)=\frac{1}{3}\sum_{j=1}^3Y_{\boldsymbol i_j}(0)$.} The corresponding weighted outcome under intervention $a$ is observed and enters only the subsequent causal analysis; it is not supplied to the conditional generator.

\paragraph{Training sample.}
We learn \eqref{eq:counterfactual_conditional_law} using a finite training sample $\{X_0^{(i)}\}_{i=1}^n$ of complete tensors drawn from $P_0$. These tensors may correspond to comparable historical periods or external panels observed entirely under control.

Applying the fixed treatment mask synthetically to each training tensor gives $X_{\mathcal T}^{(i)}=M_{\mathcal T}\odot X_0^{(i)}$ and $X_{\mathcal C}^{(i)}=M_{\mathcal C}\odot X_0^{(i)}$. This construction produces paired missing and observed control outcomes for learning the conditional score. For the data used in causal analysis, the observed control outcomes $X_{\mathcal C}$ are held fixed and provided to the learned conditional generator, which generates the missing control outcomes $X_{\mathcal T}$ conditionally on them. The observed treated outcomes remain outside the generator and enter only the subsequent causal analysis.

\subsection{Masked Conditional Diffusion in the Treated Region}\label{subsec:masked_diffusion}

We now develop the masked conditional diffusion underlying \CFTDiff for the tensor partition in \eqref{eq:counterfactual_tensor_partition}. Under the unit noise schedule adopted above, $\alpha_t=e^{-t/2}$ and $h_t=1-e^{-t}$. The forward process perturbs only the missing outcomes $X_{\mathcal T}$, while the observed control outcomes $X_{\mathcal C}$ remain fixed and condition the diffusion throughout.

Let $\{W_t\}_{t\geq0}$ be a tensor-valued Wiener process with independent standard Brownian entries. The forward diffusion of the missing outcomes is
\begin{equation}
    \mathrm dX_{\mathcal T,t}=-\frac12X_{\mathcal T,t}\,\mathrm dt + M_{\mathcal T}\odot\mathrm dW_t,\qquad
    X_{\mathcal T,0}=X_{\mathcal T}.
    \label{eq:main-forward-sde}
\end{equation}
For every fixed $t\in[0,T]$, the solution of \eqref{eq:main-forward-sde} has the explicit form
\begin{equation}
    X_{\mathcal T,t}
    =
    \alpha_tX_{\mathcal T}
    +h_t^{1/2}(M_{\mathcal T}\odot Z_t),
    \qquad
    Z_t\indep(X_{\mathcal T},X_{\mathcal C}),
    \label{eq:main-forward}
\end{equation}
where $Z_t$ has independent standard Gaussian entries. 

Let $\mathbb X_{\mathcal T}=\left\{X\in\mathbb R^{p_1\times\cdots\times p_D}: M_{\mathcal T}\odot X=X\right\}$ be the tensor space supported on $\mathcal T$. Conditional on $X_{\mathcal C}$, let $P_t^{\mathcal T}(\cdot\mid X_{\mathcal C})=\mathcal L(X_{\mathcal T,t}\mid X_{\mathcal C})$ denote the conditional distribution of $X_{\mathcal T,t}$ on this space, and let $p_t^{\mathcal T}(\cdot\mid X_{\mathcal C})$ denote its density with respect to Lebesgue measure over the coordinates in the treated region. The corresponding conditional score is $ s_t(X\mid X_{\mathcal C})=\nabla_{\mathcal T}\log p_t^{\mathcal T}(X\mid X_{\mathcal C})$ for $X\in\mathbb X_{\mathcal T}$, where $\nabla_{\mathcal T}$ differentiates only with respect to entries on $\mathcal T$ and pads the coordinates outside $\mathcal T$ with zeros. Consequently, $M_{\mathcal T}\odot s_t(X\mid X_{\mathcal C})=s_t(X\mid X_{\mathcal C})$ for $X\in\mathbb X_{\mathcal T}$.

With this conditional score, the exact reverse-time diffusion is
\begin{equation}
    \mathrm dX_{\mathcal T,u}^{\leftarrow}
    ={}
    \left\{
        \frac12X_{\mathcal T,u}^{\leftarrow}
        +
        s_{T-u}\bigl(X_{\mathcal T,u}^{\leftarrow}\mid X_{\mathcal C}\bigr)
    \right\}\mathrm du
    +M_{\mathcal T}\odot\mathrm d\overline W_u,\qquad u\in[0,T-t_0].
    \label{eq:main-exact-reverse}
\end{equation}
Here $\{\overline W_u\}_{u\geq0}$ is a tensor-valued Wiener process with independent standard Brownian entries. Conditional on $X_{\mathcal C}$, if $X_{\mathcal T,0}^{\leftarrow}\sim P_T^{\mathcal T}(\cdot\mid X_{\mathcal C})$, then $X_{\mathcal T,u}^{\leftarrow}\sim P_{T-u}^{\mathcal T}(\cdot\mid X_{\mathcal C})$ and $X_{\mathcal T,T-t_0}^{\leftarrow}\sim P_{t_0}^{\mathcal T}(\cdot\mid X_{\mathcal C})$. The conditional score therefore determines the reverse-time process and is the central object to be learned. Because an unrestricted tensor score is high dimensional, the next subsection develops a Tucker-structured representation whose nonlinear component operates only on a low-dimensional core.

\subsection{Low-Dimensional Tucker Structure of the Conditional Score}\label{subsec:score_structure}

The conditional score in Section~\ref{subsec:masked_diffusion} is a tensor-valued function whose inputs and output are all high dimensional. Estimating and repeatedly evaluating such a function can be prohibitively costly, particularly because reverse-time generation requires score evaluation at every discretization step. Tucker structure provides a parsimonious representation of large tensors and can substantially reduce the cost of tensor computation \citep{kolda2009tensor}. We, therefore, posit a low-Tucker-rank model for the complete control-outcome (control-world) tensor. This structure exploits dependence across tensor modes, organizes score learning and generation around a low-dimensional core, and yields the conditional-score representation derived below.

Recall that $p=\prod_{d=1}^Dp_d$, and define $p^\beta=\prod_{d=1}^Dp_d^{\beta_d}$, where $\beta_d\in[0,1]$ indexes factor strength along mode $d$.\footnote{Under the normalization below, $\beta_d=0$ gives the weak factor boundary, with no dimension-driven attenuation of idiosyncratic noise along mode $d$, whereas $\beta_d=1$ gives the strong-factor boundary, with attenuation at a rate of $p_d^{-1/2}$. Values in $(0,1)$ accommodate intermediate strength, and allowing $\beta_d$ to vary across modes permits heterogeneous factor strengths; related prediction theory under weak factors is developed by \citet{giglio2026prediction}.} For compactness, write $F\times_{d=1}^DA_d:=F\times_1A_1\times_2\cdots\times_DA_D$. We assume
\begin{equation}
    X_0
    =
    F\times_{d=1}^D A_d
    +p^{-\beta/2}E,
    \qquad
    A_d^\top A_d=I_{r_d},\quad d\in[D].
    \label{eq:main-factor-model}
\end{equation}
Here $F\in\mathbb R^{r_1\times\cdots\times r_D}$ is the Tucker core, $A_d\in\mathbb R^{p_d\times r_d}$ is the loading matrix for mode $d$, and $E$ is idiosyncratic tensor noise. The full dimension is $p$, whereas $r=\prod_{d=1}^Dr_d$ is the intrinsic core dimension. The scaling $p^{-\beta/2}$ allows factor strength to increase with the tensor dimensions.

\begin{remark}[Connection with vector and matrix factor models]
The Tucker representation nests familiar factor models as special cases. When $D=1$, \eqref{eq:main-factor-model} becomes $X_0=A_1F+p^{-\beta/2}E$, where $F\in\mathbb R^{r_1}$ is a vector of latent factors and $A_1$ contains their loading vectors, as in the standard vector factor model \citep{bai2003inferential,bai2023approximate}. When $D=2$, it becomes $X_0=A_1FA_2^\top+p^{-\beta/2}E$, where $A_1$ and $A_2$ are the row and column loading matrices, respectively, and $F\in\mathbb R^{r_1\times r_2}$ is the matrix-valued factor. This is the bilinear, or two-way, matrix factor specification \citep{wang2019factor, yu2023testing,yuan2023twoway, zhang2025modeling, he2025new}. The $D$-way formulation retains this loading-factor interpretation while representing the structured component through $r$ core coordinates rather than $p$ full coordinates. Consequently, the proposed framework also covers counterfactual recovery with vector- or matrix-valued outcomes by taking $D=1$ or $D=2$, respectively.
\end{remark}

\begin{assumption}[Control-world tensor factor model]
\label{ass:tensor}
The complete control-world tensor follows \eqref{eq:main-factor-model}. The vectorized core $f=\vecc(F)\in\mathbb R^r$ has density $p_{\core}$, mean zero, and finite second moment. It is independent of $e=\vecc(E)$, where $e\sim\mathcal N(0,\Sigma_e^{\otimes})$ and $\Sigma_e^{\otimes}=\Sigma_{e,D}\otimes\cdots\otimes\Sigma_{e,1}$, with $\Sigma_{e,d}=\diag(\sigma_{d1}^2,\ldots,\sigma_{dp_d}^2)$ and $0<\sigma_{\min}^2\leq \sigma_{dj}^2\leq \sigma_{\max}^2<\infty$ for $d\in[D],\ j\in[p_d]$.
\end{assumption}

Assumption~\ref{ass:tensor} concerns the joint distribution of the control outcomes; no factor model is imposed on the outcomes under active interventions. The separable diagonal covariance makes the conditional score under the treatment mask analytically tractable, while the uniform variance bounds rule out degenerate coordinates. To expose the resulting score structure, let $A_\otimes=A_D\otimes\cdots\otimes A_1$ and define the idiosyncratic variance tensor $\mathcal Q_\sigma$ by $(\mathcal Q_\sigma)_{i_1,\ldots,i_D}=\prod_{d=1}^D\sigma_{d i_d}^2$.

The construction below combines the noised missing outcomes and the observed control outcomes as two measurements of the same latent core. Conditional on $f$, $X_{\mathcal T,t}$ is a Gaussian measurement of the scaled core $\alpha_t f$ in the treated region, with coordinatewise variances $h_t+\alpha_t^2p^{-\beta}\mathcal Q_\sigma$. The observed control outcomes $X_{\mathcal C}$ measure $f$ on $\mathcal C$, or equivalently $\alpha_t f$ after rescaling by $\alpha_t$. Because the two measurements are conditionally independent, they can be combined by precision-weighted least squares in the core space.

Define their coordinatewise precision tensors by
\[
    \mathcal W_{\mathcal T,t}
    =
    M_{\mathcal T}\os
    \bigl(h_t\mathbf 1+\alpha_t^2p^{-\beta}\mathcal Q_\sigma\bigr),
    \qquad
    \mathcal W_{\mathcal C}
    =
    M_{\mathcal C}\os
    \bigl(p^{-\beta}\mathcal Q_\sigma\bigr).
\]
Each tensor assigns the inverse conditional variance to coordinates in its corresponding support and zero weight elsewhere. Projecting these
precisions onto the Tucker loading space gives the core-level information matrices
\[
\begin{aligned}
    H_{\mathcal T,t}
    =
    A_\otimes^\top
    \diag\{\vecc(\mathcal W_{\mathcal T,t})\}
    A_\otimes,
    \qquad
    H_{\mathcal C}
    =
    A_\otimes^\top
    \diag\{\vecc(\mathcal W_{\mathcal C})\}
    A_\otimes.
\end{aligned}
\]

Because $M_{\mathcal T}$ and $M_{\mathcal C}$ partition the tensor coordinates and $A_\otimes^\top A_\otimes=I_r$, the combined core-level precision matrix is positive definite. We therefore define
\[
    V_t=\left(H_{\mathcal T,t}+\alpha_t^{-2}H_{\mathcal C}\right)^{-1}.
\]

The key step is to combine the two precision-weighted encoder outputs into a single core statistic:
\begin{equation}\label{eq:main-core-statistic}
g_t=
V_t\left[
\vecc\left\{ \underbrace{(\mathcal W_{\T,t}\od X_{\T,t})\times_{d=1}^D A_d^\top}_{\rm{missing\mbox{-}outcome~encoder}}
\right\}
+
\alpha_t^{-1}
\vecc\left\{\underbrace{(\mathcal W_\C\od X_\C)\times_{d=1}^D A_d^\top}_{\rm{observed\mbox{-}outcome~encoder}}
\right\}
\right],
\end{equation}
Let $G_t=\operatorname{Tucker}(g_t)$, where $\operatorname{Tucker}(g):=\operatorname{reshape}\{g;(r_1,\ldots,r_D)\}$. Standard Gaussian least-squares calculations then give $g_t\mid f\sim\mathcal N(\alpha_t f,V_t)$. Thus, $g_t$ is an $r$-dimensional sufficient statistic for the scaled core signal $\alpha_t f$: it pools the information retained in the noised missing outcomes with the undiffused information in the observed control outcomes.

\begin{proposition}[Conditional Tucker score decomposition]\label{prop:main-score-decomposition}
Under Assumption~\ref{ass:tensor}, for every $t\in[t_0,T]$, the conditional tensor score satisfies
\begin{equation}
\begin{aligned}
    s_t(X_{\mathcal T,t}\mid X_{\mathcal C})=
    \underbrace{
        \mathcal W_{\mathcal T,t}
    }_{\text{treated-region precision}}
    \odot
    \left[
        \left\{
            \underbrace{
                \mathbb E(\alpha_tF\mid G_t)
            }_{\text{core regression}}
            \underbrace{
                \times_{d=1}^D A_d
            }_{\text{Tucker decoder}}
        \right\}
        \underbrace{
            -X_{\mathcal T,t}
        }_{\text{skip connection}}
    \right].
    \label{eq:main-score-decomposition}
\end{aligned}
\end{equation}
The score is supported on $\mathcal T$ and is padded with zeros on
$\mathcal C$.
\end{proposition}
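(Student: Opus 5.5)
Conditioning on the latent core reduces everything to Gaussian calculations plus a single Tweedie-type conditional expectation. Throughout, work in vectorized form on the treated coordinates. Write $x=\vecc(X_{\mathcal T,t})$ restricted to $\mathcal T$, $c=\vecc(X_{\mathcal C})$ restricted to $\mathcal C$, and $\mathcal A_{\mathcal T},\mathcal A_{\mathcal C}$ for the row blocks of $A_\otimes$ indexed by $\mathcal T$ and $\mathcal C$.

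\textbf{Step 1: conditional Gaussian structure given the core.} Under Assumption~\ref{ass:tensor}, $\vecc(X_0)=A_\otimes f+p^{-\beta/2}e$ with $e$ having diagonal covariance $\diag\{\vecc(\mathcal Q_\sigma)\}$. Combined with \eqref{eq:main-forward}, this shows that, conditional on $f$, the blocks $x$ and $c$ are independent Gaussians:
\[
x\mid f\sim\mathcal N\bigl(\alpha_t\mathcal A_{\mathcal T}f,\;D_{\mathcal T,t}\bigr),\qquad c\mid f\sim\mathcal N\bigl(\mathcal A_{\mathcal C}f,\;D_{\mathcal C}\bigr).
\]
The covariances are diagonal, and their inverses are exactly the nonzero entries of $\mathcal W_{\mathcal T,t}$ and $\mathcal W_{\mathcal C}$. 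Independence holds because $e$ has diagonal covariance, $Z_t$ is independent of $(X_{\mathcal T},X_{\mathcal C})$, and $\mathcal T,\mathcal C$ are disjoint.

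\textbf{Step 2: score identity.} Write the joint density as $p_t^{\mathcal T}(x\mid c)\propto\int q(x\mid f)\,q(c\mid f)\,p_{\core}(f)\,\mathrm df$. Differentiating under the integral (justified by Gaussian tails and the finite second moment of $f$) gives
\[
\nabla_x\log p_t^{\mathcal T}(x\mid c)=\E\bigl[D_{\mathcal T,t}^{-1}(\alpha_t\mathcal A_{\mathcal T}f-x)\,\big|\,x,c\bigr]=D_{\mathcal T,t}^{-1}\bigl(\mathcal A_{\mathcal T}\E[\alpha_tf\mid x,c]-x\bigr).
\]
Padding with zeros on $\mathcal C$ and reshaping, $\mathcal A_{\mathcal T}\E[\alpha_t f\mid\cdot]$ becomes the masked version of $\E(\alpha_tF\mid\cdot)\times_{d=1}^DA_d$. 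Multiplying entrywise by $\mathcal W_{\mathcal T,t}$ kills the $\mathcal C$ coordinates automatically, so the mask needs no separate treatment. This yields \eqref{eq:main-score-decomposition} with conditioning on $(X_{\mathcal T,t},X_{\mathcal C})$. The support claim follows from the factor $\mathcal W_{\mathcal T,t}$.

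\textbf{Step 3: reduction to $G_t$ (main step).} It remains to show $\E[\alpha_tf\mid X_{\mathcal T,t},X_{\mathcal C}]=\E[\alpha_tf\mid G_t]$. First, expand the quadratic forms in $f$. The likelihood of $(x,c)$ given $f$, viewed as a function of $\phi=\alpha_tf$, is proportional to
\[
\exp\Bigl\{\phi^\top V_t^{-1}g_t-\tfrac12\phi^\top V_t^{-1}\phi\Bigr\}\times k(x,c).
\]
Here $k(x,c)$ does not involve $f$. This uses $\mathcal A_{\mathcal T}^\top D_{\mathcal T,t}^{-1}\mathcal A_{\mathcal T}=H_{\mathcal T,t}$ and $\mathcal A_{\mathcal C}^\top D_{\mathcal C}^{-1}\mathcal A_{\mathcal C}=H_{\mathcal C}$, with $f=\alpha_t^{-1}\phi$ producing the $\alpha_t^{-2}$ and $\alpha_t^{-1}$ factors in $V_t$ and \eqref{eq:main-core-statistic}. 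The encoders $(\mathcal W\odot X)\times_dA_d^\top$ are precisely $\mathcal A^\top D^{-1}(\cdot)$ in vector form. This factorization is the Fisher--Neyman criterion, so $g_t$ is sufficient for $f$, and the posterior of $f$ given $(x,c)$ depends only on $g_t$; hence the claimed equality holds. As a consistency check, the same computation gives $g_t\mid f\sim\mathcal N(\alpha_tf,V_t)$. Positive definiteness of $V_t^{-1}$ follows from $A_\otimes^\top A_\otimes=I_r$ and the strictly positive weights on the partition $\mathcal T\cup\mathcal C$.

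I expect the main obstacle to be the bookkeeping in Step 3: matching the $\alpha_t$ scalings between the diffused and undiffused branches, and verifying that the mode-wise products reproduce $A_\otimes^\top$ under the $\vecc$ convention. The analytic content, namely the sufficiency factorization and differentiation under the integral, is routine.
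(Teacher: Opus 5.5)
Your proposal is correct and follows essentially the same route as the paper. The paper likewise uses conditional independence of the two branches given the core and differentiates under the integral to obtain $\Lambda_{\T,t}\{A_\otimes\E(\alpha_t f\mid x_{\T,t},x_\C)-x_{\T,t}\}$. It then completes the square in $\alpha_t f$ to show that the posterior depends on the data only through $g_t$, so that $\E(\alpha_t f\mid x_{\T,t},x_\C)=\E(\alpha_t f\mid g_t)$. The paper additionally records the Tweedie form $\xi(g,t)=g+V_t\nabla_g\log p^t_{\con}(g)$, which the statement does not require.
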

The statistic $g_t$ pools the noised outcomes in the treated region and the observed control outcomes according to their precisions. Its conditional core mean is decoded to the original tensor coordinates, yielding a precision-weighted denoising score in the treated region, while the observed control outcomes remain fixed.

\subsection{\CFTDiff: Estimation and Generation}\label{subsec:cftdiff_estimation}

\paragraph{Masked Tucker score class.}
Proposition~\ref{prop:main-score-decomposition} depends on the unknown loading matrices, idiosyncratic variances, and conditional core
regression. To obtain an estimable score class, we replace them with trainable counterparts. Let $\Gamma=\{\Gamma_d\}_{d=1}^D$ denote candidate Tucker loading matrices satisfying $\Gamma_d^\top \Gamma_d=I_{r_d}$ for $d\in[D]$, and let $\omega=\{\omega_{dj}\}$ denote candidate idiosyncratic variances satisfying $\sigma_{\min}^2\leq \omega_{dj}\leq \sigma_{\max}^2$ for $d\in[D]$ and $\ j\in[p_d]$. For any $(\Gamma,\omega)$, let $\mathcal W_{\mathcal T,t}(\omega)$, $\mathcal W_{\mathcal C}(\omega)$, $V_t(\Gamma,\omega)$, and $G_t(\Gamma,\omega)$ denote the quantities obtained from the precision-weighted construction defined in Section~\ref{subsec:score_structure} after replacing $(A_d,\sigma_{dj}^2)$ with $(\Gamma_d,\omega_{dj})$. Thus, $G_t(\Gamma,\omega)$ is the candidate Tucker-core summary of $(X_{\mathcal T,t},X_{\mathcal C})$, with $r=\prod_{d=1}^Dr_d$ coordinates.

Let $\mathcal Z_\theta:\mathbb R^{r_1\times\cdots\times r_D}\times[t_0,T] \to\mathbb R^{r_1\times\cdots\times r_D}$ be a Core-Net. The corresponding score network is
\begin{equation}
\begin{aligned}
    s_{\Gamma,\omega,\theta}(X_{\mathcal T,t},X_{\mathcal C},t)=\mathcal W_{\mathcal T,t}(\omega)\od
    \left[\mathcal Z_\theta\left(G_t(\Gamma,\omega),t\right)\times_{d=1}^D \Gamma_d-X_{\mathcal T,t}\right].
    \label{eq:main-mtucker}
\end{aligned}
\end{equation}
For $g\in\mathbb R^r$, define $\zeta_\theta(g,t)=\vecc\left[\mathcal Z_\theta(\operatorname{Tucker}(g),t)\right]$. Let $\mathcal F_{\core}(L,m,J,K,\kappa,\gamma_g,\gamma_t)$ denote the class of feed-forward ReLU networks with depth at most $L$, hidden-layer width at most $m$, at most $J$ nonzero parameters, and parameter magnitudes bounded by $\kappa$. The constants $K$, $\gamma_g$, and $\gamma_t$ bound the network output and its Lipschitz variation in the core and time arguments, respectively. A formal definition is provided in Appendix~\ref{appendix_subsec:core-network-class}. For a fixed constant $C_V>0$, the \emph{masked Tucker class} is
\[
\begin{aligned}
    \mathcal S_{\MTClass}
    =
    \biggl\{
        s_{\Gamma,\omega,\theta}:\;&
        \Gamma_d^\top \Gamma_d=I_{r_d},\quad d\in[D],
        \sigma_{\min}^2
        \leq\omega_{dj}
        \leq\sigma_{\max}^2,\\
        &
        \sup_{t\in[t_0,T]}
        \|V_t(\Gamma,\omega)\|_{\op}
        \leq C_V,\quad
        \zeta_\theta\in
        \mathcal F_{\core}
        (L,m,J,K,\kappa,\gamma_g,\gamma_t)
    \biggr\}.
\end{aligned}
\]
The bound on $V_t(\Gamma,\omega)$ controls the amplification induced by aggregating the missing-outcome and observed-outcome branches in the core space, while the constraint on $\zeta_\theta$ controls the complexity of the nonlinear core regression. As displayed in \eqref{eq:main-mtucker}, each score evaluation pools the two branches in the core space, applies the Core-Net, decodes the result through the mode-wise loadings, and forms the precision-weighted residual on $\mathcal T$.

\paragraph{Score estimation.}
The Gaussian transition in \eqref{eq:main-forward} has treated-region score $h_t^{-1}(\alpha_tX_{\mathcal T}-X_{\mathcal T,t})=-h_t^{-1/2}(M_{\mathcal T}\od Z_t)$. Applying the conditional denoising identity from Section~\ref{subsec:conditional_score_dsm}, the population masked score-matching loss is
\begin{equation*}
\begin{aligned}
    \mathcal L_{\mask}(s)
    =
    \frac{1}{T-t_0}
    \int_{t_0}^T
    \mathbb E
    \left[
        \left\|
            s(X_{\mathcal T,t},X_{\mathcal C},t)
            -h_t^{-1}(\alpha_tX_{\mathcal T}-X_{\mathcal T,t})
        \right\|_F^2
    \right]
    \mathrm dt.
\end{aligned}
\end{equation*}
The expectation is over a complete control-outcome tensor and its forward perturbation. Its population minimizer is $s_t(X_{\mathcal T,t}\mid X_{\mathcal C})$. For the training sample introduced in Section~\ref{subsec:target_training}, define
\begin{equation}
\begin{aligned}
    \widehat s
    &\in
    \operatorname*{arg\,min}_{s\in\mathcal S_{\MTClass}}
    \widehat{\mathcal L}_{\mask}(s),
    \qquad
    \widehat{\mathcal L}_{\mask}(s)
    =
    \frac1n\sum_{i=1}^n\ell(X_0^{(i)};s),
    \\
    \ell(X_0;s)
    &=
    \frac{1}{T-t_0}
    \int_{t_0}^T
    \mathbb E_{X_{\mathcal T,t}\mid X_{\mathcal T}}
    \left[
        \left\|
            s(X_{\mathcal T,t},X_{\mathcal C},t)
            -h_t^{-1}(\alpha_tX_{\mathcal T}-X_{\mathcal T,t})
        \right\|_F^2
    \right]
    \mathrm dt.
    \label{eq:mask_empirical_loss}
\end{aligned}
\end{equation}
Stochastic training samples a training tensor, a diffusion time, and Gaussian perturbation noise. The same fixed mask is used throughout, so the learned score is aligned with the missing-region pattern used in conditional generation.

\paragraph{Conditional generation.}
Given the observed control outcomes $X_{\mathcal C}$, \CFTDiff\ uses the learned score $\widehat s$ in the reverse dynamics \eqref{eq:main-exact-reverse} and initializes the missing-outcome tensor from its large-$T$ Gaussian approximation:
\begin{equation}
    \mathrm d\widehat X_{\mathcal T,u}^{\leftarrow}
    ={}
    \left\{
        \frac12 \widehat X_{\mathcal T,u}^{\leftarrow}
        +
        \widehat s
        \bigl(\widehat X_{\mathcal T,u}^{\leftarrow},X_{\mathcal C},T-u\bigr)
    \right\}\mathrm du
    +M_{\mathcal T}\odot\mathrm d\overline W_u,\quad
    u\in[0,T-t_0],
    \quad
    \widehat X_{\mathcal T,0}^{\leftarrow}
    =M_{\mathcal T}\odot Z_0,
    \label{eq:main-reverse}
\end{equation}
with $\vecc(Z_0)\sim\mathcal N(0,I_p)$. The output $\widehat X_{\mathcal T,T-t_0}^{\leftarrow}$ is one generated sample of the missing control outcomes in the treated region. Conditional on the observed control outcomes, denote its early-stopped distribution by $\widehat P_{t_0}^{\mathcal T}(\cdot\mid X_{\mathcal C}):=\mathcal L(\widehat X_{\mathcal T,T-t_0}^{\leftarrow}\mid X_{\mathcal C})$. By the support convention following \eqref{eq:counterfactual_tensor_partition}, $\widehat X_{\mathcal T,T-t_0}^{\leftarrow}$ is zero on $\mathcal C$. Consequently, $X_{\mathcal C}+\widehat X_{\mathcal T,T-t_0}^{\leftarrow}$ is the corresponding completed control-outcome tensor. Independent runs of \eqref{eq:main-reverse} yield samples from $\widehat P_{t_0}^{\mathcal T}(\cdot\mid X_{\mathcal C})$. 


\subsection{Intrinsic-Dimensional Guaranties for Conditional Score Learning}\label{subsec:score_guarantees}

The Tucker representation reduces the nonlinear component of the conditional score to the $r$-dimensional core space. Its statistical analysis depends on two features: concentration of the latent core and regularity of the conditional core regression. We state these conditions below.

Recall from \eqref{eq:main-core-statistic} that $g_t\mid f\sim\mathcal N(\alpha_tf,V_t)$. For $R\geq1$, define $\mathcal G_R=\left\{g\in\mathbb R^r:\|g\|_\infty\leq R\right\}$ and let
\begin{equation*}
    \xi(g,t):=\mathbb E(\alpha_tf\mid g_t=g)=
    \frac{
        \displaystyle
        \int \alpha_tf\,
        \phi(g;\alpha_tf,V_t)
        p_{\core}(f)\,\mathrm df
    }{
        \displaystyle
        \int
        \phi(g;\alpha_tf,V_t)
        p_{\core}(f)\,\mathrm df
    },
\end{equation*}
where $\phi(\cdot;\mu,V)$ is the Gaussian density with mean $\mu$ and covariance $V$. This vector-valued map is the core representation of the
conditional mean in Proposition~\ref{prop:main-score-decomposition}: $\mathbb E(\alpha_tF\mid G_t)=\operatorname{Tucker}\{\xi(g_t,t)\}$.

\begin{assumption}[Sub-gaussianlity of the latent core]\label{ass:mask_core_tail}
Let $\Sigma_f=\operatorname{Cov}(f)$. The eigenvalues of $\Sigma_f$ are bounded away from zero and infinity. Moreover, there exist constants $B_f,C_1,C_2>0$ such that, whenever $\|f\|_2\geq B_f$, $p_{\core}(f)\leq(2\pi)^{-r/2}C_1\exp\bigl(-C_2\|f\|_2^2/2\bigr)$.
\end{assumption}

Assumption~\ref{ass:mask_core_tail} is a standard concentration condition for light-tailed factor models and is rather mild. The covariance bounds only rule out degenerate or excessively dispersed factor directions, while the density bound controls the probability of extreme core realizations. Together with the Gaussian noise in $g_t\mid f$, this condition ensures that $g_t$ lies in $\mathcal G_R$ with high probability for the radius $R$ chosen below. The next assumption controls the conditional core regression on this high-probability region.

\begin{assumption}[Regularity of the conditional core regression]\label{ass:mask_xi_regular}
For the radius $R$ under consideration, suppose that, uniformly over $t\in[t_0,T]$, $V_t\preceq v_{\max}I_r$ for a constant $v_{\max}<\infty$. Assume that $\xi$ is $L_g$-Lipschitz in $g$ and $L_t$-Lipschitz in $t$ on $\mathcal G_R\times[t_0,T]$. In addition, $K_0:=1+\sup_{t\in[t_0,T]}\|\xi(0,t)\|_2<\infty$.
\end{assumption}

Assumption~\ref{ass:mask_xi_regular} is imposed only on the $r$-dimensional conditional core regression, rather than on the full $p$-dimensional score. The bound on $V_t$ uniformly limits the conditional variance of $g_t\mid f$ in every core-space direction. The Lipschitz conditions ensure that $\xi$ can be approximated uniformly by the Core-Net on $\mathcal G_R\times[t_0,T]$, while $K_0$ anchors the magnitude of the regression at the origin. Because these restrictions are required only in a high-probability region, they are weaker than the corresponding global smoothness conditions.

For example, if $f\sim\mathcal N(0,\Sigma_f)$, then $\xi(g,t)=\alpha_t^2\Sigma_f\bigl(\alpha_t^2\Sigma_f+V_t\bigr)^{-1}g$. In this case, $\xi$ is linear in $g$ and $K_0=1$, and the Lipschitz condition in $g$ follows directly. Meanwhile, the condition in $t$ holds whenever $\alpha_t$ and $V_t$ vary smoothly over $[t_0,T]$. When $V_t$ is small relative to $\alpha_t^2\Sigma_f$, the statistic $g_t$ is highly informative about $\alpha_tf$, and $\xi(g,t)$ is close to $g$.

\paragraph{Approximation in the core space.}
A central implication of Proposition~\ref{prop:main-score-decomposition} is that nonparametric score approximation can be carried out after
Tucker compression. All masking, precision weighting, and mode-wise transformations are represented exactly. The only function to be approximated is the conditional core regression $\xi$. The result below quantifies this reduction: the nonlinear approximation depends on the $r$ core coordinates and time, rather than on the $p$ entries of the original tensor.

For an approximation tolerance $\epsilon\in(0,1)$, define $R_\epsilon=C_R\sqrt{\log\{c_{\mathrm{tail}}r/(t_0\epsilon)\}}$, where $C_R$ and $c_{\mathrm{tail}}$ are sufficiently large constants. This radius contains the core statistic with sufficiently high probability, so the Core-Net need only approximate $\xi$ accurately on $\mathcal G_{R_\epsilon}\times[t_0,T]$.

Consider a Core-Net class whose complexity parameters satisfy
\begin{equation}
\begin{gathered}
    m=O\!\left((1+L_gR_\epsilon)^r(1+TL_t)\epsilon^{-(r+1)}\right),\quad
    K=O(K_0+L_gR_\epsilon),\quad
    L=O\!\left(\log(K/\epsilon)+1\right),\\[1mm]
    J=O(mL),\quad
    \kappa=O\!\left(\max\left(K_0+L_gR_\epsilon,\,TL_t,\,T^{-1}\right)\right),\quad
    \gamma_g=C_rL_g,\quad
    \gamma_t=C_rL_t,
\end{gathered}
\label{eq:core-net-approximation-configuration}
\end{equation}  
where $C_r$ depends only on the core dimension. 

\begin{theorem}[Conditional score approximation]\label{thm:main-approximation}
Suppose Assumptions~\ref{ass:tensor} and \ref{ass:mask_core_tail} hold, and Assumption~\ref{ass:mask_xi_regular} holds with $R=R_\epsilon$. Let $0<t_0\leq1$, $T>t_0$, and $C_V\geq v_{\max}$, and configure the Core-Net class according to \eqref{eq:core-net-approximation-configuration}. Let $A_\bullet=(A_1,\ldots,A_D)$ denote the population loading matrices, and let $\omega^0=(\sigma_{dj}^2)_{d,j}$ denote the population idiosyncratic variances. Then there exists a Core-Net parameter set $\bar\theta$ such that $s_{A_\bullet,\omega^0,\bar\theta}\in\mathcal S_{\MTClass}$ and, for every $t\in[t_0,T]$,
\begin{equation*}
    \mathbb E_{X_{\mathcal C}}\mathbb E_{X_{\mathcal T,t}\mid X_{\mathcal C}}
    \left[
        \left\|
            s_{A_\bullet,\omega^0,\bar\theta}
            (X_{\mathcal T,t},X_{\mathcal C},t)
            -
            s_t(X_{\mathcal T,t}\mid X_{\mathcal C})
        \right\|_F^2
    \right]
    \leq
    h_t^{-2}
    (\sqrt r+1)^2
    \epsilon^2.
\end{equation*}
\end{theorem}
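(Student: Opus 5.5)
Because the candidate score uses the true loadings $A_\bullet$ and variances $\omega^0$, Proposition~\ref{prop:main-score-decomposition} lets the comparison collapse to the core. Writing $g_t$ for the population core statistic, we have $G_t(A_\bullet,\omega^0)=\operatorname{Tucker}(g_t)$, and subtracting \eqref{eq:main-score-decomposition} from \eqref{eq:main-mtucker} cancels the skip connection. This gives
\[
s_{A_\bullet,\omega^0,\bar\theta}-s_t
=\mathcal W_{\mathcal T,t}\odot\bigl[\{\mathcal Z_{\bar\theta}(G_t,t)-\operatorname{Tucker}(\xi(g_t,t))\}\times_{d=1}^D A_d\bigr].
\]
The entries of $\mathcal W_{\mathcal T,t}$ are bounded by $1/(h_t+\alpha_t^2p^{-\beta}\sigma_{\min}^{2D})\le h_t^{-1}$. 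Since each $A_d$ has orthonormal columns, the Tucker decoder is an isometry. The squared Frobenius error is therefore at most $h_t^{-2}\|\zeta_{\bar\theta}(g_t,t)-\xi(g_t,t)\|_2^2$, and the theorem reduces to bounding $\mathbb E\|\zeta_{\bar\theta}(g_t,t)-\xi(g_t,t)\|_2^2\le(\sqrt r+1)^2\epsilon^2$. Membership in $\mathcal S_{\MTClass}$ is immediate: $A_d^\top A_d=I_{r_d}$, the variances $\omega^0$ lie in $[\sigma_{\min}^2,\sigma_{\max}^2]$, and $\|V_t\|_{\op}\le v_{\max}\le C_V$. What remains is to check that $\zeta_{\bar\theta}$ lies in $\mathcal F_{\core}$ with the stated configuration.

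Split the expectation over the events $\{g_t\in\mathcal G_{R_\epsilon}\}$ and its complement.

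On the bulk event, invoke a standard ReLU approximation result for Lipschitz functions on a hypercube, applied coordinatewise to each of the $r$ components of $\xi$ on $\mathcal G_{R_\epsilon}\times[t_0,T]$. The construction is a piecewise-linear interpolation on a grid of mesh roughly $\epsilon/(L_g R_\epsilon)$ in $g$ and $\epsilon/(TL_t)$ in $t$, after rescaling the domain to the unit cube. This gives $\sup_{\mathcal G_{R_\epsilon}\times[t_0,T]}\|\zeta_{\bar\theta}-\xi\|_\infty\le\epsilon$, and hence an $\ell_2$ error of at most $\sqrt r\,\epsilon$. The grid count produces the width $m=O((1+L_gR_\epsilon)^r(1+TL_t)\epsilon^{-(r+1)})$, the depth $L=O(\log(K/\epsilon))$, and the stated $J$ and $\kappa$. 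Composing with a clipping layer at level $K=O(K_0+L_gR_\epsilon)$ keeps the network bounded everywhere and preserves Lipschitz constants $C_rL_g$ and $C_rL_t$. The bound $K$ is valid because $\|\xi(g,t)\|\le K_0+L_g\|g\|_2$ on $\mathcal G_{R_\epsilon}$. Lipschitz control of interpolants should follow from the Lipschitz property of $\xi$ up to a dimension constant.

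The main obstacle is the tail event $\{g_t\notin\mathcal G_{R_\epsilon}\}$. There $\|\zeta_{\bar\theta}\|_2\le\sqrt r K$, and $\xi$ may grow. I would bound the contribution by Cauchy--Schwarz:
\[
\bigl(\mathbb E\|\zeta-\xi\|_2^4\bigr)^{1/2}\,\mathbb P(g_t\notin\mathcal G_{R_\epsilon})^{1/2}.
\]
The fourth moment is controlled by $\|\xi(g_t,t)\|_2\le\mathbb E(\alpha_t\|f\|_2\mid g_t)$ and Jensen, giving $\mathbb E\|\xi\|^4\le\mathbb E\|f\|^4$, which is finite by Assumption~\ref{ass:mask_core_tail}. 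For the probability, write $g_t=\alpha_tf+V_t^{1/2}\zeta$ with $\zeta$ standard Gaussian, and use the sub-Gaussian tail of $f$ from Assumption~\ref{ass:mask_core_tail} together with $V_t\preceq v_{\max}I_r$ and a union bound over the $r$ coordinates. This yields $\mathbb P(\|g_t\|_\infty>R)\le c\,r\,e^{-cR^2}$. With $R_\epsilon=C_R\sqrt{\log\{c_{\mathrm{tail}}r/(t_0\epsilon)\}}$ and $C_R,c_{\mathrm{tail}}$ large, the tail term is at most $\epsilon^2$, absorbing the polynomial factors in $r$, $K$ and $\mathbb E\|f\|^4$. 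The $t_0$ inside the logarithm is slack in this pointwise-in-$t$ statement.

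Adding the two parts, and using $(a+b)^2$ style bookkeeping as $\sqrt r\epsilon$ plus $\epsilon$ in root-mean-square, gives $\mathbb E\|\zeta_{\bar\theta}-\xi\|_2^2\le(\sqrt r+1)^2\epsilon^2$. Multiplying by $h_t^{-2}$ gives the claim for every $t\in[t_0,T]$.
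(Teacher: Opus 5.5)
Your proof follows the paper's route. The skip connection cancels at $(A_\bullet,\omega^0)$, leaving the bound $\|\Lambda_{\T,t}A_\otimes\|_{\op}\le h_t^{-1}$. You then approximate $\xi$ coordinatewise by ReLU networks on the rescaled box and split the $L^2$ error into a bulk part and a tail part, using Jensen on the tail to dominate $\xi(g_t,t)$ by $f$.

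The tail is handled differently, and both versions work:
\begin{itemize}
\item You use Cauchy--Schwarz with $\mathbb E\|f\|_2^4$ and $\mathbb P(\|g_t\|_\infty>R_\epsilon)^{1/2}$. This halves the exponent in the Gaussian tail, which a larger $C_R$ absorbs.
\item The paper bounds $2K^2\mathbb P(g_t\notin\mathcal G_{R_\epsilon})+2\mathbb E[\|f\|_2^2\mathbf 1\{g_t\notin\mathcal G_{R_\epsilon}\}]$ using the tower property and the independence of $\eta_t$ and $f$.
\end{itemize}
Adding the disjoint bulk and tail contributions gives $(r+1)\epsilon^2$. This is slightly sharper than the paper's triangle-inequality bound $(\sqrt r+1)^2\epsilon^2$.

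One step fails as written: the check that $\zeta_{\bar\theta}\in\mathcal F_{\core}$, and hence that $s_{A_\bullet,\omega^0,\bar\theta}\in\mathcal S_{\MTClass}$. The class requires $\gamma_g=C_rL_g$ and $\gamma_t=C_rL_t$ to hold globally on $\mathbb R^r\times[t_0,T]$. Clipping the output at level $K$ caps the sup-norm, but it cannot reduce the slope of the approximant off $\mathcal G_{R_\epsilon}$, and your interpolation argument says nothing about that region.

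For example, the exact partition-of-unity interpolant drops from values as large as $K$ to zero within about one grid cell outside the box. That cell has width of order $\epsilon/L_g$ in the original $g$-scale, so the slope there is of order $KL_g/\epsilon$ (up to $r$-dependent constants), not $C_rL_g$.

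The fix is to clip the input, as the paper does. Set $\zeta_{\bar\theta}(g,t)=\zeta^0_{\bar\theta}(\Pi_{R_\epsilon}(g),t)$, where $\Pi_{R_\epsilon}$ is the coordinatewise projection onto $\mathcal G_{R_\epsilon}$. This projection is an exact ReLU module of size $O(r)$, is $1$-Lipschitz, and is the identity on the box. Composing with it gives the global bound $K$ and the global Lipschitz constants at once, and it agrees with $\zeta^0_{\bar\theta}$ on $\mathcal G_{R_\epsilon}$. The rest of your argument is unchanged, including the tail bound, which only needs $\|\zeta_{\bar\theta}\|_2\lesssim K$.
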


Theorem~\ref{thm:main-approximation} isolates the approximation benefit of the Tucker score representation. At the population loading spaces and idiosyncratic variances, approximating the $p$-dimensional conditional score reduces to approximating $\xi$ on $\mathbb R^r\times[t_0,T]$. Accordingly, the network width in \eqref{eq:core-net-approximation-configuration} scales as $\epsilon^{-(r+1)}$. The exponent $r+1$ is the input dimension of $\xi$: $r$ core coordinates and one scalar time input. It does not involve the full dimension $p$. The factor $h_t^{-2}$ reflects the increasing sensitivity of the score near the data distribution and explains the use of the early-stopping time $t_0>0$. The next result incorporates estimation of the loading spaces and variance parameters from the training samples.

\paragraph{Learning from finitely many training tensors.}
The preceding approximation result fixes the loading spaces and idiosyncratic variances at their population values. We now allow these components, together with the conditional core regression, to be learned from the training sample. The resulting bound separates the cost of learning the nonlinear core map from that of estimating the mode-wise loading spaces.

Define $p_{\mathcal T}=\tr(\MT)$, $d_{\mathcal T,\beta}=r+p_{\mathcal T}p^{-\beta}$, and $p_{\max}=\max_{d\in[D]}p_d$. Here $p_{\mathcal T}$ is the number of missing control outcomes. The quantity $d_{\mathcal T,\beta}$ combines the $r$ core coordinates with the aggregate scale $p_{\mathcal T}p^{-\beta}$ of the idiosyncratic component on that support, while $p_{\max}$ is the largest mode dimension. For the sample size $n$, set $\delta_n=(r+12)\log\log(n)/\{2\log(n)\}$ and $\epsilon=n^{-(1-\delta_n)/(r+5)}$.

\begin{theorem}[Conditional score generalization]
\label{thm:main-generalization}
Suppose Assumptions~\ref{ass:tensor} and \ref{ass:mask_core_tail} hold, and suppose Assumption~\ref{ass:mask_xi_regular} holds on $\mathcal G_{R_{\epsilon}}\times[t_0,T]$. Configure the Core-Net class according to \eqref{eq:core-net-approximation-configuration} at the value of $\epsilon$ specified above, and choose $C_V\geq v_{\max}$. Suppose that $D$, the mode ranks $(r_1,\ldots,r_D)$, $L_g$, $L_t$, and $K_0$ are fixed. For fixed constants $c_0,c_T>0$ and all sufficiently large $n$, assume $c_0^{-1}p^{-\beta}<t_0\leq1$ and $T-t_0\geq c_T$. Then, with probability at least $1-1/n$ over the training sample,
\begin{equation}\footnotesize
    \frac{1}
    {d_{\mathcal T,\beta}(T-t_0)}
    \int_{t_0}^T
    \mathbb E_{X_{\mathcal C}}
    \mathbb E_{X_{\mathcal T,t}\mid X_{\mathcal C}}
    \left[
        \left\|
            \widehat s
            (X_{\mathcal T,t},X_{\mathcal C},t)
            -
            s_t
            (X_{\mathcal T,t}\mid X_{\mathcal C})
        \right\|_F^2
    \right]
    \mathrm dt
    =
    \widetilde{\mathcal O}\!\left[
        \left(
            \frac{1}{t_0}+T
        \right)
        \left\{
            n^{-\frac{2-2\delta_n}{r+5}}
            +
            p_{\max}
            n^{-\frac{r+3+2\delta_n}{r+5}}
        \right\}
    \right].
    \label{eq:main-generalization-rate}
\end{equation}
The notation $\widetilde{\mathcal O}$ suppresses fixed model constants and polylogarithmic factors in $n$, $p_{\max}$, $t_0^{-1}$, $\epsilon^{-1}$, and $\alpha_T^{-1}$.
\end{theorem}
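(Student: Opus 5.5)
The proof follows the standard empirical-risk-minimization decomposition for denoising score matching, with two adaptations: the score class is split into a nonlinear core component and a finite-dimensional linear component, and the loss is truncated because the DSM target is unbounded.

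\textbf{Excess-risk identity.} Write $\mathcal R(s)=\mathcal L_{\mask}(s)-\mathcal L_{\mask}(s_t)$. By the conditional denoising identity of Section~\ref{subsec:conditional_score_dsm}, $\mathcal R(s)$ equals the time-averaged $L^2$ error appearing in \eqref{eq:main-generalization-rate}, before dividing by $d_{\mathcal T,\beta}$. For the empirical minimizer $\widehat s$ and the oracle element $\bar s=s_{A_\bullet,\omega^0,\bar\theta}$ supplied by Theorem~\ref{thm:main-approximation}, we have
\[
\mathcal R(\widehat s)\le \mathcal R(\bar s)+2\sup_{s\in\mathcal S_{\MTClass}}\bigl|\widehat{\mathcal L}_{\mask}(s)-\mathcal L_{\mask}(s)-\{\widehat{\mathcal L}_{\mask}(s_t)-\mathcal L_{\mask}(s_t)\}\bigr|.
\]
The approximation term is obtained by integrating Theorem~\ref{thm:main-approximation} over $t$:
\[
(T-t_0)^{-1}\int_{t_0}^T h_t^{-2}(\sqrt r+1)^2\epsilon^2\,dt\lesssim (t_0^{-1}+T)\,\epsilon^2/(T-t_0).
\]
Here $\int h_t^{-2}\,dt\lesssim t_0^{-1}+T$ because $h_t\asymp\min(t,1)$. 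With $\epsilon=n^{-(1-\delta_n)/(r+5)}$ this yields the first term $n^{-(2-2\delta_n)/(r+5)}$.

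\textbf{Truncation and envelope.} The loss $\ell(X_0;s)$ is unbounded, so I truncate. Sub-Gaussianity of $f$ (Assumption~\ref{ass:mask_core_tail}) and Gaussianity of $e$ and $Z_t$ show that, outside an event of probability $\lesssim 1/n$, we have $g_t\in\mathcal G_{R_\epsilon}$ and $\|X_{\mathcal T,t}\|_F^2\lesssim p_{\mathcal T}(p^{-\beta}+1)\log n$. On this event, each $s\in\mathcal S_{\MTClass}$ obeys
\[
\|s\|_F^2\lesssim \|\mathcal W_{\mathcal T,t}\|_\infty^2\bigl(K^2+\|X_{\mathcal T,t}\|_F^2\bigr).
\]
The precision satisfies $\mathcal W_{\mathcal T,t}\lesssim h_t^{-1}$. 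The condition $t_0>c_0^{-1}p^{-\beta}$ keeps $h_t^{-1}\alpha_t^2p^{-\beta}$ bounded, so after the $\mathcal W$-weighting, the residual $\mathcal W_{\mathcal T,t}\odot(\cdot)$ has the same $O(1)$ per-coordinate scale as the transition target. The decoder term contributes through the $r$-dimensional Core-Net output, since the loadings are orthonormal, and the skip and noise terms contribute through the $p_{\mathcal T}$ treated coordinates at scale $p^{-\beta}$ relative to the signal once the target is subtracted. This is how the envelope becomes $h_t^{-1}d_{\mathcal T,\beta}$ up to logarithms, and it is why the normalization by $d_{\mathcal T,\beta}$ appears. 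The truncation bias is $O(1/n)$ times polynomial factors and is therefore negligible.

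\textbf{Covering and concentration.} The class is parametrized by three pieces: $\theta$ (a ReLU network with $J$ nonzero weights of magnitude $\le\kappa$), $\Gamma_d$ on the Stiefel manifold (dimension $p_dr_d$), and $\omega$ (dimension $\sum_d p_d$). I bound the Lipschitz dependence of $s_{\Gamma,\omega,\theta}$ on each piece. In $\Gamma$ and $\omega$ it passes through $V_t(\Gamma,\omega)$, which is controlled by $C_V$, and through the Core-Net Lipschitz constant $\gamma_g$. In $\theta$ it follows from standard bounds for sparse networks. This gives
\[
\log\mathcal N(\tau)\lesssim \Bigl(JL+\sum_d p_dr_d+\sum_d p_d\Bigr)\log(\cdot/\tau)\lesssim \bigl(mL^2+p_{\max}\bigr)\operatorname{polylog}.
\]
Applying Bernstein's inequality to the variance-controlled excess loss (a Talagrand/localization argument, so that the variance of the excess loss scales with its mean) gives, with probability $1-1/n$, an estimation term of order $(t_0^{-1}+T)\,d_{\mathcal T,\beta}\log\mathcal N/n$. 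Substituting $m\asymp\epsilon^{-(r+1)}$ up to logarithms, the network part is $\epsilon^{-(r+1)}/n=n^{-(r+4-(r+1)\delta_n... )/(r+5)}$. The choice $\epsilon^{r+3}\asymp n^{-1}$-type balancing, together with the $\delta_n$ adjustment absorbing the $\log n$ powers (the $(r+12)\log\log n$ term), matches it to $\epsilon^2$. The loading/variance part is $p_{\max}/n$. Multiplying by the localization factor $\epsilon^{2}$-scale, as the paper's exponent $p_{\max}n^{-(r+3+2\delta_n)/(r+5)}=p_{\max}n^{-1}\epsilon^{-2}\cdot$ suggests, gives the second term of \eqref{eq:main-generalization-rate}. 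I would verify this exponent bookkeeping at the end.

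\textbf{Main obstacle.} The hardest step is the uniform Lipschitz and envelope control of the map $(\Gamma,\omega)\mapsto s_{\Gamma,\omega,\theta}$ with constants that scale only as $d_{\mathcal T,\beta}$ and polylogarithmically in $p$, rather than in $p$. The difficulty is that the encoders sum over all $p$ coordinates. I would first handle this by writing $X_{\mathcal C}$ and $X_{\mathcal T,t}$ through the factor model, so that the encoder outputs equal a core term plus a Gaussian term. The Gaussian term has covariance $\preceq V_t$-type bounds, and its perturbation in $\Gamma$ is controlled in operator norm by $\|\Gamma-\Gamma'\|_{\op}$ times $\sqrt{\log n}$ via Gaussian concentration.
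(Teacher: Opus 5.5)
Your route differs from the paper's at the concentration step, and if completed it proves more than the statement.

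\textbf{How the paper gets its rate.} The paper never centers at $s_t$. It applies a one-sided relative-deviation bound, $\mathbb P g-(1+a)\mathbb P_ng\lesssim B_{\ell,\mask}\{\log\mathcal N+\log(1/\delta)\}/(na)$, to the \emph{raw} truncated DSM loss. That loss has a population minimum $C_{\mathrm{DSM}}$ (bounded by $d_{\T,\beta}(t_0^{-1}+T)$) that does not vanish. The slack therefore costs $\{(1+a)^2-1\}C_{\mathrm{DSM}}$, which forces $a=c_a\epsilon^2$, so the statistical term becomes $(J+p_{\max})/(n\epsilon^2)$. This factor $1/\epsilon^2$ is the sole source of three features of the statement:
\begin{itemize}
\item the $(r+5)$ balancing;
\item $\delta_n$, which absorbs the logarithms in $\epsilon^{-(r+3)}/n=\epsilon^2n^{-\delta_n}$;
\item the term $p_{\max}n^{-(r+3+2\delta_n)/(r+5)}=p_{\max}/(n\epsilon^2)$.
\end{itemize}
What the paper buys is that it needs only the envelope of the class residual $s-r_t$ and the \emph{expectation} $C_{\mathrm{DSM}}$, controlled via Jensen ($\E\|\xi(g_t,t)\|_2^2\le\E\|f\|_2^2$). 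It never needs a pointwise bound on the true score.

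\textbf{What your route gives.} Your excess-loss argument must be run in its localized form $\sup_s\{(\mathbb P-\mathbb P_n)\Delta_s-\tfrac12\mathbb P\Delta_s\}$, not the plain two-sided supremum you display. It then gives $\mathcal R(\widehat s)\lesssim\mathcal R(\bar s)+B\{(J+p_{\max})\log(n\Xi_{\mask})+\log n\}/n$ with $B=\widetilde{\mathcal O}\{d_{\T,\beta}(t_0^{-1}+T)\}$. At the prescribed $\epsilon$ you have $\epsilon^{-(r+1)}/n=\epsilon^2n^{-\{2+(r+3)\delta_n\}/(r+5)}$ and $p_{\max}/n\le p_{\max}n^{-(r+3+2\delta_n)/(r+5)}$, so the theorem follows. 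Your closing bookkeeping is wrong but harmless. Nothing in your argument is a ``localization factor'' producing $1/\epsilon^2$, and ``$\epsilon^{r+3}\asymp n^{-1}$'' contradicts the $\epsilon$ fixed in the statement. Simply note the domination above.

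\textbf{Repairs needed.}
\begin{enumerate}
\item[(i)] \emph{Variance condition.} The Bernstein condition $\E\Delta_s^2\lesssim B\,\mathcal R(s)$ for $\Delta_s=\ell(\cdot;s)-\ell(\cdot;s_t)$ requires a uniform bound, on the truncation set, of $\bar c(X_0)=(T-t_0)^{-1}\int_{t_0}^T\E_{X_{\T,t}\mid X_0}\|s_t-r_t\|_F^2\,\mathrm dt$. This is a bound on the \emph{true} score, which contains $\Lambda_{\T,t}A_\otimes\xi(g_t,t)$, while $\xi$ is controlled only on $\mathcal G_{R_\epsilon}$. You do not address this. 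One fix is to add $\{\bar c(X_0)\lesssim d_{\T,\beta}(t_0^{-1}+T)\log n\}$ to the truncation event; the needed sub-exponential tails follow from $\E[\{\E(\|f\|_2^2\mid g_t)\}^k]\le\E\|f\|_2^{2k}$.
\item[(ii)] \emph{Truncation.} Truncate $X_0$ only, with per-sample failure probability $\lesssim n^{-2}$ rather than $1/n$. The quantities $g_t$ and $X_{\T,t}$ involve the forward noise integrated inside $\ell$. Truncating them changes the objective, so the empirical optimality of $\widehat s$ no longer transfers to the truncated loss.
\item[(iii)] \emph{Envelope.} $\|s\|_F^2$ itself is of order $p_\T/h_t$. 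Only the residuals $s-r_t$ and $s-s_t$ have envelope $d_{\T,\beta}h_t^{-2}$ (not $h_t^{-1}$). This follows from $\|h_t^{-1}\MT-\Lambda_{\T,t}(\omega)\|_{\op}\le Cp^{-\beta}h_t^{-2}$ together with $p^{-\beta}\lesssim h_t$.
\item[(iv)] \emph{Your ``main obstacle'' is not one.} Parameter-Lipschitz constants enter the entropy only through $\log(1/\tau)$. Polynomial dependence on $p$ (e.g.\ $\|\Lambda_\C\|_{\op}\lesssim p^\beta$) is therefore harmless, and no Gaussian concentration of the encoders is needed.
\end{enumerate}
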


The left-hand side of \eqref{eq:main-generalization-rate} is the time-integrated prediction error of the learned conditional score, normalized by the effective dimension $d_{\mathcal T,\beta}$. The two terms in the bound correspond to distinct learning tasks. The first is the cost of estimating the nonlinear conditional core regression $\xi$; because $\delta_n\to0$, its leading rate is $n^{-2/(r+5)}$ up to logarithmic factors. The second is the cost of estimating the mode-wise loading spaces. It depends on $p_{\max}$ rather than on the vectorized dimension $p$, and vanishes provided $p_{\max}n^{-\frac{r+3+2\delta_n}{r+5}}\to0$.

The size of the missing region enters only through $d_{\mathcal T,\beta}=r+p_{\mathcal T}p^{-\beta}$. In the strong-signal regime $p_{\mathcal T}\lesssim p^\beta$, this effective dimension remains bounded when the mode ranks are fixed. Thus, the nonlinear component of score learning is governed by the Tucker-core dimension, while the cost of recovering the full loading structure depends only on the largest individual mode.


Section~\ref{sec:distributional_recovery} translates this score bound into recovery guaranties for the conditional distribution of the missing control outcomes.

\section{Counterfactual Distribution Recovery}\label{sec:distributional_recovery}

This section shows how conditional-score error propagates to the generated distribution and establishes recovery and prediction-interval guaranties for the missing control outcomes and their weighted summaries.

\subsection{A Score-to-Distribution Transfer Bound}\label{subsec:score_distribution_transfer}

Recall that $P_t^{\mathcal T}(\cdot\mid X_{\mathcal C})$ is the conditional distribution of the forward-diffused missing outcomes $X_{\mathcal T,t}$, whereas $\widehat P_{t_0}^{\mathcal T}(\cdot\mid X_{\mathcal C})$ is the conditional distribution generated by the trained reverse-time process \eqref{eq:main-reverse}. Both distributions are defined over the $p_{\mathcal T}$ coordinates in the treated region, where $p_{\mathcal T}=\tr(\MT)$; at $t_0>0$, they admit densities on $\mathbb R^{p_{\mathcal T}}$. Recall also that $p_{\max}=\max_{d\in[D]}p_d$.

The following result isolates the mechanism by which conditional-score error affects the generated distribution. It applies to a fixed realization of the training sample and therefore treats the trained score $\widehat s$ as given.

\begin{proposition}[Score-to-distribution transfer]\label{prop:score_distribution_transfer}
Suppose that the reverse SDE driven by $\widehat s$ is well defined and that
\[
    \int_{t_0}^T
    \mathbb E_{X_{\mathcal C}}
    \mathbb E_{X_{\mathcal T,t}\mid X_{\mathcal C}}
    \left\|
        \widehat s(X_{\mathcal T,t},X_{\mathcal C},t)
        -s_t(X_{\mathcal T,t}\mid X_{\mathcal C})
    \right\|_F^2
    \mathrm dt
    <\infty.
\]
Then
\begin{equation}
\begin{aligned}
    \mathbb E_{X_{\mathcal C}}\left[
    \operatorname{KL}\!\left(
        P_{t_0}^{\mathcal T}(\cdot\mid X_{\mathcal C})
        \,\middle\|\,
        \widehat P_{t_0}^{\mathcal T}(\cdot\mid X_{\mathcal C})
    \right)\right]\leq &
    \frac12
    \int_{t_0}^T
    \mathbb E_{X_{\mathcal C}}
    \mathbb E_{X_{\mathcal T,t}\mid X_{\mathcal C}}
    \left\|
        \widehat s(X_{\mathcal T,t},X_{\mathcal C},t)
        -s_t(X_{\mathcal T,t}\mid X_{\mathcal C})
    \right\|_F^2
    \mathrm dt\\
    &+\mathbb E_{X_{\mathcal C}}\left[
    \operatorname{KL}\!\left(
        P_T^{\mathcal T}(\cdot\mid X_{\mathcal C})
        \,\middle\|\,
        \mathcal N(0,I_{p_{\mathcal T}})
    \right)\right].
    \label{eq:score_distribution_transfer}
\end{aligned}
\end{equation}
Under Assumption~\ref{ass:tensor}, for fixed $D$, there exist constants $C,c>0$ such that
\begin{equation}
    \mathbb E_{X_{\mathcal C}}\left[
    \operatorname{KL}\!\left(
        P_T^{\mathcal T}(\cdot\mid X_{\mathcal C})
        \,\middle\|\,
        \mathcal N(0,I_{p_{\mathcal T}})
    \right)\right]
    \leq
    C\exp(-cT)
    \left\{r+p_{\mathcal T}\log(1+p_{\max})\right\}.
    \label{eq:terminal_kl_bound}
\end{equation}
\end{proposition}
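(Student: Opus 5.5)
The plan is to prove the two displays separately: \eqref{eq:score_distribution_transfer} by a conditional Girsanov/data-processing argument, and \eqref{eq:terminal_kl_bound} by convexity of relative entropy over the Gaussian mixture that defines $P_T^{\mathcal T}(\cdot\mid X_{\mathcal C})$.

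\emph{Transfer bound.} Fix a realization $X_{\mathcal C}=c$ and work on the $p_{\mathcal T}$ coordinates of $\mathbb X_{\mathcal T}$, identified with $\mathbb R^{p_{\mathcal T}}$. On these coordinates both the exact reverse SDE \eqref{eq:main-exact-reverse} and the implementable SDE \eqref{eq:main-reverse} have unit diffusion coefficient. They differ only in two respects:
\begin{itemize}
\item the drifts differ by $\widehat s(\cdot,c,T-u)-s_{T-u}(\cdot\mid c)$;
\item the initial laws are $P_T^{\mathcal T}(\cdot\mid c)$ and $\mathcal N(0,I_{p_{\mathcal T}})$, respectively.
\end{itemize}
Let $\mathbb P^c$ and $\widehat{\mathbb P}^c$ be the two path laws on $C([0,T-t_0];\mathbb R^{p_{\mathcal T}})$. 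I would apply the chain rule for relative entropy, disintegrating at time $0$, and then Girsanov's theorem on each conditional path law. This gives
\[
\operatorname{KL}(\mathbb P^c\|\widehat{\mathbb P}^c)\leq \operatorname{KL}\bigl(P_T^{\mathcal T}(\cdot\mid c)\,\|\,\mathcal N(0,I_{p_{\mathcal T}})\bigr)+\frac12\int_0^{T-t_0}\mathbb E_{\mathbb P^c}\bigl\|\widehat s(X^{\leftarrow}_{\mathcal T,u},c,T-u)-s_{T-u}(X^{\leftarrow}_{\mathcal T,u}\mid c)\bigr\|_F^2\,\mathrm du .
\]
Two further steps finish this part. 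First, under $\mathbb P^c$ the marginal at reverse time $u$ is $P^{\mathcal T}_{T-u}(\cdot\mid c)$, so the change of variable $t=T-u$ turns the integral into the forward-time integral in \eqref{eq:score_distribution_transfer}. Second, the data-processing inequality for the projection onto the terminal time $u=T-t_0$ bounds the KL between the time-$t_0$ marginals by the path-space KL. Taking $\mathbb E_{X_{\mathcal C}}$ of both sides, via Tonelli, yields \eqref{eq:score_distribution_transfer}.

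\emph{Main obstacle.} The delicate step is justifying Girsanov without Novikov's condition, since only the finiteness of the integrated score error is assumed. I would follow the localization argument of \citet{chen2023score}. Stop the processes at the first time the accumulated quadratic drift difference exceeds a level $k$. Apply Girsanov to the stopped processes, where Novikov holds trivially. Then let $k\to\infty$ using lower semicontinuity of KL under weak convergence. Finiteness of the integral guarantees that the stopping times reach $T-t_0$ almost surely, and well-posedness of the $\widehat s$-driven SDE identifies the limit law. If the integral is infinite for some $c$ the bound is trivial, so the assumption is needed only after averaging.

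\emph{Terminal bound.} By \eqref{eq:main-forward}, given $X_0$ (hence given $X_{\mathcal C}$ and $X_{\mathcal T}$), the law of $X_{\mathcal T,T}$ is $\mathcal N(\alpha_T X_{\mathcal T},h_TI_{p_{\mathcal T}})$ on the treated coordinates. Hence $P_T^{\mathcal T}(\cdot\mid c)$ is a mixture of these Gaussians over $\mathcal L(X_{\mathcal T}\mid X_{\mathcal C}=c)$. Joint convexity of KL then gives
\[
\operatorname{KL}\bigl(P_T^{\mathcal T}(\cdot\mid c)\,\|\,\mathcal N(0,I)\bigr)\leq\tfrac12\,\mathbb E\bigl[p_{\mathcal T}(h_T-1-\log h_T)+\alpha_T^2\|X_{\mathcal T}\|_F^2\bigm| X_{\mathcal C}=c\bigr].
\]
Averaging over $X_{\mathcal C}$ removes the conditioning. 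For the first term, $h_T-1-\log h_T=-e^{-T}-\log(1-e^{-T})\leq Ce^{-2T}$ once $T$ is bounded away from zero.

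For the second term, I would use Assumption~\ref{ass:tensor} to write $X_{\mathcal T}=\MT\{A_\otimes f+p^{-\beta/2}e\}$. The two pieces are controlled separately:
\begin{itemize}
\item $\|\MT A_\otimes f\|_2^2\leq\|f\|_2^2$, because $A_\otimes^\top A_\otimes=I_r$, and $\mathbb E\|f\|_2^2=\operatorname{tr}\Sigma_f\leq Cr$;
\item $p^{-\beta}\mathbb E\|\MT e\|_2^2\leq p^{-\beta}p_{\mathcal T}\sigma_{\max}^{2D}$.
\end{itemize}
Since $\alpha_T^2=e^{-T}$, this yields a bound of the form $Ce^{-cT}(r+p_{\mathcal T})$, which is at most the right-hand side of \eqref{eq:terminal_kl_bound}. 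The $\log(1+p_{\max})$ factor is therefore slack. This part of the argument is routine; the only care needed is treating small $T$ by enlarging $C$.
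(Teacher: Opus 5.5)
Your proof of \eqref{eq:score_distribution_transfer} is essentially the paper's argument, so that part is fine. Both disintegrate the path-space relative entropy at the initial time, apply a localized Girsanov bound to the exact and trained reverse processes started from a common point, use data processing under the endpoint map, and substitute $t=T-u$.

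\textbf{The gap is in \eqref{eq:terminal_kl_bound}, as $T\downarrow0$.} You mix over the clean outcomes, so the Gaussian kernel is $\mathcal N(\alpha_T x,h_TI_{p_{\mathcal T}})$. Your bound therefore contains $\tfrac12p_{\mathcal T}\{-e^{-T}-\log(1-e^{-T})\}$, which grows like $\tfrac12p_{\mathcal T}\log(1/T)$ as $T\downarrow0$; no enlargement of $C$ absorbs this.

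Your claim that the $\log(1+p_{\max})$ factor is slack is also false for a bound uniform in $T$.
\begin{itemize}
\item Relative entropy with respect to $\mathcal N(0,I_{p_{\mathcal T}})$ is nonincreasing along the Ornstein--Uhlenbeck flow and lower semicontinuous. Hence, as $T\downarrow0$, the left side of \eqref{eq:terminal_kl_bound} increases to $\mathbb E_{X_{\mathcal C}}\operatorname{KL}\{P_0^{\mathcal T}(\cdot\mid X_{\mathcal C})\,\|\,\mathcal N(0,I_{p_{\mathcal T}})\}$.
\item Project onto the orthogonal complement of the (at most $r$-dimensional) range of the treated rows of $A_\otimes$. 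There $X_{\mathcal T}$ is pure $p^{-\beta/2}$-scaled Gaussian noise, independent of $X_{\mathcal C}$.
\item This shows the limit is at least $\tfrac12(p_{\mathcal T}-r)\{\log(p^{\beta}/\sigma_{\max}^{2D})-1\}$.
\end{itemize}
So no bound of the form $Ce^{-cT}(r+p_{\mathcal T})$, with $C$ free of $p$, can hold for all $T>0$ once $p^\beta\to\infty$ and $p_{\mathcal T}\gg r$.

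As written, your argument proves the claim only for $T\ge T_0>0$, with $C=C(T_0)$. That suffices for the later theorems, where $T-t_0\ge c_T$, and there your rate is even sharper than the paper's. It is not, however, the displayed statement.

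\textbf{The repair: mix over $f$ rather than over $X_{\mathcal T}$,} so the idiosyncratic noise stays inside the kernel.
\begin{itemize}
\item Because $\Sigma_e^{\otimes}$ is diagonal, $\MT e$ is independent of $(f,X_{\mathcal C})$.
\item So, conditional on $(f,X_{\mathcal C})$, the treated coordinates of $X_{\mathcal T,T}$ are $\mathcal N\{\alpha_T\MT A_\otimes f,\diag(v_j)\}$, with $v_j=(1-e^{-T})+e^{-T}p^{-\beta}s_j$ and $s_j\in[\sigma_{\min}^{2D},\sigma_{\max}^{2D}]$.
\item Convexity of relative entropy then bounds the averaged divergence by $\tfrac12\{\sum_{j\in\mathcal T}(v_j-1-\log v_j)+e^{-T}\mathbb E\|f\|_2^2\}$.
\end{itemize}
The summands are controlled in two cases.
\begin{itemize}
\item If $p^{-\beta}s_j\le1$, then $v_j\in[p^{-\beta}s_j,1]$, and $v_j-1-\log v_j\le(1-v_j)\log(1/v_j)\le e^{-T}\log(p^\beta/s_j)\le Ce^{-T}\log(1+p_{\max})$.
\item Otherwise $v_j\in(1,\sigma_{\max}^{2D}]$, and $v_j-1-\log v_j\le(v_j-1)^2/2\le Ce^{-2T}$.
\end{itemize}
This yields \eqref{eq:terminal_kl_bound} with $c=1$, uniformly in $T>0$.

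This is an elementary alternative to the paper's route. The paper bounds the initial conditional entropy via $h(Y+G)\ge h(G)$ and then invokes Ornstein--Uhlenbeck entropy contraction. The fix also shows that the $\log(1+p_{\max})$ factor is exactly the cost of the small idiosyncratic noise scale $p^{-\beta}$.
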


The two terms in \eqref{eq:score_distribution_transfer} correspond to the two approximations made by the implementable sampler. The integral accumulates the error introduced along the reverse-time process when the population score is replaced by $\widehat s$. The second term arises from initialization. The exact reverse-time process would start from the conditional terminal distribution $P_T^{\mathcal T}(\cdot\mid X_{\mathcal C})$, which is unknown, whereas the implementable sampler starts from $\mathcal N(0,I_{p_{\mathcal T}})$. Under the Ornstein--Uhlenbeck forward process, the contribution of the initial missing outcomes diminishes, and $P_T^{\mathcal T}(\cdot\mid X_{\mathcal C})$ approaches the standard normal distribution as $T$ increases. Bound \eqref{eq:terminal_kl_bound} makes this approximation precise: its contribution decays exponentially in $T$, up to the displayed dimension factor.\footnote{The score-error term follows from a change of measure between the exact and trained reverse-time processes and contraction under the endpoint map \citep{girsanov1960transforming,follmer2005entropy,csiszar1967information,fu2024unveil}; the terminal bound follows from entropy contraction of the Ornstein--Uhlenbeck forward process \citep{bakry2014analysis}.} Consequently, for sufficiently large $T$, the initialization approximation contributes little to the error in the generated distribution, which is then governed primarily by the integrated score error.

Proposition~\ref{prop:score_distribution_transfer} compares the true and generated distributions at the same early-stopping time $t_0$. This common-time comparison excludes the error caused by stopping before time zero. The remaining difference between $P_{t_0}^{\mathcal T}(\cdot\mid X_{\mathcal C})$ and the original target distribution $P_0^{\mathcal T}(\cdot\mid X_{\mathcal C})$ is the early-stopping error, which vanishes as $t_0\downarrow0$.

\subsection{Recovery of the Missing-Outcome Distribution}\label{subsec:missing_law_recovery}

We now combine the conditional-score bound in Theorem~\ref{thm:main-generalization} with Proposition~\ref{prop:score_distribution_transfer} to establish recovery of the generated distribution. Recall that $d_{\mathcal T,\beta}=r+p_{\mathcal T}p^{-\beta}$ is the effective dimension of the missing region. Recall that $\delta_n=\frac{(r+12)\log\log(n)}{2\log(n)}$ and $\epsilon=n^{-\frac{1-\delta_n}{r+5}}$, and define $a_n=\frac{1-\delta_n}{3r+15}$. Under the choices of $t_0$ and $T$ stated below, the polylogarithmic factors suppressed in Theorem~\ref{thm:main-generalization} are bounded by $L_n=(1+\log(n)+\log(p_{\max}))^{c_L}$ for a sufficiently large fixed constant $c_L>0$. Define
\begin{equation}
    \mathfrak R_n=L_n
    \left[
        n^{-\frac{5(1-\delta_n)}{6(r+5)}}
        +
        \sqrt{p_{\max}}\,
        n^{-\frac{3r+8+7\delta_n}{6(r+5)}}
    \right].
    \label{eq:main-rate-envelope}
\end{equation}
The exponent $c_L$ only records the logarithmic factors hidden by the $\widetilde{\mathcal O}$ notation and is not an implementation parameter.

\begin{theorem}[Recovery of the missing-outcome distribution]
\label{thm:main-distribution}
Suppose Assumptions~\ref{ass:tensor} and \ref{ass:mask_core_tail} hold and Assumption~\ref{ass:mask_xi_regular} holds on $\mathcal G_{R_\epsilon}\times[t_0,T]$. Configure $\widehat s$ as in Theorem~\ref{thm:main-generalization} with the value of $\epsilon$ displayed above. Suppose that $D$, the mode ranks $(r_1,\ldots,r_D)$, $L_g$, $L_t$, and $K_0$ are fixed. Choose $t_0\asymp n^{-a_n}$, $T=C_T\{\log(n)+\log(p_{\max})\}$, and suppose that $p^\beta\geq C_\beta n^{a_n}$, where $C_T$ and $C_\beta$ are sufficiently large constants. Then there exists an event $\mathcal A_n$, measurable with respect to the training sample, such that $\mathbb{P}(\mathcal A_n)\geq1-1/n$ and, on $\mathcal A_n$,
\begin{equation}
    \mathbb E_{X_{\mathcal C}}\left[
    \operatorname{KL}\!\left(
        P_{t_0}^{\mathcal T}(\cdot\mid X_{\mathcal C})
        \,\middle\|\,
        \widehat P_{t_0}^{\mathcal T}(\cdot\mid X_{\mathcal C})
    \right)\right]
    \leq
    C d_{\mathcal T,\beta}\mathfrak R_n^2,
    \label{eq:main-distribution-kl}
\end{equation}
with $\mathfrak R_n$ defined in \eqref{eq:main-rate-envelope}. Consequently,
\begin{equation}
    \mathbb E_{X_{\mathcal C}}\left[
    \operatorname{TV}\!\left(
        P_{t_0}^{\mathcal T}(\cdot\mid X_{\mathcal C}),
        \widehat P_{t_0}^{\mathcal T}(\cdot\mid X_{\mathcal C})
    \right)\right]
    \leq
    C d_{\mathcal T,\beta}^{1/2}\mathfrak R_n.
    \label{eq:main-distribution-tv}
\end{equation}
\end{theorem}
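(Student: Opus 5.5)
We will combine Proposition~\ref{prop:score_distribution_transfer} with Theorem~\ref{thm:main-generalization}, evaluated at the prescribed $t_0\asymp n^{-a_n}$ and $T=C_T\{\log n+\log p_{\max}\}$. The first step is to check that these choices meet the hypotheses of Theorem~\ref{thm:main-generalization}. The condition $c_0^{-1}p^{-\beta}<t_0\le 1$ holds because $p^\beta\ge C_\beta n^{a_n}$ with $C_\beta$ large. The condition $T-t_0\ge c_T$ holds trivially for large $n$. With these checks in place, on the event $\mathcal A_n$ of probability at least $1-1/n$ supplied by that theorem, the integrated score error satisfies
\[
\int_{t_0}^T\mathbb E_{X_{\mathcal C}}\mathbb E_{X_{\mathcal T,t}\mid X_{\mathcal C}}\|\widehat s-s_t\|_F^2\,\mathrm dt
\le d_{\mathcal T,\beta}(T-t_0)\,L_n'\Bigl(\tfrac1{t_0}+T\Bigr)\Bigl\{n^{-\frac{2-2\delta_n}{r+5}}+p_{\max}n^{-\frac{r+3+2\delta_n}{r+5}}\Bigr\}.
\]
Here $L_n'$ collects the suppressed polylogarithmic factors. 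Since $t_0^{-1}$, $\epsilon^{-1}$ and $\alpha_T^{-1}=e^{T/2}$ are all polynomial in $(n,p_{\max})$, these factors are polylogarithmic in $n$ and $p_{\max}$. This finiteness also verifies the integrability hypothesis of Proposition~\ref{prop:score_distribution_transfer}. The well-posedness of the reverse SDE follows from the boundedness and Lipschitz constraints built into $\mathcal S_{\MTClass}$.

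Next comes the bookkeeping of rates. Using $T\lesssim\log(np_{\max})$ and $1/t_0\asymp n^{a_n}$, the score term is bounded by $d_{\mathcal T,\beta}L_n\,n^{a_n}\{n^{-2(1-\delta_n)/(r+5)}+p_{\max}n^{-(r+3+2\delta_n)/(r+5)}\}$. Since $a_n=(1-\delta_n)/\{3(r+5)\}$, the first exponent becomes
\[
-\tfrac{2(1-\delta_n)}{r+5}+\tfrac{1-\delta_n}{3(r+5)}=-\tfrac{5(1-\delta_n)}{3(r+5)}.
\]
This is twice the first exponent in \eqref{eq:main-rate-envelope}. The second exponent becomes
\[
-\tfrac{3(r+3+2\delta_n)-(1-\delta_n)}{3(r+5)}=-\tfrac{3r+8+7\delta_n}{3(r+5)},
\]
again twice the second exponent in \eqref{eq:main-rate-envelope}. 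Hence the bracket is at most $(n^{-x}+\sqrt{p_{\max}}n^{-y})^2\le 2\cdot$ that sum squared up to constants, which gives $C d_{\mathcal T,\beta}\mathfrak R_n^2$ after enlarging $c_L$.

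The terminal term comes from \eqref{eq:terminal_kl_bound}, which gives $Ce^{-cT}\{r+p_{\mathcal T}\log(1+p_{\max})\}$. Taking $C_T$ large makes $e^{-cT}\le (np_{\max})^{-k}$ for any prescribed $k$. Writing $p_{\mathcal T}\le p_{\mathcal T}p^{-\beta}\cdot p^\beta$ and $p^\beta\le p\le p_{\max}^D$ bounds this term by $d_{\mathcal T,\beta}\,p_{\max}^{D+1}(np_{\max})^{-k}\lesssim d_{\mathcal T,\beta}n^{-2}$. This is negligible relative to $d_{\mathcal T,\beta}\mathfrak R_n^2$ because $\mathfrak R_n^2\ge n^{-5/(3(r+5))}$. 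This step is the main obstacle: $D$ is fixed, so choosing $C_T$ depending on $D$ is legitimate, and we only need to make sure the constant $C$ in \eqref{eq:main-distribution-kl} does not depend on $n$ or $p$. Summing the two terms yields \eqref{eq:main-distribution-kl}.

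For \eqref{eq:main-distribution-tv}, apply Pinsker's inequality pointwise in $X_{\mathcal C}$ to get $\mathrm{TV}\le\sqrt{\mathrm{KL}/2}$. Then Jensen's inequality (concavity of the square root) gives
\[
\mathbb E_{X_{\mathcal C}}\mathrm{TV}\le\sqrt{\tfrac12\mathbb E_{X_{\mathcal C}}\mathrm{KL}}\le Cd_{\mathcal T,\beta}^{1/2}\mathfrak R_n
\]
on $\mathcal A_n$.
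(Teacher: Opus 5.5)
Your proposal is correct and follows the paper's proof. You verify the hypotheses of Theorem~\ref{thm:main-generalization} through $p^\beta\ge C_\beta n^{a_n}$, feed the integrated score bound into the Girsanov/data-processing transfer inequality, absorb $T$ and $T-t_0$ into the polylogarithmic factor, make the Ornstein--Uhlenbeck initialization term negligible by taking $C_T$ large, and carry out the same exponent arithmetic with $a_n=(1-\delta_n)/\{3(r+5)\}$. The one difference is presentational: the paper re-derives the transfer and terminal-KL bounds inline, whereas you cite Proposition~\ref{prop:score_distribution_transfer}, and your explicit comparison of the terminal term against $n^{-5/\{3(r+5)\}}$ is more detailed than the paper's $o(d_{\mathcal T,\beta}\mathfrak R_n^2)$ remark.
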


Theorem~\ref{thm:main-distribution} converts the conditional-score guaranty in Theorem~\ref{thm:main-generalization} into a recovery guaranty for the generated conditional distribution. The first component of $\mathfrak R_n$ arises from learning the nonlinear regression in the $r$-dimensional core, whereas the second arises from estimating the mode-wise loading spaces and depends on $p_{\max}$ rather than on the full dimension $p$. The factor $d_{\mathcal T,\beta}$ captures the idiosyncratic variation remaining in the treated region. In the strong-signal regime $p_{\mathcal T}\lesssim p^\beta$, this factor remains bounded when the mode ranks are fixed. More generally, if $d_{\mathcal T,\beta}\mathfrak R_n^2\to0$, the average conditional KL divergence in \eqref{eq:main-distribution-kl} converges to zero; \eqref{eq:main-distribution-tv} then gives the same conclusion for the average conditional total variation distance. Hence, the generated conditional distribution consistently recovers the early-stopped target distribution $P_{t_0}^{\mathcal T}(\cdot\mid X_{\mathcal C})$ under both metrics.

The prescribed choices of $t_0$ and $T$ control different approximation errors. The terminal time $T=C_T(\log(n)+\log(p_{\max}))$ grows only logarithmically with the sample size and the largest mode dimension. Substituting this choice into \eqref{eq:terminal_kl_bound} changes $\exp(-cT)$ into $n^{-cC_T}p_{\max}^{-cC_T}$. Because $D$ is fixed, a sufficiently large $C_T$ makes the Gaussian-initialization error negligible relative to the score-learning error. The early-stopping time $t_0$ has a different role. Choosing $t_0\asymp n^{-a_n}$ allows the perturbation of the original target to vanish while preventing the factor $t_0^{-1}$ in the score bound from increasing too rapidly. The condition $p^\beta\geq C_\beta n^{a_n}$ ensures that this choice remains above the idiosyncratic-noise scale required by Theorem~\ref{thm:main-generalization}. Because $t_0\to0$, the forward coupling in \eqref{eq:main-forward} implies that $P_{t_0}^{\mathcal T}(\cdot\mid X_{\mathcal C})$ approaches the original conditional distribution of the missing control outcomes $P_0^{\mathcal T}(\cdot\mid X_{\mathcal C})$.

\begin{remark}[Implementation implications of the recovery rate]
The recovery bound also clarifies how statistical accuracy depends on the diffusion horizon and the intrinsic dimensions of the problem. The early-stopping time $t_0$ balances approximation and estimation. It should decrease with the training sample size, with $t_0\asymp n^{-a_n}$ providing a theoretically justified benchmark. Moving $t_0$ closer to zero leaves less perturbation in the target distribution but makes the score more difficult to estimate near the data distribution. At the other end of the diffusion path, a terminal time $T$ that grows logarithmically is sufficient for the forward-diffused missing region to approach a standard Gaussian. The bound further shows that recovery depends on $r$, $p_{\max}$, and $d_{\mathcal T,\beta}$ rather than on the full dimension $p$. Because nonlinear score learning is governed by $r$, network capacity is most effectively allocated to the Tucker core. When the term involving $p_{\max}$ dominates, estimation of the loading spaces becomes the principal bottleneck. A larger missing region or a weaker factor signal increases $d_{\mathcal T,\beta}$ and therefore requires a larger training sample to attain comparable recovery accuracy.
\end{remark}

\subsection{Recovery of Weighted Counterfactual Summaries}\label{subsec:weighted_summary_recovery}

Theorem~\ref{thm:main-distribution} concerns the entire conditional distribution of the missing control outcomes in the treated region. Most causal analyzes, however, concern selected entries or averages of these outcomes. We therefore consider prespecified weighted summaries of the recovered control outcomes.

For intervention $a$, let $M_{\mathcal T_a}$ denote the binary mask for the intervention-specific region $\mathcal T_a$. A deterministic weight vector $w\in\mathbb R^p$ supported on $\mathcal T_a$ defines the control-outcome summary
\begin{equation}
U_{0,a}(w)=w^\top\vecc\bigl(M_{\mathcal T_a}\odot X_0\bigr).
\label{eq:control_counterfactual_summary}
\end{equation}
The vector $w$ specifies which coordinates enter the comparison and how they are aggregated. Because $U_{0,a}(w)$ is a linear functional of $X_{\mathcal T}$, its conditional distribution is determined by \eqref{eq:counterfactual_conditional_law}. A coordinate-selection weight isolates the control outcome at a specified tensor coordinate, whereas equal weights yield an average over selected coordinates.\footnote{For example, if $\mathcal T_a={\boldsymbol i_1,\boldsymbol i_2,\boldsymbol i_3}$ and $w$ assign weight $1/3$ to each coordinate, then $U_{0,a}(w)=\frac{1}{3}\sum_{j=1}^3Y_{\boldsymbol i_j}(0)$.}

To accommodate causal analysis involving several tensors, let $n_e$ tensors be used in the causal analysis, hereafter called the analysis tensors. For analysis tensor $\ell$, let $X_{\mathcal C}^{(\ell)}$ denote its observed control outcomes and $X_{\mathcal T}^{(\ell)}$ its missing control outcomes in the treated region, with conditional distribution $X_{\mathcal T}^{(\ell)}\mid X_{\mathcal C}^{(\ell)}\sim P_0^{\mathcal T}\bigl(\,\cdot\mid X_{\mathcal C}^{(\ell)}\bigr)$ for $\ell=1,\ldots,n_e$. The collection $X_{\mathcal C}^{1:n_e}=\bigl(X_{\mathcal C}^{(1)},\ldots,X_{\mathcal C}^{(n_e)}\bigr)$ contains the observed control outcomes supplied to the conditional generator. These analysis tensors are separate from the training sample of complete control-outcome tensors used to estimate the score. They are independent draws from the population distribution and are independent of the training sample.

For a deterministic vector $w=\MT w\in\mathbb R^p$, define the control-outcome summary for the analysis tensor $\ell$ by $U_{w,\ell,0}=w^\top\vecc\bigl(X_{\mathcal T}^{(\ell)}\bigr)$. If the comparison concerns intervention $a$ and $w$ is supported on $\mathcal T_a$, then $U_{w,\ell,0}$ is the analysis-tensor counterpart of $U_{0,a}(w)$ in \eqref{eq:control_counterfactual_summary}. When a causal comparison combines several analysis tensors, define their average control-outcome summary as follows: $V_{n_e,w,0} = \frac{1}{n_e} \sum_{\ell=1}^{n_e}U_{w,\ell,0}$. Here $n_e$ is the number of analysis tensors entering the downstream comparison, rather than the number of generated samples. Thus, $V_{n_e,w,0}$ is directly interpretable as an average missing control outcome and can be compared with the corresponding average observed outcome under treatment.

For each analysis tensor $\ell$, let $\widehat X_{\mathcal T,t_0}^{(\ell)}:=\widehat X_{\mathcal T,T-t_0}^{\leftarrow,(\ell)}$ be generated conditional on $X_{\mathcal C}^{(\ell)}$ by the reverse-time process \eqref{eq:main-reverse} driven by the estimated score. Then $\widehat X_{\mathcal T,t_0}^{(\ell)}\sim\widehat P_{t_0}^{\mathcal T}(\cdot\mid X_{\mathcal C}^{(\ell)})$, and the generated counterpart of $V_{n_e,w,0}$ is $\widehat V_{n_e,w,t_0} = \frac{1}{n_e} \sum_{\ell=1}^{n_e} w^\top\vecc (\widehat X_{\mathcal T,t_0}^{(\ell)})$. Conditional on $X_{\mathcal C}^{1:n_e}$ and the learned score, the generated vector follows the product conditional distribution $\bigotimes_{\ell=1}^{n_e} \widehat P_{t_0}^{\mathcal T} (\cdot\mid X_{\mathcal C}^{(\ell)})$. Let $\mathcal F_{\mathrm{tr}} =\sigma(X_0^{(1)},\ldots,X_0^{(n)})$ denote the sigma-field generated by the training sample. For a fixed realization of $\mathcal F_{\mathrm{tr}}$ belonging to $\mathcal A_n$, the learned score and its induced conditional distributions are fixed; this conditioning is suppressed below. The following result uses $d_{\mathrm{BL}}$ to compare the target and generated conditional distributions of the weighted aggregate.

Because $V_{n_e,w,0}$ is an arithmetic average, its distribution may contract mechanically as $n_e$ increases. We therefore state distributional recovery on the fluctuation scale $\sqrt{n_e}V_{n_e,w,0}$. This deterministic normalization is neither a studentization nor an appeal to a central limit theorem.

\begin{corollary}[Recovery of weighted counterfactual summaries]
\label{cor:main-linear-functional}
Under the conditions of Theorem~\ref{thm:main-distribution}, for every $n_e\in\mathbb N$ and every deterministic $w=\MT w$, on $\mathcal A_n$,
\begin{equation}
\begin{aligned}
    &\mathbb E_{X_{\mathcal C}^{1:n_e}}
    d_{\mathrm{BL}}\!\left[
        \mathcal L\{\sqrt{n_e}V_{n_e,w,0}\mid X_{\mathcal C}^{1:n_e}\},
        \mathcal L\{\sqrt{n_e}\widehat V_{n_e,w,t_0}\mid X_{\mathcal C}^{1:n_e}\}
    \right]
    \\
    &\qquad
    \leq
    C\|w\|_2 n^{-a_n/2}
    +
    C\sqrt{n_e}\,
    d_{\mathcal T,\beta}^{1/2}\mathfrak R_n.
\end{aligned}
    \label{eq:main-linear-functional-bound}
\end{equation}
\end{corollary}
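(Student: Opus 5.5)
The bound splits the error into two pieces by inserting the intermediate law at the early-stopping time $t_0$. Let $V_{n_e,w,t_0}=n_e^{-1}\sum_\ell w^\top\vecc(X^{(\ell)}_{\mathcal T,t_0})$, where $X^{(\ell)}_{\mathcal T,t_0}=\alpha_{t_0}X^{(\ell)}_{\mathcal T}+h_{t_0}^{1/2}(M_{\mathcal T}\odot Z^{(\ell)})$ is the forward-diffused missing block. Each $X^{(\ell)}_{\mathcal T,t_0}$ is generated independently given $X_{\mathcal C}^{(\ell)}$, so $\mathcal L(X^{(\ell)}_{\mathcal T,t_0}\mid X^{1:n_e}_{\mathcal C})=\bigotimes_\ell P^{\mathcal T}_{t_0}(\cdot\mid X^{(\ell)}_{\mathcal C})$. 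The triangle inequality for $d_{\mathrm{BL}}$ then reduces the claim to two bounds: an early-stopping bound comparing $\sqrt{n_e}V_{n_e,w,0}$ with $\sqrt{n_e}V_{n_e,w,t_0}$, and a generation bound comparing $\sqrt{n_e}V_{n_e,w,t_0}$ with $\sqrt{n_e}\widehat V_{n_e,w,t_0}$.

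\textbf{Early-stopping term.} I will use the coupling bound $d_{\mathrm{BL}}\le \mathbb E|\Delta|$, valid for any coupling because test functions are $1$-Lipschitz. Here
\[
\sqrt{n_e}(V_{n_e,w,t_0}-V_{n_e,w,0})=n_e^{-1/2}\sum_\ell\bigl[(\alpha_{t_0}-1)U_{w,\ell,0}+h_{t_0}^{1/2}w^\top\vecc(Z^{(\ell)})\bigr].
\]
The summands are conditionally independent given $X^{1:n_e}_{\mathcal C}$. I will center them at their conditional means and bound the $L^2$ norm. The noise part has variance $h_{t_0}\|w\|_2^2\le t_0\|w\|_2^2$, so it contributes $\|w\|_2 t_0^{1/2}\asymp\|w\|_2n^{-a_n/2}$.

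The factor part is $(1-\alpha_{t_0})\le t_0/2$ times $n_e^{-1/2}\sum_\ell U_{w,\ell,0}$. Under Assumptions~\ref{ass:tensor}--\ref{ass:mask_core_tail}, the conditional second moment of $U_{w,\ell,0}$ is controlled by $\|w\|_2^2(\|\Sigma_f\|_{\op}+\sigma^2_{\max}p^{-\beta})$, because $\|A_\otimes\|_{\op}=1$. The fluctuation part is therefore $O(t_0\|w\|_2)$, which is dominated by $\|w\|_2 t_0^{1/2}$.

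The conditional-mean part needs more care: $\sqrt{n_e}(\alpha_{t_0}-1)\,n_e^{-1}\sum_\ell\mathbb E[U_{w,\ell,0}\mid X^{(\ell)}_{\mathcal C}]$ is a deterministic shift given $X_{\mathcal C}$, and it could grow like $\sqrt{n_e}$. I expect this to be the main obstacle. I would handle it with the bounded part of $d_{\mathrm{BL}}$, namely $d_{\mathrm{BL}}\le\min(2,\mathbb E|\Delta|)$. If that fails to give the stated form, I would instead take the mean-zero core ($f$ has mean zero) together with the $\mathbb E_{X_{\mathcal C}}$ averaging, and bound $\mathbb E_{X_{\mathcal C}}|\text{shift}|$ by Jensen's inequality through the unconditional variance of $n_e^{-1/2}\sum_\ell U_{w,\ell,0}$. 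That variance is $O(\|w\|_2^2)$ because the analysis tensors are i.i.d.\ with mean-zero summaries. This gives an $O(t_0\|w\|_2)$ contribution, again dominated by the $n^{-a_n/2}$ term.

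\textbf{Generation term.} Since every test function in the $d_{\mathrm{BL}}$ class is bounded by $1$, $d_{\mathrm{BL}}\le 2\,\mathrm{TV}$. By the data-processing inequality, the pushforward under the linear map $(x^{(\ell)})_\ell\mapsto n_e^{-1/2}\sum_\ell w^\top\vecc(x^{(\ell)})$ does not increase TV. TV between product measures is at most the sum of the TV distances of the factors, so the term is bounded by $2\sum_\ell\mathrm{TV}\bigl(P_{t_0}^{\mathcal T}(\cdot\mid X^{(\ell)}_{\mathcal C}),\widehat P_{t_0}^{\mathcal T}(\cdot\mid X^{(\ell)}_{\mathcal C})\bigr)$.

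Taking expectations over the i.i.d.\ $X^{(\ell)}_{\mathcal C}$, which are independent of training, and applying \eqref{eq:main-distribution-tv} on $\mathcal A_n$ gives $n_e\,C d_{\mathcal T,\beta}^{1/2}\mathfrak R_n$. This is worse by a factor of $\sqrt{n_e}$ than the stated bound. To obtain $\sqrt{n_e}$ I will instead tensorize KL: it is additive over products, so the product KL has expectation $n_eCd_{\mathcal T,\beta}\mathfrak R_n^2$ by \eqref{eq:main-distribution-kl}. Pinsker's inequality and Jensen's inequality then give $\mathbb E\,\mathrm{TV}\le\sqrt{n_eCd_{\mathcal T,\beta}\mathfrak R_n^2/2}$, which is the second term $C\sqrt{n_e}\,d_{\mathcal T,\beta}^{1/2}\mathfrak R_n$. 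Adding the early-stopping and generation bounds proves \eqref{eq:main-linear-functional-bound}.
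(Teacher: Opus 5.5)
Your proposal is correct and follows the paper's proof. You split by the triangle inequality through the early-stopped law $P_{t_0}^{\mathcal T}(\cdot\mid X_{\mathcal C})$, bound the first term by the forward coupling with an unconditional second-moment bound (mean-zero, independent summaries, giving $C\|w\|_2\sqrt{t_0}$), and bound the second term by KL additivity over the product, data processing, Pinsker, Jensen, and $d_{\mathrm{BL}}\le 2\,\mathrm{TV}$. Your separate treatment of the conditional fluctuation and the conditional-mean shift is not needed: the paper simply bounds the unconditional second moment of the whole coupling difference $\sqrt{n_e}(V_{n_e,w,t_0}-V_{n_e,w,0})$, which handles both at once.
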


The first term in \eqref{eq:main-linear-functional-bound} bridges the original and early-stopped distributions. Independence and centering keep the mean-square forward perturbation of $\sqrt{n_e}V_{n_e,w,0}$ of order $\|w\|_2^2 t_0$.\footnote{Let $U_{w,\ell,t}=w^\top\vecc(X_{\mathcal T,t}^{(\ell)})$. Under the forward coupling, $U_{w,\ell,t_0}-U_{w,\ell,0}=(\alpha_{t_0}-1)U_{w,\ell,0}+h_{t_0}^{1/2}w^\top\vecc(Z_\ell)$. Independence across analysis tensors, centering, and $\mathbb E U_{w,\ell,0}^2\leq C\|w\|_2^2$ therefore give $\mathbb E\!\left[\left|\sqrt{n_e}\{V_{n_e,w,t_0}-V_{n_e,w,0}\}\right|^2\right]\leq C\|w\|_2^2\{(1-\alpha_{t_0})^2+h_{t_0}\}\leq C\|w\|_2^2t_0$. Taking the square root yields the contribution $C\|w\|_2\sqrt{t_0}=C\|w\|_2n^{-a_n/2}$ in \eqref{eq:main-linear-functional-bound}.} The second term is the conditional-distribution estimation error: relative entropy adds over the $n_e$ product components, and Pinsker's inequality yields the factor $\sqrt{n_e}$. Thus, if $n_e$ and $w=w_n$ vary with $n$, consistency follows when $\|w_n\|_2 n^{-a_n/2}\to 0$ and $\sqrt{n_e}\,d_{\mathcal T,\beta}^{1/2}\mathfrak R_n\to 0$. The target $V_{n_e,w,0}$ itself remains on the average-outcome scale used below.

\subsection{Counterfactual Prediction Intervals}\label{subsec:counterfactual_prediction_intervals}

Recovering the conditional distribution also provides quantiles and prediction intervals for the missing control outcomes. With suitable weights, the same construction applies to averages used in treatment-effect comparisons. Combining such an interval with the corresponding observed treated outcome yields an interval for the resulting treatment-effect contrast.

We next establish the coverage of these intervals. Although Corollary~\ref{cor:main-linear-functional} controls the bounded--Lipschitz distance between the weighted-summary distributions, interval coverage is not a continuous functional of the underlying distribution. We therefore impose a mild anti-concentration condition. Suppose that, for almost every $X_{\mathcal C}^{1:n_e}$, the conditional density of $\sqrt{n_e}V_{n_e,w,0}$ is uniformly bounded over its argument by a finite constant $M_{n,n_e,w}$. Equivalently, the conditional density of $V_{n_e,w,0}$ is bounded by $\sqrt{n_e}M_{n,n_e,w}$. This condition prevents excessive probability mass from accumulating near a prediction-interval endpoint.

For a realization $x=(x_{\mathcal C}^{(1)},\ldots,x_{\mathcal C}^{(n_e)})$ of the observed control outcomes, let $\widehat F_x$ denote the conditional distribution function of $\widehat V_{n_e,w,t_0}$. The density condition excludes the degenerate choice $w=0$. Because the reverse diffusion is nondegenerate on the positions in the treated region, $\widehat F_x$ is continuous. Define $\widehat q_x(u)=\inf\{v:\widehat F_x(v)\geq u\}$ for $u\in(0,1)$. We report intervals on the average-outcome scale. Multiplying both the statistic and the interval by $\sqrt{n_e}$ leaves the coverage event unchanged. The central interval induced by the estimated conditional distribution is
\begin{equation}
    \widehat I_{n_e,w,1-\alpha}(x)=\left[
        \widehat q_x(\alpha/2),
        \widehat q_x(1-\alpha/2)
    \right],
    \qquad 0<\alpha<1.
    \label{eq:infinite_generation_interval}
\end{equation}
In computation, let $\left\{\widehat V_{n_e,w,t_0}^{(b)}\right\}_{b=1}^B$ be $B$ independent draws from the estimated conditional distribution. Let $\widehat q_x^{(B)}(u)$ denote their empirical $u$-quantile and define $\widehat I_{n_e,w,1-\alpha}^{(B)}(x)=\left[\widehat q_x^{(B)}(\alpha/2),\widehat q_x^{(B)}(1-\alpha/2)\right]$. Also, let $\mathcal F_{B}^{\mathrm{gen}}=\mathcal F_{\mathrm{tr}}
    \vee
    \sigma\!\{
        \widehat V_{n_e,w,t_0}^{(1)},\ldots,
        \widehat V_{n_e,w,t_0}^{(B)}\}$.

\begin{corollary}[Coverage of counterfactual prediction intervals]
\label{cor:counterfactual_prediction_interval}
Under the conditions of Theorem~\ref{thm:main-distribution} and the conditional-density bound above, on $\mathcal A_n$,
\begin{equation}
\begin{aligned}
    &\mathbb E_{X_{\mathcal C}^{1:n_e}}
    \left|
        \mathbb{P}\!\left\{
            V_{n_e,w,0}
            \in
            \widehat I_{n_e,w,1-\alpha}
            (X_{\mathcal C}^{1:n_e})
            \,\middle|\,
            X_{\mathcal C}^{1:n_e},\mathcal F_{\mathrm{tr}}
        \right\}
        -(1-\alpha)
    \right|
    \\
    &\qquad\leq
    C\sqrt{n_e}\,
    d_{\mathcal T,\beta}^{1/2}\mathfrak R_n
    +
    C\bigl(M_{n,n_e,w}\|w\|_2\bigr)^{2/3}
    n^{-a_n/3}.
    \label{eq:main-original-interval}
\end{aligned}
\end{equation}
Moreover,
\begin{equation}
\begin{aligned}
    &\mathbb E_{X_{\mathcal C}^{1:n_e}}
    \mathbb E_{\mathrm{gen}\mid
        X_{\mathcal C}^{1:n_e},\mathcal F_{\mathrm{tr}}}
    \left|
        \mathbb{P}\!\left\{
            V_{n_e,w,0}
            \in
            \widehat I_{n_e,w,1-\alpha}^{(B)}
            (X_{\mathcal C}^{1:n_e})
            \,\middle|\,
            X_{\mathcal C}^{1:n_e},\mathcal F_{B}^{\mathrm{gen}}
        \right\}
        -(1-\alpha)
    \right|
    \\
    &\qquad\leq
    C\sqrt{n_e}\,
    d_{\mathcal T,\beta}^{1/2}\mathfrak R_n
    +
    C\bigl(M_{n,n_e,w}\|w\|_2\bigr)^{2/3}
    n^{-a_n/3}
    +CB^{-1/2}.
    \label{eq:finite_generation_interval}
\end{aligned}
\end{equation}
Here the inner expectation is taken over the finite generation sample, conditional on the observed control outcomes and the training sample.
\end{corollary}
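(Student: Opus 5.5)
The argument decomposes coverage error into three parts: the early-stopping bridge between $V_{n_e,w,0}$ and its forward-perturbed version, the distributional estimation error at time $t_0$, and, for the second display, the Monte Carlo error of empirical quantiles.

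Fix a training realization in $\mathcal A_n$ and a value $x$ of $X_{\mathcal C}^{1:n_e}$. Let $V_{t_0}:=\frac{1}{n_e}\sum_\ell w^\top\vecc(X^{(\ell)}_{\mathcal T,t_0})$ denote the forward-diffused average. Let $F_x^{0}$ and $F_x^{t_0}$ be the conditional distribution functions of $V_{n_e,w,0}$ and $V_{t_0}$.

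\emph{Step 1: coverage at level $t_0$.} Since $\widehat q_x$ is the exact quantile function of the continuous $\widehat F_x$, we have $\widehat F_x(\widehat q_x(1-\alpha/2))-\widehat F_x(\widehat q_x(\alpha/2))=1-\alpha$. Replacing $\widehat F_x$ by $F_x^{t_0}$ changes each endpoint probability by at most $\sup_v|F_x^{t_0}(v)-\widehat F_x(v)|$. This supremum is at most the total variation between the product laws $\bigotimes_\ell P^{\mathcal T}_{t_0}(\cdot\mid x^{(\ell)})$ and $\bigotimes_\ell \widehat P^{\mathcal T}_{t_0}(\cdot\mid x^{(\ell)})$, because total variation contracts under the measurable map to $V$. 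By tensorization of KL and Pinsker, that total variation is at most $\{\tfrac12\sum_\ell \mathrm{KL}_\ell\}^{1/2}$. Taking $\mathbb E_{X_{\mathcal C}^{1:n_e}}$, applying Jensen, and using that each $X^{(\ell)}_{\mathcal C}$ has the population law, the bound \eqref{eq:main-distribution-kl} of Theorem~\ref{thm:main-distribution} gives $C\sqrt{n_e}\,d_{\mathcal T,\beta}^{1/2}\mathfrak R_n$. This is the same mechanism as in Corollary~\ref{cor:main-linear-functional}.

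\emph{Step 2: bridging $t_0$ to $0$; the main obstacle.} Here we need $\sup_v|F^0_x(v)-F^{t_0}_x(v)|$, and the difficulty is that only mean-square closeness and a density bound on the $t_0=0$ side are available. Use the forward coupling from the footnote of Corollary~\ref{cor:main-linear-functional}: $\Delta:=\sqrt{n_e}(V_{t_0}-V_{n_e,w,0})$ has $\mathbb E\Delta^2\le C\|w\|_2^2 t_0$. The conditional version needs care. The $Z_\ell$ part is independent of everything. The $(\alpha_{t_0}-1)U$ part has conditional second moment $(1-\alpha_{t_0})^2\,\mathbb E(U^2\mid x)$, and this term is integrated over $x$ at the end, so only the unconditional bound is needed.

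For any $\delta>0$ and any $v$,
\[
|F^{t_0}_x(v)-F^0_x(v)|\le \mathbb P(|\Delta|>\sqrt{n_e}\,\delta\mid x)+2\sqrt{n_e}M_{n,n_e,w}\delta .
\]
The first term is at most $\mathbb E(\Delta^2\mid x)/(n_e\delta^2)$. Rescaling as $u=\sqrt{n_e}\delta$ gives the bound $\mathbb E(\Delta^2\mid x)/u^2+2M u$. Taking expectation over $x$ yields $C\|w\|_2^2t_0/u^2+2Mu$. Optimizing with $u\asymp(\|w\|_2^2t_0/M)^{1/3}$ gives $C(M\|w\|_2)^{2/3}t_0^{1/3}$. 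With $t_0\asymp n^{-a_n}$, this equals $C(M_{n,n_e,w}\|w\|_2)^{2/3}n^{-a_n/3}$.

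The coverage event for $V_{n_e,w,0}$ differs from that for $V_{t_0}$ at the two endpoints. Because $\widehat I$ depends only on $x$ and the training data, which are independent of the forward noise, the endpoint comparison can be done at fixed $v$. Combining Steps 1 and 2 with the triangle inequality gives \eqref{eq:main-original-interval}.

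\emph{Step 3: finite generation.} Conditional on $x$ and $\mathcal F_{\mathrm{tr}}$, the draws $\widehat V^{(b)}$ are i.i.d.\ from the continuous $\widehat F_x$. The values $\widehat F_x(\widehat q^{(B)}_x(u))$ are empirical quantiles of i.i.d.\ uniforms, so $\mathbb E|\widehat F_x(\widehat q^{(B)}_x(u))-u|\le CB^{-1/2}$; the DKW inequality also suffices. The true coverage of $\widehat I^{(B)}$ equals $F^0_x$ evaluated at its endpoints. This differs from $\widehat F_x$ at the endpoints by the uniform distances already bounded in Steps 1 and 2, which hold uniformly in $v$ and hence at random endpoints. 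Adding the resulting $CB^{-1/2}$ term gives \eqref{eq:finite_generation_interval}.
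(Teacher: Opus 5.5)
Your proposal is correct and follows essentially the same route as the paper: KL tensorization, data processing and Pinsker give the $t_0$-level coverage; the forward coupling combined with the density bound, Markov's inequality and optimization over the threshold handles the early-stopping bridge; and DKW (or your order-statistic argument) gives the $CB^{-1/2}$ term for finite $B$. The only difference is presentational: you work with uniform CDF (Kolmogorov) distances, which lets you evaluate directly at the random finite-$B$ endpoints instead of building the paper's fresh coupling independent of the generation sample.
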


The three terms in \eqref{eq:finite_generation_interval} arise from recovery of the early-stopped distribution, the early-stopping approximation, and finite generation, respectively. On the fluctuation scale, the forward coupling perturbs the target by $O(\|w\|_2\sqrt{t_0})$; combining this bound with $M_{n,n_e,w}$ gives the second term. The final term follows from replacing the quantiles of the estimated conditional distribution with empirical quantiles based on $B$ conditional draws \citep{massart1990tight}. Hence, the finite-generation interval is asymptotically calibrated whenever
\[
    \sqrt{n_e}\,d_{\mathcal T,\beta}^{1/2}\mathfrak R_n\to0,\qquad M_{n,n_e,w}\|w\|_2 n^{-a_n/2}\to0,\qquad B\to\infty.
\]

These intervals concern unobserved counterfactual realizations rather than a fixed population parameter. Because $V_{n_e,w,0}$ is an arithmetic average, the interval endpoints retain the units relevant for an average causal comparison. The fluctuation-scale density condition is compatible with this presentation because common scaling leaves coverage unchanged. The result does not use an additional Gaussian approximation or require estimation of an asymptotic variance; its quantiles are obtained directly from conditional draws generated by \CFTDiff.

The control-world intervals can be translated directly into intervals for treatment effects on the analysis tensors. For intervention $a$, let $w$ be supported on $\mathcal T_a$ and define the corresponding average of the observed treated outcomes by $V_{n_e,w,a}^{\mathrm{obs}}=\frac{1}{n_e}\sum_{\ell=1}^{n_e} w^\top\vecc\!\left(M_{\mathcal T_a}\odot Y^{\mathrm{obs},(\ell)}\right)$. The associated weighted sample-average counterfactual contrast is
\begin{equation}
\begin{aligned}
\tau_{n_e,w,a}&=\frac{1}{n_e}\sum_{\ell=1}^{n_e}w^\top\vecc\!\left[M_{\mathcal T_a}\odot\left(Y^{\mathrm{obs},(\ell)}-X_{\mathcal T}^{(\ell)}\right)\right]=V_{n_e,w,a}^{\mathrm{obs}}-V_{n_e,w,0}.
\label{eq:weighted_sample_ate}
\end{aligned}
\end{equation}
With equal weights over the relevant treated coordinates, this quantity is the sample average treatment effect for the analysis tensors.

Because the treated outcomes are observed, a prediction interval for the contrast in \eqref{eq:weighted_sample_ate} is obtained by translating the interval for $V_{n_e,w,0}$. In particular, the interval induced by
\eqref{eq:infinite_generation_interval} is
\begin{equation*}
\begin{aligned}
\widehat I_{\tau,n_e,w,a,1-\alpha}(x)=\left[V_{n_e,w,a}^{\mathrm{obs}}-\widehat q_x(1-\alpha/2),V_{n_e,w,a}^{\mathrm{obs}}-\widehat q_x(\alpha/2)\right].
\end{aligned}
\end{equation*}
Its finite-$B$ counterpart is obtained by replacing $\widehat q_x$ with $\widehat q_x^{(B)}$. The reversal of the quantile endpoints reflects that the treatment effect is the observed treated outcome minus the unobserved control outcome. Treating the realized treated summary as fixed, this transformation preserves the conditional coverage guarantee of the underlying control-world interval and requires no model for the treated-outcome distribution. The resulting interval quantifies counterfactual uncertainty in the weighted sample-average treatment effect.\footnote{It is not a sampling confidence interval for a population average treatment effect; population-level inference would additionally require assumptions linking the analysis tensors to the target population.}

\section{Simulation Study}\label{sec:simulation}

This section evaluates the finite-sample performance of \CFTDiff\ in recovering both missing control outcomes and their conditional distribution. We vary the size of the missing region and the strength of the latent factor structure and compare \CFTDiff\ with established point-recovery methods and two nested diffusion benchmarks. The main text reports the more challenging setting, $\beta=0.50$; results for $\beta=0.75$ are provided in Appendix~\ref{app:Additional Simulations}.

\subsection{Simulation Setting}\label{subsec:simulation_setting}

For the tensor design, we set $\beta_1=\beta_2=\beta$ and generate $64\times64$ control-world matrices from the bilinear factor model $X=A_1FA_2^\top+p^{-\beta/2}E$, where $F$ is a $16\times16$ Gaussian factor matrix and $E$ contains independent Gaussian idiosyncratic noise. Each replication contains 360 complete training matrices and 12 independently generated test matrices.

For each evaluation matrix, we remove a contiguous rectangular block containing approximately 25\%, 50\%, or 75\% of its entries. The remaining entries are the observed control outcomes supplied to the methods, and recovery is evaluated only in the missing region. Because the data-generating process is Gaussian, the exact conditional distribution of the missing outcomes given the observed control outcomes is available and provides a benchmark for distributional recovery.

We compare \CFTDiff\ with five competing point-recovery methods, organized into causal panel estimators and matrix/tensor completion methods. The first group includes difference-in-differences (\DID), synthetic control (\SC), and synthetic difference-in-differences (\SDID) \citep{arkhangelsky2021synthetic}. The second includes nuclear-norm matrix completion (\NMC) \citep{athey2021matrix} and tensor factor imputation (\TFI) based on the Tucker factor model of \citet{cen2025tensor}. All methods use the same observed entries and are evaluated on the same missing regions.

To distinguish the sources of performance gains, we construct two nested diffusion benchmarks. Convolutional Diffusion (\ConvDiff) replaces the structured score network with a conventional convolutional U-Net and learns directly in the full matrix space. Tucker Diffusion (\TuckerDiff) employs the Tucker-U-Net architecture, thereby exploiting multilinear low-rank structure without the masked conditional construction of \CFTDiff. Our method combines Tucker-based dimension reduction with conditioning on the observed control outcomes through the treatment mask. This nested design separates the gains from multilinear structure and conditional generation.

The five competing methods provide point estimates. For \ConvDiff, \TuckerDiff, and \CFTDiff, the mean of the generated draws serves as the point estimate, while the draws themselves provide an estimated conditional distribution. Point recovery is evaluated by mean absolute error (MAE) and root mean squared error (RMSE). Distributional recovery is evaluated by the energy score and continuous ranked probability score (CRPS), together with the coverage and width of 90\% intervals, the interval score (IS), and the weighted interval score (WIS). Results are averaged over five independent simulation replications. Appendix~\ref{app:simulation_design} provides the complete data-generating process, implementation details, and metric definitions.

\subsection{Simulation Results}\label{subsec:simulation_results}

We first examine counterfactual recovery performance as the missing rate increases. Table~\ref{tab:recovery_beta050} reports point, distributional, and interval recovery at missing rates of 25\%, 50\%, and 75\%. These designs progressively reduce the observed control outcomes and therefore assess how recovery changes as less conditioning information remains available.

\begin{table}[htb]
\TABLE
{Counterfactual recovery across missing rates ($\beta=0.50$).
\label{tab:recovery_beta050}}
{
\scriptsize
\setlength{\tabcolsep}{1.25pt}
\resizebox{\textwidth}{!}{%
\begin{tabular}{@{}lcccccccc@{}}
\toprule
& \multicolumn{2}{c}{Point Recovery} & \multicolumn{2}{c}{Distributional Recovery} & \multicolumn{4}{c}{Interval Recovery} \\[-1pt]
\cmidrule(lr){2-3}\cmidrule(lr){4-5}\cmidrule(lr){6-9}
Method & MAE & RMSE & Energy & CRPS & Coverage & Width & IS & WIS \\
\midrule
\multicolumn{9}{c}{\textit{Panel A: Missing Rate $=25\%$}} \\
\midrule
\NMC & 0.0588 (0.0042) & 0.0740 (0.0054) & -- & -- & -- & -- & -- & -- \\
\DID & 0.1688 (0.0129) & 0.2170 (0.0169) & -- & -- & -- & -- & -- & -- \\
\SC & 0.0533 (0.0035) & 0.0674 (0.0045) & -- & -- & -- & -- & -- & -- \\
\SDID & 0.1172 (0.0075) & 0.1502 (0.0095) & -- & -- & -- & -- & -- & -- \\
\TFI & 0.0586 (0.0035) & 0.0745 (0.0045) & -- & -- & -- & -- & -- & -- \\
\ConvDiff & 0.1191 (0.0075) & 0.1527 (0.0095) & 3.4293 (0.2163) & 0.0851 (0.0054) & 0.8831 (0.0120) & 0.4785 (0.0210) & 0.6469 (0.0398) & 0.0640 (0.0041) \\
\TuckerDiff & 0.0497 (0.0027) & 0.0637 (0.0033) & 1.4746 (0.0770) & 0.0355 (0.0019) & 0.9614 (0.0101) & 0.2745 (0.0169) & 0.2931 (0.0161) & 0.0272 (0.0015) \\
\CFTDiff & 0.0371 (0.0023) & 0.0470 (0.0029) & 1.0679 (0.0655) & 0.0263 (0.0017) & 0.9225 (0.0094) & 0.1744 (0.0098) & 0.2056 (0.0125) & 0.0199 (0.0013) \\[2pt]
\midrule
\multicolumn{9}{c}{\textit{Panel B: Missing Rate $=50\%$}} \\
\midrule
\NMC & 0.0688 (0.0054) & 0.0871 (0.0068) & -- & -- & -- & -- & -- & -- \\
\DID & 0.1711 (0.0142) & 0.2195 (0.0184) & -- & -- & -- & -- & -- & -- \\
\SC & 0.0635 (0.0057) & 0.0815 (0.0075) & -- & -- & -- & -- & -- & -- \\
\SDID & 0.1188 (0.0086) & 0.1523 (0.0109) & -- & -- & -- & -- & -- & -- \\
\TFI & 0.0769 (0.0057) & 0.0983 (0.0072) & -- & -- & -- & -- & -- & -- \\
\ConvDiff & 0.1206 (0.0087) & 0.1546 (0.0110) & 4.9108 (0.3529) & 0.0861 (0.0062) & 0.8830 (0.0117) & 0.4839 (0.0235) & 0.6560 (0.0469) & 0.0648 (0.0047) \\
\TuckerDiff & 0.0699 (0.0050) & 0.0900 (0.0063) & 2.8968 (0.2026) & 0.0500 (0.0036) & 0.9500 (0.0092) & 0.3544 (0.0253) & 0.3895 (0.0270) & 0.0378 (0.0027) \\
\CFTDiff & 0.0490 (0.0037) & 0.0631 (0.0049) & 2.0091 (0.1558) & 0.0347 (0.0027) & 0.9148 (0.0144) & 0.2195 (0.0183) & 0.2668 (0.0202) & 0.0261 (0.0020) \\[2pt]
\midrule
\multicolumn{9}{c}{\textit{Panel C: Missing Rate $=75\%$}} \\
\midrule
\NMC & 0.1081 (0.0076) & 0.1394 (0.0098) & -- & -- & -- & -- & -- & -- \\
\DID & 0.1693 (0.0126) & 0.2176 (0.0164) & -- & -- & -- & -- & -- & -- \\
\SC & 0.0963 (0.0065) & 0.1253 (0.0086) & -- & -- & -- & -- & -- & -- \\
\SDID & 0.1185 (0.0081) & 0.1521 (0.0101) & -- & -- & -- & -- & -- & -- \\
\TFI & 0.1023 (0.0063) & 0.1312 (0.0080) & -- & -- & -- & -- & -- & -- \\
\ConvDiff & 0.1197 (0.0081) & 0.1535 (0.0102) & 5.9757 (0.4011) & 0.0855 (0.0058) & 0.8867 (0.0123) & 0.4862 (0.0204) & 0.6517 (0.0430) & 0.0644 (0.0043) \\
\TuckerDiff & 0.0937 (0.0067) & 0.1207 (0.0086) & 4.7137 (0.3321) & 0.0668 (0.0047) & 0.9301 (0.0093) & 0.4330 (0.0269) & 0.5010 (0.0344) & 0.0501 (0.0035) \\
\CFTDiff & 0.0801 (0.0065) & 0.1037 (0.0084) & 4.0358 (0.3276) & 0.0570 (0.0047) & 0.9192 (0.0129) & 0.3564 (0.0287) & 0.4253 (0.0347) & 0.0426 (0.0035) \\
\bottomrule
\end{tabular}%
}
}
{The table reports recovery at three missing rates. Entries are means across five independent simulation replications, with standard deviations in parentheses. MAE and RMSE evaluate point recovery; Energy and CRPS evaluate the generated conditional distribution. Coverage and Width denote the empirical coverage and average width of nominal 90\% intervals, and IS and WIS denote the interval score and weighted interval score. Smaller MAE, RMSE, Energy, CRPS, IS, and WIS indicate better performance. Width is interpreted jointly with Coverage, whose nominal level is 0.90. Dashes indicate measures that are not applicable to point estimators. Each diffusion method uses 100 conditional draws per evaluation matrix.}
\end{table}

Among the five established point-recovery methods, \SC\ has the lowest MAE and RMSE at each missing rate, while \NMC\ and \TFI\ are also competitive when the missing region is relatively small. Their recovery errors nevertheless increase as the number of observed control outcomes decreases. \CFTDiff\ attains the lowest MAE and RMSE in every design. Relative to \SC, the strongest of the five established methods, it reduces MAE by 17\%--30\%; even after including the two nested generative specifications, its MAE remains 15\%--25\% below that of the next-best method.

The nested specifications further clarify the source of these gains. \TuckerDiff\ improves substantially on \ConvDiff, demonstrating the value of Tucker dimension reduction, while \CFTDiff\ delivers a further improvement by incorporating the observed control outcomes through the masked conditional construction. Relative to \TuckerDiff, \CFTDiff\ reduces both Energy and CRPS by 14\%--31\% across the three missing rates. Its 90\% coverage remains between 0.915 and 0.923, while its intervals are 18\%--38\% narrower than those of \TuckerDiff. The corresponding reductions in IS and WIS show that the sharper intervals retain coverage close to the nominal level.

Figure~\ref{fig:pointwise_conditional_distribution_beta050} complements these aggregate measures with pointwise conditional distributions at a 25\% missing rate. The three rows cover lower, central, and upper realized values within the missing region. This comparison examines whether the methods recover not only a point prediction but also the local shape and scale of the exact conditional distribution.

\begin{figure}[htb]
\FIGURE
{\includegraphics[width=\textwidth]{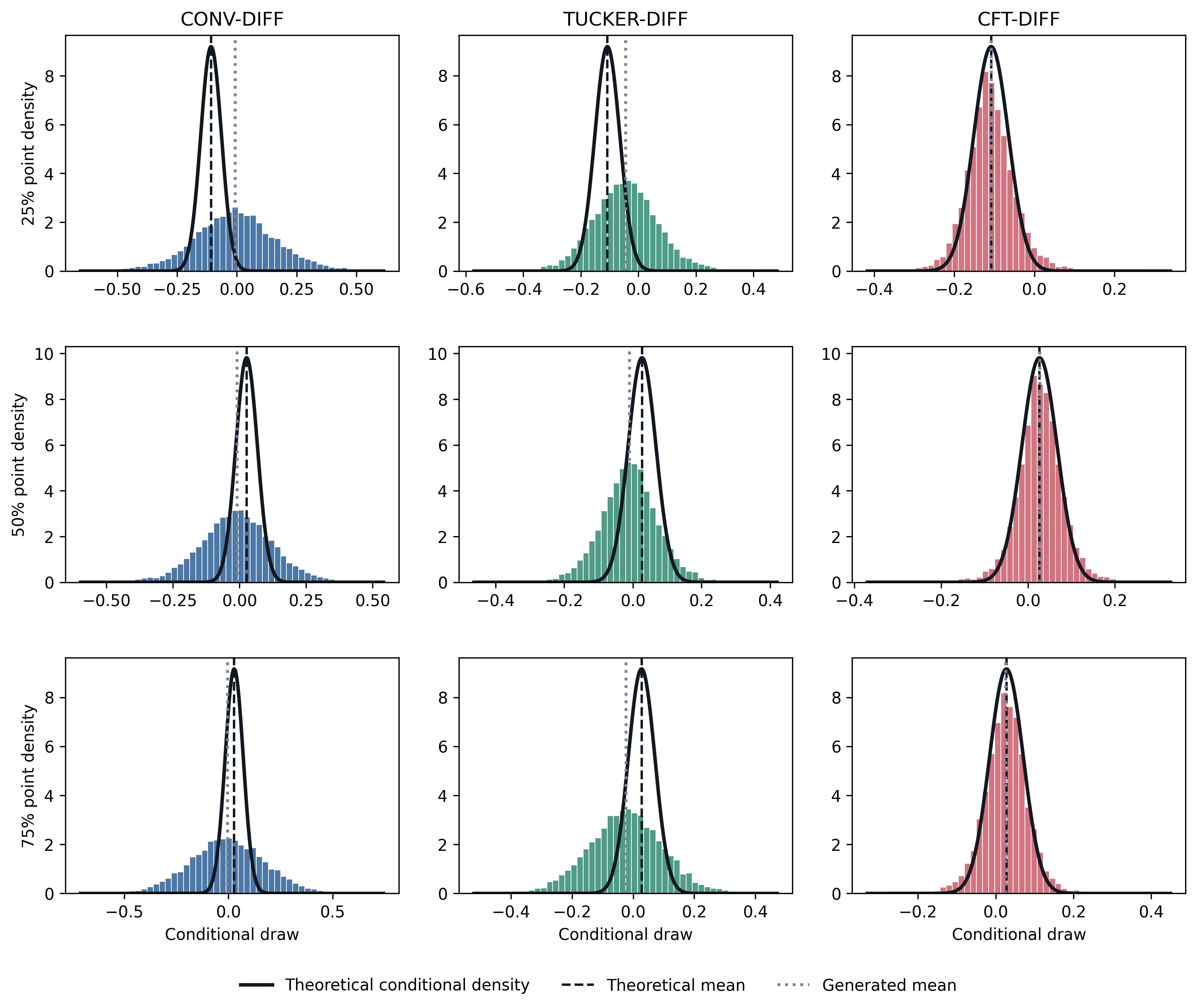}}
{Pointwise conditional distribution recovery ($\beta=0.50$).
\label{fig:pointwise_conditional_distribution_beta050}}
{Columns correspond to \ConvDiff, \TuckerDiff, and \CFTDiff. Rows display entries whose realized missing values are near the 25th, 50th, and 75th percentiles within the missing region. Each histogram is based on 10,000 conditional draws, and the missing rate is 25\%. The solid curve shows the exact conditional density; the dashed and dotted vertical lines indicate the exact conditional mean and the mean of the generated draws, respectively.}
\end{figure}

Across the three displayed entries, the \CFTDiff\ histograms closely track the location and concentration of the exact conditional densities, and their means nearly coincide with the exact conditional means. \ConvDiff\ and \TuckerDiff\ produce visibly more dispersed distributions and larger location discrepancies, particularly away from the center of the missing region's realized-value distribution. Thus, the advantage in Table~\ref{tab:recovery_beta050} reflects a closer approximation to the conditional distribution rather than only a more accurate generated mean.

\section{Distributional Demand Response to Dynamic Electricity Pricing}\label{sec:empirical}

Electricity systems increasingly rely on demand flexibility to manage short-run imbalances, especially during periods of high demand or variable supply. Dynamic pricing can shift electricity use away from costly or constrained hours, but its operational value depends on the magnitude, timing, and reliability of household response. Evaluating these interventions therefore requires counterfactual demand estimates that characterize both average demand reductions and uncertainty in whether a price signal attains its intended target.

We examine these questions using the Norwegian iFlex dynamic-pricing experiment. The experiment followed 3,746 households in five Norwegian regions during the winter of 2020--2021.\footnote{The iFlex project was conducted in two phases. We use the full-scale second phase. The first phase, conducted during the winter of 2019--2020, was stopped earlier than planned because of the onset of the COVID-19 pandemic and therefore did not cover the originally intended winter experimental window. The anonymized data are publicly available through the official iFlex repository on Zenodo (\href{https://doi.org/10.5281/zenodo.8248802} {doi:10.5281/zenodo.8248802}).} The iFlex experiment included 11 treatment groups and five regional control groups. Its within-region random allocation of households supports baseline exchangeability, while contemporaneous regional controls support counterfactual comparisons on the realized intervention dates \citep{hofmann2024demandflexibility}. On selected experiment days, each treatment group received one of 14 day-ahead hourly price signals implemented through a reward scheme. The signals differed in their peak price levels, the timing of the peak periods, and their duration. The data combine household-level hourly electricity consumption with treatment assignments, complete hourly price paths, and local temperature measurements. This design generates repeated multi-arm interventions with 24-hour outcome trajectories and substantial heterogeneity across households, dates, and pricing regimes.

For the empirical analysis, we use hourly electricity consumption from December 1, 2020, through March 26, 2021, and restrict the sample to weekdays excluding holidays. We also exclude households that did not satisfy the recruitment criteria, withdrew from the study, lacked complete electricity-consumption records over the study period, or exhibited outlying consumption profiles. The resulting balanced panel contains 3,746 households, each represented by a $74\times24$ day-by-hour outcome matrix. Households assigned to the five control groups form the training sample used to learn the conditional generator. For each treatment-group household, \CFTDiff generates the missing 24-hour no-policy (control) outcomes on intervention days conditional on the observed no-policy outcomes. We use this setting to assess point and counterfactual distribution recovery across competing methods and to quantify the average and distributional effects of the dynamic-pricing interventions.

\subsection{The iFlex Experiment and Counterfactual Structure}
The iFlex experiment assigns dynamic-pricing interventions at the experimental-group-by-day level. Let $G_i$ denote the experimental group of household $i$, and let $D_{gt}\in\{0,1,\ldots,14\}$ denote the pricing regime assigned to group $g$ on working day $t$, where $D_{gt}=0$ denotes no active intervention. We define the corresponding household-level assignment by $D_{it}=D_{G_i t}$. Thus, all households within the same treatment group are assigned the same day-ahead 24-hour price path, hereafter referred to as a price signal, on an intervention day.\footnote{Each treatment group was associated with a prespecified set of four candidate price signals, one of which was assigned whenever the group was treated. Consequently, not every signal was available to every treatment group; the group-specific candidate sets are reported in Appendix Table~\ref{tab:iflex_group_signal_sets}.}
\begin{figure}[htb]
	\FIGURE
	{\includegraphics[scale=0.52]{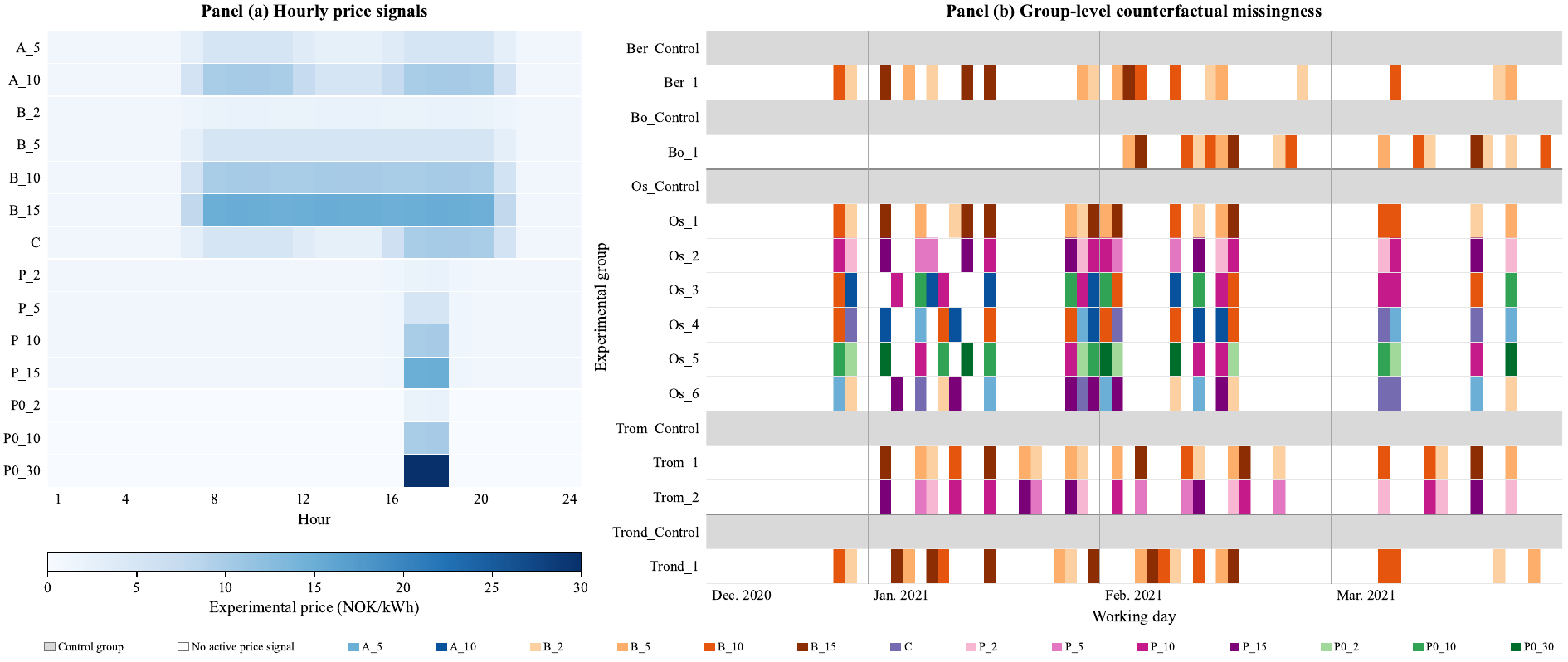}}	
	{Dynamic-pricing signals and group-level intervention schedule. \label{fig:iflex_experiment_structure}} 
	{Panel (a) shows the 24-hour price paths for the 14 active dynamic-pricing interventions. Panel (b) shows the realized
	intervention schedule for the 11 treatment groups and five regional control groups over the 74 working days in the analysis sample. Gray cells denote control groups, white cells denote treatment-group days without an active intervention, and colored cells identify the assigned price signal. Because assignment occurs at the group-day level, all households within a group share the same assignment on a given day.}
\end{figure}

We take the household as the causal unit. Let $Y_{ith}(d)$ denote household $i$'s total electricity consumption at hour $h$ on day $t$ under assignment to the complete daily price signal $d$, with $d=0$ denoting the no-policy regime. Interactions among household members are incorporated into this household-level potential response. We assume no interference across households: a household's potential outcomes are unaffected by the intervention assignments of other households. This causal restriction is distinct from statistical independence, since households may share regional and temporal shocks.

Figure~\ref{fig:iflex_experiment_structure} summarizes the intervention design. Panel~(a) displays the 14 active price signals as complete hourly price paths. Profiles A and C feature separate morning and afternoon peak periods; profile B maintains elevated prices over an extended daytime period; and profiles P and P0 concentrate the peak price in a short afternoon window. The numeric suffix denotes the peak price level in NOK/kWh and distinguishes signals with the same profile but different incentive intensities; profile C is asymmetric and therefore has no single numeric suffix. Panel~(b) reports the realized group-level intervention schedule. No price signal was implemented from December 1 through December 15, 2020, and the first interventions occurred on December 16. Thereafter, the control groups remained untreated, whereas the treatment groups received price signals intermittently. Each colored cell therefore represents a common intervention for all households in the corresponding group.

In the empirical implementation, each household contributes one day-by-hour tensor. For household $i$, define the no-policy outcome matrix and its observed counterpart by $X_0^{(i)}=\bigl[Y_{ith}(0)\bigr]_{t\in[74],\,h\in[24]}\in\mathbb R^{74\times24}$ and $Y_i^{\mathrm{obs}}=\bigl[Y_{ith}^{\mathrm{obs}}\bigr]_{t\in[74],\,h\in[24]}$. Each $X_0^{(i)}$ is an order-two tensor corresponding to $X_0$ in Section~\ref{subsec:target_training}, while $i$ indexes household-level samples. The complete matrices from households in the control groups form the training sample used for score learning.

For treatment group $g$, define the intervention-day mask
$M_{\mathcal T}^{(g)}\in\{0,1\}^{74\times24}$ and its complement by
\[
M_{\mathcal T}^{(g)}(t,h)
=
\mathbf 1\{D_{gt}\neq0\},
\qquad t\in[74],\quad h\in[24],
\qquad
M_{\mathcal C}^{(g)}
=
\mathbf 1-M_{\mathcal T}^{(g)}.
\]
Thus, each intervention day corresponds to a completely masked 24-hour row, and households in the same experimental group share the same mask. Although \citet{hofmann2024demandflexibility} report no clear aggregate intraday rebound, a price signal may affect non-peak consumption through within-day load shifting or other behavioral responses. Masking the entire intervention day prevents these potentially treatment-affected outcomes from entering the conditioning set for recovery of the no-policy counterfactual.

Retaining non-intervention days as observed no-policy outcomes requires an additional temporal exclusion restriction: interventions have no carryover effects on subsequent non-intervention days, and advance notifications have no anticipatory effects on preceding non-intervention days. These are identifying assumptions, not consequences of random allocation. They permit responses across hours within an intervention day while requiring that retained non-intervention days remain unaffected. Accordingly, the mask selects complete intervention-day rows without extending to adjacent non-intervention days. The reduced consumption on non-intervention days reported by \citet{hofmann2024demandflexibility} indicates that this restriction may fail; the causal interpretation of recovery using the retained rows depends on its validity. Under consistency, no interference across households, and the
temporal exclusion restriction, a household $i$ with $G_i=g$
satisfies the observed control-outcome relation:

\[
X_{\mathcal C}^{(i)}
=
M_{\mathcal C}^{(g)}\odot X_0^{(i)}
=
M_{\mathcal C}^{(g)}\odot Y_i^{\mathrm{obs}},
\]
whereas $M_{\mathcal T}^{(g)}\odot X_0^{(i)}$ contains the missing no-policy outcomes on intervention days. Each group-specific mask is applied synthetically to the training matrices during score learning. Conditional generation then produces the missing no-policy rows of each treatment-group household matrix conditional on its observed no-policy outcomes. Observed outcomes from all 24 hours of each intervention day are excluded from the generator's conditioning set and enter only the subsequent treatment-effect comparisons.

\subsection{Counterfactual Recovery Accuracy}\label{sec:counterfactual_recovery}

Before using the estimated counterfactuals to evaluate demand responses, we examine how accurately the methods recover no-policy outcomes when these outcomes are available for validation. We artificially mask complete household--day profiles observed under no policy. The masked 24-hour trajectories are treated as missing no-policy outcomes, while their observed values provide the ground truth for measuring recovery.

We compare \CFTDiff with five point-recovery benchmarks: the causal panel methods \DID, \SC, and \SDID, and the matrix/tensor completion methods \NMC and \TFI. We also include the two diffusion specifications nested within \CFTDiff, \ConvDiff and \TuckerDiff, to assess the gains from Tucker dimension reduction and conditioning on the observed no-policy outcomes. All methods receive the same observed no-policy outcomes and are evaluated on the same artificially masked outcomes. Point recovery is measured by MAE; for the three diffusion methods, distributional recovery is assessed using the energy score, CRPS, and interval-based measures.

We consider three missingness patterns that represent different structures of missing outcomes in panel settings. Under the switchback pattern, missing household--days occur intermittently over time. Under staggered adoption, households enter the missing region at different dates, whereas under simultaneous adoption they enter at a common date. Whenever a household--day is selected as missing, all 24 hourly outcomes are removed. Appendix~\ref{sec:appendix_missingness_patterns} provides a visual illustration of the three patterns. For each pattern, we consider seven target missing proportions, ranging from 0.149 to 0.581, so that the evaluation covers missingness levels both below and above that observed in the iFlex application. Let $\rho$ denote the realized missing proportion and let $\rho_{\mathrm{emp}}=0.265$ denote the empirical treated share. Figure~\ref{fig:counterfactual_recovery} reports the relative missingness level $\rho/\rho_{\mathrm{emp}}$, with the vertical dashed line indicating $\rho=\rho_{\mathrm{emp}}$.

\begin{figure}[htb]
	\FIGURE
	{\includegraphics[scale=0.52]{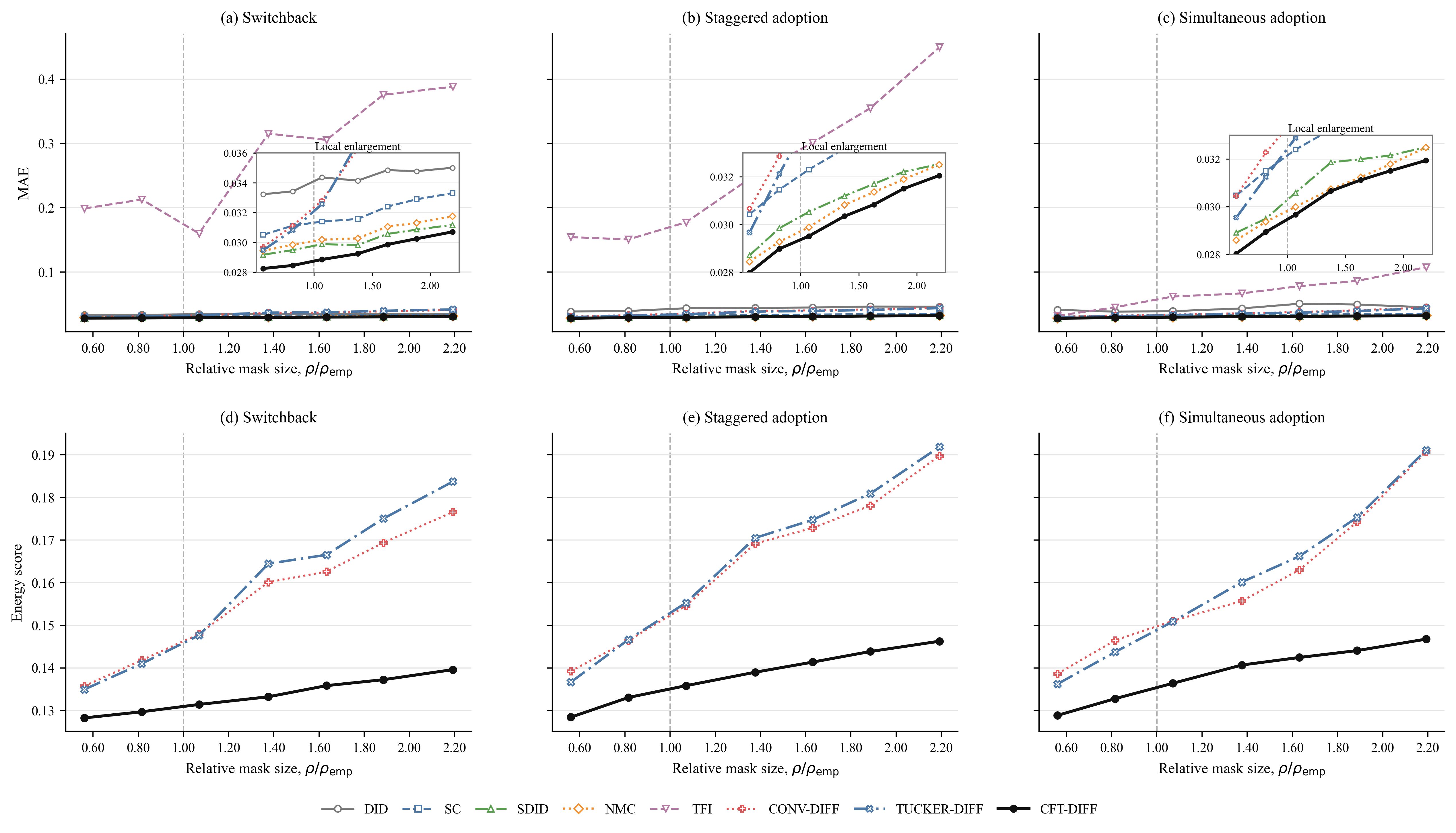}}	
	{Counterfactual recovery across missingness patterns and levels.
	\label{fig:counterfactual_recovery}} 
	{The figure evaluates counterfactual recovery by artificially masking observed no-policy outcomes and comparing the recovered outcomes with their observed values. Panels (a)--(c) report MAE across different missingness levels under the switchback, staggered-adoption, and simultaneous-adoption patterns, respectively. Panels (d)--(f) report the corresponding energy scores for \CFTDiff and its two nested diffusion specifications. The horizontal axis reports the realized missing proportion relative to the empirical treated share, $\rho/\rho_{\mathrm{emp}}$, and the vertical dashed line indicates $\rho=\rho_{\mathrm{emp}}$. Lower values indicate better recovery.}
\end{figure}

Panels (a)--(c) of Figure~\ref{fig:counterfactual_recovery} compare point recovery across the full range of missingness levels. We report MAE under the switchback, staggered-adoption, and simultaneous-adoption patterns, respectively. For the three diffusion methods, the point prediction is obtained by averaging the 100 generated draws. \CFTDiff has the lowest MAE across the three patterns and remains stable as the missing proportion increases. The inset in each panel is a local enlargement of the corresponding region in the original panel and provides a more detailed view of the methods with relatively low errors. The results show that the advantage of \CFTDiff persists across missingness levels. By contrast, the error of \TFI increases substantially as the missing region expands, especially under the switchback and staggered-adoption patterns.

Panels (d)--(f) examine counterfactual distribution recovery for \CFTDiff and its two nested diffusion specifications. We use the energy score to compare the generated distribution of each missing 24-hour trajectory with its observed trajectory. The energy score accounts for both the discrepancy between generated and observed trajectories and the dispersion of the generated trajectories; smaller values indicate better distributional recovery. Its formal definition and sample implementation, together with the definitions of the other evaluation measures, are provided in Appendix~\ref{sec:appendix_recovery_metrics}.

The lower panels show that the energy score generally increases with the missing proportion, reflecting the greater difficulty of recovering a trajectory from fewer observed no-policy outcomes. This increase is substantially smaller for \CFTDiff. Its energy score remains below those of \ConvDiff and \TuckerDiff under all three missingness patterns and throughout the range of missingness levels, with the gap tending to widen under staggered and simultaneous adoption. The point-recovery advantage of \CFTDiff is therefore accompanied by more accurate recovery of the joint distribution of the missing 24-hour trajectory.

\begin{table}[htbp]
\TABLE
{Counterfactual recovery at the missingness level closest to the empirical treated share.
\label{tab:counterfactual_recovery_main}}
{
\begin{tabular*}{\textwidth}{@{\extracolsep{\fill}}lrrrrrr}
\hline
\multicolumn{7}{l}{\textit{Panel A: Point counterfactual recovery}} \\
& \multicolumn{2}{c}{Switchback} & \multicolumn{2}{c}{Staggered adoption} & \multicolumn{2}{c}{Simultaneous adoption} \\
Method & MAE & RMSE & MAE & RMSE & MAE & RMSE \\
\hline
\DID & 0.0344 & 0.0498 & 0.0437 & 0.0664 & 0.0391 & 0.0606 \\
\SC & 0.0314 & 0.0475 & 0.0323 & 0.0488 & 0.0324 & 0.0487 \\
\SDID & \underline{0.0299} & \underline{0.0455} & 0.0305 & 0.0465 & 0.0306 & 0.0464 \\
\NMC & 0.0302 & 0.0464 & \underline{0.0299} & \underline{0.0465} & \underline{0.0300} & \underline{0.0463} \\
\TFI & 0.1599 & 0.2517 & 0.1773 & 0.3072 & 0.0618 & 0.0935 \\
\ConvDiff & 0.0328 & 0.0513 & 0.0354 & 0.0524 & 0.0337 & 0.0508 \\
\TuckerDiff & 0.0326 & 0.0507 & 0.0343 & 0.0528 & 0.0329 & 0.0508 \\
\CFTDiff & \textbf{0.0289} & \textbf{0.0449} & \textbf{0.0295} & \textbf{0.0453} & \textbf{0.0297} & \textbf{0.0454} \\
\hline
\multicolumn{7}{l}{\textit{Panel B: Counterfactual distribution recovery under the switchback mask}} \\
Method & \multicolumn{1}{r}{Energy score} & \multicolumn{1}{c}{CRPS} & \multicolumn{1}{r}{90\% coverage} & \multicolumn{1}{c}{90\% width} & \multicolumn{1}{c}{IS} & \multicolumn{1}{r}{WIS} \\
\hline
\ConvDiff & 0.1479 & 0.0239 & 0.887 & \underline{1.7246} & \underline{2.6548} & \underline{0.2569} \\
\TuckerDiff & \underline{0.1476} & \underline{0.0239} & \underline{0.907} & 1.8352 & 2.6898 & 0.2583 \\
\CFTDiff & \textbf{0.1314} & \textbf{0.0210} & \textbf{0.893} & \textbf{1.2559} & \textbf{2.0126} & \textbf{0.1819} \\
\hline
\end{tabular*}
}
{The table evaluates counterfactual recovery by artificially masking observed no-policy outcomes and comparing the recovered outcomes with their observed values. Panel A reports MAE and RMSE at the available missingness level closest to the empirical treated share, $\rho_{\mathrm{emp}}=0.265$. The target missing proportion is 0.284, with realized missing proportions of 0.283, 0.284, and 0.284 under the switchback, staggered-adoption, and simultaneous-adoption patterns, respectively. Point predictions for \CFTDiff and its two nested diffusion specifications are obtained by averaging 100 generated draws. Panel B reports distributional recovery under the switchback pattern. The energy score evaluates the joint 24-hour trajectory, whereas CRPS is computed for individual hourly outcomes. Coverage and width refer to 90\% central intervals for household--day totals. The interval score is computed for the 90\% central interval, and WIS denotes the weighted interval score combining the predictive median and the 50\%, 60\%, 70\%, 80\%, 90\%, and 95\% central intervals. Errors, scores, and interval widths are reported on the normalized experimental scale. Boldface and underlining indicate the best and second-best values in each column, respectively; coverage is ranked by absolute deviation from the nominal level of 0.90, and the remaining measures are ranked from smallest to largest.}
\end{table}

Table~\ref{tab:counterfactual_recovery_main} provides a more detailed comparison at the available missingness level closest to the empirical treated share. The common target missing proportion is 0.284, with realized missing proportions close to 0.284 under all three patterns. Panel A reports MAE and RMSE. \CFTDiff achieves the lowest value for every point-recovery measure. Under the switchback pattern, its MAE and RMSE are 0.0289 and 0.0449, compared with 0.0299 and 0.0455 for the second-best method. Under staggered adoption, the corresponding values are 0.0295 and 0.0453, while under simultaneous adoption they are 0.0297 and 0.0454. The improvement is therefore consistent across both error measures and all three missingness patterns.

Panel B evaluates the generated counterfactual distribution in more detail under the switchback pattern. The energy score evaluates the joint 24-hour trajectory, whereas CRPS evaluates the marginal distribution of individual hourly outcomes. The 90\% coverage measures how often the observed household--day total falls within the generated 90\% central interval, and should therefore be close to its nominal level of 0.90. The 90\% width measures the average width of these intervals, with a smaller value indicating a sharper interval when coverage remains comparable. The interval score (IS) jointly penalizes wide intervals and observations falling outside the interval, while the weighted interval score (WIS) combines the predictive median with several central intervals. For the energy score, CRPS, IS, and WIS, smaller values indicate better distributional recovery.

The distributional measures give a consistent picture. \CFTDiff reduces the energy score from approximately 0.148 for the two nested diffusion specifications to 0.1314 and reduces CRPS from 0.0239 to 0.0210. Its 90\% empirical coverage is 0.893, close to the nominal level of 0.90, while its average interval width is 1.2559, compared with 1.7246 for \ConvDiff and 1.8352 for \TuckerDiff. The interval score and WIS likewise decrease to 2.0126 and 0.1819, respectively. Thus, the narrower intervals produced by \CFTDiff retain coverage close to the nominal level. Together with Figure~\ref{fig:counterfactual_recovery}, these results show that \CFTDiff improves point and counterfactual distribution recovery across the missingness patterns and levels considered.

\subsection{Causal Estimands}\label{sec:causal_estimands}

Having evaluated counterfactual recovery, we define the causal measures used to assess each price signal relative to the no-policy outcome. 
Within the analysis sample, our estimands are defined by assigned price signals and include households regardless of active response, targeting the effect of assignment to the implemented intervention package. For a household--day--hour observation with $D_{it}=d$, define the demand reduction as $\Delta_{ith}(d)=Y_{ith}(0)-Y_{ith}(d)$, so that a positive value indicates lower electricity use under the price signal. Because $Y_{ith}(d)=Y_{ith}^{\mathrm{obs}}$ whenever $D_{it}=d$, estimating $\Delta_{ith}(d)$ requires recovering the corresponding no-policy outcome $Y_{ith}(0)$.

For a price signal $d$ and a set of hours $\mathcal H$, we consider two aggregate estimands. The pooled relative demand reduction is
\[
R_d(\mathcal H)=100\times\frac{\displaystyle\sum_{(i,t):D_{it}=d}\sum_{h\in\mathcal H}\left\{Y_{ith}(0)-Y_{ith}(d)\right\}}{
\displaystyle\sum_{(i,t):D_{it}=d}\sum_{h\in\mathcal H}Y_{ith}(0)},
\]
and the average absolute reduction per assigned household--day is
\[
\Delta_d^{\mathrm{abs}}(\mathcal H)=\frac{1}{N_d}\sum_{(i,t):D_{it}=d}\sum_{h\in\mathcal H}\left\{Y_{ith}(0)-Y_{ith}(d)\right\},
\]
where $N_d$ is the number of household--day observations assigned signal $d$. We evaluate both estimands over the signal-specific peak-price hours and additionally evaluate the relative reduction over the full 24-hour period. The peak-period estimands measure demand changes during the hours targeted by the price signal, whereas the full-day estimand measures the overall change in daily electricity use. These estimands characterize responses under the realized repeated-intervention schedule. Their operational relevance is supported by the sustained responsiveness documented by \citet{hofmann2024demandflexibility} during the study period.

In the empirical analysis, the unobserved $Y_{ith}(0)$ is replaced by its estimated no-policy counterfactual. For \CFTDiff, the point estimates in Section~\ref{sec:demand_response} use the mean of $B=100$ generated draws, while the complete set of draws is retained for the distributional summaries defined below.

Let $\nu=(g,t)$ denote a group--day intervention event, let $\mathcal E_d$ be the set of realized events assigned signal $d$, and let $\mathcal H_d$ denote the signal-specific peak-price hours. For the $b$th sample from $\widehat P_{t_0}^{\mathcal T}(\cdot\mid X_{\mathcal C})$, let $\widetilde Y_\nu^{(b)}(0)$ denote the generated no-policy outcomes for event $\nu$, and write $\widetilde Y_{ith}^{(b)}(0)$ for the generated no-policy outcome of household $i$ at hour $h$. For $\nu=(g,t)\in\mathcal E_d$, define the corresponding peak-period relative demand reduction by
\[
R_\nu^{(b)}=100\times\frac{\displaystyle\sum_{i:G_i=g}\sum_{h\in\mathcal H_d}\left\{\widetilde Y_{ith}^{(b)}(0)-Y_{ith}^{\mathrm{obs}}\right\}}{\displaystyle\sum_{i:G_i=g}\sum_{h\in\mathcal H_d}\widetilde Y_{ith}^{(b)}(0)},\qquad b=1,\ldots,B.
\]
The pooled response distribution for signal $d$ is represented by $\mathcal R_d=\left\{R_\nu^{(b)}:\nu\in\mathcal E_d;b=1,\ldots,B\right\}$. To measure how reliably a signal attains a target reduction $c$, define $\widehat{\pi}_d(c)=\frac{1}{B|\mathcal E_d|}\sum_{\nu\in\mathcal E_d}\sum_{b=1}^{B}\mathbb I\!\left\{R_\nu^{(b)}\geq c\right\}$. Thus, $\widehat{\pi}_d(c)$ is the fraction of pooled reductions that reach the target. The collection $\mathcal R_d$ combines variation across realized intervention events with uncertainty in the generated no-policy counterfactuals.\footnote{It is distinct from the sampling distribution of an average-effect estimator and from a joint cross-world distribution of individual treatment effects.}

\subsection{Demand Response to Dynamic Pricing}\label{sec:demand_response}

We now use the no-policy counterfactuals generated by \CFTDiff to evaluate the demand-response measures defined in Section~\ref{sec:causal_estimands}. Point estimates are obtained by averaging $B=100$ conditional draws.

\begin{figure}[htb]
	\FIGURE
	{\includegraphics[scale=0.45]{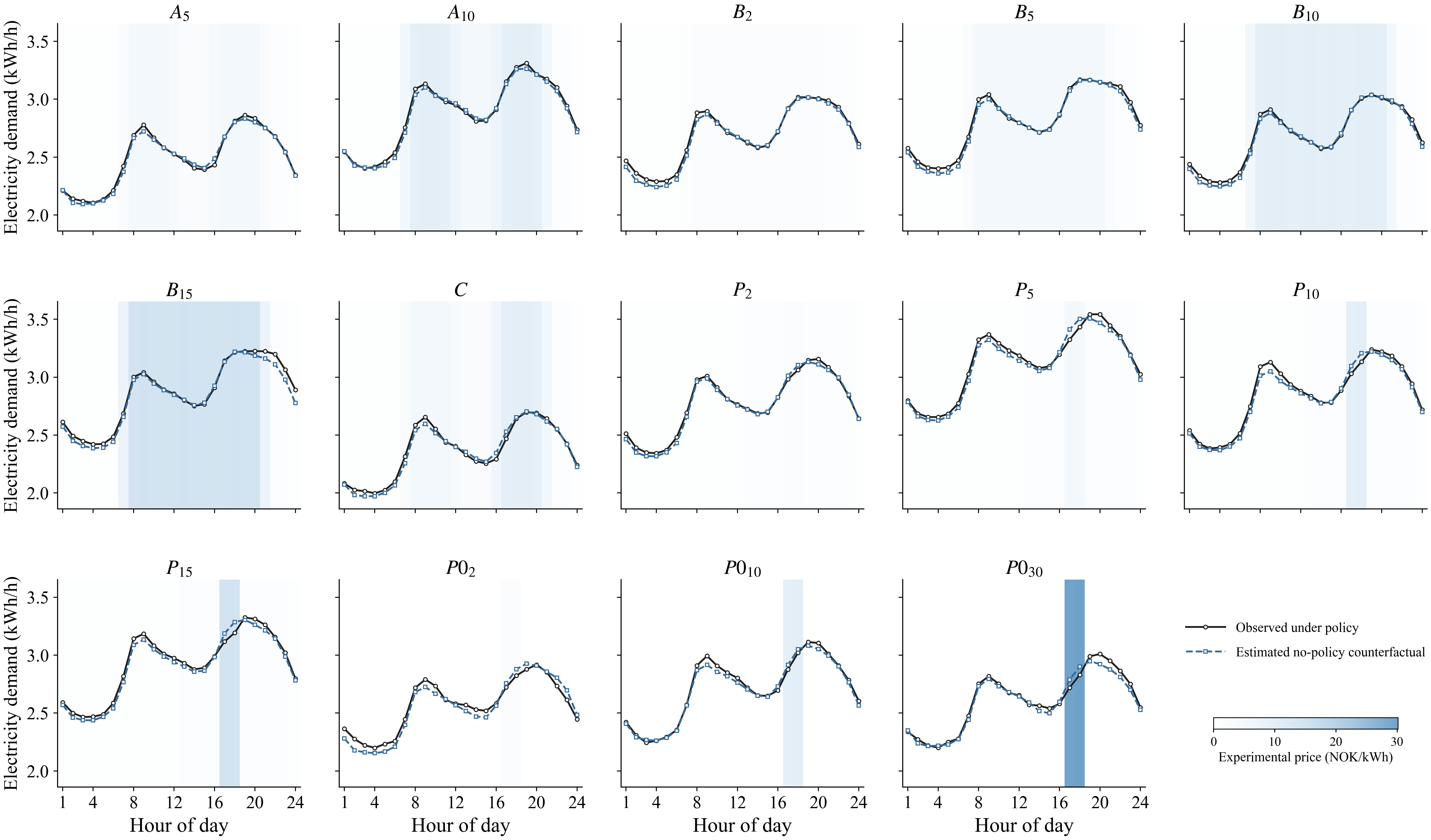}}	
	{Observed and counterfactual electricity demand by price signal.
	\label{fig:causal_hourly_demand_profiles}}
    {The solid line reports observed hourly electricity demand under the assigned price signal, and the dashed line reports the estimated no-policy counterfactual. The counterfactual curve averages 100 \CFTDiff conditional draws. Background shading indicates the experimental electricity price, with darker shading corresponding to higher prices.}

\end{figure}

Figure~\ref{fig:causal_hourly_demand_profiles} shows the intraday pattern of electricity demand and its response to the different price signals. Across signals, demand follows a similar daily profile, with a morning peak and a larger evening peak. During the high-price hours, observed demand generally remains close to its estimated no-policy counterfactual, although the direction and magnitude of the difference vary across signals. The clearest reduction occurs for $P0_{30}$, which imposes a peak price of 30 NOK/kWh over a two-hour afternoon window, whereas most other signals produce only modest changes. For several signals, observed demand exceeds its estimated counterfactual outside the high-price window. This pattern is consistent with intertemporal substitution: households may shift some deferrable electricity use to lower-price hours rather than eliminate it altogether. Within a given price profile, higher peak prices are associated with larger reductions in some cases, most visibly among the $P0$ signals, although the pattern also varies with the timing and duration of the intervention. The responses therefore differ across price designs in both magnitude and timing.

\begin{table}
	\TABLE
	{Causal and distributional effects by price signal.
	\label{tab:treatment_effects_by_signal}}
	{
	\setlength\tabcolsep{2pt}
	\begin{tabular*}{\textwidth}{@{\extracolsep{\fill}}lccccccc}
		\toprule
		Signal & Peak h & Events & Peak kWh & Peak \% & Full-day \% & $Q_{0.10}$ & $\widehat\pi_d(3\%)$ \\
		\midrule
		$A_5$     & 8  & 9  & -0.164 & -0.76 & -0.36 & -2.34 & 0.01 \\
		$A_{10}$    & 8  & 10 & -0.154 & -0.61 & -0.46 & -2.54 & 0.00 \\
		$B_2$     & 13 & 35 & -0.080 & -0.22 & -0.77 & -2.15 & 0.02 \\
		$B_5$     & 13 & 29 & -0.103 & -0.27 & -0.78 & -2.26 & 0.04 \\
		$B_{10}$    & 13 & 39 & -0.034 & -0.09 & -0.58 & -2.13 & 0.02 \\
		$B_{15}$    & 13 & 28 & -0.079 & -0.20 & -1.01 & -2.31 & 0.02 \\
		$C$      & 8  & 8  & -0.064 & -0.31 & -0.41 & -2.27 & 0.02 \\
		$P_2$     & 2  & 16 & 0.070  & 1.14  & -0.49 & -2.56 & 0.24 \\
		$P_5$     & 2  & 13 & 0.158  & 2.29  & -0.71 & -1.54 & 0.41 \\
		$P_{10}$    & 2  & 27 & 0.140  & 2.22  & -0.69 & -1.77 & 0.41 \\
		$P_{15}$    & 2  & 20 & 0.163  & 2.51  & -0.75 & -1.62 & 0.42 \\
		$P0_{2}$  & 2  & 5  & 0.089  & 1.58  & -0.80 & -1.98 & 0.31 \\
		$P0_{10}$ & 2  & 10 & 0.069  & 1.15  & -0.54 & -1.56 & 0.22 \\
		$P0_{30}$ & 2  & 5  & 0.139  & 2.43  & -0.61 & -0.04 & 0.35 \\
		\bottomrule
	\end{tabular*}%
	}
    {Positive values denote electricity-demand reductions. Peak h is the number of hours in the signal-specific peak-price period, and Events is the number of group-day intervention events assigned signal $d$. Peak kWh is the average absolute reduction per assigned household--day over the peak-price period. Peak and full-day percentages are pooled ratio-of-sums estimates based on the mean of 100 \CFTDiff conditional draws. $Q_{0.10}$ is the 10th percentile of the event-by-draw peak-reduction distribution and $\widehat\pi_d(3\%)$ is the corresponding fraction of outcomes with a peak reduction of at least 3\%.}
\end{table}

Table~\ref{tab:treatment_effects_by_signal} quantifies these patterns. Seven signals---all $P$ and $P0$ signals---have positive peak-period estimates, and the peak-period percentage exceeds the corresponding full-day percentage for 12 of the 14 signals. Thus, except for the two $A$ signals, the demand response is more favorable during the targeted hours than over the day as a whole. The largest estimated peak reduction occurs under $P_{15}$, at 2.51\%, or 0.163 kWh per assigned household--day over the two-hour peak period. The comparison between peak-period and full-day estimates is consistent with households shifting electricity use away from high-price hours rather than uniformly reducing daily consumption.

The aggregate estimates in Table~\ref{tab:treatment_effects_by_signal} do not show when the offsetting demand adjustments occur. We therefore examine the temporal pattern of the response more closely. Figure~\ref{fig:causal_dynamic_effects} reports the hourly effects and then expands the evaluation window from the signal-specific peak hours toward the full day.

\begin{figure}[htb]
	\FIGURE
	{\includegraphics[scale=0.47]{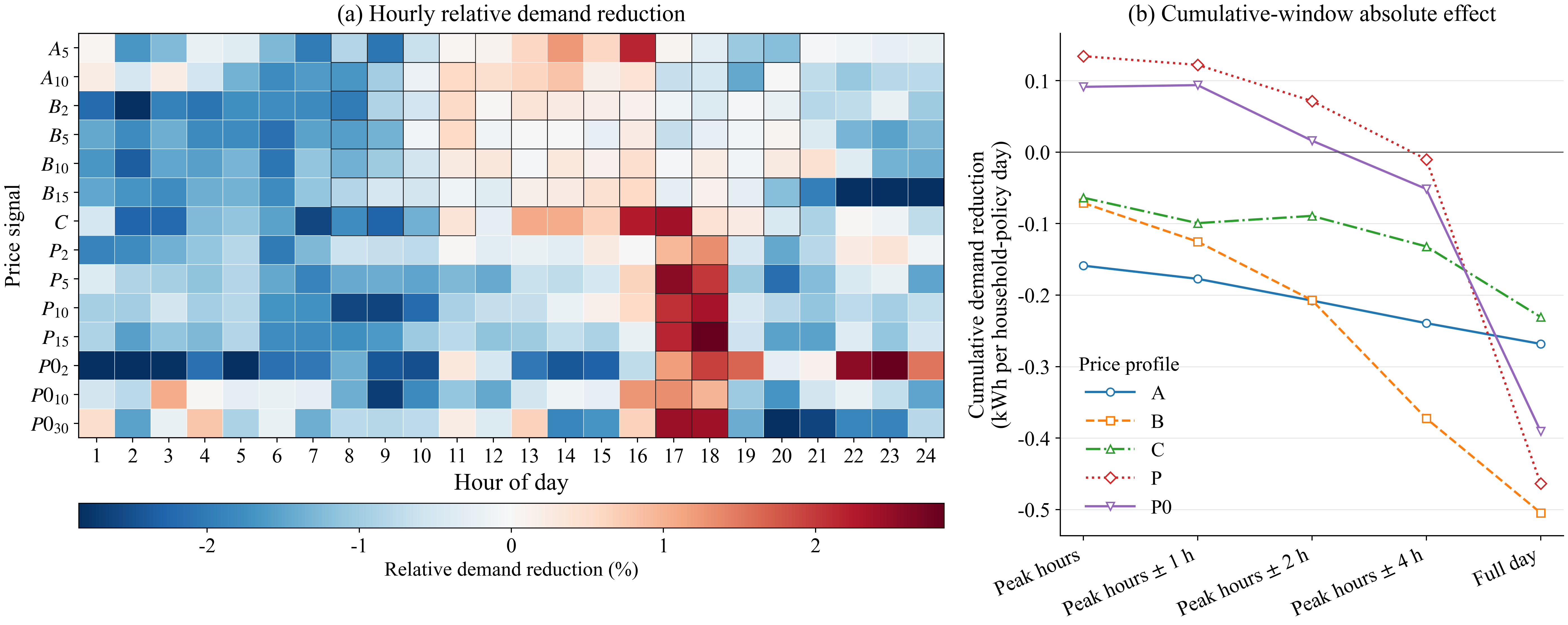}}	
	{Hourly and cumulative demand reductions.
	\label{fig:causal_dynamic_effects}} 
    {Panel (a) reports hourly relative demand reductions for each price signal, with positive values indicating lower observed demand relative to the estimated no-policy counterfactual. Thin outlines identify the signal-specific peak-price hours. Panel (b) reports cumulative demand changes in kWh per assigned household--day as the evaluation window expands from the peak-price hours to the full day. Results in Panel (b) are aggregated by price profile and weighted by the observed frequency of household--day assignments.}
\end{figure}

Panel (a) shows that, for many price signals, observed demand decreases relative to the estimated no-policy counterfactual during the high-price hours, with larger reductions inside the targeted periods than in surrounding hours. The magnitude and timing of the response vary across signals with different price levels and intervention durations. Panel (b) shows that the estimated cumulative reduction generally becomes smaller as the evaluation window expands beyond the signal-specific peak hours, turning from positive around the peak hours to negative over the full day for profiles $P$ and $P0$. This pattern is consistent with, but does not by itself establish, intertemporal shifting of electricity use from the targeted periods to other hours. From an operational perspective, shifting demand away from a constrained period may remain valuable even without a reduction in total daily consumption.

\subsection{Counterfactual Distributions, Intervals, and Response Reliability}\label{sec:distributional_effects}

The preceding analysis uses the mean of the \CFTDiff draws to study average demand responses. We now use the pooled response distributions $\mathcal R_d$ and target-attainment rates $\widehat{\pi}_d(c)$ defined in Section~\ref{sec:causal_estimands} to examine variation in event-level responses and the reliability with which each signal attains a specified reduction target. The estimated conditional distributions provide prediction intervals for the missing no-policy outcomes.

\begin{figure}[htb]
	\FIGURE
	{\includegraphics[scale=0.45]{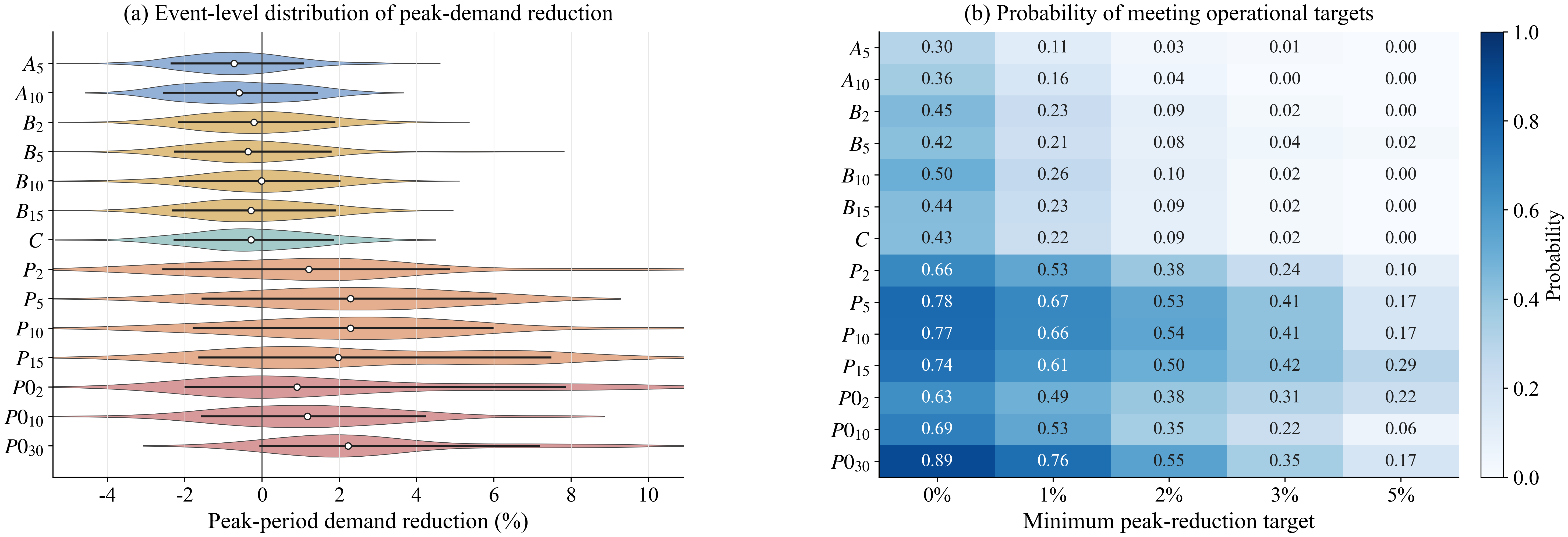}}	
	{Counterfactual distribution and reliability of demand response.
	\label{fig:causal_distribution_reliability}} 
	{For each intervention event $\nu$ and \CFTDiff draw $b$, $R_\nu^{(b)}$ denotes the peak-period relative demand reduction computed using the generated no-policy counterfactual $\widetilde{Y}_\nu^{(b)}(0)$ from $\widehat{P}_{t_0}^{\mathcal T}(\cdot\mid X_{\mathcal C})$. Panel (a) reports the empirical distribution of $R_\nu^{(b)}$ across intervention events and $B=100$ generated samples for each price signal; the white point denotes the median and the horizontal segment spans the 10th to the 90th percentile. Panel (b) reports the corresponding probability of achieving alternative peak-period reduction targets. Positive values indicate lower observed demand relative to the generated no-policy counterfactual.}
\end{figure}

Figure~\ref{fig:causal_distribution_reliability} shows substantial differences in the distribution and reliability of demand response across price signals. Panel~(a) shows that peak-period responses span a wide range across intervention events and conditional draws for most signals. The $A$ and $B$ signals are generally concentrated at lower reductions, whereas the $P$ signals place more of their distributions above zero. Panel~(b) translates these distributions into target-attainment rates. $P0_{30}$ produces a positive peak reduction in 89\% of the event--draw outcomes and a reduction of at least 3\% in 35\%. For the $A$ and $B$ signals, the corresponding fractions attaining a 3\% reduction range from 0\% to 4\%. The relative performance of the P signals also varies with the target: $P0_{30}$ has the highest probability of attaining the lower targets, whereas $P_{15}$ performs better at the 3\% and 5\% targets. Thus, the full counterfactual distribution provides information that is not available from the mean response alone: price signals with similar average effects can differ substantially in how consistently they reduce demand during the targeted high-price hours.

The conditional distributions also provide prediction intervals for the average no-policy outcomes in the treated region. Table~\ref{tab:counterfactual_intervals_by_signal} reports central 90\% prediction intervals from \CFTDiff and its two nested diffusion specifications, together with their $\sqrt{n_e}$-scaled widths and width ratios. 

\begin{table}[htb]
\TABLE
{Counterfactual prediction intervals for average no-policy outcomes by price signal.
\label{tab:counterfactual_intervals_by_signal}}
{
\setlength{\tabcolsep}{3.5pt}
\begin{tabular*}{\textwidth}
{@{\extracolsep{\fill}}lcccccccc@{}}
\hline
&
\multicolumn{3}{c}{No-policy prediction interval}
&
\multicolumn{3}{c}{Scaled interval width}
&
\multicolumn{2}{c}{Width ratio}
\\
Signal & \textsc{conv} & \textsc{tucker} & \textsc{cft} & \textsc{conv} & \textsc{tucker} & \textsc{cft} & \textsc{conv}/\textsc{cft} & \textsc{tucker}/\textsc{cft} \\
\hline
$A_{5}$ & [2.297, 2.345] & [2.233, 2.289] & [2.454, 2.482] & 1.089 & 1.265 & 0.631 & 1.725 & 2.004 \\
$A_{10}$ & [2.592, 2.654] & [2.538, 2.598] & [2.847, 2.880] & 1.394 & 1.360 & 0.747 & 1.867 & 1.822 \\
$B_{2}$ & [2.534, 2.574] & [2.464, 2.497] & [2.658, 2.678] & 1.593 & 1.347 & 0.818 & 1.948 & 1.647 \\
$B_{5}$ & [2.636, 2.682] & [2.559, 2.606] & [2.772, 2.800] & 1.665 & 1.740 & 1.039 & 1.603 & 1.675 \\
$B_{10}$ & [2.512, 2.548] & [2.437, 2.479] & [2.648, 2.671] & 1.566 & 1.793 & 1.008 & 1.553 & 1.778 \\
$B_{15}$ & [2.725, 2.771] & [2.629, 2.680] & [2.839, 2.862] & 1.707 & 1.870 & 0.867 & 1.968 & 2.156 \\
$C$ & [2.218, 2.269] & [2.178, 2.224] & [2.335, 2.369] & 1.135 & 1.033 & 0.769 & 1.475 & 1.342 \\
$P_{2}$ & [2.608, 2.654] & [2.529, 2.577] & [2.741, 2.768] & 1.231 & 1.277 & 0.747 & 1.648 & 1.709 \\
$P_{5}$ & [2.926, 2.980] & [2.861, 2.922] & [3.109, 3.143] & 1.441 & 1.656 & 0.895 & 1.610 & 1.850 \\
$P_{10}$ & [2.642, 2.687] & [2.560, 2.609] & [2.818, 2.843] & 1.578 & 1.740 & 0.858 & 1.839 & 2.027 \\
$P_{15}$ & [2.713, 2.762] & [2.635, 2.695] & [2.908, 2.931] & 1.510 & 1.862 & 0.708 & 2.132 & 2.629 \\
$P0_{2}$ & [2.339, 2.396] & [2.284, 2.341] & [2.532, 2.562] & 0.923 & 0.911 & 0.476 & 1.939 & 1.914 \\
$P0_{10}$ & [2.449, 2.504] & [2.390, 2.449] & [2.683, 2.711] & 1.251 & 1.348 & 0.637 & 1.964 & 2.116 \\
$P0_{30}$ & [2.420, 2.475] & [2.353, 2.413] & [2.579, 2.610] & 0.887 & 0.977 & 0.496 & 1.790 & 1.972 \\
\hline
\multicolumn{4}{@{}l}{Mean across signals} & 1.355 & 1.441 & 0.764 & 1.790 & 1.903 \\
\hline
\end{tabular*}
}
{The table uses all hourly outcomes in the signal-specific treated region. The \textsc{conv}, \textsc{tucker}, and \textsc{cft} columns correspond to \ConvDiff, \TuckerDiff, and \CFTDiff, respectively, and report central 90\% prediction intervals for their generated average no-policy outcomes. For each signal, the scaled interval width equals the interval width multiplied by $\sqrt{n_e}$, where $n_e$ is the number of treated households contributing to that signal. The interval endpoints remain on the original kWh scale. The width ratios compare the \textsc{conv} and \textsc{tucker} widths with the \textsc{cft} width; values above one indicate wider intervals than under \textsc{cft}. The final row reports arithmetic means of the displayed width statistics across signals.}
\end{table}

Table~\ref{tab:counterfactual_intervals_by_signal} shows that \CFTDiff produces the narrowest prediction interval for every price signal. Its mean $\sqrt{n_e}$-scaled width is 0.764, compared with 1.355 for \ConvDiff and 1.441 for \TuckerDiff. The corresponding mean width ratios are 1.790 and 1.903, so the \ConvDiff and \TuckerDiff intervals are, on average, 79.0\% and 90.3\% wider than the \CFTDiff intervals, respectively. The artificial-mask validation in Table~\ref{tab:counterfactual_recovery_main} provides the corresponding calibration evidence: the narrower \CFTDiff intervals retain empirical coverage close to the nominal level.


\section{Conclusion}\label{sec:conclusion}

This paper develops \CFTDiff to extend counterfactual analysis in panel or tensor data beyond point recovery to the joint conditional distribution of missing control outcomes. The method generates these outcomes conditional on the observed control outcomes, with efficiency gained by learning the nonlinear component of the score in a low-dimensional Tucker core. Repeated draws from the learned distribution preserve dependence among the missing outcomes and yield point estimates, quantiles, counterfactual prediction intervals, and target-attainment probabilities from a single model.

Nonasymptotic, high-probability bounds show that the difficulty of conditional-score estimation depends on the Tucker core dimension, the largest mode dimension, and the effective number of missing outcomes rather than on the full tensor dimension alone. The score bounds imply recovery guaranties for the conditional distribution and weighted summaries, while the prediction-interval guarantee accounts for a finite number of generated draws. Simulations confirm the roles of both components of \CFTDiff: Tucker dimension reduction and conditioning on the observed control outcomes each improve recovery, and \CFTDiff is more accurate than the causal panel and matrix/tensor completion methods considered. In the iFlex study, artificial-mask validation confirms these gains across three treatment patterns. The subsequent causal analysis shows that price signals with similar average demand reductions can have different target-attainment probabilities. Rankings based only on average responses can therefore conceal meaningful differences in intervention reliability.

Future research can extend \CFTDiff to causal settings with pretreatment covariates, more complex treatment patterns, or heavy-tailed outcomes. The heavy-tailed setting is especially important when decisions depend on rare or extreme counterfactual events. Another direction is to use the recovered distributions directly in intervention selection and optimization and establish guaranties for the resulting decisions. When Tucker structure is too restrictive, learned nonlinear representations such as variational autoencoders may provide an alternative form of dimension reduction. Comparing these representations with Tucker reduction would clarify which low-dimensional structure is most effective across data settings and sample sizes.

\ACKNOWLEDGMENT{All authors contributed equally and are listed in alphabetical order.}

\theendnotes


\begingroup
\def\bibfont{\fontsize{10pt}{14pt}\selectfont}
\def\bibsep{0pt}
\bibliographystyle{informs2014} 
\bibliography{ref.bib} 
\endgroup



\ECSwitch

\fontsize{11pt}{16pt}\selectfont

\ECHead{Technical Proofs and Additional Results}

This appendix contains technical proofs and additional results for the paper.
\begin{itemize}
    \item Appendix \ref{app:notation} provides the summary of notation.
    \item Appendix \ref{app:proofs} provides the proofs of theorems in the main text.
    \item Appendix \ref{appendix_sec:Framework Details} provides the details of our framework.
    \item Appendix \ref{app:Additional Simulations} provides the additional simulation results.
    \item Appendix \ref{sec:Additional Empirical Analysis} provides additional empirical results.
\end{itemize}


\section{Summary of Notation}\label{app:notation}

Table~\ref{tab:notation_main} summarizes the principal notation used in the main text, while Table~\ref{tab:notation_appendix} collects additional notation used in the e-companion. Both tables group notation by its role in the analysis.

\begin{table}[htb]
\TABLE
{Principal notation used in the main text.\label{tab:notation_main}}
{%
\begin{minipage}{6.10in}
\scriptsize
\setlength{\tabcolsep}{0pt}
\renewcommand{\arraystretch}{1.08}
\begin{tabular}{@{}
>{\raggedright\arraybackslash}p{1.05in}
@{\hspace{0.08in}}
>{\raggedright\arraybackslash}p{1.80in}
@{\hspace{0.20in}}
>{\raggedright\arraybackslash}p{1.05in}
@{\hspace{0.08in}}
>{\raggedright\arraybackslash}p{1.80in}
@{}}
\hline
\multicolumn{4}{@{}l@{}}{\emph{General notation}}\\
\hline
$\mathcal{L}(X),\mathcal{L}(X\mid Y)$
& Probability law and conditional probability law
& $\lVert\cdot\rVert_2,\lVert\cdot\rVert_F,
   \lVert\cdot\rVert_{\mathrm{op}}$
& Euclidean, Frobenius, and operator norms\\
$\operatorname{vec}(\cdot),\operatorname{Tucker}(\cdot)$
& Vectorization and reshaping into a Tucker core tensor
& $\odot,\oslash$
& Entrywise multiplication and division\\
\multicolumn{2}{@{}l}{\emph{Potential outcomes and counterfactual target}}
& \multicolumn{2}{l@{}}{\emph{Tucker factor structure}}\\
\cline{1-2}\cline{3-4}
$D,p_d,p$
& Number of modes, dimension of mode $d$, and full dimension
  $p=\prod_{d=1}^D p_d$
& $F$
& Latent Tucker core tensor\\
$\mathcal{I}$
& Tensor index set $[p_1]\times\cdots\times[p_D]$
& $A_d$
& Mode-$d$ loading matrix, with $A_d^\top A_d=I_{r_d}$\\
$Y_{\boldsymbol{i}}(a)$
& Potential outcome at coordinate $\boldsymbol{i}$ under intervention $a$;
  $a=0$ denotes control
& $E$
& Idiosyncratic tensor component\\
$Y_{\boldsymbol{i}}^{\mathrm{obs}}$
& Observed outcome at coordinate $\boldsymbol{i}$
& $r_d,r$
& Mode-$d$ Tucker rank and core dimension
  $r=\prod_{d=1}^D r_d$\\
$\mathcal{T}_a,\mathcal{T},\mathcal{C}$
& Intervention-$a$ support, treated region, and the observed-control region
& $\beta_d,p^\beta$
& Mode-$d$ factor strength and scaling
  $p^\beta=\prod_{d=1}^D p_d^{\beta_d}$\\
$M_{\mathcal{T}_a},M_{\mathcal{T}},M_{\mathcal{C}}$
& Binary masks for $\mathcal{T}_a$, $\mathcal{T}$, and $\mathcal{C}$
& $f,e$
& Vectorized core $f=\operatorname{vec}(F)$ and idiosyncratic component
  $e=\operatorname{vec}(E)$\\
$X_0,P_0$
& Complete control-world tensor and its distribution
  $P_0=\mathcal{L}(X_0)$
& $p_{\mathrm{core}}$
& Density of the vectorized core $f$\\
$X_{\mathcal{T}},X_{\mathcal{C}}$
& Missing and observed control outcomes
& $\Sigma_{e,d},\Sigma_e^{\otimes}$
& Mode-specific and separable idiosyncratic covariance matrices\\
$P_0^{\mathcal{T}}(\cdot\mid X_{\mathcal{C}})$
& Target law $\mathcal{L}(X_{\mathcal{T}}\mid X_{\mathcal{C}})$
& $A_{\otimes},\mathcal Q_\sigma$
& Kronecker loading matrix and tensor of coordinatewise idiosyncratic
  variances\\
$\{X_0^{(i)}\}_{i=1}^n$
& Training sample
& $\mathcal{W}_{\mathcal{T},t},\mathcal{W}_{\mathcal{C}}$
& Coordinatewise precision tensors for the two regions\\
$w,U_{0,a}(w)$
& Prespecified weight vector and intervention-specific control-world summary
& $H_{\mathcal{T},t},H_{\mathcal{C}}$
& Core-level information matrices for the two regions\\
&
&
$V_t,g_t,G_t$
& Core covariance, vectorized core statistic, and its tensorization\\
\multicolumn{2}{@{}l}{\emph{Masked conditional diffusion}}
& \multicolumn{2}{l@{}}{\emph{Estimation and distributional recovery}}\\
\cline{1-2}\cline{3-4}
$t,t_0,T,u$
& Diffusion time, early-stopping time, terminal time, and reverse elapsed time
& $\xi(g,t)$
& Conditional core regression
  $\mathbb{E}(\alpha_t f\mid g_t=g)$\\
$\alpha_t,h_t$
& Signal coefficient and noise variance;
  $\alpha_t=e^{-t/2}$ and $h_t=1-e^{-t}$
& $\Gamma,\omega$
& Candidate Tucker loadings and idiosyncratic variances\\
$W_t,\overline{W}_u,Z_t$
& Forward and reverse Wiener processes and Gaussian perturbation tensor
& $\mathcal{Z}_\theta,\zeta_\theta$
& Tensor-valued Core-Net and its vectorized form\\
$X_{\mathcal{T},t}$
& Forward-diffused outcomes in the treated region at time $t$
& $s_{\Gamma,\omega,\theta}$
& Candidate masked Tucker score\\
$P_t^{\mathcal{T}},p_t^{\mathcal{T}}$
& Conditional distribution and density of $X_{\mathcal{T},t}$ given
  $X_{\mathcal{C}}$
& $\mathcal{S}_{\mathrm{MT}}$
& Masked Tucker conditional-score class\\
$s_t(X\mid X_{\mathcal{C}})$
& Conditional score
  $\nabla_{\mathcal{T}}\log p_t^{\mathcal{T}}(X\mid X_{\mathcal{C}})$
& $\mathcal{L}_{\mathrm{mask}},
   \widehat{\mathcal{L}}_{\mathrm{mask}}$
& Population and empirical masked score-matching losses\\
$X_{\mathcal{T},u}^{\leftarrow}$
& Exact reverse-time process in the treated region
& $\widehat{s}$
& Conditional score learned from the training sample\\
$\widehat{X}_{\mathcal{T},u}^{\leftarrow}$
& Reverse process driven by $\widehat{s}$
& $p_{\mathcal{T}},d_{\mathcal{T},\beta},p_{\max}$
& Number of missing outcomes, effective treated dimension, and largest mode dimension\\
$\widehat{P}_{t_0}^{\mathcal{T}}
 (\cdot\mid X_{\mathcal{C}})$
& Conditional distribution generated by the trained reverse process
& $V_{n_e,w,0},\widehat{V}_{n_e,w,t_0}$
& Target and generated weighted control-world averages\\
$\widehat{I}_{n_e,w,1-\alpha}^{(B)}(x)$
& Counterfactual prediction interval based on $B$ generated draws
& $\tau_{n_e,w,a}$
& Weighted treated-minus-control counterfactual contrast\\
\hline
\end{tabular}
\end{minipage}%
}
{}
\end{table}


\begin{table}[htb]
\TABLE
{Additional notation used in the e-companion.
 \label{tab:notation_appendix}}
{%
\begin{minipage}{6.10in}
\scriptsize
\setlength{\tabcolsep}{0pt}
\renewcommand{\arraystretch}{1.08}
\begin{tabular}{@{}
>{\raggedright\arraybackslash}p{1.05in}
@{\hspace{0.08in}}
>{\raggedright\arraybackslash}p{1.80in}
@{\hspace{0.20in}}
>{\raggedright\arraybackslash}p{1.05in}
@{\hspace{0.08in}}
>{\raggedright\arraybackslash}p{1.80in}
@{}}
\hline
\multicolumn{2}{@{}l}{\emph{Vectorized proof notation}}
& \multicolumn{2}{l@{}}{\emph{Discrete training and generation}}\\
\cline{1-2}\cline{3-4}
$x_0,x_{\mathcal{T}},x_{\mathcal{C}}$
& Vectorizations of $X_0,X_{\mathcal{T}},X_{\mathcal{C}}$
& $j,N_{\mathrm{step}},\bar{\alpha}_j$
& Discrete diffusion index, number of steps, and cumulative signal coefficient\\
$x_{\mathcal{T},t}$
& Vectorized forward-diffused outcomes in the treated region
& $x_j,\widehat{\epsilon}_\theta$
& Masked input and learned noise predictor at step $j$\\
$\MT, \MC$
& Diagonal matrix forms of the region masks
& $\widehat{X}_0,\widetilde{X}_0^{(b)}$
& Inferred clean tensor and conditional completion $b$\\
$\Omega_t,\Omega_0$
& Full-dimensional noise covariance matrices at times $t$ and $0$
& $H,W,k_1,k_2$
& Matrix dimensions and implemented Tucker-core dimensions\\
$\Lambda_{\mathcal{T},t},\Lambda_{\mathcal{C}}$
& Masked precision matrices for the two regions
& $\phi_{\text{enc}},U,V$
& Feature stem and trainable row and column Tucker bases\\
$\phi(\cdot;\mu,V)$
& Gaussian density with mean $\mu$ and covariance $V$
& $E_{\mathcal{C}},E_{\mathcal{T}}$
& Encoded observed-control and treated-region features\\
$p_{\mathrm{con}}^t$
& Marginal density of the low-dimensional core statistic $g_t$
& $B$
& Number of independently generated conditional draws\\
$\lVert v\rVert_B^2$
& Quadratic seminorm $v^\top Bv$
& $x_{\max}$
& Clipping threshold in DDIM generation\\
\multicolumn{2}{@{}l}{\emph{Theoretical rate quantities}}
& \multicolumn{2}{l@{}}{\emph{Simulation and empirical evaluation}}\\
\cline{1-2}\cline{3-4}
$\Sigma_f,\mathcal{G}_R$
& Core covariance and region
  $\{g:\lVert g\rVert_\infty\le R\}$
& $\rho,\rho_{\mathrm{emp}}$
& Simulation missing rate and empirical treated proportion\\
$B_f,C_1,C_2$
& Constants in the tail condition for $p_{\mathrm{core}}$
& $\mathcal{J},N_{\mathcal{J}}$
& Set and number of missing simulation entries\\
$v_{\max},L_g,L_t,K_0$
& Core-variance, Lipschitz, and magnitude constants
& $y_i,x_i^{(b)},\widehat{y}_i$
& Realized value, generated draw, and generated-mean prediction\\
$R_\epsilon$
& High-probability core truncation radius
& $L_i,U_i$
& Endpoints of an entrywise prediction interval\\
$L,m,J,\kappa$
& Network depth, width, sparsity, and parameter bound
& $\mu_i,\sigma_i,\widehat{\mu}_{mi},\widehat{\sigma}_{mi}$
& Exact and generated conditional moments\\
$K,\gamma_g,\gamma_t,C_V$
& Network output, Lipschitz, and core-covariance bounds
& $D_{mi}$
& Pointwise location-and-scale discrepancy for method $m$\\
$A_\bullet,\omega^0$
& Population loading matrices and idiosyncratic variances
& $\mathrm{MAE},\mathrm{RMSE}$
& Point-recovery measures\\
$\delta_n,\epsilon,a_n$
& Exponent adjustment, approximation tolerance, and early-stopping exponent
& $\mathrm{Energy},\mathrm{CRPS}$
& Joint and marginal distributional scores\\
$L_n,\mathfrak R_n,\mathcal{A}_n$
& Polylogarithmic factor, recovery-rate sequence, and high-probability event
& $\mathrm{IS},\mathrm{WIS}$
& Interval and weighted interval scores\\
$\mathcal{F}_{\mathrm{tr}}$
& Sigma-field generated by the training sample
& $\mathcal{M},n_{\mathcal{M}}$
& Set and number of artificially masked empirical outcomes\\
$M_{n,n_e,w}$
& Conditional-density bound used for anti-concentration
& $U_j,\widetilde{U}_j^{(b)}$
& Observed and generated household--day totals\\
$\widehat{F}_x,\widehat{q}_x$
& Conditional CDF and quantile function of the generated weighted summary
& $L_{j,\alpha},H_{j,\alpha},m_j$
& Interval endpoints and predictive median for household--day $j$\\
$\widehat{F}_{B,x},D_{B,x}$
& Empirical generated CDF and its uniform error
& $n_B,K_C$
& Number of evaluated blocks and number of central coverage levels\\
\hline
\end{tabular}
\end{minipage}%
}
{}
\end{table}

\section{Proofs}\label{app:proofs}

Throughout this section, tensor-valued scores are identified with their vectorizations when vector notation is used. Vectorization preserves inner products and hence identifies the Frobenius norm with the Euclidean norm. In vector coordinates, \(p_t(\cdot\mid x_\C)\) denotes the density \(p_t^\T(\cdot\mid X_\C)\) under this identification. For a candidate loading--variance pair \((\Gamma,\omega)\), write
\[
\Gamma_\otimes=\Gamma_D\otimes\cdots\otimes\Gamma_1,
\qquad
g_t(\Gamma,\omega)=\vecc\{G_t(\Gamma,\omega)\},
\]
\[
\Lambda_{\T,t}(\omega)
=\diag\{\vecc(\mathcal W_{\T,t}(\omega))\},
\qquad
\Lambda_\C(\omega)
=\diag\{\vecc(\mathcal W_\C(\omega))\}.
\]

\subsection{Proof of Proposition~\ref{prop:main-score-decomposition}}
We first vectorize the tensor model in Proposition~\ref{prop:main-score-decomposition} for the sake of notational brevity, which is a proof device only. The model and algorithm are stated in
tensor notation in Section~\ref{sec:main-method}. Recall the identity $\vecc(G\times_{d=1}^D B_d)
=
(B_D\otimes\cdots\otimes B_1)\vecc(G)$. Write
\(x_{\T,t}=\vecc(X_{\T,t})\), \(x_\C=\vecc(X_\C)\),
\(f=\vecc(F)\), and
\(A_\otimes=A_D\otimes\cdots\otimes A_1\). The matrix forms of the two noise covariances and the masked precision matrices are $\Omega_t=h_tI_p+\alpha_t^2p^{-\beta}\Sigma_e^\otimes$, $\Omega_0=p^{-\beta}\Sigma_e^\otimes$, $\Lambda_{\T,t}
=
\MT\Omega_t^{-1}\MT
=
\diag\{\vecc(\mathcal W_{\T,t})\}$, and $\Lambda_\C
=
\MC\Omega_0^{-1}\MC
=
\diag\{\vecc(\mathcal W_\C)\}$. Then, we define $H_{\T,t}
=
A_\otimes^\top\Lambda_{\T,t}A_\otimes$, $H_\C
=
A_\otimes^\top\Lambda_\C A_\otimes$, and $V_t
=
\{H_{\T,t}+\alpha_t^{-2}H_\C\}^{-1}$. The vector coordinate of the core statistic is defined as
\[
g_t:=\vecc(G_t)
=
V_t\left(
A_\otimes^\top\Lambda_{\T,t}x_{\T,t}
+
\alpha_t^{-1}
A_\otimes^\top\Lambda_\C x_\C
\right).
\]
They agree directly with the tensor operations because
\[
\vecc\left(G\times_{d=1}^D A_d\right)
=A_\otimes\vecc(G),\qquad
\vecc\left\{(\mathcal W\od X)\times_{d=1}^D A_d^\top\right\}
=A_\otimes^\top\diag\{\vecc(\mathcal W)\}\vecc(X).\]
Finally, recall that
\(\xi(g,t)=\E(\alpha_t f\mid g_t=g)\). The tensor proposition is therefore equivalent to proving the
vectorized identity
\[
\nabla_{x_{\T,t}}\log p_t(x_{\T,t}\mid x_\C)
=
\Lambda_{\T,t}
\left\{
A_\otimes\xi(g_t,t)-x_{\T,t}
\right\}.
\]

In the following, we prove the vectorized identity above, where we write \(A=A_\otimes\) and \(\Sigma_e=\Sigma_e^\otimes\) for brevity. Let \(x_0\in\R^p\) denote the complete vector. Let \(M\in\{0,1\}^p\) be a fixed binary mask, where \(M_j=1\) means that coordinate \(j\) belongs to the missing-outcome block in the treated region. The mask partitions the coordinates into two deterministic sets, i.e., $\T=\{j:M_j=1\}$ and $\C=\{1,\ldots,p\}\setminus \T$.

The mask, equivalently the partition \((\T,\C)\), is fixed and non-random. We therefore suppress \(M\) from the conditioning notation for brevity. Define the diagonal mask matrices $\MT=\diag(M)$ and $\MC=I_p-\MT$. The treated-region and observed-control-outcome vectors are kept in the original \(p\)-dimensional full space, that is, $x_\T=\MT x_0$ and $x_\C=\MC x_0$. Thus, \(x_\T\) is supported only on \(\T\), while \(x_\C\) is supported only on \(\C\). Recall that we write \(A=A_\otimes\) and \(\Sigma_e=\Sigma_e^\otimes\). Consider the following vectorized factor model
\begin{equation}\label{eq:vector_factor_model}
x_0=Af+p^{-\beta/2}e.
\end{equation}
Under Assumption~\ref{ass:tensor}, \(A^\top A=I_r\), the factor $f$ has
density \(p_{\core}\), and the diagonal idiosyncratic covariance is
bounded above and below. The signal is kept at a constant
spectral scale, while the factor-strength parameter \(\beta\in[0,1]\)
enters through the idiosyncratic noise scale \(p^{-\beta/2}\). The missing-outcome distribution in the treated region is denoted by
\(P_0^\T(\cdot\mid x_\C)\), with the fixed partition \((\T,\C)\) given in
advance.

Let \(\phi(\cdot;\mu,\Sigma)\) denote the Gaussian density with mean \(\mu\) and covariance matrix \(\Sigma\). For a positive semidefinite matrix \(B\), write \(\norm{u}_B^2=u^\top Bu\), allowing this to be a seminorm. All gradients are taken with respect to the displayed argument.
Whenever a masked vector is used as the argument of a density, its
active coordinate subspace is identified with the corresponding
Euclidean space.

\medskip
\noindent\emph{Forward diffusion on positions in the treated region.}

The forward diffusion corrupts only the positions in the treated region, while
the observed-control-outcome coordinates remain fixed throughout the process. For
diffusion time \(t\in[0,T]\), we use the standard Ornstein–Uhlenbeck schedule, that is, $\alpha_t=\exp(-t/2)$ and $h_t = 1-\alpha_t^2 = 1-\exp(-t)$. Hence, the noise-treated-region coordinates admit the marginal representation $x_{\T,t} =
    \alpha_t x_{\T}+
    h_t^{1/2}\MT z_t$, where $z_t\sim\N(0,I_p)$ and $z_t\indep x_0$.

Define the full-dimensional noise scales as $\Omega_t=h_tI_p+\alpha_t^2p^{-\beta}\Sigma_e$, $\Omega_0=p^{-\beta}\Sigma_e$, and define the masked precision matrices as $\Lambda_{\T,t}=\MT\Omega_t^{-1}\MT$, $\Lambda_\C=\MC\Omega_0^{-1}\MC$. The matrices \(\Lambda_{\T,t}\) and \(\Lambda_\C\) are \(p\times p\) matrices. They are singular as full matrices, but they are positive definite on the treated and observed-control regions, respectively. 

Under \eqref{eq:vector_factor_model}, marginalizing out the latent factor
gives the conditional density of the diffused treated-region coordinates: $p_t(x_{\T,t}\mid x_\C)=[\int p(x_{\T,t},x_\C\mid f)p_{\core}(f)\,df]/p(x_\C)$. The denominator \(p(x_\C)\) does not depend on \(x_{\T,t}\) and, hence, does not affect the score with respect to \(x_{\T,t}\). Therefore, for the score calculation, it is enough to keep the numerator $\int p(x_{\T,t},x_\C\mid f)p_{\core}(f)\,df$. The relevant likelihood, viewed as a function of \(f\), is the joint likelihood \(p(x_{\T,t},x_\C\mid f)\). Due to the block-independent idiosyncratic noise, \(x_{\T,t}\) and \(x_\C\) are conditionally independent
given \(f\). Consequently, $p(x_{\T,t},x_\C\mid f)
=p(x_{\T,t}\mid f)p(x_\C\mid f)$. After omitting factors independent of \(f\), define the joint
likelihood kernel \(\mathcal{L}_t(f;x_{\T,t},x_\C)\) by
\begin{equation*}
\begin{aligned}
\mathcal{L}_t(f;x_{\T,t},x_\C)
&\propto
\exp\left\{
-\frac12\norm{x_{\T,t}-\alpha_tAf}_{\Lambda_{\T,t}}^2
\right\}
\exp\left\{
-\frac12\norm{x_\C-Af}_{\Lambda_\C}^2
\right\} \\
&=
\exp\left\{
-\frac12\norm{x_{\T,t}-\alpha_tAf}_{\Lambda_{\T,t}}^2
-\frac12\norm{x_\C-Af}_{\Lambda_\C}^2
\right\}.
\end{aligned}
\end{equation*}
Therefore, we have
\begin{equation}\label{eq:conditional_density}
p_t(x_{\T,t}\mid x_\C)
\propto
\int
\mathcal{L}_t(f;x_{\T,t},x_\C)
p_{\core}(f)\,df,
\end{equation}
where the proportionality is with respect to \(x_{\T,t}\); the omitted normalizing factor depends only on the fixed control vector \(x_\C\).

\medskip
\noindent\emph{Conditional score decomposition.}

Define the two precision Gram matrices as $H_{\T,t}=A^\top\Lambda_{\T,t}A$ and $H_\C=A^\top\Lambda_\C A$. Let $V_t=(H_{\T,t}+\alpha_t^{-2}H_\C)^{-1}$, and define the low-dimensional conditional encoder as $g_t
=
V_t
(
A^\top\Lambda_{\T,t}x_{\T,t}
+\alpha_t^{-1}A^\top\Lambda_\C x_\C
)$. The matrix \(V_t\) is the effective noise scale of the encoded factor information after combining the treated-region and observed-control-region branches. Since the observed-control-outcome term contributes additional precision \(\alpha_t^{-2}H_\C\), the effective factor noise is smaller than the treated-region-only factor noise whenever the observed-control-outcome coordinates carry nontrivial factor information. For \(g\in\R^r\), define the conditional core density as $p_{\con}^t(g)
=
\int
\phi(g;\alpha_tf,V_t)p_{\core}(f)df$. Equivalently, \(p_{\con}^t\) is the marginal density of $g\mid f\sim \N(\alpha_tf,V_t)$. Because the complementary masks provide positive precision on
every coordinate and \(A^\top A=I_r\), $(H_{\T,t}+\alpha_t^{-2}H_\C)$ is positive definite. Thus, \(V_t\) is well defined, and
the treated-region score satisfies
\begin{equation}\label{eq:score_decomp_vector}
\nabla_{x_{\T,t}}\log p_t(x_{\T,t}\mid x_\C)
=
\underbrace{
\Lambda_{\T,t}AV_t
\nabla_g\log p_{\con}^t(g_t)
}_{\text{factor score}}
-
\underbrace{
\Lambda_{\T,t}(x_{\T,t}-Ag_t)
}_{\text{residual score}} .
\end{equation}

Equivalently, let $\xi(g,t) =\E(\alpha_t f\mid g) =g+V_t\nabla_g\log p_{\con}^t(g)$, then
\begin{equation}\label{eq:masked_tucker_vector_form}
\nabla_{x_{\T,t}}\log p_t(x_{\T,t}\mid x_\C)
=
\Lambda_{\T,t}\{A\xi(g_t,t)-x_{\T,t}\}.
\end{equation}
\proof{Proof.}
By \eqref{eq:conditional_density}, we have $p_t(x_{\T,t}\mid x_\C)
\propto
\int
\mathcal{L}_t(f;x_{\T,t},x_\C)p_{\core}(f)\,df$. Taking the gradient with respect to \(x_{\T,t}\) and moving the derivative
inside the integral, we obtain
\[
\begin{aligned}
\nabla_{x_{\T,t}}\log p_t(x_{\T,t}\mid x_\C)
&=
\frac{
\int \nabla_{x_{\T,t}}\mathcal{L}_t(f;x_{\T,t},x_\C)p_{\core}(f)\,df
}{
\int \mathcal{L}_t(f;x_{\T,t},x_\C)p_{\core}(f)\,df
}  \\
&=
\Lambda_{\T,t}
\left\{
A
\frac{
\int \alpha_t f \mathcal{L}_t(f;x_{\T,t},x_\C)p_{\core}(f)\,df
}{
\int \mathcal{L}_t(f;x_{\T,t},x_\C)p_{\core}(f)\,df
}
-x_{\T,t}
\right\} \\
&=
\Lambda_{\T,t}
\left\{
A\E(\alpha_t f\mid x_{\T,t},x_\C)-x_{\T,t}
\right\}.
\end{aligned}
\]

Define the quadratic
coefficient of \(\alpha_t f\) as $V_t^{-1}
=
A^\top\Lambda_{\T,t}A+\alpha_t^{-2}A^\top\Lambda_\C A$, and define the corresponding center as $g_t
=
V_t
(
A^\top\Lambda_{\T,t}x_{\T,t}
+\alpha_t^{-1}A^\top\Lambda_\C x_\C
)$. As a function of \(f\), the exponent in
\(\mathcal{L}_t(f;x_{\T,t},x_\C)\) is a quadratic form.  Expanding the
two terms gives
\[
\begin{aligned}
\norm{x_{\T,t}-\alpha_tAf}_{\Lambda_{\T,t}}^2
+\norm{x_\C-Af}_{\Lambda_\C}^2 & =
(\alpha_tf)^\top A^\top\Lambda_{\T,t}A(\alpha_tf)
-2(\alpha_tf)^\top A^\top\Lambda_{\T,t}x_{\T,t}
+x_{\T,t}^\top\Lambda_{\T,t}x_{\T,t} \\
&\qquad
+f^\top A^\top\Lambda_\C Af
-2f^\top A^\top\Lambda_\C x_\C
+x_\C^\top\Lambda_\C x_\C .
\end{aligned}
\]
To express everything in terms of the scaled factor \(\alpha_tf\), note
that $f^\top A^\top\Lambda_\C Af
=
(\alpha_tf)^\top \alpha_t^{-2}H_\C(\alpha_tf)$ and $f^\top A^\top\Lambda_\C x_\C
=
(\alpha_tf)^\top \alpha_t^{-1}A^\top\Lambda_\C x_\C$. Therefore, we have
\[
\begin{aligned}
&\norm{x_{\T,t}-\alpha_tAf}_{\Lambda_{\T,t}}^2
+\norm{x_\C-Af}_{\Lambda_\C}^2 =
(\alpha_tf)^\top
\{H_{\T,t}+\alpha_t^{-2}H_\C\}
(\alpha_tf) \\
&\qquad
-2(\alpha_tf)^\top
\left\{
A^\top\Lambda_{\T,t}x_{\T,t}
+\alpha_t^{-1}A^\top\Lambda_\C x_\C
\right\}
+x_{\T,t}^\top\Lambda_{\T,t}x_{\T,t}
+x_\C^\top\Lambda_\C x_\C .
\end{aligned}
\]
By the definitions of \(V_t\) and \(g_t\), we know that $V_t^{-1}=H_{\T,t}+\alpha_t^{-2}H_\C$ and $V_t^{-1}g_t
=
A^\top\Lambda_{\T,t}x_{\T,t}
+\alpha_t^{-1}A^\top\Lambda_\C x_\C$. Substituting these identities into the previous display yields
\[
\begin{aligned}
\norm{x_{\T,t}-\alpha_tAf}_{\Lambda_{\T,t}}^2
+\norm{x_\C-Af}_{\Lambda_\C}^2 &=
(\alpha_tf)^\top V_t^{-1}(\alpha_tf)
-2(\alpha_tf)^\top V_t^{-1}g_t
+x_{\T,t}^\top\Lambda_{\T,t}x_{\T,t}
+x_\C^\top\Lambda_\C x_\C \\
& =
\norm{\alpha_tf-g_t}_{V_t^{-1}}^2
+
\left\{
x_{\T,t}^\top\Lambda_{\T,t}x_{\T,t}
+x_\C^\top\Lambda_\C x_\C
-g_t^\top V_t^{-1}g_t
\right\}.
\end{aligned}
\]
The remainder can be written as $R_t(x_{\T,t},x_\C)
=
x_{\T,t}^\top\Lambda_{\T,t}x_{\T,t}
+x_\C^\top\Lambda_\C x_\C
-g_t^\top V_t^{-1}g_t$.
which depends only on the observed pair \((x_{\T,t},x_\C)\). It shows that, as a function of \(f\), the likelihood kernel of the high-dimensional observation
\((x_{\T,t},x_\C)\) is proportional to $\exp\{
-\norm{\alpha_t f-g_t}_{V_t^{-1}}^2/2
\}$. All remaining factors are independent of \(f\). Hence, the posterior
distribution of \(f\) given \((x_{\T,t},x_\C)\) is the same as the
posterior distribution of \(f\) in the low-dimensional Gaussian
experiment $g_t=\alpha_t f+\eta_t$, for $\eta_t\sim\N(0,V_t)$. In particular, we have
\[
\E(\alpha_t f\mid x_{\T,t},x_\C)
=
\E(\alpha_t f\mid g_t).
\]

It remains to express this posterior mean through the score of the
low-dimensional marginal density. By definition, $p_{\con}^t(g) = \int \phi(g;\alpha_t f,V_t)p_{\core}(f)\,df$ is the marginal density of the experiment \(g=\alpha_t f+\eta_t\).
Differentiating under the integral gives $\nabla_g p_{\con}^t(g)
=
\int
\nabla_g\phi(g;\alpha_t f,V_t)p_{\core}(f)\,df =
\int
\{
-V_t^{-1}(g-\alpha_t f)
\}
\phi(g;\alpha_t f,V_t)p_{\core}(f)\,df$. Dividing by \(p_{\con}^t(g)\), we obtain $\nabla_g\log p_{\con}^t(g)
=
V_t^{-1}
\left\{
\E(\alpha_t f\mid g)-g
\right\}$. Therefore, $\E(\alpha_t f\mid g)
=
g+V_t\nabla_g\log p_{\con}^t(g)$. Evaluating this identity at the sufficient statistic \(g=g_t\) gives
\[
\E(\alpha_t f\mid x_{\T,t},x_\C)
=
\E(\alpha_t f\mid g_t)
=
g_t+V_t\nabla_g\log p_{\con}^t(g_t).
\]
Substituting this expression into the score identity $\nabla_{x_{\T,t}}\log p_t(x_{\T,t}\mid x_\C)
=
\Lambda_{\T,t}
\{
A\E(\alpha_t f\mid x_{\T,t},x_\C)-x_{\T,t}
\}$ proves \eqref{eq:score_decomp_vector} and
\eqref{eq:masked_tucker_vector_form}. 

Finally, it remains to return to the tensor expressions used in the main text. For the tensor model, let \(x_{\T,t}=\vecc(X_{\T,t})\), and
\(x_\C=\vecc(X_\C)\). Moreover, write $\Lambda_{\T,t}
=
\diag\{\vecc(\mathcal W_{\T,t})\}$ and $A_\otimes\xi(g_t,t)
=
\vecc\{
\E(\alpha_tF\mid G_t)
\times_{d=1}^D A_d
\}$. It follows that
\[
\begin{aligned}
\nabla_{x_{\T,t}}\log p_t(x_{\T,t}\mid x_\C)
&=
\diag\{\vecc(\mathcal W_{\T,t})\}
\left[
\vecc\left\{
\E(\alpha_tF\mid G_t)
\times_{d=1}^D A_d
\right\}
-
\vecc(X_{\T,t})
\right]\\
&=
\vecc\left[
\mathcal W_{\T,t}\od
\left\{
\E(\alpha_tF\mid G_t)
\times_{d=1}^D A_d
-
X_{\T,t}
\right\}
\right].
\end{aligned}
\]
Finally, vectorization preserves the Frobenius inner product, so $\vecc\{
\nabla_{X_{\T,t}}\log p_t(X_{\T,t}\mid X_\C)
\}
=\nabla_{x_{\T,t}}\log p_t(x_{\T,t}\mid x_\C)$. Since vectorization is one-to-one, the preceding identity proves
\eqref{eq:main-score-decomposition}.
\endproof
$\hfill\square$

\subsection{Proof of Theorem~\ref{thm:main-approximation}}
\proof{Proof.}
For clarity, we also prove Theorem~\ref{thm:main-approximation} in the vectorized notation. For \(R>0\), write $\mathcal G_R=\{g\in\R^r:\|g\|_\infty\le R\}$, and $K_R
=
1+\sup_{g\in\mathcal G_R,\ t\in[t_0,T]}
\|\xi(g,t)\|_2$. 

First, recall that conditioned on \(f\), $x_{\T,t}\sim \N(\alpha_t\MT A_{\otimes}f,\MT\Omega_t\MT)$, and $x_\C\sim \N(\MC A_{\otimes}f,\MC\Omega_0\MC)$. Because \(\Sigma_e^{\otimes}\) is diagonal, the treated-region and control-complement noises are independent. Therefore, $A_{\otimes}^{\top}\Lambda_{\T,t}x_{\T,t}
+\alpha_t^{-1}A_{\otimes}^{\top}\Lambda_\C x_\C$ has a conditional mean $\alpha_tH_{\T,t}f+\alpha_t^{-1}H_\C f= (H_{\T,t}+\alpha_t^{-2}H_\C)\alpha_tf$ and conditional covariance \(H_{\T,t}+\alpha_t^{-2}H_\C\). Multiplying by $V_t=(H_{\T,t}+\alpha_t^{-2}H_\C)^{-1}$ gives $g_t\mid f\sim \N(\alpha_tf,V_t)$.
Equivalently, we can write $g_t=\alpha_t f+\eta_t$, where $\eta_t\sim \N(0,V_t)$, and $\eta_t\indep f$. By Assumptions
\ref{ass:mask_core_tail} and \ref{ass:mask_xi_regular}, \(f\) is
sub-Gaussian, and \(\eta_t\) has covariance bounded by
\(v_{\max}I_r\), uniformly over \(t\in[t_0,T]\). Hence,
\(g_t\) is uniformly sub-Gaussian. Moreover, Assumption
\ref{ass:mask_xi_regular} gives $K_R
\le
K_0+\sqrt r\,L_gR$, so \(K_{R_\epsilon}=O(K_0+L_gR_\epsilon)\).

Second, we approximate the core regression on the truncated domain $\mathcal G_{R_\epsilon}=\{g\in\R^r:\|g\|_\infty\le R_\epsilon\}$. We can rescale $g'=(g+R_\epsilon\mathbf 1_r)/(2R_\epsilon)$ and $t'=t/T$. Then, define the rescaled regression function as
\[
\widetilde\xi(g',t')
=
\xi(2R_\epsilon g'-R_\epsilon\mathbf 1_r,\tau(t')),
\quad \text{where}\quad
\tau(t')=\max\{Tt',t_0\}.
\]
Under Assumption \ref{ass:mask_xi_regular}, \(\widetilde\xi\) is \(2R_\epsilon L_g\)-Lipschitz in \(g'\) and \(TL_t\)-Lipschitz in \(t'\). Partition \([0,1]^r\) into \(N_g^r\) cubes and \([0,1]\) into \(N_t\) intervals with
\[
N_g
=
\left\lceil
\frac{2^{r+3}\sqrt r\,R_\epsilon L_g}{\epsilon}
\right\rceil,
\qquad
N_t
=
\left\lceil
\frac{2^{r+2}T L_t}{\epsilon}
\right\rceil .
\]
Let \(\psi(a)=1\) for \(|a|<1\), \(\psi(a)=2-|a|\) for \(|a|\in[1,2]\), and \(\psi(a)=0\) for \(|a|>2\). For a multi-index \(q=(q_1,\ldots,q_r)^\top\in\{0,1,\ldots,N_g\}^r\) and \(j\in\{0,1,\ldots,N_t\}\), we follow the partition-of-unity construction in \citeEC{guo2026tucker} and consider
\[
\widetilde\zeta_i(g',t')
=
\sum_q\sum_j
\widetilde\xi_i(q/N_g,j/N_t)
\psi\left\{3N_t(t'-j/N_t)\right\}
\prod_{k=1}^r
\psi\left\{3N_g(g'_k-q_k/N_g)\right\},
\]
for each coordinate \(i\in[r]\). We use the partition-and-product construction in Appendix~B.1 of \citetEC{chen2023score}, which adapts the local
ReLU construction of \citetEC{Chen2022a}. We apply it coordinatewise to the centered functions
\(\widetilde\xi_i-\widetilde\xi_i(0,0)\), and restore
\(\widetilde\xi_i(0,0)\) through the output bias.  In the notation of
that construction, the input dimension is \(r+1\), the two coordinatewise
Lipschitz moduli are \(2R_\epsilon L_g\) and \(TL_t\), and the centered
range is bounded by a constant multiple of
\(K_{R_\epsilon}\). The stated uniform output bound then follows from
the approximation error and the bound on the target range:
\[
m =
O\{(1+L_gR_\epsilon)^r(1+TL_t)\epsilon^{-(r+1)}\},\qquad
K=O(K_0+L_gR_\epsilon),\qquad
L=O\{\log(K/\epsilon)+1\},
\]
\[
J=O(mL),\qquad \kappa
=
O\left(\max\{K_0+L_gR_\epsilon,TL_t,T^{-1}\}\right),
\qquad
\gamma_g'=C_rR_\epsilon L_g,
\qquad
\gamma_t'=C_rTL_t,
\]
there exists a rescaled ReLU network \(\widetilde\zeta_{\bar\theta}\) such that
\begin{align*}
&\sup_{g'\in[0,1]^r,\ t'\in[t_0/T,1]}
\|\widetilde\zeta_{\bar\theta}(g',t')-\widetilde\xi(g',t')\|_\infty
\le\epsilon,\quad\sup_{g'\in[0,1]^r,\ t'\in[t_0/T,1]}
\|\widetilde\zeta_{\bar\theta}(g',t')\|_2
\le C_r(K_{R_\epsilon}+\epsilon),\\
&\|\widetilde\zeta_{\bar\theta}(g_1',t')-
\widetilde\zeta_{\bar\theta}(g_2',t')\|_2
\le C_rR_\epsilon L_g\|g_1'-g_2'\|_2,\quad \|\widetilde\zeta_{\bar\theta}(g',t_1')-
\widetilde\zeta_{\bar\theta}(g',t_2')\|_2
\le C_rTL_t|t_1'-t_2'|.
\end{align*}
Transforming back to the original variables gives a ReLU network $\zeta_{\bar\theta}^{0}(g,t)
=
\widetilde\zeta_{\bar\theta}((g+R_\epsilon\mathbf 1_r)/(2R_\epsilon),t/T)$. Implementing \(t\mapsto t/T\) as an additional affine input layer adds
\(T^{-1}\) to the coefficient bound \(\kappa\); its constant depth and
sparsity costs are absorbed by the stated architecture orders.
Extend the network to \(\R^r\) using the coordinate-wise clipping map $\Pi_{R_\epsilon}(g)_j
=
-R_\epsilon+\operatorname{ReLU}(g_j+R_\epsilon)
-\operatorname{ReLU}(g_j-R_\epsilon)$ for $j\in[r]$, and set $\zeta_{\bar\theta}(g,t)
=
\zeta_{\bar\theta}^{0}(\Pi_{R_\epsilon}(g),t)$. The clipping map is represented exactly by an \(O(r)\)-size ReLU
module, is \(1\)-Lipschitz, maps \(\R^r\) into
\(\mathcal G_{R_\epsilon}\), and equals the identity on that box.
Therefore \(\zeta_{\bar\theta}\) agrees with
\(\zeta_{\bar\theta}^{0}\) on \(\mathcal G_{R_\epsilon}\), is globally
bounded by \(K\), and has the original-scale Lipschitz bounds
\[
\|\zeta_{\bar\theta}(g_1,t)-\zeta_{\bar\theta}(g_2,t)\|_2
\le
C_rL_g\|g_1-g_2\|_2,
\qquad
\|\zeta_{\bar\theta}(g,t_1)-\zeta_{\bar\theta}(g,t_2)\|_2
\le
C_rL_t|t_1-t_2|,
\]
globally on \(\R^r\times[t_0,T]\), up to changing the numerical
constant \(C_r\).  The additional width, sparsity, and coefficient size
are absorbed by the displayed architecture orders.  Therefore
\(\zeta_{\bar\theta}\in\mathcal F_{\core}(L,m,J,K,\kappa,\gamma_g,\gamma_t)\), and the constructed network satisfies
\begin{equation}\label{eq:proof_core_uniform_approx}
\sup_{g\in\mathcal G_{R_\epsilon},\ t\in[t_0,T]}
\|\zeta_{\bar\theta}(g,t)-\xi(g,t)\|_\infty
\le
\epsilon .
\end{equation}

Third, we convert the uniform approximation into an \(L^2\) core
approximation. For fixed \(t\in[t_0,T]\), define $\mathcal E_t=\{\|g_t\|_\infty\le R_\epsilon\}$. On \(\mathcal E_t\), \eqref{eq:proof_core_uniform_approx} gives $\|
\{\zeta_{\bar\theta}(g_t,t)
-\xi(g_t,t)\}
\mathbf I_{\mathcal E_t}
\|_2
\le
\sqrt r\,\epsilon$. On \(\mathcal E_t^c\), the clipped network remains bounded by \(K\).
Hence, we have
\[
\E\left[
\|\zeta_{\bar\theta}(g_t,t)
-\xi(g_t,t)\|_2^2
\mathbf I_{\mathcal E_t^c}
\right]\le
2K^2\mathbb{P}(\mathcal E_t^c)
+2\E\left[
\|\xi(g_t,t)\|_2^2
\mathbf I_{\mathcal E_t^c}
\right].
\]
By Jensen's inequality, we have $\|\xi(g,t)\|_2^2
=
\|\E(\alpha_tf\mid g)\|_2^2
\le
\E(\|\alpha_tf\|_2^2\mid g)
\le
\E(\|f\|_2^2\mid g)$. Since \(\mathcal E_t^c\) is measurable with respect to \(g_t\), the tower property gives
\[
\E\left[
\|\xi(g_t,t)\|_2^2
\mathbf I_{\mathcal E_t^c}
\right]
\le
\E\left[
\E(\|f\|_2^2\mid g_t)
\mathbf I_{\mathcal E_t^c}
\right]
=
\E\left[
\|f\|_2^2
\mathbf I_{\mathcal E_t^c}
\right].
\]
It remains to bound the right hand side uniformly in \(t\). From
\(g_t=\alpha_tf+\eta_t\), \(\alpha_t\le1\), and
\(\eta_t\sim\N(0,V_t)\) with \(V_t\preceq v_{\max}I_r\), we know that $\mathcal E_t^c
=
\{\|g_t\|_\infty>R_\epsilon\}
\subseteq
\{\|f\|_\infty>R_\epsilon/2\}
\cup
\{\|\eta_t\|_\infty>R_\epsilon/2\}$. Therefore, we can obtain
\[
\E\left[
\|f\|_2^2
\mathbf I_{\mathcal E_t^c}
\right]
\le
\E\left[
\|f\|_2^2
\mathbf I\{\|f\|_\infty>R_\epsilon/2\}
\right]
+
\E\left[
\|f\|_2^2
\mathbf I\{\|\eta_t\|_\infty>R_\epsilon/2\}
\right].
\]
The first term is controlled by the sub-Gaussian tail of \(f\). For the
second term, \(\eta_t\) is independent of \(f\), so we have $\E[
\|f\|_2^2
\mathbf I\{\|\eta_t\|_\infty>R_\epsilon/2\}
]
=
\E\|f\|_2^2
\mathbb{P}(\|\eta_t\|_\infty>R_\epsilon/2)$. The Gaussian tail bound and \(V_t\preceq v_{\max}I_r\) imply $\mathbb{P}(\|\eta_t\|_\infty>R_\epsilon/2)
\le
2r\exp(-cR_\epsilon^2)$ for a constant \(c>0\) that is independent of \(t\).
Together with the sub-Gaussian tail of \(f\), this gives, for a fixed
constant \(q_r>0\), $\mathbb{P}(\mathcal E_t^c)
+ \E[\|f\|_2^2\mathbf I_{\mathcal E_t^c}]\le C(1+R_\epsilon^{q_r})\exp(-cR_\epsilon^2)$ uniformly over \(t\in[t_0,T]\). Combining this bound with
\(K^2=O\{(K_0+L_gR_\epsilon)^2\}\), and enlarging \(q_r\) if
necessary, gives $\E[
\|\zeta_{\bar\theta}(g_t,t)
-\xi(g_t,t)\|_2^2
\mathbf I_{\mathcal E_t^c}
]\le
C(1+R_\epsilon^{q_r})\exp(-cR_\epsilon^2)$ uniformly over \(t\in[t_0,T]\). Let
\(u_\epsilon=c_{\mathrm{tail}}r/(t_0\epsilon)\) and choose \(C_R\)
such that \(cC_R^2\ge4\). Since \(u_\epsilon\ge1\), for the fixed
constant \(q_r\), there exists \(C_{q_r}>0\) such that $1+(\log u)^{q_r/2}\le C_{q_r}u^2$, where $u\ge1$,
we have
\begin{equation*}
(1+R_\epsilon^{q_r})\exp(-cR_\epsilon^2)
\le
C\{1+(\log u_\epsilon)^{q_r/2}\}u_\epsilon^{-4}\le
C u_\epsilon^{-2}
=C\left(\frac{t_0\epsilon}{c_{\mathrm{tail}}r}\right)^2
\le
\frac{C}{c_{\mathrm{tail}}^2}\epsilon^2.
\end{equation*}
Here, the last inequality follows from \(0<t_0\le1\) and \(r\ge1\).
Taking \(c_{\mathrm{tail}}\) sufficiently large makes the final upper
bound at most \(\epsilon^2\). Therefore, we have $\E[
\|\zeta_{\bar\theta}(g_t,t)
-\xi(g_t,t)\|_2^2
\mathbf I_{\mathcal E_t^c}
]
\le
\epsilon^2$ uniformly over \(t\in[t_0,T]\). Combining the bounds on \(\mathcal E_t\) and \(\mathcal E_t^c\) by the
triangle inequality in \(L^2\), we have that uniformly over \(t\in[t_0,T]\):
\begin{equation}\label{eq:proof_core_l2_error}
\left\{
\E
\left[
\left\|
\zeta_{\bar\theta}(g_t,t)
-
\xi(g_t,t)
\right\|_2^2
\right]
\right\}^{1/2}
\le
(\sqrt r+1)\epsilon.
\end{equation}

Finally, we lift the core approximation to the treated-region score. Set
\(\Gamma_d=A_d\) for every \(d\), equivalently \(\Gamma=A_\bullet\), and set
\(\omega=\omega^0\). Then, \(\Gamma_{\otimes}=A_{\otimes}\) and
\(g_t(A_\bullet,\omega^0)=g_t\). Vectorizing the network with core
\(\zeta_{\bar\theta}\) gives
\[
\vecc\{s_{A_\bullet,\omega^0,\bar\theta}(X_{\T,t},X_\C,t)\}
=
\Lambda_{\T,t}
\{A_{\otimes}\zeta_{\bar\theta}(g_t,t)-x_{\T,t}\}.
\]
Meanwhile, by \eqref{eq:main-score-decomposition},
\(\nabla_{x_{\T,t}}\log p_t(x_{\T,t}\mid x_\C)
=
\Lambda_{\T,t}
\{A_{\otimes}\xi(g_t,t)-x_{\T,t}\}\). The shortcut cancels exactly, so
\[
\vecc\{s_{A_\bullet,\omega^0,\bar\theta}(X_{\T,t},X_\C,t)\}
-
\nabla_{x_{\T,t}}\log p_t(x_{\T,t}\mid x_\C)
=
\Lambda_{\T,t}A_{\otimes}
\{\zeta_{\bar\theta}(g_t,t)-\xi(g_t,t)\}.
\]
Taking the nested conditional risk and applying \eqref{eq:proof_core_l2_error} gives
\[
\E_{X_\C}\E_{X_{\T,t}\mid X_\C}
\left[
\left\|
s_{A_\bullet,\omega^0,\bar\theta}(X_{\T,t},X_\C,t)
-
\nabla_{X_{\T,t}}\log p_t(X_{\T,t}\mid X_\C)
\right\|_F^2
\right]
\le
\|\Lambda_{\T,t}A_{\otimes}\|^2_{\mathrm{op}}(\sqrt r+1)^2\epsilon^2.
\]
Since \(\Omega_t\succeq h_tI_p\), the treatment precision
weights satisfy
\(\|\Lambda_{\T,t}\|_{\mathrm{op}}\le h_t^{-1}\). Since
\(A_{\otimes}^{\top}A_{\otimes}=I_r\), we have $\|\Lambda_{\T,t}A_{\otimes}\|_{\mathrm{op}}
\le
\|\Lambda_{\T,t}\|_{\mathrm{op}}\|A_{\otimes}\|_{\mathrm{op}}
\le
h_{t}^{-1}$. Combining the preceding two displays proves the result.
\endproof
$\hfill\square$

\subsection{Proof of Theorem~\ref{thm:main-generalization}}
\proof{Proof.}
Fix \(a\in(0,1)\) and \(\delta\in(0,1)\).  We first decompose the raw
DSM loss into statistical, truncation, and approximation terms.  We
begin with the truncation radii. For later use, define the Gaussian transition score and the
population raw DSM loss by $r_t(x_\T,x_{\T,t})
=h_t^{-1}(\alpha_tx_\T-x_{\T,t})$ and $\mathcal L_{\mask}(s)=\E\{\ell(X_0;s)\}$, respectively, where \(\ell\) is defined in \eqref{eq:mask_empirical_loss}. Let $d_{\C,\beta}=r+(p-p_\T)p^{-\beta}$, $d_+=d_{\T,\beta}+d_{\C,\beta}+1$, and $u_n=A_0\log(nd_+)$. Here, \(A_0>0\) is a sufficiently large constant. Meanwhile, take $C_{\T,x}^2=A_1d_{\T,\beta}u_n$ and $C_{\C,x}^2=A_1d_{\C,\beta}u_n$, where \(A_1>0\) is sufficiently large.  Hence, up to logarithmic
factors, $C_{\T,x}^2=\widetilde{\mathcal O}(d_{\T,\beta})$ and $C_{\C,x}^2=\widetilde{\mathcal O}\{d_{\C,\beta}\}$. Then, define the truncation event $\mathcal E_{\mathrm{tr}}
=
\{\|X_\T\|_F\le C_{\T,x},\ \|X_\C\|_F\le C_{\C,x}\}$. In vectorized notation, we can write
\[
X_\T=\MT A_{\otimes} f+p^{-\beta/2}\MT e,
\qquad
X_\C=\MC A_{\otimes} f+p^{-\beta/2}\MC e.
\]
Since \(A_{\otimes}^\top A_{\otimes}=I_r\) and \(\MT,\MC\) are
coordinate projections, we have $\|\MT A_{\otimes}f\|_2\le \|f\|_2$ and $\|\MC A_{\otimes}f\|_2\le \|f\|_2$. Then, by Assumption \ref{ass:mask_core_tail}, \(f\) is sub-Gaussian in the
sense that, for all \(u\ge1\), $\mathbb{P}\{\|f\|_2^2>C(r+u)\}\le Ce^{-cu}$. Moreover, Assumption \ref{ass:tensor} implies that the covariance of
\(p^{-\beta/2}\MT e\) is dominated by \(Cp^{-\beta}\MT\), while that
of \(p^{-\beta/2}\MC e\) is dominated by \(Cp^{-\beta}\MC\). Hence, for \(u\ge1\), the
standard Gaussian quadratic-form tail bound gives $\mathbb{P}\{p^{-\beta}\|\MT e\|_2^2>Cp^{-\beta}(p_\T+u)\}
\le Ce^{-cu}$ and $\mathbb{P}\{p^{-\beta}\|\MC e\|_2^2>Cp^{-\beta}(p-p_\T+u)\}
\le Ce^{-cu}$.

Combining the signal and idiosyncratic parts and increasing \(C\) if
necessary, we obtain $\mathbb{P}\{\|X_\T\|_F^2>Cd_{\T,\beta}u\}
+
\mathbb{P}\{\|X_\C\|_F^2>Cd_{\C,\beta}u\}
\le Ce^{-cu}$ for $u\ge1$. Taking \(u=u_n\) and choosing \(A_1\) sufficiently large yields
\[
\mathbb{P}(\|X_\T\|_F>C_{\T,x})
+
\mathbb{P}(\|X_\C\|_F>C_{\C,x})
\le
Ce^{-cu_n}
\le
(3n^2)^{-1},
\]
where the last inequality follows from the choice
\(u_n=A_0\log(nd_+)\) with \(A_0\) sufficiently large. Indeed, since \(d_+\ge1\), we have $Ce^{-cu_n}
=
C(nd_+)^{-cA_0}
\le
Cn^{-cA_0}$. Choose \(A_0\) such that \(cA_0\ge3\). Then, for all sufficiently
large \(n\), we have $Ce^{-cu_n}
\le
Cn^{-3}
\le
(3n^2)^{-1}$. Therefore, we have $\mathbb P(\mathcal E_{\mathrm{tr}}^c)\le (3n^2)^{-1}$.

It remains to control the second moment on the truncation complement.
The same tail bound, integrated over the tail, implies
\[
\E\left[
\|X_\T\|_F^2
\mathbf I\{\|X_\T\|_F>C_{\T,x}\}
\right]
\le
Cd_{\T,\beta}u_ne^{-cu_n},\quad
\E\left[
\|X_\C\|_F^2
\mathbf I\{\|X_\C\|_F>C_{\C,x}\}
\right]
\le
Cd_{\C,\beta}u_ne^{-cu_n}.
\]
Also, by the sub-Gaussian moment bound for \(f\) and the Gaussian moment
bound for \(e\), we have $\E\|X_\T\|_F^4\le Cd_{\T,\beta}^2$ and $\E\|X_\C\|_F^4\le Cd_{\C,\beta}^2$. Since $\mathbf I_{\mathcal E_{\mathrm{tr}}^c}
\le
\mathbf I\{\|X_\T\|_F>C_{\T,x}\}
+
\mathbf I\{\|X_\C\|_F>C_{\C,x}\}$, we need to control both own-tail and cross-tail terms. The own-tail
terms are bounded by the previous integrated tail estimate.  For the
cross-tail terms, the Cauchy–Schwarz inequality gives
\[
\E\left[
\|X_\T\|_F^2
\mathbf I\{\|X_\C\|_F>C_{\C,x}\}
\right]
\le
\{\E\|X_\T\|_F^4\}^{1/2}
\mathbb{P}(\|X_\C\|_F>C_{\C,x})^{1/2}
\le
Cd_{\T,\beta}e^{-cu_n/2},
\]
and similarly $\E[
\|X_\C\|_F^2
\mathbf I\{\|X_\T\|_F>C_{\T,x}\}
]
\le
Cd_{\C,\beta}e^{-cu_n/2}$. Combining the own-tail and cross-tail bounds, we have
\[
\E\{\|X_\T\|_F^2\mathbf I_{\mathcal E_{\mathrm{tr}}^c}\}
+
\E\{\|X_\C\|_F^2\mathbf I_{\mathcal E_{\mathrm{tr}}^c}\}
\le
C(d_{\T,\beta}+d_{\C,\beta})u_ne^{-cu_n}
+
C(d_{\T,\beta}+d_{\C,\beta})e^{-cu_n/2}.
\]

Since \(d_{\T,\beta}+d_{\C,\beta}\le d_+\) and
\(u_ne^{-cu_n}\le C e^{-cu_n/2}\), the right-hand side is bounded by $C d_+ e^{-cu_n/2}$. By the choice \(u_n=A_0\log(nd_+)\), we have $d_+ e^{-cu_n/2}=d_+(nd_+)^{-cA_0/2}=n^{-cA_0/2}d_+^{1-cA_0/2}$. Again, choosing \(A_0\) sufficiently large so that \(cA_0/2\ge3\), and using
\(d_+\ge1\), it follows for sufficiently large \(n\) that
$C d_+ e^{-cu_n/2}\le n^{-2}$. Therefore, we have $\E\{\|X_\T\|_F^2\mathbf I_{\mathcal E_{\mathrm{tr}}^c}\}
+
\E\{\|X_\C\|_F^2\mathbf I_{\mathcal E_{\mathrm{tr}}^c}\}
\le n^{-2}$. As a result, the selected radii
satisfy
\[
\mathbb P(\mathcal E_{\mathrm{tr}}^c)\le (3n^2)^{-1},
\qquad
\E\{\|X_\T\|_F^2\mathbf I_{\mathcal E_{\mathrm{tr}}^c}\}
+
\E\{\|X_\C\|_F^2\mathbf I_{\mathcal E_{\mathrm{tr}}^c}\}
\le n^{-2}.
\]

Then, define $\ell^{\mathrm{tr}}(X_0;s)=\ell(X_0;s)\mathbf I_{\mathcal E_{\mathrm{tr}}}$, $\mathcal L_{\mask}^{\mathrm{tr}}(s)=\E\ell^{\mathrm{tr}}(X_0;s)$, and $\widehat{\mathcal L}_{\mask}^{\mathrm{tr}}(s)
=\frac1n\sum_{i=1}^n\ell^{\mathrm{tr}}(X_0^{(i)};s)$.

Let \(s^\circ=s_{A_\bullet,\omega^0,\bar\theta}\) be the
approximator from Theorem \ref{thm:main-approximation}, and let $\mathcal E_{\mathrm{tr},n}
=
\bigcap_{i=1}^n\mathcal E_{\mathrm{tr},i}$, where
\(\mathcal E_{\mathrm{tr},i}\) denotes \(\mathcal E_{\mathrm{tr}}\) for
the sample \(X_0^{(i)}\). The event \(\mathcal E_{\mathrm{tr},n}\) is introduced to transfer the
empirical optimality of \(\widehat s\) from the original empirical loss
to the truncated empirical loss.  Indeed, \(\widehat s\) minimizes
\(\widehat{\mathcal L}_{\mask}\), while the empirical process argument
below is applied to the truncated loss class.  On
\(\mathcal E_{\mathrm{tr},n}\), every training sample lies in the
truncation set. Consequently, we have $\widehat{\mathcal L}_{\mask}^{\mathrm{tr}}(s)
=
\widehat{\mathcal L}_{\mask}(s)$, for all $s\in\mathcal S_{\MTClass}$.

Therefore, on \(\mathcal E_{\mathrm{tr},n}\), the empirical minimizer
\(\widehat s\) also satisfies the truncated empirical optimality
inequality $\widehat{\mathcal L}_{\mask}^{\mathrm{tr}}(\widehat s)
=
\widehat{\mathcal L}_{\mask}(\widehat s)
\le
\widehat{\mathcal L}_{\mask}(s^\circ)
=
\widehat{\mathcal L}_{\mask}^{\mathrm{tr}}(s^\circ)$. Using this inequality, we obtain the deterministic raw-loss
decomposition
\begin{align}
\mathcal L_{\mask}(\widehat s)
&=
\left\{
\mathcal L_{\mask}(\widehat s)
-
\mathcal L_{\mask}^{\mathrm{tr}}(\widehat s)
\right\}
+
\mathcal L_{\mask}^{\mathrm{tr}}(\widehat s) \nonumber\\
&=
\left\{
\mathcal L_{\mask}(\widehat s)
-
\mathcal L_{\mask}^{\mathrm{tr}}(\widehat s)
\right\}
+
\left\{
\mathcal L_{\mask}^{\mathrm{tr}}(\widehat s)
-
(1+a)\widehat{\mathcal L}_{\mask}^{\mathrm{tr}}(\widehat s)
\right\}
+
(1+a)\widehat{\mathcal L}_{\mask}^{\mathrm{tr}}(\widehat s)
\nonumber\\
&\le
\left\{
\mathcal L_{\mask}(\widehat s)
-
\mathcal L_{\mask}^{\mathrm{tr}}(\widehat s)
\right\}
+
\left\{
\mathcal L_{\mask}^{\mathrm{tr}}(\widehat s)
-
(1+a)\widehat{\mathcal L}_{\mask}^{\mathrm{tr}}(\widehat s)
\right\}
+
(1+a)\widehat{\mathcal L}_{\mask}^{\mathrm{tr}}(s^\circ)
\nonumber\\
&=
\underbrace{
\left\{
\mathcal L_{\mask}(\widehat s)
-
\mathcal L_{\mask}^{\mathrm{tr}}(\widehat s)
\right\}
}_{\mathcal E_{\mathrm{trunc}}}
+
\underbrace{
\left\{
\mathcal L_{\mask}^{\mathrm{tr}}(\widehat s)
-
(1+a)\widehat{\mathcal L}_{\mask}^{\mathrm{tr}}(\widehat s)
\right\}
}_{\mathcal E_{\mathrm{stat},1}}
\nonumber\\
&\quad+
(1+a)
\underbrace{
\left\{
\widehat{\mathcal L}_{\mask}^{\mathrm{tr}}(s^\circ)
-
(1+a)\mathcal L_{\mask}^{\mathrm{tr}}(s^\circ)
\right\}
}_{\mathcal E_{\mathrm{stat},2}}
+
(1+a)^2\mathcal L_{\mask}^{\mathrm{tr}}(s^\circ)
\nonumber\\
&\le
\mathcal E_{\mathrm{trunc}}
+
\mathcal E_{\mathrm{stat},1}
+
(1+a)\mathcal E_{\mathrm{stat},2}
+
(1+a)^2
\underbrace{\mathcal L_{\mask}(s^\circ)}_{\mathcal E_{\mathrm{app}}},
\label{eq:mask_raw_error_decomposition}
\end{align}
where the last inequality uses
\(\mathcal L_{\mask}^{\mathrm{tr}}(s^\circ)\le
\mathcal L_{\mask}(s^\circ)\).

\textit{Step 1: statistical error for the truncated raw loss.}
We first derive an upper bound for \(\ell^{\mathrm{tr}}\).  For
\(s=s_{\Gamma,\omega,\theta}\), let
\[
\mathcal W_\sigma
:=
\left\{\omega=(\omega_{dj}):
\sigma_{\min}^2\le\omega_{dj}\le\sigma_{\max}^2,
\ d\in[D],\ j\in[p_d]\right\}.
\]
Since $s(x_{\T,t},x_\C,t)
=
\Lambda_{\T,t}(\omega)
\{\Gamma_{\otimes}\zeta_\theta(g_t(\Gamma,\omega),t)-x_{\T,t}\}$, we have
\begin{equation}\label{eq:mask_raw_residual_identity}
s(x_{\T,t},x_\C,t)-r_t(x_\T,x_{\T,t}) =
\Lambda_{\T,t}(\omega)\Gamma_{\otimes}\zeta_\theta(g_t(\Gamma,\omega),t)
+\{h_t^{-1}\MT-\Lambda_{\T,t}(\omega)\}x_{\T,t}
-\alpha_th_t^{-1}x_\T.
\end{equation}
For all \(\omega\in\mathcal W_\sigma\),
$\|\Lambda_{\T,t}(\omega)\|_{\op}\le h_t^{-1}$, and
$\|h_t^{-1}\MT-\Lambda_{\T,t}(\omega)\|_{\op}
\le C p^{-\beta}h_t^{-2}$. Indeed, each diagonal entry \(s\) of
\(\Sigma_e^\otimes(\omega)\) lies in
\([\sigma_{\min}^{2D},\sigma_{\max}^{2D}]\), so that $h_t^{-1}-(h_t+\alpha_t^2p^{-\beta}s)^{-1}
=\alpha_t^2p^{-\beta}s/[h_t(h_t+\alpha_t^2p^{-\beta}s)]\le C p^{-\beta}h_t^{-2}$. Then, using \(\|\zeta_\theta\|_2\le K\) and
\(\|x_\T\|_2\le C_{\T,x}\) on \(\mathcal E_{\mathrm{tr}}\), we obtain
\[
\|s(X_{\T,t},X_\C,t)-r_t(X_\T,X_{\T,t})\|_2
\le
\frac{K+C_{\T,x}}{h_t}
+
C\frac{p^{-\beta}\|X_{\T,t}\|_2}{h_t^2}.
\]
Moreover, we have $\E(\|X_{\T,t}\|_2^2\mid X_0)
\le C_{\T,x}^2+p_\T h_t$ on $\mathcal E_{\mathrm{tr}}$. Therefore
\begin{equation}\label{eq:mask_bl_bound}
\sup_{s\in\mathcal S_{\MTClass}}
\sup_{X_0}\ell^{\mathrm{tr}}(X_0;s)
\le
B_{\ell,\mask}
:=
1+\frac{C}{T-t_0}\int_{t_0}^T
\left\{
\frac{K^2+C_{\T,x}^2}{h_t^2}
+
\frac{p^{-2\beta}C_{\T,x}^2}{h_t^4}
+
\frac{p_\T p^{-2\beta}}{h_t^3}
\right\}dt .
\end{equation}

For the Ornstein--Uhlenbeck schedule, \(h_t=1-\alpha_t^2\) is comparable to \(t\) near
zero and is bounded away from zero for \(t\) away from zero.  Since
\(t_0\le1\), there exists a constant \(c_h>0\) such that $h_t\ge c_h t_0$ for $t\in[t_0,T]$. Together with the early-stopping condition \(p^{-\beta}\le c_0t_0\),
this implies $p^{-\beta}\le C h_t$ for $t\in[t_0,T]$. We now simplify the three terms in the integrand of
\eqref{eq:mask_bl_bound}.  By the choice of the truncation radius, we have $C_{\T,x}^2=\widetilde{\mathcal O}(d_{\T,\beta})$. Moreover, Theorem \ref{thm:main-approximation} gives $K^2
=
O\{(K_0+L_gR_\epsilon)^2\}
=
O\{K_0^2+L_g^2\log[c_{\mathrm{tail}}r/(t_0\epsilon)]\}$. Since \(K_0\) and \(L_g\) are treated as fixed model constants and
\(\widetilde{\mathcal O}(\cdot)\) hides logarithmic factors in
\(r,t_0^{-1},\epsilon^{-1}\) and \(n\), this term is
\(\widetilde{\mathcal O}(1)\). Therefore, because
\(d_{\T,\beta}=r+p_\T p^{-\beta}\ge1\), we have $K^2+C_{\T,x}^2
=
\widetilde{\mathcal O}(d_{\T,\beta})$.

For the second term in \eqref{eq:mask_bl_bound}, the bound
\(p^{-\beta}\le Ch_t\) gives $h_t^{-4}p^{-2\beta}C_{\T,x}^2\leq C h_t^{-2}C_{\T,x}^2 =
\widetilde{\mathcal O}(h_t^{-2}d_{\T,\beta})$. For the last term, using \(p_\T p^{-\beta}\le d_{\T,\beta}\) and
\(p^{-\beta}\le Ch_t\), we obtain $h_t^{-3}p_\T p^{-2\beta}=[h_t^{-2}p_\T p^{-\beta}][h_t^{-1}p^{-\beta}]\le Ch_t^{-2}d_{\T,\beta}$. Thus, the whole integrand in \eqref{eq:mask_bl_bound} is bounded by
\(\widetilde{\mathcal O}(d_{\T,\beta}h_t^{-2})\).  Consequently,
\begin{equation}\label{eq:mask_bl_simplified}
B_{\ell,\mask}
\le
\widetilde{\mathcal O}
\left\{
\frac{d_{\T,\beta}}{T-t_0}
\int_{t_0}^T h_t^{-2}dt
\right\}.
\end{equation}
The bound $(T-t_0)^{-1} \int_{t_0}^T h_t^{-2}dt\le C(t_0^{-1}+T)$ follows by splitting \([t_0,T]\) at \(1\). On
\([t_0,1\wedge T]\), \(h_t\asymp t\), whereas \(h_t\asymp1\) on
\([1,T]\) when \(T>1\). Hence, the two contributions are bounded by
\(Ct_0^{-1}\) and \(CT\), respectively. Therefore
\begin{equation}\label{eq:mask_bl_simplified_rate}
B_{\ell,\mask}
\le
\widetilde{\mathcal O}
\left\{
d_{\T,\beta}
\left(\frac1{t_0}+T\right)
\right\}.
\end{equation}

Following the parameter-covering reduction in
\citetEC{guo2026tucker}, we cover the truncated loss class
\(\mathcal G_{\mathrm{tr}}=\{\ell^{\mathrm{tr}}(\cdot;s):
s\in\mathcal S_{\MTClass}\}\). Since the loss integrates out the
unbounded Gaussian variable \(X_{\T,t}\mid X_\T\), the intermediate
score metric is defined using the same conditional second moment. We
begin with a parameter perturbation bound. Let \(s_{\Gamma,\omega,\theta}\) and
\(s_{\widetilde\Gamma,\widetilde\omega,\widetilde\theta}\) be two elements
of \(\mathcal S_{\MTClass}\), and write $\Delta_\Gamma=\max_{d\in[D]}\|\Gamma_d-\widetilde\Gamma_d\|_{\op}$, $\Delta_\omega=\max_{d\in[D],\,j\in[p_d]}
|\omega_{dj}-\widetilde\omega_{dj}|$. Denote the admissible loading--variance parameter set by
\[
\mathcal P_{\Gamma,\omega}
:=
\Bigg\{
(\Gamma,\omega):
\Gamma_d^\top \Gamma_d=I_{r_d},\ d\in[D],\quad
\omega\in\mathcal W_\sigma,\quad
\sup_{t\in[t_0,T]}\|V_t(\Gamma,\omega)\|_{\op}\le C_V
\Bigg\}.
\]
Thus, the loading--variance components of both parameter triples belong
to \(\mathcal P_{\Gamma,\omega}\).
The telescoping identity
\[
\Gamma_D\otimes\cdots\otimes \Gamma_1-
\widetilde\Gamma_D\otimes\cdots\otimes\widetilde\Gamma_1
=
\sum_{d=1}^D
\widetilde\Gamma_D\otimes\cdots\otimes\widetilde\Gamma_{d+1}
\otimes(\Gamma_d-\widetilde\Gamma_d)
\otimes \Gamma_{d-1}\otimes\cdots\otimes \Gamma_1
\]
and \(\|\Gamma_d\|_{\op}=\|\widetilde\Gamma_d\|_{\op}=1\) imply
\begin{equation}\label{eq:mask_kronecker_perturb}
\|\Gamma_{\otimes}-\widetilde\Gamma_{\otimes}\|_{\op}
\le D\Delta_\Gamma.
\end{equation}

For a tensor index \(j=(j_1,\ldots,j_D)\), set
\(s_j(\omega)=\prod_{d=1}^D\omega_{d j_d}\). A second telescoping
identity gives $|s_j(\omega)-s_j(\widetilde\omega)|
\le
C_\sigma\Delta_\omega$, where \(C_\sigma\) depends only on
\(D,\sigma_{\min},\sigma_{\max}\). On a treatment coordinate, the
corresponding precision is
\(\{h_t+\alpha_t^2p^{-\beta}s_j(\omega)\}^{-1}\). Hence, we have $|[h_t+\alpha_t^2p^{-\beta}s_j(\omega)]^{-1}
-[h_t+\alpha_t^2p^{-\beta}s_j(\widetilde\omega)]^{-1}
|\le C p^{-\beta}h_t^{-2}\Delta_\omega$. On a control coordinate, the precision is \(p^\beta/s_j(\omega)\), and
the lower variance bound gives
\[
\left|
\frac{p^\beta}{s_j(\omega)}-
\frac{p^\beta}{s_j(\widetilde\omega)}
\right|
\le C p^\beta\Delta_\omega.
\]
Since all precision matrices are diagonal, these entry-wise bounds are
also operator-norm bounds. Thus
\begin{equation}\label{eq:mask_lambda_t_perturb}
\|\Lambda_{\T,t}(\omega)-
\Lambda_{\T,t}(\widetilde\omega)\|_{\op}
\le C p^{-\beta}h_t^{-2}\Delta_\omega,\quad
\|\Lambda_\C(\omega)-\Lambda_\C(\widetilde\omega)\|_{\op}
\le C p^\beta\Delta_\omega.
\end{equation}
In addition, we shall have
\begin{equation}\label{eq:mask_lambda_norms}
\|\Lambda_{\T,t}(\omega)\|_{\op}\le h_t^{-1},
\qquad
\|\Lambda_\C(\omega)\|_{\op}\le C p^\beta.
\end{equation}

For the matrices \(H\) and \(V_t\), adding and subtracting
\(\widetilde\Gamma_{\otimes}^{\top}\Lambda_{\T,t}(\omega)\Gamma_{\otimes}\)
and
\(\widetilde\Gamma_{\otimes}^{\top}
\Lambda_{\T,t}(\widetilde\omega)\Gamma_{\otimes}\) gives
\[
H_{\T,t}(\Gamma,\omega)-H_{\T,t}(\widetilde\Gamma,\widetilde\omega) =
(\Gamma_{\otimes}-\widetilde\Gamma_{\otimes})^\top
\Lambda_{\T,t}(\omega)\Gamma_{\otimes}+
\widetilde\Gamma_{\otimes}^\top
\{\Lambda_{\T,t}(\omega)-
\Lambda_{\T,t}(\widetilde\omega)\}\Gamma_{\otimes}+
\widetilde\Gamma_{\otimes}^\top\Lambda_{\T,t}(\widetilde\omega)
(\Gamma_{\otimes}-\widetilde\Gamma_{\otimes}).
\]
Equations \eqref{eq:mask_kronecker_perturb}--
\eqref{eq:mask_lambda_norms} therefore imply
\begin{align}
\|H_{\T,t}(\Gamma,\omega)-
H_{\T,t}(\widetilde\Gamma,\widetilde\omega)\|_{\op}
&\le
C\{h_t^{-1}\Delta_\Gamma+p^{-\beta}h_t^{-2}\Delta_\omega\},
\label{eq:mask_ht_perturb}\\
\|H_\C(\Gamma,\omega)-H_\C(\widetilde\Gamma,\widetilde\omega)\|_{\op}
&\le
C p^\beta(\Delta_\Gamma+\Delta_\omega).
\label{eq:mask_hc_perturb}
\end{align}
Define $B_t(\Gamma,\omega)
=H_{\T,t}(\Gamma,\omega)+\alpha_t^{-2}H_\C(\Gamma,\omega)$ and $V_t(\Gamma,\omega)=B_t(\Gamma,\omega)^{-1}$. Both loading–variance pairs belong to \(\mathcal P_{\Gamma,\omega}\), so
the two inverses exist and have an operator norm of at most \(C_V\). The inverse
identity $V_t(\Gamma,\omega)-V_t(\widetilde\Gamma,\widetilde\omega)
=
V_t(\Gamma,\omega)
\{B_t(\widetilde\Gamma,\widetilde\omega)-B_t(\Gamma,\omega)\}
V_t(\widetilde\Gamma,\widetilde\omega)$ and \eqref{eq:mask_ht_perturb}--\eqref{eq:mask_hc_perturb} yield
\begin{equation}\label{eq:mask_v_perturb_explicit}
\|V_t(\Gamma,\omega)-V_t(\widetilde\Gamma,\widetilde\omega)\|_{\op}
\le
L_V(t)(\Delta_\Gamma+\Delta_\omega),\quad\text{where} \quad L_V(t)
=
C C_V^2
\{h_t^{-1}+p^{-\beta}h_t^{-2}+\alpha_t^{-2}p^\beta\}.
\end{equation}

For the low-dimensional encoder, define $a_t^{\mathrm{enc}}(\Gamma,\omega)
=
\Gamma_{\otimes}^\top\Lambda_{\T,t}(\omega)X_{\T,t}
+\alpha_t^{-1}\Gamma_{\otimes}^\top\Lambda_\C(\omega)X_\C$ and  $g_t(\Gamma,\omega)=V_t(\Gamma,\omega)a_t^{\mathrm{enc}}(\Gamma,\omega)$. On \(\mathcal E_{\mathrm{tr}}\), set $Q_\T(t)=\{C_{\T,x}^2+p_\T h_t\}^{1/2}$. The forward transition gives
\begin{equation}\label{eq:mask_forward_moment_cover}
\left\{\E(\|X_{\T,t}\|_2^2\mid X_\T)\right\}^{1/2}
\le Q_\T(t).
\end{equation}
By Minkowski's inequality and \eqref{eq:mask_lambda_norms}, we have
\begin{equation}\label{eq:mask_encoder_linear_moment}
\left\{
\E_{X_{\T,t}\mid X_\T}
\|a_t^{\mathrm{enc}}(\Gamma,\omega)\|_2^2
\right\}^{1/2}
\le
G_a(t),
\qquad
G_a(t)=h_t^{-1}Q_\T(t)+C\alpha_t^{-1}p^\beta C_{\C,x}.
\end{equation}

To compare the two encoder inputs, we first decompose the treated-region branch as
\[
\Gamma_{\otimes}^\top\Lambda_{\T,t}(\omega)X_{\T,t}
-\widetilde\Gamma_{\otimes}^\top
\Lambda_{\T,t}(\widetilde\omega)X_{\T,t}=
(\Gamma_{\otimes}-\widetilde\Gamma_{\otimes})^\top
\Lambda_{\T,t}(\omega)X_{\T,t}
+\widetilde\Gamma_{\otimes}^\top
\{\Lambda_{\T,t}(\omega)-
\Lambda_{\T,t}(\widetilde\omega)\}X_{\T,t}.
\]
The control branch satisfies the corresponding identity
\[
\alpha_t^{-1}\Gamma_{\otimes}^\top\Lambda_\C(\omega)X_\C
-\alpha_t^{-1}\widetilde\Gamma_{\otimes}^\top
\Lambda_\C(\widetilde\omega)X_\C=
\alpha_t^{-1}(\Gamma_{\otimes}-\widetilde\Gamma_{\otimes})^\top
\Lambda_\C(\omega)X_\C +
\alpha_t^{-1}\widetilde\Gamma_{\otimes}^\top
\{\Lambda_\C(\omega)-\Lambda_\C(\widetilde\omega)\}X_\C.
\]
Equations
\eqref{eq:mask_kronecker_perturb}--
\eqref{eq:mask_forward_moment_cover} then give
\begin{equation}\label{eq:mask_encoder_linear_perturb}
\left\{
\E_{X_{\T,t}\mid X_\T}
\|a_t^{\mathrm{enc}}(\Gamma,\omega)
-a_t^{\mathrm{enc}}(\widetilde\Gamma,\widetilde\omega)\|_2^2
\right\}^{1/2}
\le
L_a(t)(\Delta_\Gamma+\Delta_\omega),\quad\text{where}
\end{equation}
\begin{equation}\label{eq:mask_la_definition}
L_a(t)
=
C\left[
\{h_t^{-1}+p^{-\beta}h_t^{-2}\}Q_\T(t)
+\alpha_t^{-1}p^\beta C_{\C,x}
\right].
\end{equation}
Finally, add and subtract
\(V_t(\widetilde\Gamma,\widetilde\omega)
a_t^{\mathrm{enc}}(\Gamma,\omega)\) to obtain
\[
g_t(\Gamma,\omega)-g_t(\widetilde\Gamma,\widetilde\omega)=
\{V_t(\Gamma,\omega)-V_t(\widetilde\Gamma,\widetilde\omega)\}
a_t^{\mathrm{enc}}(\Gamma,\omega)+
V_t(\widetilde\Gamma,\widetilde\omega)
\{a_t^{\mathrm{enc}}(\Gamma,\omega)
-a_t^{\mathrm{enc}}(\widetilde\Gamma,\widetilde\omega)\}.
\]
Combining \eqref{eq:mask_v_perturb_explicit},
\eqref{eq:mask_encoder_linear_moment}, and
\eqref{eq:mask_encoder_linear_perturb} yields
\begin{align}
&\sup_{X_0\in\mathcal E_{\mathrm{tr}}}
\left\{
\E_{X_{\T,t}\mid X_\T}
\|g_t(\Gamma,\omega)-g_t(\widetilde\Gamma,\widetilde\omega)\|_2^2
\right\}^{1/2}
\le
L_{\mathrm{enc}}(t)(\Delta_\Gamma+\Delta_\omega),
\label{eq:mask_g_perturb}\\
&\sup_{(\Gamma,\omega)\in\mathcal P_{\Gamma,\omega}}
\sup_{X_0\in\mathcal E_{\mathrm{tr}}}
\left\{
\E_{X_{\T,t}\mid X_\T}
\|g_t(\Gamma,\omega)\|_2^2
\right\}^{1/2}
\le
G_{\mathrm{enc}}(t),\quad\text{where}
\label{eq:mask_g_moment}
\end{align}
\begin{equation}\label{eq:mask_encoder_constants}
L_{\mathrm{enc}}(t)=L_V(t)G_a(t)+C_VL_a(t),
\qquad
G_{\mathrm{enc}}(t)=C_VG_a(t).
\end{equation}
These bounds are finite in the conditional \(L^2\) metric.

The network depth is included as a discrete index in the cover below,
so it suffices to compare two networks of a common depth
\(\bar L\le L\). Pad every hidden layer to a width of \(m\). Enlarging
\(m\) to \(m\vee(r+1)\), if necessary, does not change its order and
ensures that every layer dimension is at most \(m\). For a common
input \(z_0=(g^\top,t)^\top\), write $z_\ell=\rho(W_\ell z_{\ell-1}+b_\ell),
\quad 1\le\ell<\bar L$ and $\zeta_\theta(g,t)=W_{\bar L}z_{\bar L-1}+b_{\bar L}$, and define \(\widetilde z_\ell\) analogously. Put
\(\delta_\theta=\|\theta-\widetilde\theta\|_\infty\). Since every
matrix has at most \(m\) rows and columns, and every parameter is
bounded by \(\kappa\), we have $\|W_\ell\|_{\op}\vee\|\widetilde W_\ell\|_{\op}
\le m\kappa$ and $\|W_\ell-\widetilde W_\ell\|_{\op}
+\|b_\ell-\widetilde b_\ell\|_2
\le 2m\delta_\theta$. Because ReLU is \(1\)-Lipschitz, the activations satisfy the two
recursions
\begin{equation*}
\|z_\ell\|_2\le
(m+1)(1\vee\kappa)\{1+\|z_{\ell-1}\|_2\},\quad
\|z_\ell-\widetilde z_\ell\|_2\le
m\kappa\|z_{\ell-1}-\widetilde z_{\ell-1}\|_2 +
2m\delta_\theta\{1+\|z_{\ell-1}\|_2\}.
\end{equation*}
Let \(M_\kappa=(m+1)(1\vee\kappa)\), which is at least \(2\). Starting from
\(z_0=\widetilde z_0\), induction over the preceding recursions gives $\|z_\ell\|_2
\le
2M_\kappa^\ell\{1+\|z_0\|_2\}$ and $\|z_\ell-\widetilde z_\ell\|_2
\le
C\ell(m+1)M_\kappa^{\ell-1}
\{1+\|z_0\|_2\}\delta_\theta$. Applying the same difference decomposition to the final affine layer
therefore gives
\begin{equation}\label{eq:mask_network_parameter_perturb}
\|\zeta_\theta(g,t)-\zeta_{\widetilde\theta}(g,t)\|_2
\le
L_\theta^{\mathrm{net}}(1+|t|+\|g\|_2)\delta_\theta,
\end{equation}
where the following uniform choice is valid for every \(\bar L\le L\):
\begin{equation}\label{eq:mask_network_parameter_constant}
L_\theta^{\mathrm{net}}
=
C L^2(m+1)^{L+2}(1\vee\kappa)^L.
\end{equation}

For different encoder and network parameters, insert the intermediate
term
\(\zeta_\theta(g_t(\widetilde\Gamma,\widetilde\omega),t)\). The global input
Lipschitz constraint in \(\mathcal F_{\core}\) and
\eqref{eq:mask_network_parameter_perturb} imply
\[
\|\zeta_\theta(g_t(\Gamma,\omega),t)
-\zeta_{\widetilde\theta}(g_t(\widetilde\Gamma,
\widetilde\omega),t)\|_2\le
\gamma_g\|g_t(\Gamma,\omega)-g_t(\widetilde\Gamma,
\widetilde\omega)\|_2+
L_\theta^{\mathrm{net}}
\{1+T+\|g_t(\widetilde\Gamma,\widetilde\omega)\|_2\}
\delta_\theta.
\]
Taking the conditional \(L^2\) norm and applying
\eqref{eq:mask_g_perturb}--\eqref{eq:mask_g_moment} yields
\begin{equation}\label{eq:mask_core_output_perturb}
\sup_{X_0\in\mathcal E_{\mathrm{tr}}}
\left\{
\E_{X_{\T,t}\mid X_\T}
\|\zeta_\theta(g_t(\Gamma,\omega),t)
-\zeta_{\widetilde\theta}(g_t(\widetilde\Gamma,
\widetilde\omega),t)\|_2^2
\right\}^{1/2}
\le
L_{\mathrm{core}}(t)
(\Delta_\Gamma+\Delta_\omega+\delta_\theta),
\end{equation}
where $L_{\mathrm{core}}(t)
=\gamma_gL_{\mathrm{enc}}(t)
+L_\theta^{\mathrm{net}}\{1+T+G_{\mathrm{enc}}(t)\}$.

For the score output itself, write \(g=g_t(\Gamma,\omega)\) and
\(\widetilde g=g_t(\widetilde\Gamma,\widetilde\omega)\). Then we have
\[
\begin{aligned}
s_{\Gamma,\omega,\theta}(x_{\T,t},x_\C,t)
-&s_{\widetilde\Gamma,\widetilde\omega,\widetilde\theta}
(x_{\T,t},x_\C,t) =
\{\Lambda_{\T,t}(\omega)-\Lambda_{\T,t}(\widetilde\omega)\}
\{\Gamma_{\otimes}\zeta_\theta(g,t)-x_{\T,t}\}\\
&+
\Lambda_{\T,t}(\widetilde\omega)
(\Gamma_{\otimes}-\widetilde\Gamma_{\otimes})\zeta_\theta(g,t)+
\Lambda_{\T,t}(\widetilde\omega)\widetilde\Gamma_{\otimes}
\{\zeta_\theta(g,t)-\zeta_{\widetilde\theta}(\widetilde g,t)\}.
\end{aligned}
\]
By \eqref{eq:mask_lambda_t_perturb},
\(\|\Gamma_{\otimes}\zeta_\theta(g,t)\|_2\le K\), and
\eqref{eq:mask_forward_moment_cover}, the first term satisfies
\[
\left\{\E_{X_{\T,t}\mid X_\T}
\left\|
\{\Lambda_{\T,t}(\omega)-
\Lambda_{\T,t}(\widetilde\omega)\}
\{\Gamma_{\otimes}\zeta_\theta(g,t)-X_{\T,t}\}
\right\|_2^2\right\}^{1/2}\le
C p^{-\beta}h_t^{-2}\{K+Q_\T(t)\}\Delta_\omega.
\]
For the second term, Equations \eqref{eq:mask_kronecker_perturb} and
\eqref{eq:mask_lambda_norms} give $\|
\Lambda_{\T,t}(\widetilde\omega)
(\Gamma_{\otimes}-\widetilde\Gamma_{\otimes})
\zeta_\theta(g,t)
\|_2 \le C h_t^{-1}K\Delta_\Gamma$. Finally, \eqref{eq:mask_lambda_norms} and
\eqref{eq:mask_core_output_perturb} give
\[
\left\{\E_{X_{\T,t}\mid X_\T}
\left\|
\Lambda_{\T,t}(\widetilde\omega)\widetilde\Gamma_{\otimes}
\{\zeta_\theta(g,t)-
\zeta_{\widetilde\theta}(\widetilde g,t)\}
\right\|_2^2\right\}^{1/2}\le
h_t^{-1}L_{\mathrm{core}}(t)
(\Delta_\Gamma+\Delta_\omega+\delta_\theta).
\]
Consequently, we have
\begin{equation}\label{eq:mask_score_parameter_perturb}
\sup_{X_0\in\mathcal E_{\mathrm{tr}}}
\left\{
\E_{X_{\T,t}\mid X_\T}
\|s_{\Gamma,\omega,\theta}(X_{\T,t},X_\C,t)
-s_{\widetilde\Gamma,\widetilde\omega,\widetilde\theta}
(X_{\T,t},X_\C,t)\|_2^2
\right\}^{1/2}
\le
L_s(t)(\Delta_\Gamma+\Delta_\omega+\delta_\theta),
\end{equation}
\begin{equation}\label{eq:mask_score_parameter_constant}
L_s(t)
=
C\left[
p^{-\beta}h_t^{-2}\{K+Q_\T(t)\}
+h_t^{-1}K+h_t^{-1}L_{\mathrm{core}}(t)
\right],\qquad \overline L_s=\sup_{t\in[t_0,T]}L_s(t)
\end{equation}

The supremum above is finite because \(t_0>0\), \(\alpha_T>0\), and all
members of the score class satisfy the uniform \(C_V\), \(K\), and
parameter bounds. Define the time-averaged score metric
\[
d_{\mathrm{av}}^s(s,\widetilde s)
=
\sup_{X_0\in\mathcal E_{\mathrm{tr}}}
\left\{
\frac1{T-t_0}\int_{t_0}^T
\E_{X_{\T,t}\mid X_\T}
\|s(X_{\T,t},X_\C,t)-\widetilde s(X_{\T,t},X_\C,t)\|_2^2dt
\right\}^{1/2}.
\]
Then we have $d_{\mathrm{av}}^s(s_{\Gamma,\omega,\theta},
s_{\widetilde\Gamma,\widetilde\omega,\widetilde\theta})
\le
\overline L_s(\Delta_\Gamma+\Delta_\omega+\delta_\theta)$. For \(R_s(X_{\T,t},X_\C,t)=s(X_{\T,t},X_\C,t)-r_t(X_\T,X_{\T,t})\),
recall that
\[
\frac1{T-t_0}\int_{t_0}^T
\E_{X_{\T,t}\mid X_\T}\|R_s(X_{\T,t},X_\C,t)\|_2^2dt
\le B_{\ell,\mask}
\]
on \(\mathcal E_{\mathrm{tr}}\), uniformly in \(s\).  Define
\(d_\infty^\ell\) by $d_\infty^\ell
\{\ell^{\mathrm{tr}}(\cdot;s),
\ell^{\mathrm{tr}}(\cdot;\widetilde s)\}
=
\sup_{X_0}
|\ell^{\mathrm{tr}}(X_0;s)-\ell^{\mathrm{tr}}(X_0;\widetilde s)|$. Then, since
\(|\|R_s\|_2^2-\|R_{\widetilde s}\|_2^2|
\le(\|R_s\|_2+\|R_{\widetilde s}\|_2)\|s-\widetilde s\|_2\),
the difference of the two truncated losses is zero outside
\(\mathcal E_{\mathrm{tr}}\), and the Cauchy–Schwarz inequality in the
time-forward expectation gives
\[
\begin{aligned}
&|\ell^{\mathrm{tr}}(X_0;s)-\ell^{\mathrm{tr}}(X_0;\widetilde s)|=
\frac{\mathbf I_{\mathcal E_{\mathrm{tr}}}}{T-t_0}
\left|
\int_{t_0}^T
\E_{X_{\T,t}\mid X_\T}
\{\|R_s\|_2^2-\|R_{\widetilde s}\|_2^2\}dt
\right|\\
&\quad\le
\left[
\frac1{T-t_0}\int_{t_0}^T
\E_{X_{\T,t}\mid X_\T}
\{\|R_s\|_2+\|R_{\widetilde s}\|_2\}^2dt
\right]^{1/2}
d_{\mathrm{av}}^s(s,\widetilde s)\le
2B_{\ell,\mask}^{1/2}d_{\mathrm{av}}^s(s,\widetilde s),
\end{aligned}
\]
uniformly over \(X_0\in\mathcal E_{\mathrm{tr}}\), where the last step
uses \((a+b)^2\le2a^2+2b^2\) and the definition of
\(B_{\ell,\mask}\) for each of \(s\) and \(\widetilde s\). Hence, we have $d_\infty^\ell
\{\ell^{\mathrm{tr}}(\cdot;s),
\ell^{\mathrm{tr}}(\cdot;\widetilde s)\}
\le
2B_{\ell,\mask}^{1/2}d_{\mathrm{av}}^s(s,\widetilde s)$. Let \(\tau_\ell=B_{\ell,\mask}/n\) and choose
\(\tau_s=\tau_\ell/(4B_{\ell,\mask}^{1/2})\). It is enough to construct a
\(\tau_s\)-cover of the score class under \(d_{\mathrm{av}}^s\), because
such a cover gives a loss-cover radius of at most \(\tau_\ell\). By
\eqref{eq:mask_score_parameter_perturb}, it suffices to take
\[
\tau_\Gamma=\frac{\tau_s}{3\overline L_s},
\qquad
\tau_\omega=\frac{\tau_s}{3\overline L_s},
\qquad
\tau_\theta=\frac{\tau_s}{3\overline L_s}
\]
for the loading, variance, and core-network parameter covers,
respectively.

It remains to calculate the three parameter covering numbers. Under the
metric \(\max_d\|\Gamma_d-\widetilde\Gamma_d\|_{\op}\), a product of
\(\tau_\Gamma/2\)-covers of the individual Stiefel manifolds is a
\(\tau_\Gamma/2\)-cover of the loading space. The volumetric bound for
\(\operatorname{St}(p_d,r_d)\) in Lemma 8 of
\citetEC{chen2019inference}, together with
\(\|A\|_{\op}\le\|A\|_F\), gives
\[
\log\mathcal N\left(
\frac{\tau_\Gamma}{2},
\{\Gamma_d^\top \Gamma_d=I_{r_d}\}_{d=1}^D,
\max_d\|\cdot\|_{\op}
\right)\le
C\sum_{d=1}^Dp_dr_d
\log\left(1+\frac{C}{\tau_\Gamma}\right)
\le
C p_{\max}\log\left(1+\frac{C}{\tau_\Gamma}\right),
\]
because \(D\) and the mode ranks are fixed. Similarly,
\(\mathcal W_\sigma\) is a bounded rectangle of dimension
\(\sum_{d=1}^Dp_d\le Dp_{\max}\), so
\[
\log\mathcal N\left(
\frac{\tau_\omega}{2},\mathcal W_\sigma,\|\cdot\|_\infty
\right)
\le
C p_{\max}\log\left(1+\frac{C}{\tau_\omega}\right).
\]

For the core network, write \(\Theta_{\core}\) for the set of parameter
vectors satisfying the architecture, coefficient, and sparsity
constraints of \(\mathcal F_{\core}\). For a fixed depth, pad all
hidden layers to width \(m\), and let \(P_\theta\) be the maximum number
of weight and bias entries among the padded architectures. Since the
input and output dimensions are fixed at \(r+1\) and \(r\), respectively, we have $P_\theta\le C L(m+1)^2$. Put \(J_\theta=J\wedge P_\theta\). Since the architecture budgets are
positive, \(1\le J_\theta\le P_\theta\), and
\[
\sum_{j=0}^{J_\theta}\binom{P_\theta}{j}
\le
\left(\frac{eP_\theta}{J_\theta}\right)^{J_\theta}.
\]
For a fixed support of cardinality at most \(J_\theta\), the active
parameters lie in \([-\kappa,\kappa]^{J_\theta}\), which admits an
\(\ell_\infty\)-cover of radius \(\tau_\theta/2\) with at most
\(\{1+4\kappa/\tau_\theta\}^{J_\theta}\) elements. Taking the union over
the admissible depths and supports therefore gives
\[
\log\mathcal N\left(
\frac{\tau_\theta}{2},\Theta_{\core},\|\cdot\|_\infty
\right)
\le
\log L
+CJ_\theta\log\left(\frac{eP_\theta}{J_\theta}\right)
+
CJ_\theta\log\left(1+\frac{C\kappa}{\tau_\theta}\right).
\]

The preceding parameter covers may have centers outside
\(\mathcal S_{\MTClass}\), because the inverse and global network
constraints need not be preserved by coordinate discretization. Fix an
ordering of the centers in each half-radius cover and assign every
admissible parameter to the first center whose covering ball contains
it. Together with the network depth, the three resulting center indices
partition \(\mathcal S_{\MTClass}\) into no more cells than the product
of the three covering numbers. From every nonempty cell, select one
admissible parameter triple. Two triples assigned to the same cell are
within \(\tau_\Gamma\), \(\tau_\omega\), and \(\tau_\theta\) in the loading,
variance, and network metrics, respectively. It follows from
\eqref{eq:mask_score_parameter_perturb} that their score distance is at
most \(\tau_s\). The selected triples therefore form an internal
\(\tau_s\)-cover of \(\mathcal S_{\MTClass}\); in particular, every
center satisfies the defining bound on \(V_t\) and the common loss
envelope \(B_{\ell,\mask}\).

Multiplying the three parameter covering numbers gives
\begin{align}
\log\mathcal N(\tau_\ell,\mathcal G_{\mathrm{tr}},d_\infty^\ell)&\le
C p_{\max}\log\left(1+\frac{C}{\tau_\Gamma}\right)
+C p_{\max}\log\left(1+\frac{C}{\tau_\omega}\right)
+\log L\nonumber\\
&\qquad+
CJ_\theta\log\left(\frac{eP_\theta}{J_\theta}\right)
+CJ_\theta\log\left(1+\frac{C\kappa}{\tau_\theta}\right).
\label{eq:mask_loss_entropy_explicit}
\end{align}
Since
\[
\tau_s=\frac{B_{\ell,\mask}^{1/2}}{4n},
\qquad
\tau_\Gamma=\tau_\omega=\tau_\theta
=\frac{B_{\ell,\mask}^{1/2}}{12n\overline L_s},
\]
each metric-entropy logarithm in
\eqref{eq:mask_loss_entropy_explicit} is bounded by a constant multiple
of \(\log(n\Xi_{\mask})\), where we may take
\begin{equation}\label{eq:mask_xi_definition}
\Xi_{\mask}
=
2+\overline L_s+P_\theta+\kappa+L
+B_{\ell,\mask}^{-1/2}.
\end{equation}
Indeed, \(h_{t_0}^{-1}\le Ct_0^{-1}\),
\(p^\beta\le p\le p_{\max}^D\), and
\(p_\T\le p_{\max}^D\). The definitions
\eqref{eq:mask_v_perturb_explicit}, \eqref{eq:mask_la_definition},
\eqref{eq:mask_encoder_constants},
\eqref{eq:mask_network_parameter_constant}, and
\eqref{eq:mask_score_parameter_constant} imply
\[
\begin{aligned}
\log\Xi_{\mask}
\le C\bigg[&1+\log(1+C_V)
+\log\{1+B_{\ell,\mask}^{-1/2}\}
+\log(1+K+C_{\T,x}+C_{\C,x})\\
&+\log(1+p_{\max})+\log(t_0^{-1})
+\log(\alpha_T^{-1})+\log(1+L)
+L\log\{(m+1)(1\vee\kappa)\}\bigg].
\end{aligned}
\]
Under the configuration of Theorem \ref{thm:main-approximation} and the
chosen truncation radii, the right-hand side consists only of
polylogarithmic factors in the quantities suppressed by
\(\widetilde{\mathcal O}\). Consequently, \(\log\Xi_{\mask}\) is
absorbed into the logarithmic factor in the statistical rate.
Since \(J_\theta\le J\), we conclude that
\begin{equation}\label{eq:mask_loss_entropy}
\log\mathcal N(\tau_\ell,\mathcal G_{\mathrm{tr}},d_\infty^\ell)
\le
C\{J+p_{\max}\}\log(n\Xi_{\mask}),
\end{equation}
where the contribution \(J\) comes from the core network cover, while
the contribution \(p_{\max}\) comes from the finite-dimensional loading
and variance parameter covers; \(D\) and the ranks \(r_d\) are treated
as fixed.

Write \(\mathbb P g=\E g(X_0)\) and
\(\mathbb P_ng=n^{-1}\sum_{i=1}^ng(X_0^{(i)})\). Since every
\(g\in\mathcal G_{\mathrm{tr}}\) takes values in
\([0,B_{\ell,\mask}]\), the two one-sided relative-deviation
inequalities in Lemma 15 of \citetEC{chen2023score} imply
\[
\sup_{g\in\mathcal G_{\mathrm{tr}}}
\{\mathbb Pg-(1+a)\mathbb P_ng\}\vee
\sup_{g\in\mathcal G_{\mathrm{tr}}}
\{\mathbb P_ng-(1+a)\mathbb Pg\}\le
\frac{CB_{\ell,\mask}}{na}
\left[
\log\mathcal N(\tau_\ell,\mathcal G_{\mathrm{tr}},d_\infty^\ell)
+\log(2/\delta)
\right]
+C\tau_\ell.
\]
with probability at least \(1-\delta\), where the two directions are
assigned a failure probability of \(\delta/2\). This is the relative
empirical-process reduction used in the proof of the score-estimation
theorem of \citetEC{guo2026tucker}. Substituting
\(\tau_\ell=B_{\ell,\mask}/n\) and
\eqref{eq:mask_loss_entropy}, the following inequalities therefore hold
simultaneously for every \(s\in\mathcal S_{\MTClass}\): $\mathcal L_{\mask}^{\mathrm{tr}}(s)
\le
(1+a)\widehat{\mathcal L}_{\mask}^{\mathrm{tr}}(s)
+\Delta_{\mathrm{stat}}$ and $\widehat{\mathcal L}_{\mask}^{\mathrm{tr}}(s)
\le
(1+a)\mathcal L_{\mask}^{\mathrm{tr}}(s)
+\Delta_{\mathrm{stat}}$, where
\begin{equation}\label{eq:mask_delta_stat}
\Delta_{\mathrm{stat}}
\le
\frac{C B_{\ell,\mask}}{na}
\left[
\{J+p_{\max}\}\log(n\Xi_{\mask})+\log(2/\delta)
\right]
+
\frac{C B_{\ell,\mask}}{n}.
\end{equation}
The first term in \(\Delta_{\mathrm{stat}}\) is the relative
empirical-process contribution, and the final
\(CB_{\ell,\mask}/n\) is the discretization contribution.

\textit{Step 2: truncation error.} The residual identity \eqref{eq:mask_raw_residual_identity} gives, without imposing
\(\mathcal E_{\mathrm{tr}}\),
\[
\|s(X_{\T,t},X_\C,t)-r_t(X_\T,X_{\T,t})\|_2^2
\le
C\left\{
\frac{K^2+\|X_\T\|_2^2}{h_t^2}
+
\frac{p^{-2\beta}\|X_{\T,t}\|_2^2}{h_t^4}
\right\}.
\]
Here, the right-hand side is independent of the particular core network
except through the uniform output bound \(K\).  Since $\E(\|X_{\T,t}\|_2^2\mid X_0)
\le
C\|X_\T\|_2^2+p_\T h_t$, we have uniformly over \(s\in\mathcal S_{\MTClass}\) that
\[
\begin{aligned}
&\mathcal L_{\mask}(s)-\mathcal L_{\mask}^{\mathrm{tr}}(s)\\
&\quad\le
\frac{C}{T-t_0}\int_{t_0}^T
\left[
\frac{K^2\mathbb{P}(\mathcal E_{\mathrm{tr}}^c)
+\E\{\|X_\T\|_2^2\mathbf I_{\mathcal E_{\mathrm{tr}}^c}\}}{h_t^2}
+
\frac{p^{-2\beta}\E\{\|X_\T\|_2^2\mathbf I_{\mathcal E_{\mathrm{tr}}^c}\}}{h_t^4}
+
\frac{p_\T p^{-2\beta}h_t\mathbb{P}(\mathcal E_{\mathrm{tr}}^c)}{h_t^4}
\right]dt .
\end{aligned}
\]
The selected radii in \(\mathcal E_{\mathrm{tr}}\) satisfy $\mathbb{P}(\mathcal E_{\mathrm{tr}}^c)\le (3n^2)^{-1}$ and $\E\{\|X_\T\|_2^2\mathbf I_{\mathcal E_{\mathrm{tr}}^c}\}
\le n^{-2}$. Substituting these two estimates gives
\[
\mathcal L_{\mask}(s)-\mathcal L_{\mask}^{\mathrm{tr}}(s)\le
\frac{C}{n^2(T-t_0)}\int_{t_0}^T
\left[
\frac{K^2+1}{h_t^2}
+
\frac{p^{-2\beta}}{h_t^4}
+
\frac{p_\T p^{-2\beta}h_t}{h_t^4}
\right]dt .\]
This last integral is dominated term-by-term by the deterministic
envelope in \eqref{eq:mask_bl_bound}: \(K^2+1\le C(K^2+C_{\T,x}^2)\),
\(p^{-2\beta}\le p^{-2\beta}C_{\T,x}^2\) since \(C_{\T,x}^2\ge1\) for
\(n\) sufficiently large, and the final
\(p_\T p^{-2\beta}h_t/h_t^4\) term is exactly the third envelope term.
Since \(n^{-2}\le n^{-1}\), the preceding display is bounded by
\(C B_{\ell,\mask}/n\).  Hence
\begin{equation}\label{eq:mask_trunc_remainder}
\Delta_{\mathrm{tr}}
:=
\sup_{s\in\mathcal S_{\MTClass}}
\{\mathcal L_{\mask}(s)-\mathcal L_{\mask}^{\mathrm{tr}}(s)\}
\le
\frac{C B_{\ell,\mask}}{n}.
\end{equation}
Moreover, we have
\[
\mathbb P\left(\bigcap_{i=1}^n\mathcal E_{\mathrm{tr},i}\right)
\ge
1-\sum_{i=1}^n\mathbb{P}(\mathcal E_{\mathrm{tr},i}^c)
\ge 1-\frac1{3n},
\]
and on this event
\(\widehat{\mathcal L}_{\mask}^{\mathrm{tr}}(s)
=\widehat{\mathcal L}_{\mask}(s)\) for all \(s\).

\textit{Step 3: approximation term and raw-loss oracle inequality.}
Let \(\mathcal A_{\mathrm{stat}}\) be the concentration event from
Step 1 and let
\(\mathcal A_{\mathrm{tr}}=\cap_{i=1}^n\mathcal E_{\mathrm{tr},i}\) be
the sample truncation event from Step 2. On
\(\mathcal A_{\mathrm{stat}}\cap\mathcal A_{\mathrm{tr}}\), the two
statistical terms in \eqref{eq:mask_raw_error_decomposition} are at
most \(\Delta_{\mathrm{stat}}\), the truncation term is at most
\(\Delta_{\mathrm{tr}}\), and the empirical optimality used in that
decomposition is valid. Consequently,
\begin{equation}\label{eq:mask_raw_oracle_inequality}
\mathcal L_{\mask}(\widehat s)
\le
(1+a)^2\mathcal L_{\mask}(s^\circ)
+
(2+a)\Delta_{\mathrm{stat}}
+
\Delta_{\mathrm{tr}}.
\end{equation}

\textit{Step 4: conversion from the DSM loss to score risk.}
Finally, let $s_t(x_{\T,t}\mid x_\C)
=\nabla_{x_{\T,t}}\log p_t(x_{\T,t}\mid x_\C)$. For fixed \(t\), \(r_t(X_\T,X_{\T,t})\) is the score of the forward Gaussian kernel \(p(X_{\T,t}\mid X_\T)\) with respect to the treatment
coordinate \(X_{\T,t}\). Therefore, the usual denoising identity gives $s_t(X_{\T,t}\mid X_\C)
= \E\{r_t(X_\T,X_{\T,t})\mid X_{\T,t},X_\C\}$. This identity is the conditional version of the \(L^2\)-projection property of denoising score matching. To use it, write $s-r_t=(s-s_t)+(s_t-r_t)$. For every square-integrable \(s\), expanding the square and conditioning
on \((X_{\T,t},X_\C)\) yields
\[
\begin{aligned}
\E_{X_\C}\E_{X_\T\mid X_\C}\E_{X_{\T,t}\mid X_\T}
\|s(X_{\T,t},X_\C,t)&-r_t(X_\T,X_{\T,t})\|_2^2=
\E_{X_\C}\E_{X_{\T,t}\mid X_\C}
\|s(X_{\T,t},X_\C,t)-s_t(X_{\T,t}\mid X_\C)\|_2^2\\
&\qquad+
\E_{X_\C}\E_{X_\T\mid X_\C}\E_{X_{\T,t}\mid X_\T}\|r_t(X_\T,X_{\T,t})-s_t(X_{\T,t}\mid X_\C)\|_2^2.
\end{aligned}
\]
Indeed, the cross term is $2\E[
\langle
s(X_{\T,t},X_\C,t)-s_t(X_{\T,t}\mid X_\C),
s_t(X_{\T,t}\mid X_\C)-r_t(X_\T,X_{\T,t})
\rangle]$, which vanishes because
\(\E\{r_t-s_t\mid X_{\T,t},X_\C\}=0\).  Integrating over time,
we obtain the exact decomposition $\mathcal L_{\mask}(s)=\mathcal R_{\mask}(s)+C_{\mathrm{DSM}}$, where
\[
C_{\mathrm{DSM}}
=
\frac1{T-t_0}\int_{t_0}^T
\E_{X_\C}\E_{X_\T\mid X_\C}\E_{X_{\T,t}\mid X_\T}
\|r_t(X_\T,X_{\T,t})-s_t(X_{\T,t}\mid X_\C)\|_2^2dt,\quad\text{and}
\]
$$\mathcal R_{\mask}(s)=(T-t_0)^{-1}\int_{t_0}^T
\E_{X_{\C}}\E_{X_{\T,t}\mid X_{\C}}
\|s(X_{\T,t},X_{\C},t)-\nabla_{x_{\T,t}}\log p_t(X_{\T,t}\mid X_{\C})\|_2^2dt.$$
Note that $C_{\mathrm{DSM}}$ does not depend on \(s\). Applying this identity to the raw oracle
inequality \eqref{eq:mask_raw_oracle_inequality} yields
\begin{equation}\label{eq:mask_oracle_score}
\mathcal R_{\mask}(\widehat s)
\le
(1+a)^2\mathcal R_{\mask}(s^\circ)
+
\{(1+a)^2-1\}C_{\mathrm{DSM}}
+
(2+a)\Delta_{\mathrm{stat}}
+
\Delta_{\mathrm{tr}}.
\end{equation}

The term \(\{(1+a)^2-1\}C_{\mathrm{DSM}}\) results from the
multiplicative raw-loss inequality. We next bound
\(C_{\mathrm{DSM}}\), which is independent of \(s\).
By the exact score decomposition, we know that $s_t(X_{\T,t}\mid X_\C)
=
\Lambda_{\T,t}
\{A_{\otimes}\xi(g_t,t)-X_{\T,t}\}$. Hence,
\[
\begin{aligned}
r_t-s_t
&=
\alpha_th_t^{-1}X_\T
-h_t^{-1}X_{\T,t}
-\Lambda_{\T,t}A_{\otimes}\xi(g_t,t)
+\Lambda_{\T,t}X_{\T,t}\\
&=
\alpha_th_t^{-1}X_\T
-\{h_t^{-1}\MT-\Lambda_{\T,t}\}X_{\T,t}
-\Lambda_{\T,t}A_{\otimes}\xi(g_t,t).
\end{aligned}
\]

The same precision-difference bound used in
\eqref{eq:mask_raw_residual_identity} gives $\|h_t^{-1}\MT-\Lambda_{\T,t}\|_{\op}
\le C p^{-\beta}h_t^{-2}$. By Jensen's inequality and the definition
\(\xi(g_t,t)=\E(\alpha_t f\mid g_t)\), we have $\E\|\xi(g_t,t)\|_2^2
\le
\E\|\alpha_t f\|_2^2
\le
\E\|f\|_2^2
\le C_r$. Also, $\E\|X_\T\|_2^2\le Cd_{\T,\beta}$, while $\E\|X_{\T,t}\|_2^2\le C(d_{\T,\beta}+p_\T h_t)$. Using \((a+b+c)^2\le3(a^2+b^2+c^2)\), the three pieces in
\(r_t-s_t\) are bounded as follows:
\[
\E\|\alpha_th_t^{-1}X_\T\|_2^2
\le
Cd_{\T,\beta}h_t^{-2},\quad \E\|\{h_t^{-1}\MT-\Lambda_{\T,t}\}X_{\T,t}\|_2^2
\le
Cp^{-2\beta}h_t^{-4}(d_{\T,\beta}+p_\T h_t),
\]
\[
\text{and}\quad \E\|\Lambda_{\T,t}A_{\otimes}
\xi(g_t,t)\|_2^2
\le
Ch_t^{-2}
\le
Cd_{\T,\beta}h_t^{-2},
\]
where the last inequality uses \(d_{\T,\beta}\ge1\).  After time
averaging, these bounds imply
\[
C_{\mathrm{DSM}}
\le
C
\left\{
\frac{d_{\T,\beta}}{T-t_0}\int_{t_0}^T h_t^{-2}dt
+
\frac{d_{\T,\beta}p^{-2\beta}}{T-t_0}\int_{t_0}^T h_t^{-4}dt
+
\frac{p_\T p^{-2\beta}}{T-t_0}\int_{t_0}^T h_t^{-3}dt
\right\}.
\]
Under \(p^{-\beta}\le c_0t_0\), the same argument used in the envelope
bound gives \(p^{-\beta}\le Ch_t\) for all \(t\in[t_0,T]\). Therefore
, the second term in the preceding display satisfies $d_{\T,\beta}p^{-2\beta}h_t^{-4}
\le
C d_{\T,\beta}h_t^{-2}$, and the third term satisfies $p_\T p^{-2\beta}h_t^{-3}
=
p_\T p^{-\beta}h_t^{-2}(p^{-\beta}h_t^{-1})
\le
C d_{\T,\beta}h_t^{-2}$. Consequently, we have
\begin{equation}\label{eq:mask_cdsm_bound}
C_{\mathrm{DSM}}
\le
C
\frac{d_{\T,\beta}}{T-t_0}
\int_{t_0}^T h_t^{-2}dt
\le
C d_{\T,\beta}
\left(\frac1{t_0}+T\right).
\end{equation}
Thus, \(C_{\mathrm{DSM}}/d_{\T,\beta}\) has the same time singularity as
the statistical upper bound after the early-stopping normalization.

\textit{Step 5: balancing.}
Theorem \ref{thm:main-approximation} gives $\mathcal R_{\mask}(s^\circ)
\le
(T-t_0)^{-1}\int_{t_0}^T
h_t^{-2}(\sqrt r+1)^2\epsilon^2dt$. Set \(\delta=1/(3n)\) and \(a=c_a\epsilon^2\), where \(c_a>0\) is a
small numerical constant.  Since \(a\in(0,1)\),
\((1+a)^2\le4\) and \((1+a)^2-1\le3a\). Define
\[
\mathfrak C_\epsilon
:=(1+L_gR_\epsilon)^r(1+TL_t)
\left[1+\log\left\{1+
\frac{K_0+L_gR_\epsilon}{\epsilon}\right\}\right].
\]
The displayed choices of \(m\) and \(L\) in
Theorem~\ref{thm:main-approximation} then give $J=O\{\mathfrak C_\epsilon\epsilon^{-(r+1)}\}$. Set $\mathfrak L_\epsilon
=
\log(n\Xi_{\mask})+\log(6n)$. By \eqref{eq:mask_bl_simplified_rate},
\eqref{eq:mask_delta_stat}, and \(\delta=1/(3n)\), we have
\[
\frac{\Delta_{\mathrm{stat}}}{d_{\T,\beta}}
\le
\widetilde{\mathcal O}
\left[
\left(\frac1{t_0}+T\right)
\left\{
\frac{
\{\mathfrak C_\epsilon\epsilon^{-(r+1)}+p_{\max}\}
\mathfrak L_\epsilon}
{n\epsilon^2}
+\frac1n
\right\}
\right],
\]
where \(a^{-1}=c_a^{-1}\epsilon^{-2}\). The estimate following
\eqref{eq:mask_xi_definition} shows that
\(\mathfrak L_\epsilon\) is polylogarithmic in
\(\{t_0^{-1},\epsilon^{-1},\alpha_T^{-1},p_{\max},n\}\) under the
network configuration of Theorem~\ref{thm:main-approximation}.  In
particular, the layerwise factor
\((m+1)^{L+2}(1\vee\kappa)^L\) enters only through $\log L_\theta^{\mathrm{net}}
\le
C\left[
\log(1+L)+L\log\{(m+1)(1\vee\kappa)\}
\right]$, and therefore introduces no additional polynomial dependence on
\(\epsilon^{-1}\) or \(p_{\max}\). The truncation bound
\eqref{eq:mask_trunc_remainder} gives $d_{\T,\beta}^{-1}\Delta_{\mathrm{tr}}=\widetilde{\mathcal O}\{n^{-1}t_0^{-1}+n^{-1}T
\},$ which is dominated by the preceding statistical bound because
\(\epsilon\in(0,1)\),
\(\mathfrak C_\epsilon\epsilon^{-(r+1)}+p_{\max}\ge1\), and
\(\mathfrak L_\epsilon\ge1\). For the approximation term, Theorem
\ref{thm:main-approximation} gives the pointwise bound
\[
\E_{X_\C}\E_{X_{\T,t}\mid X_\C}
\left[
\left\|
s^\circ(X_{\T,t},X_\C,t)
-
s_t(X_{\T,t}\mid X_\C)
\right\|_2^2
\right]
\le
C h_t^{-2}\epsilon^2.
\]
As \(r\) is treated as fixed, integration and \eqref{eq:mask_cdsm_bound} give
\[
\frac{\mathcal R_{\mask}(s^\circ)}{d_{\T,\beta}}
\le
\widetilde{\mathcal O}
\left[
\left(\frac1{t_0}+T\right)\epsilon^2
\right],\quad \frac{aC_{\mathrm{DSM}}}{d_{\T,\beta}}
\le
\widetilde{\mathcal O}
\left[
\left(\frac1{t_0}+T\right)\epsilon^2.
\right]
\]
Thus, the oracle inequality \eqref{eq:mask_oracle_score}, after division
by \(d_{\T,\beta}\), gives
\[
\frac{\mathcal R_{\mask}(\widehat s)}{d_{\T,\beta}}
\le
\widetilde{\mathcal O}
\left[
\left(\frac1{t_0}+T\right)
\left\{
\epsilon^2
+
\frac{
\{\mathfrak C_\epsilon\epsilon^{-(r+1)}+p_{\max}\}
\mathfrak L_\epsilon}
{n\epsilon^2}
\right\}
\right].
\]
The concentration event has a probability of at least \(1-\delta\), while
the sample truncation event has a probability of at least \(1-1/(3n)\).
With \(\delta=1/(3n)\), a union bound gives a failure probability of at most
\(2/(3n)\le1/n\). Hence, the displayed bound holds with a probability of at least
\(1-1/n\). Finally, the choice of \(\epsilon\) gives $\epsilon^2=n^{-(2-2\delta_n)/(r+5)}$. By the definition of \(\mathfrak C_\epsilon\), the identity
\(R_\epsilon^2=O\{\log((t_0\epsilon)^{-1})\}\),
\(T=2\log(\alpha_T^{-1})\), and the bound following
\eqref{eq:mask_xi_definition}, the explicit \(n\)-logarithmic order,
apart from the polylogarithmic factors already suppressed in the theorem,
can be stated precisely as follows: there is a factor \(\Pi_n\), polynomial
in \(\log(1+t_0^{-1})\), \(\log(1+p_{\max})\), and
\(\log(1+\alpha_T^{-1})\), such that $\mathfrak C_\epsilon\mathfrak L_\epsilon
\le C\Pi_n(\log n)^{(r+8)/2}$. Moreover,
\(n^{-\delta_n}=(\log n)^{-(r+12)/2}\). Consequently, $(n\epsilon^2)^{-1}\mathfrak C_\epsilon\epsilon^{-(r+1)}
\mathfrak L_\epsilon =\epsilon^2\mathfrak C_\epsilon\mathfrak L_\epsilon n^{-\delta_n}
=O\{\Pi_n\epsilon^2(\log n)^{-2}\}
=\widetilde{\mathcal O}(\epsilon^2)$. Hence, the enlarged covering-number bound does
not alter the polynomial balance between approximation and statistical
error. Moreover, we have $(n\epsilon^2)^{-1}p_{\max}\mathfrak L_\epsilon=\widetilde{\mathcal O}(
p_{\max}n^{-\frac{r+3+2\delta_n}{r+5}}
)$; substitution into the preceding oracle bound proves
\eqref{eq:main-generalization-rate}.
\endproof
$\hfill\square$

\subsection{Proof of Theorem~\ref{thm:main-distribution}}
\proof{Proof.}
For a sufficiently large \(C_\beta\), the choices in the theorem imply
\(c_0^{-1}p^{-\beta}<t_0\). Moreover, \(T-t_0\ge c_T\) for all
sufficiently large \(n\). Hence, Theorem
\ref{thm:main-generalization} applies. On the event given in Theorem
\ref{thm:main-generalization}, we can fix the training sample and hence
\(\widehat s\). Fix \(x_\C\) in the full-measure set considered below. On reverse time
\([0,T-t_0]\), let \(\mathbb P_x\), \(\widetilde{\mathbb P}_x\), and
\(\mathbb Q_x\) be, respectively, the path laws of the exact-score process
initialized from \(P_T^\T(\cdot\mid x_\C)\), the estimated-score process
initialized from the same law, and the trained process initialized from
\(\gamma_\T=\N(0,I_{p_\T})\). Their endpoint laws are
\(P_{t_0}^\T(\cdot\mid x_\C)\),
\(\widetilde P_{t_0}^\T(\cdot\mid x_\C)\), and
\(\widehat P_{t_0}^\T(\cdot\mid x_\C)\).

We first control the score-estimation term.  Let $s_t(x_{\T,t}\mid x_\C)
=
\nabla_{x_{\T,t}}\log p_t(x_{\T,t}\mid x_\C)$. We verify the square-integrability required below.  The DSM
projection identity in Step 4 of the proof of Theorem
\ref{thm:main-generalization} and Jensen's inequality give
\[
\E_{X_{\T,t}\mid x_\C}
\|s_t(X_{\T,t}\mid x_\C)\|_2^2
\le
\E\{\|r_t(X_\T,X_{\T,t})\|_2^2\mid X_\C=x_\C\}
=
\frac{p_\T}{h_t}.
\]
Here the last equality follows because
\(r_t=-h_t^{-1/2}\MT z_t\), where \(z_t\sim\N(0,I_p)\) is independent of
\(X_\C\).
Moreover, every \(s_{\Gamma,\omega,\theta}\in\mathcal S_{\MTClass}\) satisfies
\(\|\Lambda_{\T,t}(\omega)\|_{\op}\le h_t^{-1}\),
\(\|\Gamma_{\otimes}\|_{\op}=1\), and
\(\|\zeta_\theta\|_2\le K\).  Consequently,
\[
\E_{X_{\T,t}\mid x_\C}
\|\widehat s(X_{\T,t},x_\C,t)\|_2^2
\le
C h_t^{-2}
\left\{K^2+
\E(\|X_\T\|_2^2\mid X_\C=x_\C)+p_\T h_t\right\}
\]
for \(P_\C\)-almost every \(x_\C\). Since
\(\E(\|X_\T\|_2^2\mid X_\C)<\infty\) almost surely and
\(h_t\ge h_{t_0}>0\), the preceding bounds imply $\sup_{t\in[t_0,T]}
\E_{X_{\T,t}\mid x_\C}
\|\widehat s(X_{\T,t},x_\C,t)
-s_t(X_{\T,t}\mid x_\C)\|_2^2
<\infty$.
Moreover, for fixed \(x_\C\), every score in
\(\mathcal S_{\MTClass}\) is globally Lipschitz with at most linear
growth in \(x_{\T,t}\) on \([t_0,T]\). Hence, the estimated reverse SDE
defines a Markov kernel in the treated region. 

Then, to apply the change-of-measure argument on a non-degenerate state space,
let \(J_{\T}\in\R^{p\times p_{\T}}\) be the treated-region-coordinate
selection matrix, so that $J_{\T}^{\top}J_{\T}=I_{p_{\T}}$ and $J_{\T}J_{\T}^{\top}=\MT$. Every state and score in the reverse process is supported on \(\T\).
Thus, write \(y_u=J_{\T}^{\top}x_{\T,u}^{\leftarrow}\)
for the exact-score reverse process and
\(\widetilde y_u=J_{\T}^{\top}\widetilde x_{\T,u}^{\leftarrow}\)
for the auxiliary reverse process, and set
\(b_t(y,x_\C)=J_{\T}^{\top}s_t(J_{\T}y\mid x_\C)\) and
\(\widehat b_t(y,x_\C)=J_{\T}^{\top}\widehat s(J_{\T}y,x_\C,t)\).
The true and auxiliary reverse processes, in these coordinates, have
the respective SDEs
\[
\mathrm d y_u=\{\tfrac12y_u+b_{T-u}(y_u,x_\C)\}\,\mathrm d u+\mathrm d B_u,
\qquad
\mathrm d\widetilde y_u=\{\tfrac12\widetilde y_u+\widehat b_{T-u}(\widetilde y_u,x_\C)\}\,\mathrm d u+\mathrm d\widetilde B_u,
\]
with the same initial law and a common identity diffusion coefficient; \(B_u\) and
\(\widetilde B_u\) are standard \(p_\T\)-dimensional Brownian motions. Since \(J_{\T}\) is an isometry
between the treated region and \(\R^{p_{\T}}\), relative entropy
and the squared drift difference are unchanged by this
coordinate representation. Clearly, the preceding moment bounds verify the change-of-measure condition
used in Lemma D.4 of \citetEC{fu2024unveil}. The first line below is the
additive property of relative entropy under disintegration
\citepEC[Theorem~2.4]{leonard2014path}, applied to the initial-coordinate
map. The second line is the localized Girsanov bound for the two path laws
with a common initial distribution
\citepEC{girsanov1960transforming,follmer2005entropy,fu2024unveil}:
\begin{align*}
\operatorname{KL}(\mathbb P_x\|\mathbb Q_x)
&=\operatorname{KL}(\mathbb P_x\|\widetilde{\mathbb P}_x)
+\operatorname{KL}\!\left\{P_T^\T(\cdot\mid x_\C)
\middle\|\gamma_\T\right\}\notag\\
&\le \frac12\int_{t_0}^T
\E_{X_{\T,t}\mid x_\C}
\|\widehat s(X_{\T,t},x_\C,t)-s_t(X_{\T,t}\mid x_\C)\|_2^2dt
+\operatorname{KL}\!\left\{P_T^\T(\cdot\mid x_\C)
\middle\|\gamma_\T\right\}.
\end{align*}
The Girsanov bound is understood by first stopping when the accumulated
squared drift difference reaches a finite level and then passing to the
limit by lower semicontinuity of relative entropy; thus the finite-energy
condition verified above suffices.
Indeed, \(\widetilde{\mathbb P}_x\) and \(\mathbb Q_x\) use the same
estimated reverse dynamics, so their likelihood ratio depends only on the
initial state. Data processing under the endpoint map
\citepEC{csiszar1967information} and averaging over
\(X_\C\) therefore yield
\begin{equation}\label{eq:proof-endpoint-kl}
\E_{X_\C}\operatorname{KL}\!\left\{
P_{t_0}^\T(\cdot\mid X_\C)
\middle\|\widehat P_{t_0}^\T(\cdot\mid X_\C)\right\}\notag\le \frac12(T-t_0)\mathcal R_{\mask}(\widehat s)
+\E_{X_\C}\operatorname{KL}\!\left\{
P_T^\T(\cdot\mid X_\C)\middle\|\gamma_\T\right\}.
\end{equation}

The definition of \(\mathcal R_{\mask}\) and Theorem
\ref{thm:main-generalization} give
\begin{equation}\label{eq:dist_score_part}
\frac{(T-t_0)\mathcal R_{\mask}(\widehat s)}{d_{\T,\beta}}
\le
\widetilde{\mathcal O}\!\left[
\left(\frac1{t_0}+T\right)
\left\{
n^{-\frac{2-2\delta_n}{r+5}}
+p_{\max}n^{-\frac{r+3+2\delta_n}{r+5}}
\right\}
\right],
\end{equation}
where the factor \(T-t_0\) is absorbed into the logarithmic term because
\(T\asymp\log n+\log p_{\max}\).

It remains to bound the terminal relative-entropy term in \eqref{eq:proof-endpoint-kl}. The forward process for the missing outcomes is the
Ornstein-Uhlenbeck process on \(\R^{p_\T}\) with invariant law
\(\gamma_\T\). Before applying entropy contraction pointwise in
\(x_\C\), we verify that its initial conditional relative entropy is
finite almost surely. Identifying the treated region with
\(\R^{p_\T}\),
write $x_\T=\MT A_{\otimes}f+G_\T$, where $G_\T\sim \N(0,p^{-\beta}(\Sigma_e^{\otimes})_\T)$, \((\Sigma_e^{\otimes})_\T\) is the treated-region principal submatrix
and \(G_\T\) is independent of \((f,X_\C)\). Conditional on
\(X_\C=x_\C\), the law of \(x_\T\) is therefore the convolution of the
conditional distribution of \(\MT A_{\otimes}f\) with the law of \(G_\T\).
More explicitly, if \(Y_\T\) has the conditional distribution of
\(\MT A_{\otimes}f\) given \(X_\C=x_\C\), then $h(Y_\T+G_\T)-h(G_\T)
=
I(Y_\T;Y_\T+G_\T)
\ge0$, where the identity uses the independence of \(Y_\T\) and \(G_\T\). Indeed, the conditional entropy and KL
identities used next are
integrable since Gaussian convolution bounds the conditional entropy from below, while the finite conditional second moment, the Gaussian maximum-entropy bound, and Jensen's inequality control it from above after averaging over \(X_\C\). Therefore,
\[
h(x_\T\mid X_\C=x_\C)
\ge
h(G_\T)
=
\frac12\log\left\{(2\pi \mathrm e)^{p_\T}
(p^\beta)^{-p_\T}\det((\Sigma_e^{\otimes})_\T)\right\},
\]
and $\E_{X_\C}\E(\|x_\T\|_2^2\mid X_\C)
=
\E\|x_\T\|_2^2
\le
C(r+p_\T p^{-\beta})$. Using the identity
\(\mathrm{KL}(P\|\gamma_\T)=\frac12\E_P\|X\|_2^2-h(P)
+\frac{p_\T}{2}\log(2\pi)\), we obtain
\[
\begin{aligned}
\E_{X_\C}
\mathrm{KL}\{P_0^\T(\cdot\mid X_\C)\|\gamma_\T\}& =
\frac12\E\|x_\T\|_2^2
-\E_{X_\C}h(x_\T\mid X_\C)
+\frac{p_\T}{2}\log(2\pi)\\
&\quad\le
C(r+p_\T p^{-\beta})
-h(G_\T)
+\frac{p_\T}{2}\log(2\pi).
\end{aligned}
\]
Moreover, the eigenvalues of \(\Sigma_e^{\otimes}\) are bounded above
and below by fixed constants.  Since \(p\le p_{\max}^D\) and \(D\) is
fixed,
\[
\begin{aligned}
-h(G_\T)+\frac{p_\T}{2}\log(2\pi)
&=
\frac{p_\T}{2}\log(p^\beta)
-\frac{p_\T}{2}
-\frac12\log\det\{(\Sigma_e^{\otimes})_\T\}\\
&\le
C p_\T\{1+\log(p^\beta)\}\le
C p_\T\log(1+p_{\max}),
\end{aligned}
\]
where the last two bounds use the fixed variance bounds and
\(\log(p^\beta)=\sum_{d=1}^D\beta_d\log p_d
\le D\log p_{\max}\).
Combining the preceding two displays gives
\begin{equation}\label{eq:mask_initial_conditional_kl}
\E_{X_\C}
\mathrm{KL}\{P_0^\T(\cdot\mid X_\C)\|\gamma_\T\}
\le
C\{r+p_\T\log(1+p_{\max})\}
\end{equation}
for a constant depending only on the fixed variance bounds and the
core-factor tail constants. Since relative entropy is nonnegative,
\eqref{eq:mask_initial_conditional_kl} implies
\(\mathrm{KL}\{P_0^\T(\cdot\mid x_\C)\|\gamma_\T\}<\infty\) for
\(P_\C\)-almost every \(x_\C\). For each such context, the standard
Ornstein–Uhlenbeck entropy contraction \citepEC{bakry2014analysis}
therefore applies pointwise; see also the proof of Theorem~4.2 in
\citetEC{fu2024unveil}. Integrating that pointwise inequality and
using \eqref{eq:mask_initial_conditional_kl} yields
\begin{equation*}
\E_{X_\C}\mathrm{KL}\{P_T^\T(\cdot\mid X_\C)\|\gamma_\T\}
\le e^{-cT}\E_{X_\C}
\mathrm{KL}\{P_0^\T(\cdot\mid X_\C)\|\gamma_\T\}
\le C e^{-cT}\{r+p_\T\log(1+p_{\max})\}.
\end{equation*}
Since \(p_\T\le p\le p_{\max}^D\) and \(D\) is fixed, taking
\(C_T\) sufficiently large in
\(T=C_T(\log n+\log p_{\max})\) gives directly
\[
\E_{X_\C}\operatorname{KL}\!\left\{
P_T^\T(\cdot\mid X_\C)\middle\|\gamma_\T\right\}
=o\!\left(d_{\T,\beta}\mathfrak R_n^2\right).
\]

Finally, substitute \(t_0\asymp n^{-a_n}\) into the score-estimation
bound \eqref{eq:dist_score_part}. Since
\(T=C_T(\log n+\log p_{\max})\), we have 
$(1/t_0+T)
=\widetilde{\mathcal O}(n^{a_n})$. For the first score-estimation term, we have
\[
n^{a_n}n^{-\frac{2-2\delta_n}{r+5}}
=
n^{-\frac{5(1-\delta_n)}{3(r+5)}},
\]
because \(a_n=(1-\delta_n)/\{3(r+5)\}\). As for the second term, we have
\[
n^{a_n}p_{\max}n^{-\frac{r+3+2\delta_n}{r+5}}
=
p_{\max}\,
n^{-\frac{3r+8+7\delta_n}{3(r+5)}}.
\]
Combining these rates with \eqref{eq:proof-endpoint-kl} proves
\eqref{eq:main-distribution-kl}.

\endproof
$\hfill\square$

\subsection{Proof of Corollaries \ref{cor:main-linear-functional} and \ref{cor:counterfactual_prediction_interval}}
\proof{Proof.}
Fix a realization of \(\mathcal F_{\rm tr}\) in \(\mathcal A_n\) and condition on
\(X_\C^{1:n_e}=x_\C^{1:n_e}\). For \(\ell=1,\ldots,n_e\), write $P_\ell=P_{t_0}^{\T}(\cdot\mid x_\C^{(\ell)})$ and $Q_\ell=\widehat P_{t_0}^{\T}(\cdot\mid x_\C^{(\ell)})$. Let \(J_\T\in\R^{p\times p_\T}\) select the positions in the treated region, with
\(J_\T^\top J_\T=I_{p_\T}\) and \(J_\T J_\T^\top=\MT\), and set
\(w_\T=J_\T^\top w\). Extend the aggregate notation to the forward process by defining
\[
u_{\ell,t}=J_\T^\top\vecc(X_{\T,t}^{(\ell)})\in\R^{p_\T},
\qquad
U_{w,\ell,t}=w^\top\vecc(X_{\T,t}^{(\ell)}),
\qquad
V_{n_e,w,t}=\frac{1}{n_e}\sum_{\ell=1}^{n_e}U_{w,\ell,t}.
\]
Thus, \(U_{w,\ell,t}=w_\T^\top u_{\ell,t}\). Conditional independence gives the joint laws
\(\bigotimes_{\ell=1}^{n_e}P_\ell\) and \(\bigotimes_{\ell=1}^{n_e}Q_\ell\).
Relative entropy is additive over product measures and contracts under measurable maps. Applying these properties
\citepEC{csiszar1967information} to $\Phi_{n_e,w}(u_1,\ldots,u_{n_e})
=(n_e)^{-1/2}\sum_{\ell=1}^{n_e}w_\T^\top u_\ell$ gives
\begin{equation*}
\operatorname{KL}\!\left[
\mathcal L\{\sqrt{n_e}V_{n_e,w,t_0}\mid x_\C^{1:n_e}\}
\middle\|
\mathcal L\{\sqrt{n_e}\widehat V_{n_e,w,t_0}\mid x_\C^{1:n_e}\}
\right]\le
\operatorname{KL}\!\left(\bigotimes_{\ell=1}^{n_e}P_\ell
\middle\|\bigotimes_{\ell=1}^{n_e}Q_\ell\right)
=\sum_{\ell=1}^{n_e}\operatorname{KL}(P_\ell\|Q_\ell).
\end{equation*}
Averaging over the contexts and applying \eqref{eq:main-distribution-kl}, we shall have
\[
\E_{X_\C^{1:n_e}}\operatorname{KL}\!\left[
\mathcal L\{\sqrt{n_e}V_{n_e,w,t_0}\mid X_\C^{1:n_e}\}
\middle\|
\mathcal L\{\sqrt{n_e}\widehat V_{n_e,w,t_0}\mid X_\C^{1:n_e}\}
\right]
\le C n_e d_{\T,\beta}\mathfrak R_n^2.
\]
Pinsker's and Jensen's inequalities, therefore, give
\begin{equation}\label{eq:proof-aggregate-tv}
\E_{X_\C^{1:n_e}}\operatorname{TV}\!\left[
\mathcal L\{\sqrt{n_e}V_{n_e,w,t_0}\mid X_\C^{1:n_e}\},
\mathcal L\{\sqrt{n_e}\widehat V_{n_e,w,t_0}\mid X_\C^{1:n_e}\}
\right]
\le C\sqrt{n_e}\,d_{\T,\beta}^{1/2}\mathfrak R_n.
\end{equation}

For each analysis tensor, couple the original and early-stopped variables using an independent copy \(Z_\ell\) of the Gaussian tensor in \eqref{eq:main-forward}. Then
\begin{equation}\label{eq:aggregate-ou-coupling}
\sqrt{n_e}\bigl(V_{n_e,w,t}-V_{n_e,w,0}\bigr)
=(\alpha_t-1)\sqrt{n_e}V_{n_e,w,0}
+\frac{h_t^{1/2}}{\sqrt{n_e}}\sum_{\ell=1}^{n_e}w^\top\vecc(Z_\ell).
\end{equation}
Assumption~\ref{ass:tensor} implies \(\E U_{w,\ell,0}=0\) and $\E U_{w,\ell,0}^2
=w^\top A_\otimes\Sigma_fA_\otimes^\top w
+p^{-\beta}w^\top\Sigma_e^\otimes w
\le C\|w\|_2^2$. Independence across analysis tensors and \eqref{eq:aggregate-ou-coupling} imply, for \(0<t\le1\),
\begin{equation}\label{eq:aggregate-ou-second-moment}
\E\left|\sqrt{n_e}\bigl(V_{n_e,w,t}-V_{n_e,w,0}\bigr)\right|^2
\le C\|w\|_2^2\{(1-\alpha_t)^2+h_t\}
\le C\|w\|_2^2t.
\end{equation}
For almost every context vector, \eqref{eq:aggregate-ou-coupling} is a conditional coupling. Averaging its cost and applying Jensen's inequality yield
\[
\E_{X_\C^{1:n_e}}d_{\mathrm{BL}}\!\left[
\mathcal L\{\sqrt{n_e}V_{n_e,w,0}\mid X_\C^{1:n_e}\},
\mathcal L\{\sqrt{n_e}V_{n_e,w,t_0}\mid X_\C^{1:n_e}\}
\right]\le C\|w\|_2\sqrt{t_0}
\le C\|w\|_2n^{-a_n/2}.
\]
Combining this display with \eqref{eq:proof-aggregate-tv},
\(d_{\mathrm{BL}}\le2\operatorname{TV}\), and the triangle inequality proves
\eqref{eq:main-linear-functional-bound}.

For Corollary~\ref{cor:counterfactual_prediction_interval}, write the exact learned-quantile interval as
\([a(X_\C^{1:n_e}),b(X_\C^{1:n_e})]\). Its learned law has a conditional probability of \(1-\alpha\). Multiplying the statistic and both endpoints by \(\sqrt{n_e}\), \eqref{eq:proof-aggregate-tv} gives
\begin{equation*}
\E_{X_\C^{1:n_e}}\Big|
\mathbb P\!\left\{\sqrt{n_e}V_{n_e,w,t_0}\in
[\sqrt{n_e}a(X_\C^{1:n_e}),\sqrt{n_e}b(X_\C^{1:n_e})]
\mid X_\C^{1:n_e}\right\}-(1-\alpha)
\Big|\le C\sqrt{n_e}\,d_{\T,\beta}^{1/2}\mathfrak R_n.
\end{equation*}
Under \eqref{eq:aggregate-ou-coupling}, the membership of
\(\sqrt{n_e}V_{n_e,w,0}\) and \(\sqrt{n_e}V_{n_e,w,t_0}\) in the scaled interval can differ only if
\(\sqrt{n_e}|V_{n_e,w,t_0}-V_{n_e,w,0}|>\eta\), or if
\(\sqrt{n_e}V_{n_e,w,0}\) lies within \(\eta\) of an endpoint. The conditional density bound, Markov's inequality, and \eqref{eq:aggregate-ou-second-moment} give, for every \(\eta>0\),
\begin{align*}
&\E_{X_\C^{1:n_e}}\Big|
\mathbb P\!\left\{\sqrt{n_e}V_{n_e,w,0}\in
[\sqrt{n_e}a,\sqrt{n_e}b]\mid X_\C^{1:n_e}\right\}
-\mathbb P\!\left\{\sqrt{n_e}V_{n_e,w,t_0}\in
[\sqrt{n_e}a,\sqrt{n_e}b]\mid X_\C^{1:n_e}\right\}
\Big|\\
&\qquad\le
\frac{C\|w\|_2^2n^{-a_n}}{\eta^2}+4M_{n,n_e,w}\eta,
\end{align*}
where the context-dependent endpoints are suppressed in the second display. Optimizing over \(\eta\) and combining the last two displays gives the right-hand side of \eqref{eq:main-original-interval}. Finally, as $\{V_{n_e,w,0}\in[a,b]\}
=\{\sqrt{n_e}V_{n_e,w,0}\in[\sqrt{n_e}a,\sqrt{n_e}b]\}$, it proves the stated coverage result on the average-outcome scale.

For the finite-\(B\) result, fix \(X_\C^{1:n_e}=x\). Let \(\widehat F_x\) be the conditional cumulative distribution function of \(\widehat V_{n_e,w,t_0}\), let \(\widehat P_x\) be its law, and define $\widehat F_{B,x}(v)=B^{-1}\sum_{b=1}^B
\mathbf I\{\widehat V_{n_e,w,t_0}^{(b)}\le v\}$, $D_{B,x}=\sup_{v\in\R}|\widehat F_{B,x}(v)-\widehat F_x(v)|$. Conditionally on \((x,\mathcal F_{\rm tr})\), the generated values are independent draws from \(\widehat P_x\). The Dvoretzky-Kiefer-Wolfowitz-Massart inequality \citepEC{massart1990tight} gives
\[
\mathbb P\{D_{B,x}>\varepsilon\mid x,\mathcal F_{\rm tr}\}
\le2\exp(-2B\varepsilon^2),
\qquad
\E(D_{B,x}\mid x,\mathcal F_{\rm tr})
\le\sqrt{\frac{\pi}{2B}}.
\]
Let \(\widehat q_{B,x}(u)=\inf\{v:\widehat F_{B,x}(v)\ge u\}\).
Continuity of \(\widehat F_x\) and absence of ties imply $|\widehat F_x\{\widehat q_{B,x}(u)\}-u|
\le D_{B,x}+B^{-1}$, for $0<u<1$. Applying this inequality to both endpoints yields
\[
\left|
\widehat P_x\!\left\{\widehat I_{n_e,w,1-\alpha}^{(B)}\right\}
-(1-\alpha)
\right|
\le2D_{B,x}+2B^{-1},
\]
whose conditional expectation is at most \(CB^{-1/2}\). Common multiplication of the generated values and empirical endpoints by \(\sqrt{n_e}\) leaves this error unchanged.

Construct a fresh set of
\((\sqrt{n_e}V_{n_e,w,0},\sqrt{n_e}V_{n_e,w,t_0})\) whose Gaussian variables in \eqref{eq:aggregate-ou-coupling} are independent of the finite generation sample. Conditional on \(\mathcal F_B^{\mathrm{gen}}\), the random interval has fixed endpoints. After multiplying these endpoints by \(\sqrt{n_e}\), the total-variation and coupling arguments above apply pointwise. Averaging over the contexts and the finite generation sample adds \(CB^{-1/2}\) to \eqref{eq:main-original-interval}, proving \eqref{eq:finite_generation_interval}.
\endproof
$\hfill\square$

\section{Framework Details}\label{appendix_sec:Framework Details}
In this section, we report the score-network architecture and the discrete training and generation procedures used for \CFTDiff. The central construction of the candidate network class is a masked Tucker score architecture: the observed control outcomes and the noised missing region are encoded through two masked Tucker pathways, fused in a low-dimensional core, decoded back to the original matrix coordinates, and connected to a missing-region residual.

The numerical experiments use matrix-valued inputs. Although the iFlex data can be stored as a $3{,}746\times74\times24$ household-by-day-by-hour array, the trained tensor object is the household-level day-by-hour matrix
\[
X_0^{(i)}=\bigl[Y_{ith}(0)\bigr]_{t\in[74],\,h\in[24]}\in\mathbb R^{74\times24}.
\]
Thus, the empirical implementation takes $D=2$, with $(p_1,p_2)=(74,24)$, while the household index $i$ identifies training samples or analysis samples. Matrices from the control-group households form the training sample. For each group-specific intervention calendar, its fixed $74\times24$ mask is applied synthetically to these training matrices during training and to the corresponding treatment-group matrices during conditional generation. Accordingly, the implementation description and architecture figure are written for the order-two specialization $X\in\mathbb R^{H\times W}$; batch and feature-channel indices are suppressed when they are not material.

\subsection{Architecture}\label{appendix_subsec:Architecture}

Let $M_{\mathcal C}$ denote the external binary mask received by the code, with one on the observed control outcomes, and set $M_{\mathcal T}=\mathbf 1-M_{\mathcal C}$. Thus, $M_{\mathcal T}$ is one on the treated region. At diffusion step $j$, the score network receives
\[
    x_j=M_{\mathcal C}\odot X_{\mathcal C}
        +M_{\mathcal T}\odot X_{\mathcal T,j},
\]
together with $M_{\mathcal C}$ and the time label $j$. The first term is kept fixed, and the second term is the noised missing region. Figure~\ref{fig:ec2-masked-tucker-architecture} displays the score architecture used in the experiments, which consists of two masked Tucker encoders, a low-dimensional nonlinear core, a Tucker decoder, and a missing-region shortcut connection. Note that the two encoders are not duplicate input channels, as they carry complementary information from the noised missing region and the observed control outcomes, respectively.

\begin{figure}[htb]
\FIGURE
{\includegraphics[width=0.98\textwidth]{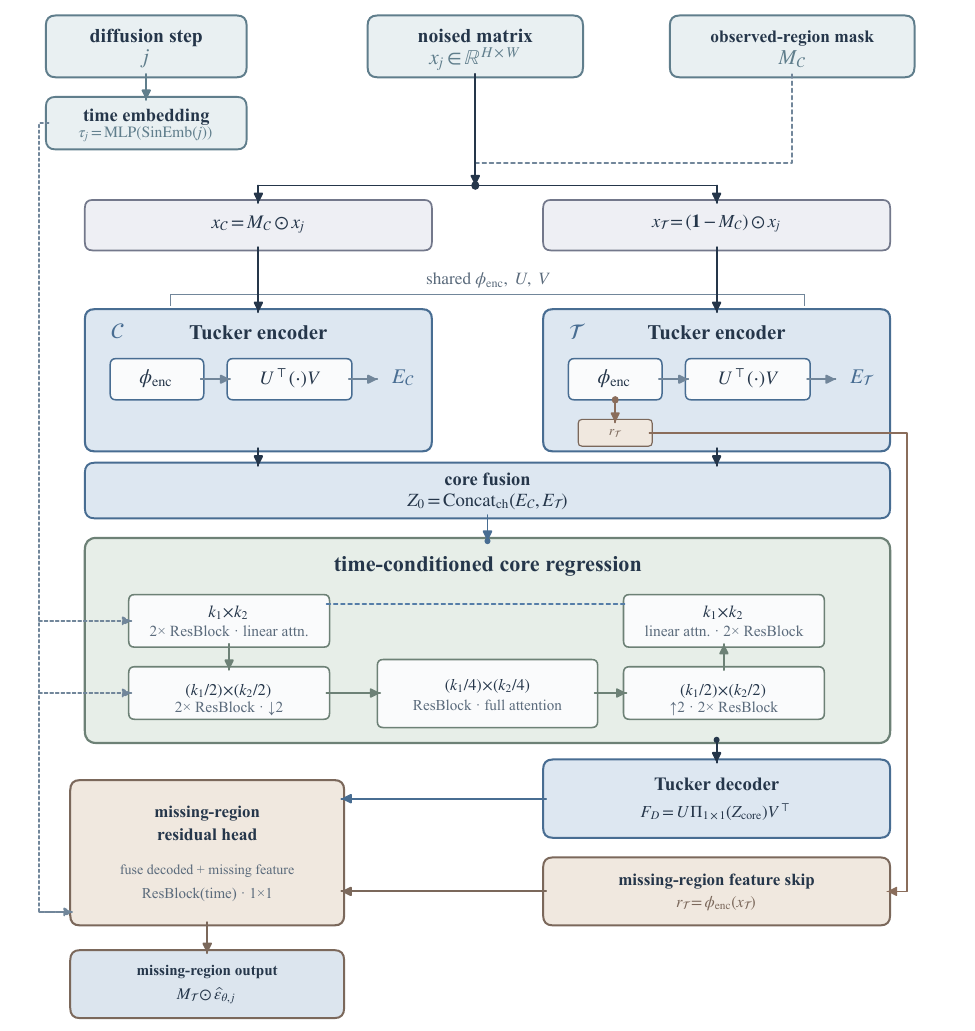}}
{CFT-Diff masked Tucker score architecture.
\label{fig:ec2-masked-tucker-architecture}}
{The masked matrix input is split into the fixed observed control outcomes and a noised missing region. Each part is mapped by the same Tucker encoder $E_{\mathcal S}=U^\top\phi_{\text{enc}}(M_{\mathcal S}\odot x_j)V$, $\mathcal S\in\{\mathcal C,\mathcal T\}$, where $\phi_{\text{enc}}$ is an application-specific full-resolution feature stem. The two core maps are concatenated along the channel dimension and processed by a time-conditioned core regression on the low-dimensional Tucker core; the displayed multi-resolution residual and attention stages are a representative realization of this map. A $1\times1$ projection followed by the Tucker decoder combines the core output with the shortcut $r_{\mathcal T}=\phi_{\text{enc}}(M_{\mathcal T}\odot x_j)$. The network outputs the learned noise prediction in the missing region. The observed control outcomes remain fixed during the conditional reverse update. The brighter blue dashed link denotes a representative skip; slate dashed links carry time and mask conditioning; and the right-side rail denotes the missing-region feature shortcut.}
\end{figure}

For the order-two implementation, $\phi_{\text{enc}}$ is a shared full-resolution feature stem, and $U\in\mathbb R^{H\times k_1}$ and $V\in\mathbb R^{W\times k_2}$ are shared, trainable Tucker bases. In our experiments, $\phi_{\text{enc}}$ is instantiated as a zero-padded $7\times7$ convolution. The two core tensors are concatenated in the channel direction as $[E_{\mathcal C};E_{\mathcal T}]_{\rm ch}$. A sinusoidal time embedding followed by an MLP modulates a conventional two-dimensional ResNet/attention U-Net in the $k_1\times k_2$ core space. Its output is reduced by a $1\times1$ channel projection and decoded by $U(\cdot)V^\top$ to the original matrix resolution. The decoded feature is concatenated with the missing-region feature $\phi_{\text{enc}}(M_{\mathcal T}\odot x_j)$ and passed, together with the time embedding, through a residual output block and a $1\times1$ head. In the matrix-factor experiments, $U,V$ are warm started with loading-space estimates and remain trainable.

\subsection{Core-Net Function Class}\label{appendix_subsec:core-network-class}

The Core-Net below is the theoretical sieve used to control approximation and stochastic complexity. All full-dimensional masking, encoding, decoding, and residual operations are specified separately; the low-dimensional network learns only the map from the $r$-dimensional core statistic and diffusion time to the core output. This makes the theoretical function class follow the same Tucker encoder-decoder organization displayed in Figure~\ref{fig:ec2-masked-tucker-architecture}.

\begin{definition}[Core-Net function class]
The class $\mathcal F_{\core}(L,m,J,K,\kappa,\gamma_g,\gamma_t)$ consists of feed-forward ReLU maps
\[
    \zeta_\theta:
    \mathbb R^r\times[t_0,T]
    \to
    \mathbb R^r
\]
with an affine output layer, depth at most $L$, hidden-layer width at most $m$, at most $J$ nonzero weights and biases, and parameter magnitudes bounded by $\kappa$. Each map satisfies
\[
    \sup_{(g,t)\in\mathbb R^r\times[t_0,T]}
    \|\zeta_\theta(g,t)\|_2
    \leq K,
\]
\[
    \|\zeta_\theta(g,t)-\zeta_\theta(g',t)\|_2
    \leq
    \gamma_g\|g-g'\|_2,
    \qquad
    g,g'\in\mathbb R^r,\quad t\in[t_0,T],
\]
and
\[
    \|\zeta_\theta(g,t)-\zeta_\theta(g,t')\|_2
    \leq
    \gamma_t|t-t'|,
    \qquad
    g\in\mathbb R^r,\quad t,t'\in[t_0,T].
\]
For a tensor-valued Core-Net $\mathcal Z_\theta$, the identification is
\[
    \zeta_\theta(g,t)
    =
    \vecc\left[
        \mathcal Z_\theta\{
            \operatorname{Tucker}(g),t
        \}
    \right].
\]
\end{definition}

The boundedness and Lipschitz restrictions in this definition are the regularity conditions needed for the low-dimensional approximation and covering arguments. In the experiments, the corresponding low-dimensional Core-Net is implemented using the time-conditioned U-Net displayed in Figure~\ref{fig:ec2-masked-tucker-architecture}, with residual blocks, SiLU/GELU activations, normalization, and linear or full attention. In both cases, nonlinear computation is confined to the Tucker core, while dual masking, Tucker encoding and decoding, and the missing-region shortcut remain explicit structural components.

\subsection{Masked DDPM Training}
For a given CFT-Diff checkpoint, training uses training tensors and a fixed mask. For each minibatch, we draw $j\sim\operatorname{Unif}\{0,\ldots,N_{\mathrm{step}}-1\}$ and $\epsilon\sim\mathcal N(0,I)$, and construct the masked discrete forward input
\[
    x_j
    =M_{\mathcal C}\odot X_0+
    M_{\mathcal T}\odot
    \left(\sqrt{\bar\alpha_j}X_0+
    \sqrt{1-\bar\alpha_j}\epsilon\right).
\]
Here, only the missing region is noised, while the observed control outcomes remain fixed. The model uses the DDPM noise-prediction parameterization \citepEC{ho2020denoising} and minimizes the masked noise-prediction loss
\[
    \widehat{\mathcal L}
    =\frac{1}{B}\sum_{i=1}^{B}
    \frac{\left\|M_{\mathcal T}\odot
    \left\{\epsilon_i-\widehat\epsilon_\theta(x_{j_i}^{(i)},j_i;M_{\mathcal C})\right\}\right\|_F^2}
    {\|M_{\mathcal T}\|_0}.
\]
The external mask is synthetically applied to every complete training tensor, and the loss is evaluated only over its missing outcomes.

\paragraph{Simulation configuration.}
For the matrix-factor simulations, the input has one channel of shape $64\times64$. The Tucker core dimension is $(k_1,k_2)=(16,16)$, and the core U-Net has resolutions $16^2$, $8^2$, and $4^2$ with widths $64$, $128$, and $256$, respectively. Each resolution uses two time-conditioned residual blocks and attention; the bottleneck uses full attention. We use a cosine beta schedule with $N_{\mathrm{step}}=200$ diffusion steps, $\epsilon$-prediction, and 300 training epochs. Optimization uses AdamW with a learning rate of $10^{-4}$, weight decay of $0.01$, a batch size of 60, gradient accumulation of 2, a 10-epoch warm-up followed by cosine scheduling, exponential moving average decay of $0.999$, and automatic mixed precision. These numerical settings describe the reported matrix-factor simulation configuration; the architectural mask convention and the conditional objective remain unchanged when the matrix size or Tucker ranks are adapted to another design.

\paragraph{Empirical demand-data configuration.}
For the iFlex experiment, each empirical input is a one-channel $74\times24$ matrix from a single household. We use $(k_1,k_2)=(16,16)$, so the nonlinear core is evaluated on $16\times16$, $8\times8$, and $4\times4$ Tucker grids with a base width of 64. Similar to the simulation setting, each represented level contains two time-conditioned residual blocks. For each mask-specific fit, the day and hour loading spaces are initialized from the leading empirical eigenvectors computed from the complete household training matrices. The loading matrices $U$ and $V$ are warm starts and remain trainable with a learning rate of $10^{-5}$, while the other AdamW parameters use a learning rate of $10^{-4}$. We train for 300 epochs with a batch size of 64, gradient accumulation of 2, weight decay of $0.01$, and automatic mixed precision. The learning-rate schedule uses a linear 10-epoch warm-up followed by cosine annealing with $T_{\max}=40$ and $\eta_{\min}=10^{-6}$. The empirical runs use the same 200-step cosine DDPM noise-prediction objective as above.

\subsection{Conditional Generation}

The reported experiments use DDIM generation \citepEC{song2020denoising} with $N_{\mathrm{step}}=200$ reverse steps. Thus, each reverse trajectory starts from independent Gaussian noise in the missing region and produces one sample from the trained conditional generator. At every reverse step, we update only the missing region while keeping the observed control outcomes fixed. This projection makes the generated tensor agree exactly with the observed control outcomes throughout the trajectory. The target $\widehat X_0$ inferred from the predicted noise is clipped before its DDIM update. We experimented with clipping widths from 1.5 to 2.5 (with 2.0 as the default).

 \begin{algorithm}[htb]
\caption{Conditional DDIM generation for \CFTDiff}\label{alg:ec2-conditional-ddim}
\small
\begin{algorithmic}[1]
\Require observed control outcomes $X_{\mathcal C}$, observed mask $M_{\mathcal C}$, $M_{\mathcal T}=\mathbf1-M_{\mathcal C}$, noise predictor $\widehat\epsilon_\theta$, and cumulative schedule $\{\bar\alpha_j\}_{j=0}^{N_{\mathrm{step}}-1}$
\Ensure conditional completions $\{\widetilde X_0^{(b)}\}_{b=1}^{B}$
\For{$b=1,\ldots,B$}
    \State draw $z^{(b)}\sim\mathcal N(0,I)$ and set $x_{N_{\mathrm{step}}-1}\gets M_{\mathcal C}\odot X_{\mathcal C}+M_{\mathcal T}\odot z^{(b)}$
    \For{$j=N_{\mathrm{step}}-1,N_{\mathrm{step}}-2,\ldots,1$}
        \State $\widehat\epsilon\gets\widehat\epsilon_\theta(x_j,j;M_{\mathcal C})$
        \State $\widehat X_0\gets\operatorname{clip}_{[-x_{\max},x_{\max}]}\!\left\{\bigl(x_j-\sqrt{1-\bar\alpha_j}\widehat\epsilon\bigr)/\sqrt{\bar\alpha_j}\right\}$
        \State $x_{j-1}^{\mathcal T}\gets\sqrt{\bar\alpha_{j-1}}\widehat X_0+\sqrt{1-\bar\alpha_{j-1}}\widehat\epsilon$
        \State $x_{j-1}\gets M_{\mathcal C}\odot X_{\mathcal C}+M_{\mathcal T}\odot x_{j-1}^{\mathcal T}$ \Comment{hard conditional projection}    \EndFor
    \State $\widetilde X_0^{(b)}\gets  x_0$
\EndFor
\end{algorithmic}
\end{algorithm}

For the reported simulations, we generate $B=100$ conditional completions for each test matrix. The resulting empirical distribution supports both point estimates obtained by averaging draws and the distributional measures and interval summaries reported in the paper.

Meanwhile, for each held-out household in the iFlex recovery experiments, we supply the sampler only with the observed control outcomes $M_{\mathcal C}\odot X_{\rm obs}$. We generate $B=100$ completions with a seed matched to the corresponding training split, a 200-step DDIM, and clipping of the inferred $\widehat X_0$ to $[-2,2]$. Repeated household inputs receive independent initial Gaussian states in the missing region. Point-recovery estimates average the 100 draws, whereas the unreduced draw set is retained for distributional evaluation, including the joint energy score.

The recovery masks are constructed at the household--day level and then expanded over all 24 hourly entries of the selected day. Under simultaneous adoption, every held-out household uses the same bottom-suffix mask, with 5, 11, 16, 21, 27, 32, 37, 43, or 48 missing working days. Under staggered adoption, households are partitioned into balanced cohorts with different missing-suffix lengths whose weighted mean equals the selected target ratio; each cohort is sampled with the same-seed CFT-Diff checkpoint matched to that suffix mask. Under switchback, the held-out households are deterministically allocated, using the fixed mask seed, to three nearly balanced groups. A policy calendar, rather than a held-out outcome, marks a day as hidden when that policy occurs for at least one source household during at least one hour; each group receives a distinct union of these calendars, expanded over all 24 hourly entries. CFT-Diff is trained once for each resulting $74\times24$ calendar template, using the same split-specific complete training data, and reused for every household assigned to that template. Thus, a heterogeneous empirical panel is implemented as a collection of fixed-mask conditional generation problems while preserving the observed-entry pattern of every individual household.

\section{Simulation Details and Additional Results}\label{app:Additional Simulations}

This appendix provides the full simulation design, defines the evaluation measures, and reports additional results across factor strengths and numbers of latent factors.

\subsection{Detailed Simulation Design}\label{app:simulation_design}

\paragraph{Data-generating process.}
For each replication, we generate 372 independent $64\times64$ matrices according to
\begin{equation}
X_{\mathrm{raw}}^{(\ell)}=A_1F^{(\ell)}A_2^\top+E^{(\ell)},\qquad \ell=1,\ldots,372,
\label{eq:simulation_dgp}
\end{equation}
where $X_{\mathrm{raw}}^{(\ell)} =p^{\beta/2}X$, $F^{(\ell)}\in\mathbb R^{16\times16}$, and $E^{(\ell)}\in\mathbb R^{64\times64}$ are mutually independent across $\ell$, with $\vecc(F^{(\ell)})\sim\mathcal N(0,4I_{256})$ and $\vecc(E^{(\ell)})\sim\mathcal N(0,I_{4096})$. To construct the loading matrices, we obtain $Q_1,Q_2\in\mathbb R^{64\times16}$ by orthonormalizing two independent standard Gaussian matrices and set $A_d=64^{\beta/2}Q_d$ for $d=1,2$, where $\beta\in\{0.50,0.75\}$. Thus, increasing $\beta$ strengthens the common factor component while preserving the loading directions.

The first 360 matrices constitute the training sample, and the remaining 12 are used for evaluation. To place all methods on a common numerical scale, let $c_{\mathrm{sim}}=\max_{1\leq\ell\leq360}\|X_{\mathrm{raw}}^{(\ell)}\|_\infty$ and define $X^{(\ell)}=X_{\mathrm{raw}}^{(\ell)}/c_{\mathrm{sim}}$. We repeat the entire experiment five times using independently generated loading spaces, latent factors, and idiosyncratic errors. Within each replication, the same training and evaluation matrices are used across methods and missing rates.

\paragraph{Missing Regions.}
Each evaluation matrix is partitioned into a rectangular missing region and its observed control outcomes. The target missing rates are $\rho\in\{0.25,0.50,0.75\}$. Using one-based indexing, the 25\% design removes rows 17--48 and columns 33--64, yielding a $32\times32$ block. The 50\% design removes rows 12--52 and columns 15--64, yielding a $41\times50$ block and an actual missing rate of 0.5005. The 75\% design removes rows 6--58 and columns 7--64, yielding a $53\times58$ block and an actual missing rate of 0.7505. In each design, the block is vertically centered and reaches the final matrix column. The same block definition is used for every evaluation matrix within a design, and accuracy is evaluated only on its missing region. During \CFTDiff\ training, the corresponding mask is applied synthetically to each complete training matrix.

Because \eqref{eq:simulation_dgp} is Gaussian, the exact conditional distribution used in the pointwise diagnostics is available analytically. Let $x_{\mathcal T}$ and $x_{\mathcal C}$ denote the vectorized missing region and observed control outcomes, respectively, and partition the covariance matrix of $\vecc(X^{(\ell)})$ conformably. Then
\begin{equation}
x_{\mathcal T}\mid x_{\mathcal C}
\sim
\mathcal N\!\left(
\Sigma_{\mathcal T\mathcal C}\Sigma_{\mathcal C\mathcal C}^{-1}x_{\mathcal C},
\Sigma_{\mathcal T\mathcal T}-
\Sigma_{\mathcal T\mathcal C}\Sigma_{\mathcal C\mathcal C}^{-1}\Sigma_{\mathcal C\mathcal T}
\right).
\label{eq:simulation_exact_conditional_law}
\end{equation}

\paragraph{Methods and implementation.}
The comparison includes the five point estimators and three diffusion methods described in Section~\ref{subsec:simulation_setting}. For the diffusion methods, the generated mean is used as the point estimate. \ConvDiff\ and \TuckerDiff\ use one trained model within each factor-strength setting and replication across the three missing rates. Because \CFTDiff\ incorporates the fixed mask in its conditional score architecture, it is trained for each missing-rate design. Each diffusion model is trained for 300 epochs with batch size 60, gradient accumulation over two batches, and 200 diffusion steps. The aggregate evaluation uses 100 conditional draws for each of the 12 evaluation matrices in every replication. The architectures and discrete training and generation procedures are described in Appendix~\ref{appendix_sec:Framework Details}.

\paragraph{Evaluation measures.}
For a given replication and missing rate, let $\mathcal J$ index the missing entries pooled across the 12 evaluation matrices, let $N_{\mathcal J}=|\mathcal J|$, and write $y_i$ for the realized value at entry $i\in\mathcal J$. For a diffusion method, let $x_i^{(1)},\ldots,x_i^{(B)}$ be its $B=100$ conditional draws and set $\widehat y_i=B^{-1}\sum_{b=1}^B x_i^{(b)}$. The point-recovery measures are
\begin{equation*}
\operatorname{MAE}=\frac{1}{N_{\mathcal J}}\sum_{i\in\mathcal J}|\widehat y_i-y_i|,
\qquad
\operatorname{RMSE}=\left\{\frac{1}{N_{\mathcal J}}\sum_{i\in\mathcal J}(\widehat y_i-y_i)^2\right\}^{1/2}.
\end{equation*}

The entrywise continuous ranked probability score is estimated by
\begin{equation}
\operatorname{CRPS}_i
=\frac{1}{B}\sum_{b=1}^B|x_i^{(b)}-y_i|
-\frac{1}{2B^2}\sum_{b=1}^B\sum_{b'=1}^B|x_i^{(b)}-x_i^{(b')}|,
\label{eq:simulation_crps}
\end{equation}
and the reported CRPS averages \eqref{eq:simulation_crps} over $i\in\mathcal J$. The energy score evaluates the missing region jointly. For a realized missing-region vector $y$ and generated vectors $x^{(1)},\ldots,x^{(B)}$, it is estimated by
\begin{equation*}
\operatorname{Energy}
=\frac{1}{B}\sum_{b=1}^B\|x^{(b)}-y\|_2
-\frac{1}{2}\,\widehat{\mathbb E}\|x-x'\|_2,
\end{equation*}
where the second expectation uses 200 fixed pairs of generated draws. Both scores are negatively oriented.

Let $L_i$ and $U_i$ be the 5th and 95th empirical percentiles of the draws at entry $i$. The reported 90\% coverage and width are $N_{\mathcal J}^{-1}\sum_{i\in\mathcal J}\mathbb I\{L_i\leq y_i\leq U_i\}$ and $N_{\mathcal J}^{-1}\sum_{i\in\mathcal J}(U_i-L_i)$, respectively. With $\delta=0.10$, the corresponding interval score is
\begin{equation*}
\operatorname{IS}_{\delta,i}
=(U_i-L_i)
+\frac{2}{\delta}(L_i-y_i)\mathbb I\{y_i<L_i\}
+\frac{2}{\delta}(y_i-U_i)\mathbb I\{y_i>U_i\}.
\end{equation*}
The weighted interval score combines the predictive median with central intervals at the 50\%, 60\%, 70\%, 80\%, 90\%, and 95\% levels, using the standard weights $\delta_k/2$, where $\delta_k$ is one minus the corresponding coverage level. IS and WIS reward calibration and sharpness jointly; smaller values are preferred. Each measure is first pooled over the 12 evaluation matrices within a replication. The tables report the mean and standard deviation across the five replications.

\paragraph{Pointwise diagnostic.}
The pointwise figures use the 25\% missing region and 10,000 conditional draws from each diffusion method for the first evaluation matrix in one replication. Within the missing region, we compute the empirical percentile of each realized entry and consider entries within 0.03 of 0.25, 0.50, or 0.75. For displayed entry $i$ and method $m$, let $\mu_i$ and $\sigma_i$ denote the exact conditional mean and standard deviation from \eqref{eq:simulation_exact_conditional_law}, and let $\widehat\mu_{mi}$ and $\widehat\sigma_{mi}$ denote the corresponding moments of the generated draws. We summarize pointwise agreement by
\begin{equation}
D_{mi}
=
\left[
\left(\frac{\widehat\mu_{mi}-\mu_i}{\sigma_i}\right)^2
+
\left\{\log\left(\frac{\widehat\sigma_{mi}}{\sigma_i}\right)\right\}^2
\right]^{1/2}.
\label{eq:simulation_pointwise_discrepancy}
\end{equation}
The display is designed to make differences among the conditional generators visually informative. We therefore select, within each percentile neighborhood, an entry for which the \CFTDiff\ discrepancy is no larger than those of the two nested diffusion models; among the remaining candidates, selection balances the \CFTDiff\ discrepancy, its improvement over the benchmarks, and proximity to the target percentile. The aggregate tables, rather than these selected entries, provide the full-sample comparison.

\subsection{Additional Recovery Results}\label{app:simulation_additional_results}

Table~\ref{tab:recovery_beta075} repeats the aggregate comparison under $\beta=0.75$. The stronger common factor improves recovery for methods that exploit low-rank structure, while the ranking among the diffusion methods remains unchanged. \CFTDiff\ has the lowest MAE, RMSE, Energy, CRPS, IS, and WIS among the three conditional generators at every missing rate. Relative to \TuckerDiff, it lowers Energy and CRPS by 14\%--42\% and produces intervals that are 17\%--45\% narrower. These results show that the gains from masked conditional generation persist as the factor component becomes stronger.

\begin{table}[htb]
\TABLE
{Counterfactual recovery across missing rates ($\beta=0.75$).
\label{tab:recovery_beta075}}
{
\scriptsize
\setlength{\tabcolsep}{1.25pt}
\resizebox{\textwidth}{!}{%
\begin{tabular}{@{}lcccccccc@{}}
\toprule
& \multicolumn{2}{c}{Point Recovery} & \multicolumn{2}{c}{Distributional Recovery} & \multicolumn{4}{c}{Interval Recovery} \\[-1pt]
\cmidrule(lr){2-3}\cmidrule(lr){4-5}\cmidrule(lr){6-9}
Method & MAE & RMSE & Energy & CRPS & Coverage & Width & IS & WIS \\
\midrule
\multicolumn{9}{c}{\textit{Panel A: Missing Rate $=25\%$}} \\
\midrule
\NMC & 0.0208 (0.0012) & 0.0261 (0.0016) & -- & -- & -- & -- & -- & -- \\
\DID & 0.1682 (0.0109) & 0.2165 (0.0142) & -- & -- & -- & -- & -- & -- \\
\SC & 0.0205 (0.0011) & 0.0259 (0.0014) & -- & -- & -- & -- & -- & -- \\
\SDID & 0.1164 (0.0070) & 0.1495 (0.0087) & -- & -- & -- & -- & -- & -- \\
\TFI & 0.0513 (0.0030) & 0.0656 (0.0040) & -- & -- & -- & -- & -- & -- \\
\ConvDiff & 0.1182 (0.0068) & 0.1518 (0.0085) & 3.4132 (0.1944) & 0.0844 (0.0048) & 0.8738 (0.0120) & 0.4610 (0.0167) & 0.6418 (0.0367) & 0.0635 (0.0037) \\
\TuckerDiff & 0.0360 (0.0016) & 0.0473 (0.0017) & 1.1161 (0.0347) & 0.0254 (0.0010) & 0.9728 (0.0104) & 0.2153 (0.0105) & 0.2257 (0.0073) & 0.0199 (0.0007) \\
\CFTDiff & 0.0211 (0.0011) & 0.0275 (0.0014) & 0.6560 (0.0316) & 0.0148 (0.0008) & 0.9567 (0.0044) & 0.1195 (0.0060) & 0.1303 (0.0065) & 0.0115 (0.0006) \\[2pt]
\midrule
\multicolumn{9}{c}{\textit{Panel B: Missing Rate $=50\%$}} \\
\midrule
\NMC & 0.0235 (0.0016) & 0.0297 (0.0020) & -- & -- & -- & -- & -- & -- \\
\DID & 0.1704 (0.0124) & 0.2189 (0.0160) & -- & -- & -- & -- & -- & -- \\
\SC & 0.0245 (0.0021) & 0.0313 (0.0029) & -- & -- & -- & -- & -- & -- \\
\SDID & 0.1180 (0.0079) & 0.1516 (0.0099) & -- & -- & -- & -- & -- & -- \\
\TFI & 0.0705 (0.0050) & 0.0906 (0.0061) & -- & -- & -- & -- & -- & -- \\
\ConvDiff & 0.1197 (0.0079) & 0.1538 (0.0099) & 4.8924 (0.3178) & 0.0855 (0.0057) & 0.8739 (0.0117) & 0.4670 (0.0203) & 0.6510 (0.0435) & 0.0644 (0.0043) \\
\TuckerDiff & 0.0600 (0.0034) & 0.0784 (0.0043) & 2.5256 (0.1340) & 0.0428 (0.0024) & 0.9526 (0.0115) & 0.3082 (0.0180) & 0.3378 (0.0173) & 0.0325 (0.0018) \\
\CFTDiff & 0.0375 (0.0025) & 0.0498 (0.0033) & 1.5945 (0.1057) & 0.0265 (0.0018) & 0.9321 (0.0193) & 0.1792 (0.0154) & 0.2104 (0.0136) & 0.0201 (0.0014) \\[2pt]
\midrule
\multicolumn{9}{c}{\textit{Panel C: Missing Rate $=75\%$}} \\
\midrule
\NMC & 0.0797 (0.0051) & 0.1051 (0.0066) & -- & -- & -- & -- & -- & -- \\
\DID & 0.1685 (0.0108) & 0.2169 (0.0141) & -- & -- & -- & -- & -- & -- \\
\SC & 0.0881 (0.0042) & 0.1173 (0.0058) & -- & -- & -- & -- & -- & -- \\
\SDID & 0.1176 (0.0073) & 0.1514 (0.0091) & -- & -- & -- & -- & -- & -- \\
\TFI & 0.0978 (0.0056) & 0.1260 (0.0070) & -- & -- & -- & -- & -- & -- \\
\ConvDiff & 0.1188 (0.0074) & 0.1529 (0.0091) & 5.9528 (0.3577) & 0.0849 (0.0052) & 0.8779 (0.0135) & 0.4692 (0.0189) & 0.6468 (0.0393) & 0.0639 (0.0039) \\
\TuckerDiff & 0.0884 (0.0056) & 0.1147 (0.0071) & 4.4728 (0.2710) & 0.0630 (0.0039) & 0.9268 (0.0117) & 0.4031 (0.0187) & 0.4717 (0.0270) & 0.0472 (0.0029) \\
\CFTDiff & 0.0761 (0.0056) & 0.0994 (0.0072) & 3.8687 (0.2775) & 0.0540 (0.0040) & 0.9159 (0.0145) & 0.3339 (0.0188) & 0.4032 (0.0297) & 0.0405 (0.0030) \\
\bottomrule
\end{tabular}%
}
}
{The table reports recovery at three missing rates. Entries are means across five independent simulation replications, with standard deviations in parentheses. MAE and RMSE evaluate point recovery; Energy and CRPS evaluate the generated conditional distribution. Coverage and Width denote the empirical coverage and average width of nominal 90\% intervals, and IS and WIS denote the interval score and weighted interval score. Smaller MAE, RMSE, Energy, CRPS, IS, and WIS indicate better performance. Width is interpreted jointly with Coverage, whose nominal level is 0.90. Dashes indicate measures that are not applicable to point estimators. Each diffusion method uses 100 conditional draws per evaluation matrix.}
\end{table}

\subsection{Additional Pointwise Distribution Diagnostics}\label{app:simulation_pointwise}

Figure~\ref{fig:pointwise_conditional_distribution_beta075} gives the pointwise diagnostic for $\beta=0.75$ in the baseline design. Figures~\ref{fig:pointwise_conditional_distribution_beta050_4f}--\ref{fig:pointwise_conditional_distribution_beta075_8f} extend the comparison to designs with four and eight factors. Across these additional designs, \CFTDiff\ continues to track both the shape and mean of the exact conditional distribution at all three selected entries. \ConvDiff\ remains substantially overdispersed and exhibits clear mean deviations in several tail cases, while \TuckerDiff\ reduces these discrepancies but still produces excess dispersion, particularly with eight factors. The relative performance of the three methods therefore remains stable as the number of factors increases.

\begin{figure}[htb]
\FIGURE
{\includegraphics[width=\textwidth]{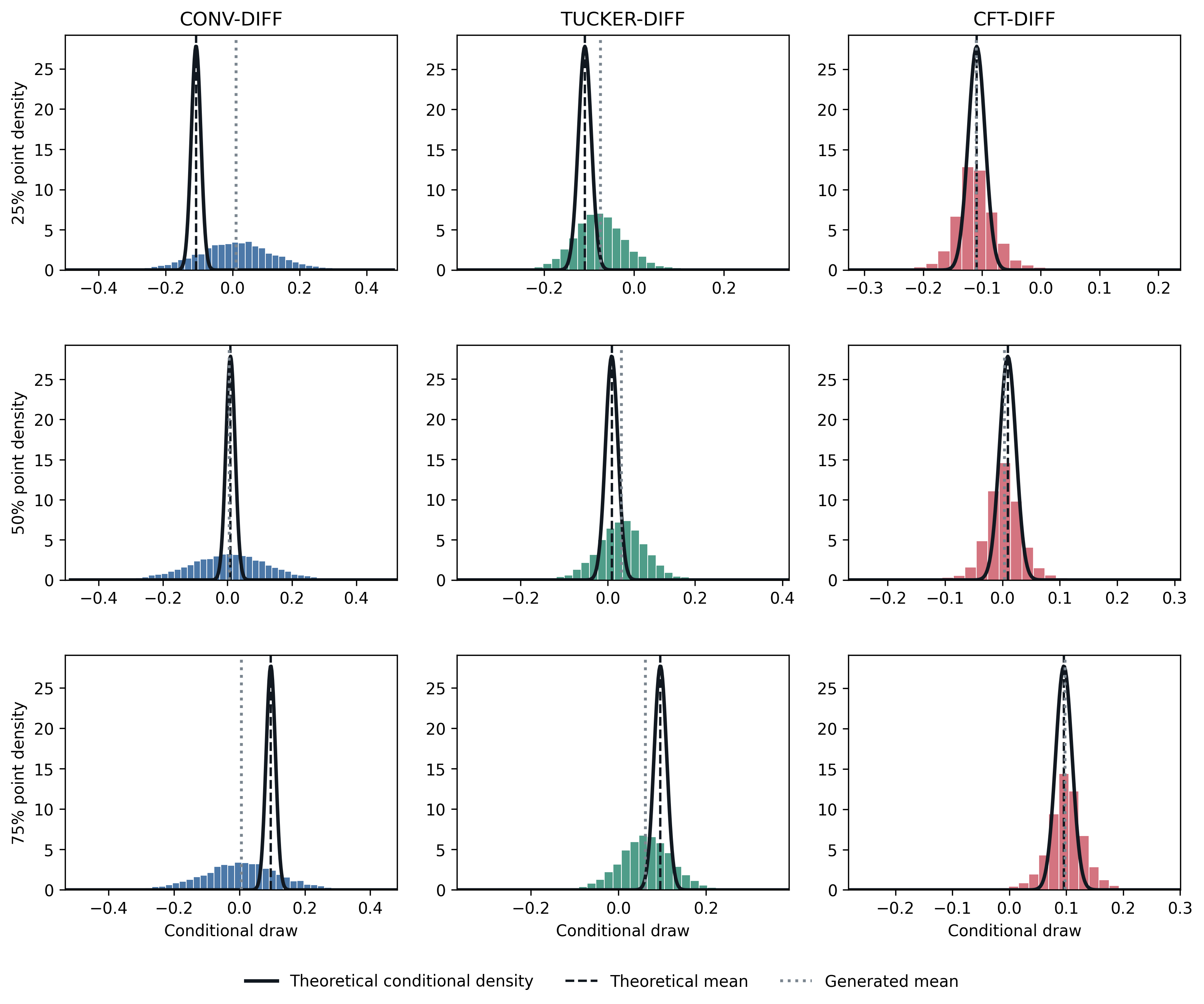}}
{Pointwise conditional distribution recovery ($\beta=0.75$).
\label{fig:pointwise_conditional_distribution_beta075}}
{Columns correspond to \ConvDiff, \TuckerDiff, and \CFTDiff.
Rows display entries whose realized missing values are near the 25th,
50th, and 75th percentiles within the missing region.
Each histogram is based on 10,000 conditional draws, and the missing
rate is 25\%. The solid curve shows the exact conditional density;
the dashed and dotted vertical lines indicate the exact conditional mean and the mean of the generated draws, respectively.}
\end{figure}

\begin{figure}[htb]
\FIGURE
{\includegraphics[width=\textwidth]{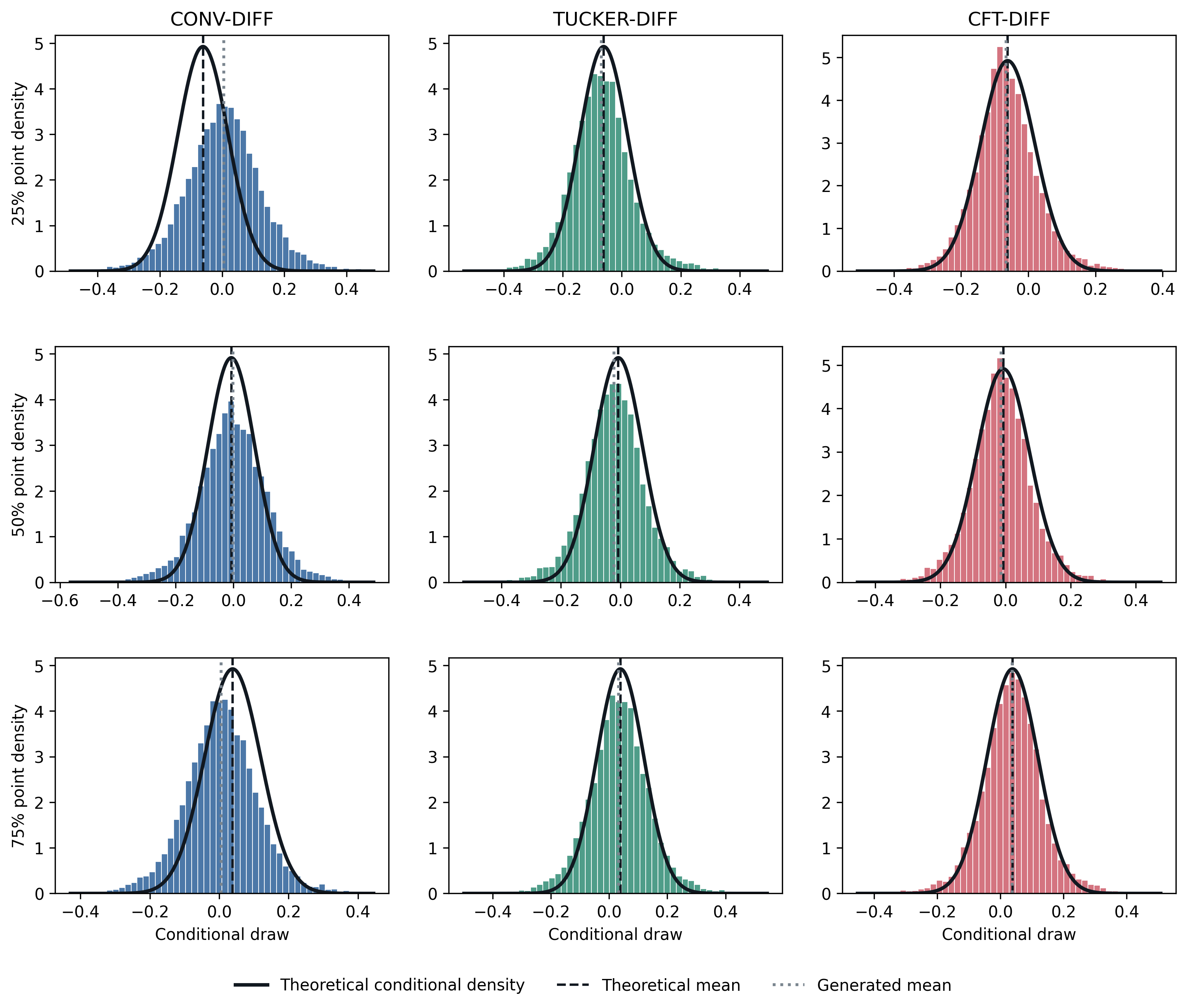}}
{Pointwise conditional distribution recovery with four factors ($\beta=0.50$).
\label{fig:pointwise_conditional_distribution_beta050_4f}}
{Columns correspond to \ConvDiff, \TuckerDiff, and \CFTDiff. The design contains four latent factors. Rows display entries whose realized missing values are near the 25th, 50th, and 75th percentiles within the missing region. Each histogram is based on 10,000 conditional draws, and the missing rate is 25\%. The solid curve shows the exact conditional density; the dashed and dotted vertical lines indicate the exact conditional mean and the mean of the generated draws, respectively.}
\end{figure}

\begin{figure}[htb]
\FIGURE
{\includegraphics[width=\textwidth]{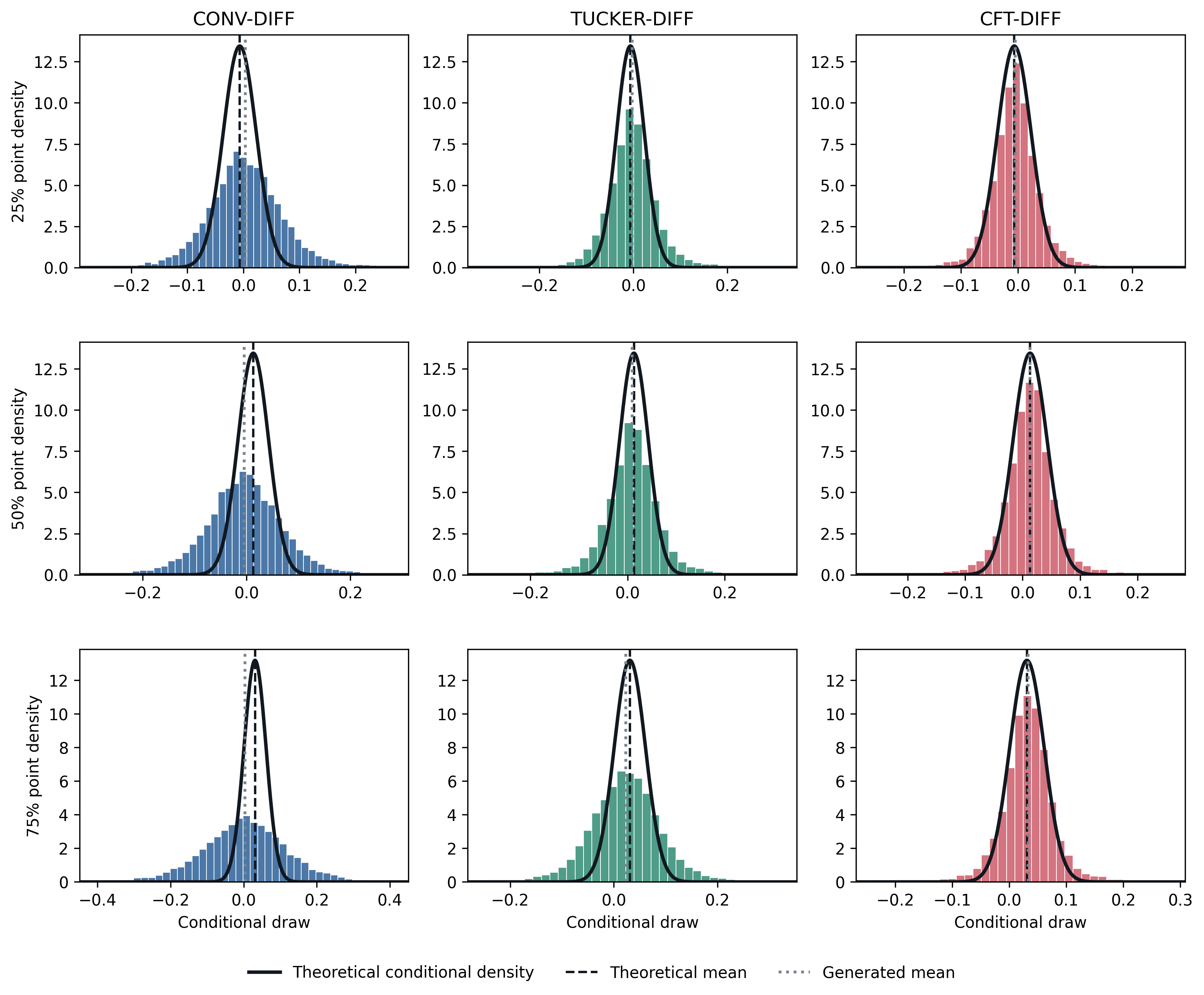}}
{Pointwise conditional distribution recovery with four factors ($\beta=0.75$).
\label{fig:pointwise_conditional_distribution_beta075_4f}}
{Columns correspond to \ConvDiff, \TuckerDiff, and \CFTDiff. The design contains four latent factors. Rows display entries whose realized missing values are near the 25th, 50th, and 75th percentiles within the missing region. Each histogram is based on 10,000 conditional draws, and the missing rate is 25\%. The solid curve shows the exact conditional density; the dashed and dotted vertical lines indicate the exact conditional mean and the mean of the generated draws, respectively.}
\end{figure}

\begin{figure}[htb]
\FIGURE
{\includegraphics[width=\textwidth]{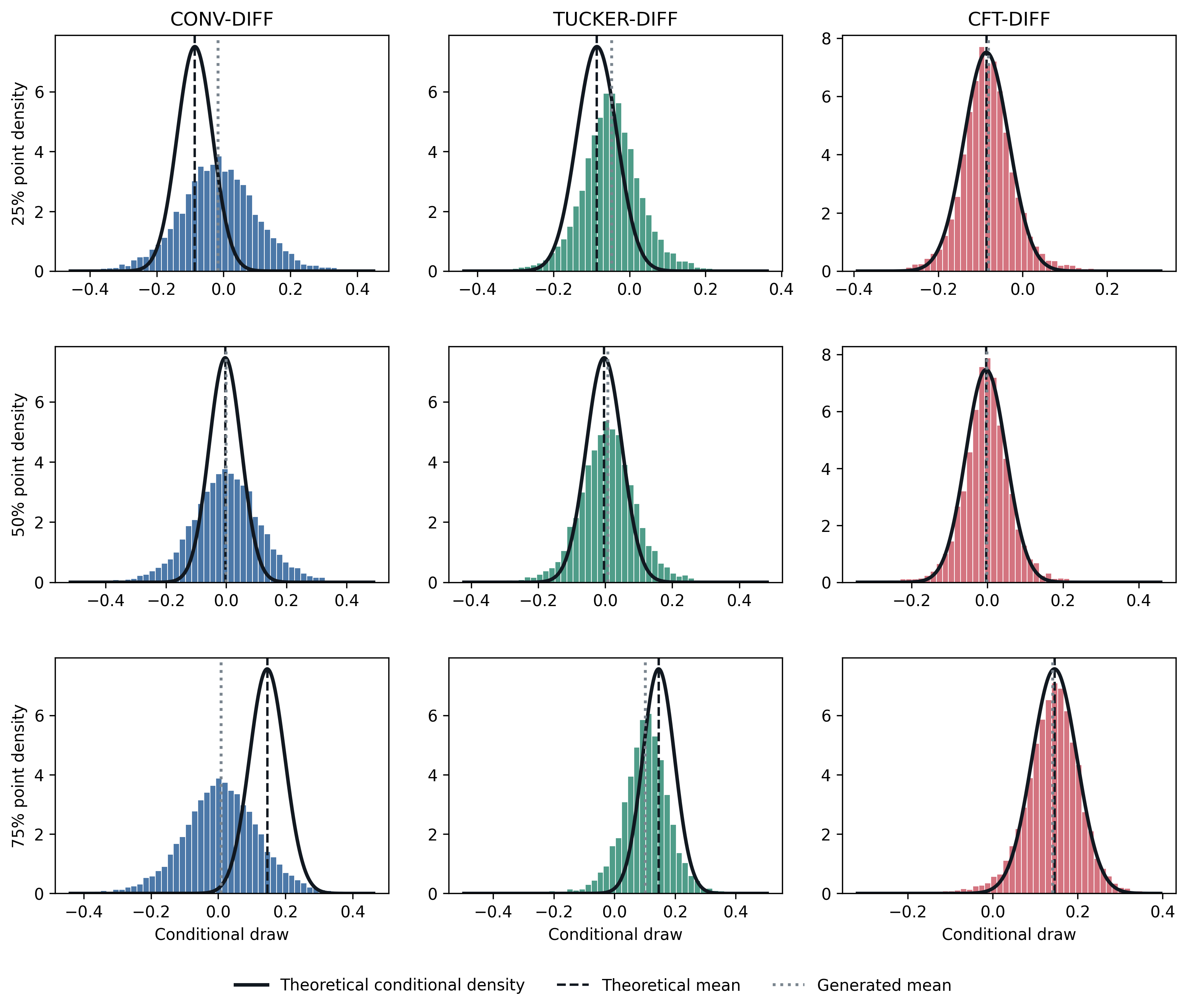}}
{Pointwise conditional distribution recovery with eight factors ($\beta=0.50$).
\label{fig:pointwise_conditional_distribution_beta050_8f}}
{Columns correspond to \ConvDiff, \TuckerDiff, and \CFTDiff. The design contains eight latent factors. Rows display entries whose realized missing values are near the 25th, 50th, and 75th percentiles within the missing region. Each histogram is based on 10,000 conditional draws, and the missing rate is 25\%. The solid curve shows the exact conditional density; the dashed and dotted vertical lines indicate the exact conditional mean and the mean of the generated draws, respectively.}
\end{figure}

\begin{figure}[htb]
\FIGURE
{\includegraphics[width=\textwidth]{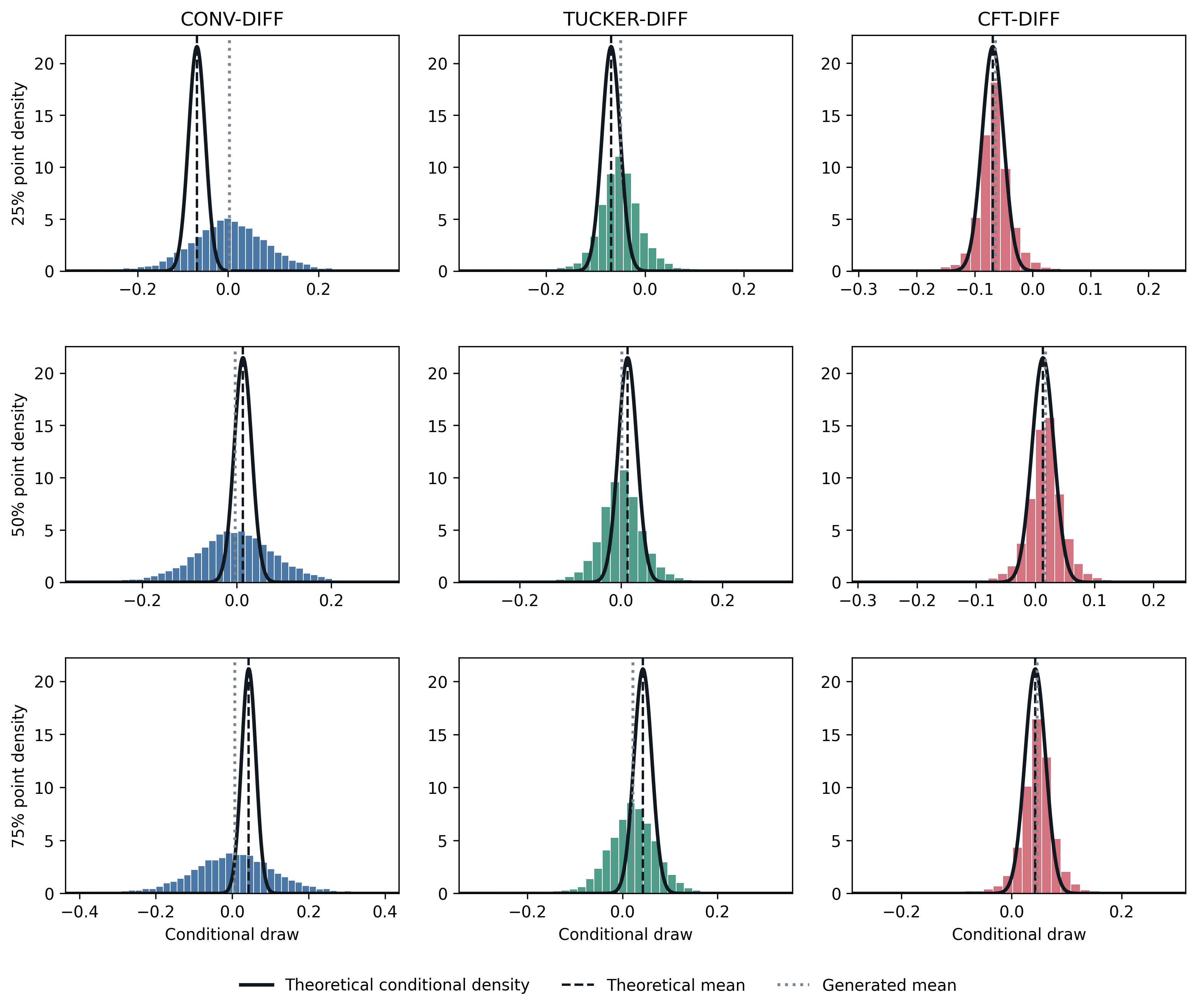}}
{Pointwise conditional distribution recovery with eight factors ($\beta=0.75$).
\label{fig:pointwise_conditional_distribution_beta075_8f}}
{Columns correspond to \ConvDiff, \TuckerDiff, and \CFTDiff. The design contains eight latent factors. Rows display entries whose realized missing values are near the 25th, 50th, and 75th percentiles within the missing region. Each histogram is based on 10,000 conditional draws, and the missing rate is 25\%. The solid curve shows the exact conditional density; the dashed and dotted vertical lines indicate the exact conditional mean and the mean of the generated draws, respectively.}
\end{figure}

Table~\ref{tab:pointwise_conditional_fit} reports the corresponding location, scale, and discrepancy values for both factor-strength settings. \CFTDiff\ has the smallest $D$ at every displayed entry. At $\beta=0.50$, its discrepancies are between 0.298 and 0.361, compared with values exceeding 1.17 for both diffusion benchmarks. At $\beta=0.75$, \CFTDiff\ remains substantially closer to the exact law even though the exact conditional distributions become more concentrated.

\begin{table}[htb]
\TABLE
{Pointwise conditional distribution fit at the displayed entries.
\label{tab:pointwise_conditional_fit}}
{
\scriptsize
\setlength{\tabcolsep}{1.5pt}
\begin{tabular*}{\textwidth}{@{\extracolsep{\fill}}ccccccccccccc@{}}
\toprule
& & \multicolumn{2}{c}{Exact Law} & \multicolumn{3}{c}{Conv-Diff} & \multicolumn{3}{c}{Tucker-Diff} & \multicolumn{3}{c}{CFT-Diff} \\[-1pt]
\cmidrule(lr){3-4}\cmidrule(lr){5-7}\cmidrule(lr){8-10}\cmidrule(lr){11-13}
Empirical Quantile & Realized Value & Mean & Std. Dev. & Mean & Std. Dev. & $D$ & Mean & Std. Dev. & $D$ & Mean & Std. Dev. & $D$ \\
\midrule
\multicolumn{13}{c}{\textit{Panel A: $\beta=0.50$}} \\
\midrule
$25\%$ & $-0.096$ & $-0.108$ & $0.043$ & $-0.007$ & $0.176$ & $2.709$ & $-0.044$ & $0.115$ & $1.754$ & $-0.108$ & $0.062$ & $0.361$ \\
$50\%$ & $0.006$ & $0.027$ & $0.041$ & $-0.009$ & $0.141$ & $1.525$ & $-0.009$ & $0.087$ & $1.173$ & $0.028$ & $0.055$ & $0.298$ \\
$75\%$ & $0.089$ & $0.028$ & $0.044$ & $-0.005$ & $0.186$ & $1.634$ & $-0.024$ & $0.124$ & $1.572$ & $0.026$ & $0.059$ & $0.299$ \\[2pt]
\midrule
\multicolumn{13}{c}{\textit{Panel B: $\beta=0.75$}} \\
\midrule
$25\%$ & $-0.089$ & $-0.109$ & $0.014$ & $0.010$ & $0.126$ & $8.582$ & $-0.075$ & $0.067$ & $2.867$ & $-0.110$ & $0.042$ & $1.076$ \\
$50\%$ & $0.006$ & $0.009$ & $0.014$ & $0.006$ & $0.134$ & $2.249$ & $0.032$ & $0.065$ & $2.169$ & $0.004$ & $0.040$ & $1.091$ \\
$75\%$ & $0.102$ & $0.096$ & $0.014$ & $0.006$ & $0.129$ & $6.657$ & $0.061$ & $0.070$ & $2.869$ & $0.098$ & $0.042$ & $1.085$ \\
\bottomrule
\end{tabular*}
}
{The table reports the entries displayed in Figures~\ref{fig:pointwise_conditional_distribution_beta050} and \ref{fig:pointwise_conditional_distribution_beta075}. Empirical Quantile gives the approximate rank of the realized value within the missing region. Under Exact Law, Mean and Std. Dev. are the conditional mean and standard deviation implied by \eqref{eq:simulation_exact_conditional_law}; for each diffusion method, they are calculated from 10,000 conditional draws. The discrepancy $D$, defined in \eqref{eq:simulation_pointwise_discrepancy}, accounts for deviations in both conditional location and scale. Smaller values indicate closer agreement, with zero corresponding to an exact match. The missing rate is 25\%.}
\end{table}

\section{Additional Empirical Analysis}\label{sec:Additional Empirical Analysis}
\subsection{Overview of the iFlex Experiment}\label{subsec:Overview of the iFlex Experiment}

The experimental design and data construction are documented by \citetEC{hofmann2023iflexdata}, while \citetEC{hofmann2024demandflexibility} examine household demand responses in the full-scale experiment. Table~\ref{tab:iflex_group_signal_sets} summarizes the group-specific intervention design. Treatment groups were organized by region and assigned prespecified sets of four candidate price signals, typically varying the peak-price level within a common profile or comparing alternative profile shapes. On each intervention day, one signal was selected from the corresponding group-specific set.

\begin{table}
\footnotesize
\TABLE
{Group-specific candidate price signals in the iFlex experiment.
\label{tab:iflex_group_signal_sets}}
{
\setlength{\tabcolsep}{8pt}
\begin{tabular*}{\textwidth}
{@{\extracolsep{\fill}}lll}
\toprule
Region
& Treatment Group
& Candidate Price Signals \\
\midrule
Bergen
& $\mathrm{Ber\_1}$
& $\{B_2,\,B_5,\,B_{10},\,B_{15}\}$ \\

Bod{\o}
& $\mathrm{Bo\_1}$
& $\{B_2,\,B_5,\,B_{10},\,B_{15}\}$ \\

\addlinespace[2pt]
Oslo
& $\mathrm{Os\_1}$
& $\{B_2,\,B_5,\,B_{10},\,B_{15}\}$ \\

& $\mathrm{Os\_2}$
& $\{P_2,\,P_5,\,P_{10},\,P_{15}\}$ \\

& $\mathrm{Os\_3}$
& $\{A_{10},\,B_{10},\,P_{10},\,P0_{10}\}$ \\

& $\mathrm{Os\_4}$
& $\{A_5,\,A_{10},\,B_{10},\,C\}$ \\

& $\mathrm{Os\_5}$
& $\{P0_2,\,P0_{10},\,P0_{30},\,P_{10}\}$ \\

& $\mathrm{Os\_6}$
& $\{A_5,\,B_2,\,C,\,P_{15}\}$ \\

\addlinespace[2pt]
Troms{\o}
& $\mathrm{Trom\_1}$
& $\{B_2,\,B_5,\,B_{10},\,B_{15}\}$ \\

& $\mathrm{Trom\_2}$
& $\{P_2,\,P_5,\,P_{10},\,P_{15}\}$ \\

\addlinespace[2pt]
Trondheim
& $\mathrm{Trond\_1}$
& $\{B_2,\,B_5,\,B_{10},\,B_{15}\}$ \\
\bottomrule
\end{tabular*}
}
{Each treatment group was associated with four prespecified candidate price signals. On each intervention day, one of these signals was selected for the group, and all households within that group were assigned the same 24-hour price path. The five regional control groups received no experimental price signal and are therefore omitted from the table. A signal label combines the price-profile family with the peak price level in NOK/kWh. Profile $C$ denotes the single asymmetric price signal with different morning and afternoon peak-price levels.}
\end{table}

Because households index samples in the empirical implementation, the Tucker structure used by \CFTDiff pertains to the day and hour modes of the $74\times24$ household matrices. We assess this structure using three collections of fully observed no-policy outcomes: globally nonintervention days, the pretreatment period, and all working days for households in the control groups. For each collection, we form the corresponding household-by-day-by-hour array for diagnostic purposes and examine its mode-wise spectral concentration. The day- and hour-mode spectra directly assess the low-rank structure used by the trained model; the household-direction spectrum is reported only as a descriptive measure of common cross-household variation.

\begin{figure}[htb]
	\FIGURE
    {\includegraphics[scale=0.78]{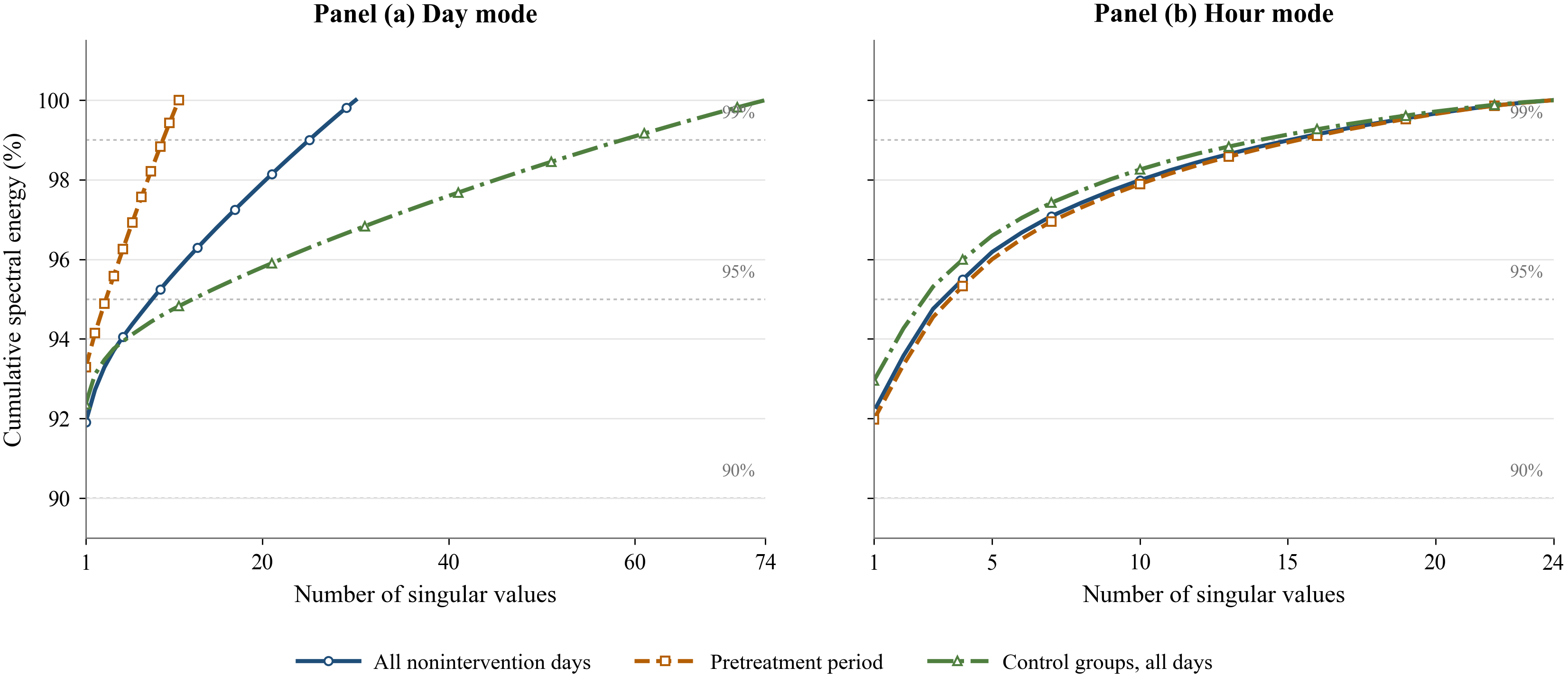}}	
	{Mode-wise spectral profiles of fully observed no-policy electricity-consumption tensors. \label{fig:iflex_tensor_spectral_profiles}} 
	{Panels (a) and (b) report cumulative spectral energy along the day and hour modes, respectively. The three curves are constructed from globally nonintervention days, the pretreatment period, and all working days for households in the control groups. Horizontal lines mark the 90\%, 95\%, and 99\% energy thresholds.}
\end{figure}

Figure~\ref{fig:iflex_tensor_spectral_profiles} shows pronounced spectral concentration in both modes used by the empirical model. Across the three sample constructions, the day-mode spectra reach the 95\% threshold with only a small fraction of the available components, while the hour-mode spectra exceed 95\% within the first few components and approach 99\% well before the full 24-dimensional mode is used. The profiles are also similar across the three samples, indicating that the observed concentration is not confined to either the short pretreatment period or the control-group households. These results support the approximately low-rank day-by-hour representation underlying the empirical \CFTDiff specification.

\subsection{Missingness Patterns}\label{sec:appendix_missingness_patterns}
To evaluate counterfactual recovery in a controlled setting, we artificially mask fully observed no-policy outcomes and compare the resulting imputations with their held-out values. We consider three treatment-induced missingness patterns commonly encountered in causal panels: switchback, staggered adoption, and simultaneous adoption. As illustrated in Figure~\ref{fig:iflex_missing_patterns}, control households remain fully observed, whereas complete household-day trajectories are masked after the first intervention day according to the corresponding design. These patterns allow us to assess imputation performance under both intermittent and monotone block missingness.
\begin{figure}[htb]
	\FIGURE
	{\includegraphics[scale=0.56]{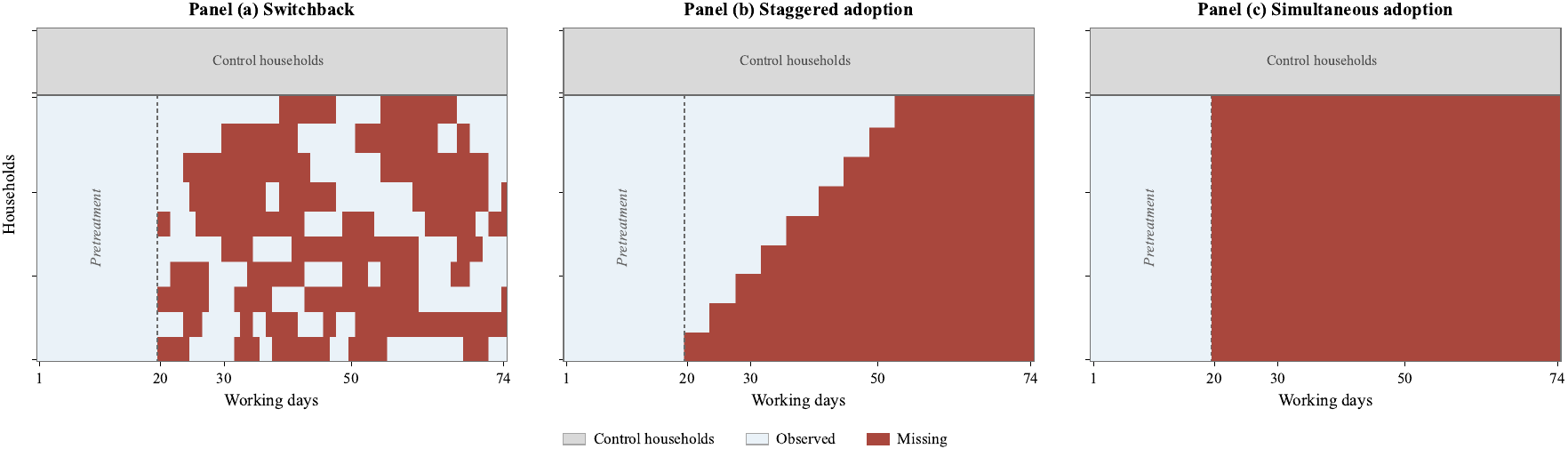}}	
	{Illustrative intervention patterns. \label{fig:iflex_missing_patterns}} 
	{Panels (a)--(c) illustrate the switchback, staggered-adoption, and simultaneous-adoption patterns, respectively. Rows represent households and columns represent working days. Gray rows denote fully observed control households, blue cells denote observed no-policy outcomes, and red cells denote missing no-policy outcomes. The vertical dashed line marks the first intervention day; observations to its left constitute the pretreatment period.}
\end{figure}

\subsection{Additional Results on Counterfactual Recovery Accuracy}\label{sec:appendix_recovery_metrics}
\subsubsection{Evaluation Measures}

This section defines the measures used to evaluate point and counterfactual distribution recovery. Let $\mathcal{M}$ denote the set of artificially masked outcomes and let $n_{\mathcal M}=|\mathcal{M}|$. For each $j\in\mathcal{M}$, let $Y_j$ denote the observed no-policy outcome and $\widehat Y_j$ its recovered value. For a diffusion method with $B$ generated draws, the point prediction is $\widehat Y_j=\frac{1}{B}\sum_{b=1}^{B}\widetilde Y_j^{(b)}$.

\paragraph{Point recovery.}
We evaluate point recovery using the mean absolute error (MAE) and root mean squared error (RMSE):
\[
    \operatorname{MAE}=\frac{1}{n_{\mathcal M}}\sum_{j\in\mathcal M}\left|\widehat Y_j-Y_j\right|,\qquad  \operatorname{RMSE}=\left[\frac{1}{n_{\mathcal M}}\sum_{j\in\mathcal M}\left(\widehat Y_j-Y_j\right)^2\right]^{1/2}.
\]
Smaller values indicate more accurate point recovery.

\paragraph{Energy score.}
The energy score evaluates the joint distribution of the missing 24-hour trajectory. Let $\mathbf Y_j\in\mathbb R^{24}$ denote the observed trajectory for an artificially masked household--day, and let $\widetilde{\mathbf Y}_j^{(1)},\ldots,\widetilde{\mathbf Y}_j^{(B)}$ denote the corresponding generated trajectories. For a predictive distribution $Q$, the population energy score is
\[
    \operatorname{ES}(Q,\mathbf Y_j)=\mathbb E_Q\left[\|\widetilde{\mathbf Y}_j-\mathbf Y_j\|_2\right]-\frac{1}{2}\mathbb E_Q
    \left[\|\widetilde{\mathbf Y}_j-\widetilde{\mathbf Y}_j'\|_2\right],
\]
where $\widetilde{\mathbf Y}_j$ and $\widetilde{\mathbf Y}_j'$ are independent draws from $Q$. Using the generated draws, the first expectation is evaluated by
\[
    \frac{1}{B}\sum_{b=1}^{B}\left\|\widetilde{\mathbf Y}_j^{(b)}-\mathbf Y_j\right\|_2.
\]
The second expectation is evaluated using randomly sampled pairs of generated draws. We average the resulting scores across all artificially masked household--days and across the five test samples. A smaller energy score indicates better recovery of the joint counterfactual distribution.

\paragraph{Continuous ranked probability score.}
CRPS evaluates the marginal distribution of an individual missing hourly outcome. Let $\widetilde Y_j^{(1)},\ldots,\widetilde Y_j^{(B)}$ be the generated draws for outcome $Y_j$. The empirical CRPS is
\[
    \operatorname{CRPS}_j=\frac{1}{B}\sum_{b=1}^{B}\left|\widetilde Y_j^{(b)}-Y_j\right|-\frac{1}{2B^2}\sum_{b=1}^{B}\sum_{b'=1}^{B}
    \left|\widetilde Y_j^{(b)}-\widetilde Y_j^{(b')}\right|.
\]
The reported CRPS is averaged across the missing hourly outcomes. Smaller values indicate better marginal distributional recovery.

\paragraph{Coverage and interval width.}
For the interval-based measures, we use the household--day total of the missing trajectory. Let
\[
    U_j=\sum_{h\in\mathcal M_j}Y_{jh},\qquad\widetilde U_j^{(b)}=\sum_{h\in\mathcal M_j}\widetilde Y_{jh}^{(b)},
\]
where $\mathcal M_j$ denotes the missing hours for household--day $j$. Under the block-missingness designs used in the main analysis, $\mathcal M_j$ contains all 24 hours.

For a central interval with nominal coverage $1-\alpha$, let
\[
    L_{j,\alpha}=\widehat Q_{\alpha/2}\left(\widetilde U_j^{(1)},\ldots,\widetilde U_j^{(B)}\right),\qquad H_{j,\alpha}=\widehat Q_{1-\alpha/2}\left(\widetilde U_j^{(1)},\ldots,\widetilde U_j^{(B)}\right).
\]
The empirical coverage is
\[
    \operatorname{Coverage}_{1-\alpha}=\frac{1}{n_B}\sum_{j=1}^{n_B}\mathbb I\left\{L_{j,\alpha}\leq U_j\leq H_{j,\alpha}\right\},
\]
where $n_B$ is the number of evaluated household--day blocks. A well-calibrated interval has empirical coverage close to its nominal level $1-\alpha$.

The average interval width is
\[
    \operatorname{Width}_{1-\alpha}=\frac{1}{n_B}\sum_{j=1}^{n_B}\left(H_{j,\alpha}-L_{j,\alpha}\right).
\]
For comparable coverage, a smaller width indicates a sharper counterfactual distribution.

\paragraph{Interval score.}
The interval score combines interval width and coverage in a single measure. For a central $(1-\alpha)$ interval,
\[
\begin{aligned}
    \operatorname{IS}_{\alpha,j}={}&H_{j,\alpha}-L_{j,\alpha}+\frac{2}{\alpha}\left(L_{j,\alpha}-U_j\right)\mathbb I\left\{U_j<L_{j,\alpha}\right\}+\frac{2}{\alpha}\left(U_j-H_{j,\alpha}\right)\mathbb I\left\{U_j>H_{j,\alpha}\right\}.
\end{aligned}
\]
The first term penalizes wide intervals, while the remaining terms penalize observations that fall below or above the interval. The table reports the average interval score for the 90\% central interval. A smaller value indicates a better combination of coverage and interval width.

\paragraph{Weighted interval score.}
The weighted interval score aggregates information from several central intervals and the predictive median. Let $m_j$ denote the predictive median of $\widetilde U_j^{(1)},\ldots,\widetilde U_j^{(B)}$, and let the $K_C$ central coverage levels be $1-\alpha_1,\ldots,1-\alpha_{K_C}$. We compute
\[
    \operatorname{WIS}_j=\frac{\frac{1}{2}|U_j-m_j|+\displaystyle\sum_{k=1}^{K_C}\frac{\alpha_k}{2}\operatorname{IS}_{\alpha_k,j}}{K_C+\frac{1}{2}}.
\]
In the main analysis, the central coverage levels are $50\%$, $60\%$, $70\%$, $80\%$, $90\%$, and $95\%$. The reported WIS is averaged across evaluated household--days. A smaller WIS indicates better overall distributional recovery across the different parts of the counterfactual distribution.

\subsubsection{Results on Counterfactual Recovery Accuracy}\label{sec:appendix_additional_recovery_results}

We provide several additional checks on the counterfactual recovery results in Section~\ref{sec:counterfactual_recovery}. These analyses examine whether the main findings are robust to an alternative point-recovery criterion, continue to hold for aggregate counterfactual outcomes, and extend beyond the energy score used in the main figure.

\begin{figure}[htb]
	\FIGURE
	{\includegraphics[scale=0.52]{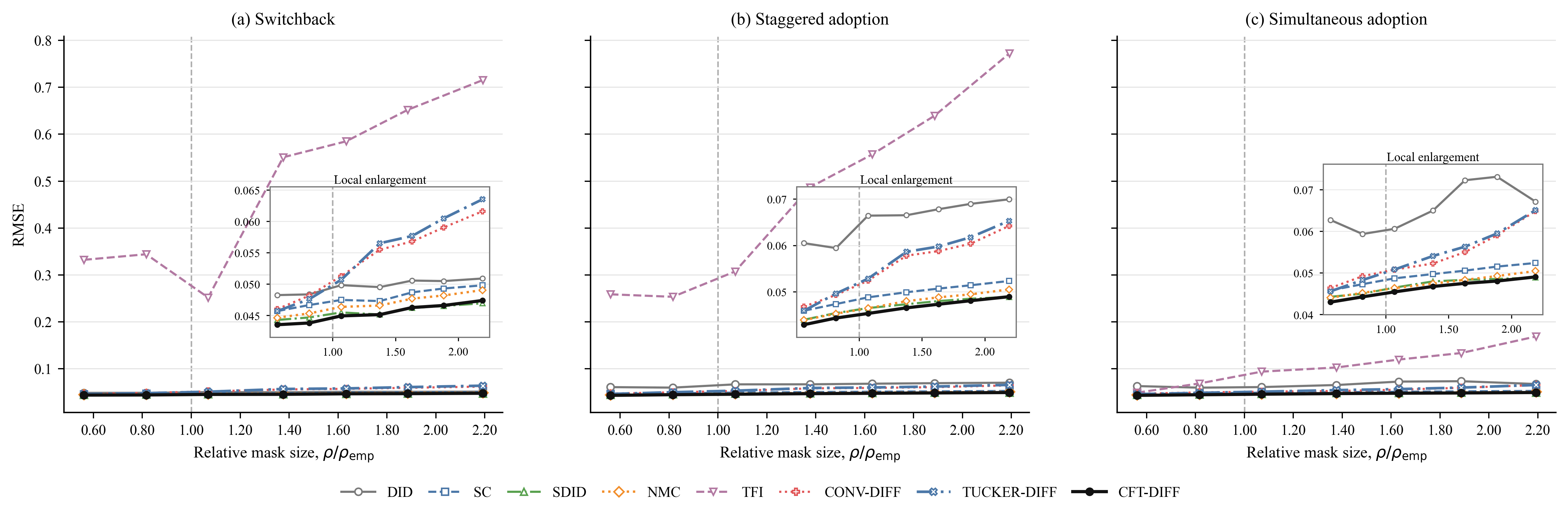}}	
	{Additional point counterfactual recovery measured by RMSE.
	\label{fig:appendix_recovery_rmse}}  
	{The figure reports RMSE across different missingness levels under the switchback, staggered-adoption, and simultaneous-adoption patterns. The horizontal axis reports the realized missing proportion relative to the empirical treated share, $\rho/\rho_{\mathrm{emp}}$, and the vertical dashed line indicates $\rho=\rho_{\mathrm{emp}}$. Point predictions for the three diffusion methods are obtained by averaging the 100 generated counterfactual draws. Lower values indicate better recovery. The inset in each panel is a local enlargement of the corresponding region in the original panel.}
\end{figure}

Figure~\ref{fig:appendix_recovery_rmse} reports the results over the same missingness patterns and levels as in the main analysis. The ranking is similar to that obtained with MAE. \CFTDiff remains highly stable as the missing region expands and yields the lowest RMSE throughout the three designs. The two alternative diffusion models deteriorate more rapidly with missingness, while \TFI is particularly sensitive to the size and structure of the missing region. Thus, the point-recovery advantage documented in the main text is not specific to the absolute-error criterion.

\begin{figure}[htb]
	\FIGURE
	{\includegraphics[scale=0.52]{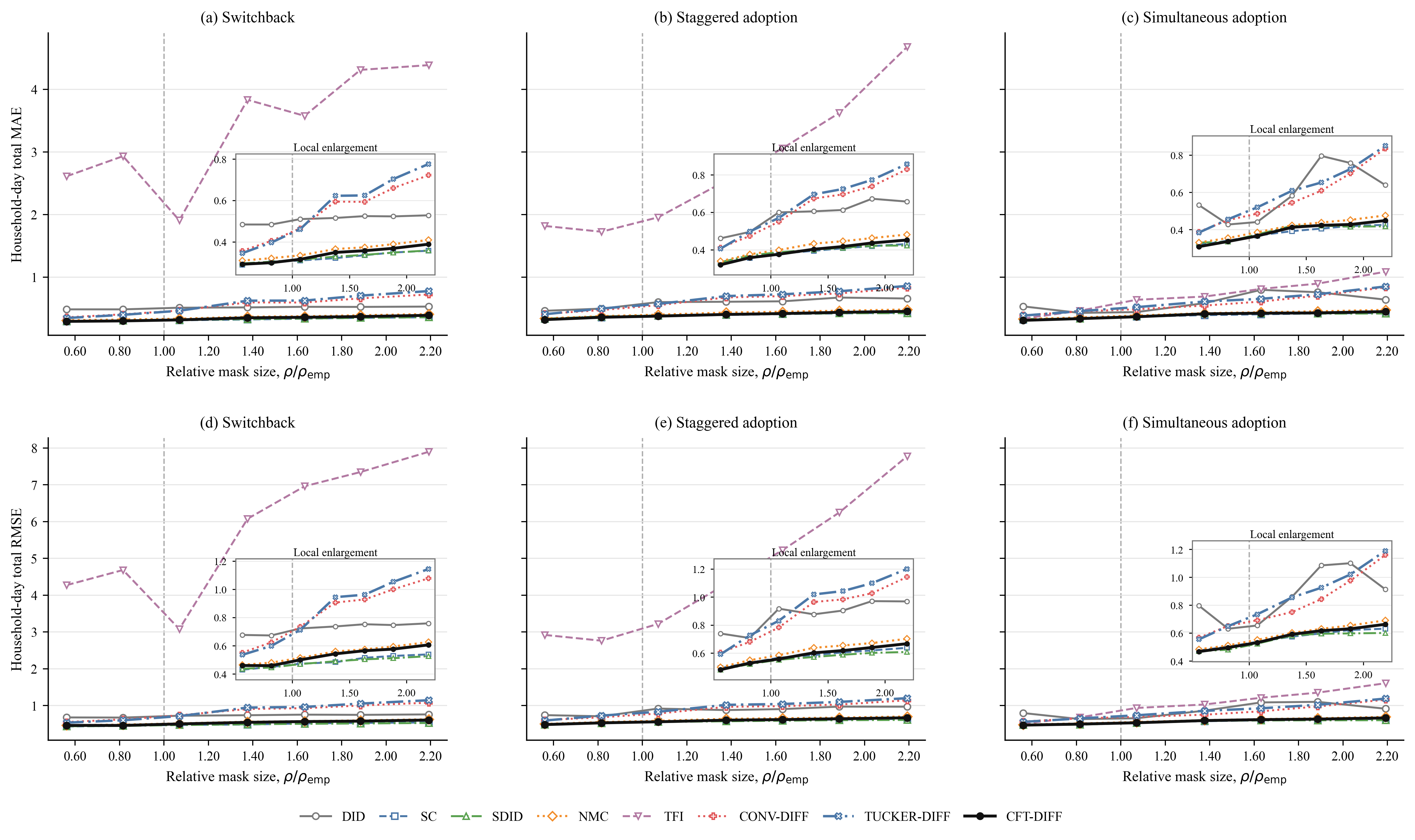}}	
	{Recovery of aggregate household--day counterfactual outcomes.
	\label{fig:appendix_recovery_aggregate}}
	{The figure evaluates recovery of household--day totals across different missingness levels. For each artificially masked household--day, the recovered and observed outcomes are first aggregated over the missing 24-hour trajectory. Panels (a)--(c) report MAE and Panels (d)--(f) report RMSE under the switchback, staggered-adoption, and simultaneous-adoption patterns, respectively. The horizontal axis reports $\rho/\rho_{\mathrm{emp}}$, and the vertical dashed line indicates $\rho=\rho_{\mathrm{emp}}$. Lower values indicate better recovery.}
\end{figure}

Pointwise accuracy need not imply accurate recovery of aggregates formed from the missing trajectory. This distinction is relevant for the empirical analysis because the causal estimands aggregate counterfactual outcomes across hours. We sum the 24 hourly outcomes within each artificially masked household--day before computing the recovery error. Figure~\ref{fig:appendix_recovery_aggregate} reports MAE and RMSE for these household--day totals. \CFTDiff remains among the most accurate methods under all three missingness patterns and is substantially more stable than the alternative diffusion models as the missing proportion increases. The results indicate that its pointwise recovery performance carries over to aggregate counterfactual outcomes rather than disappearing after temporal aggregation.

\begin{figure}[htb]
	\FIGURE
	{\includegraphics[scale=0.52]{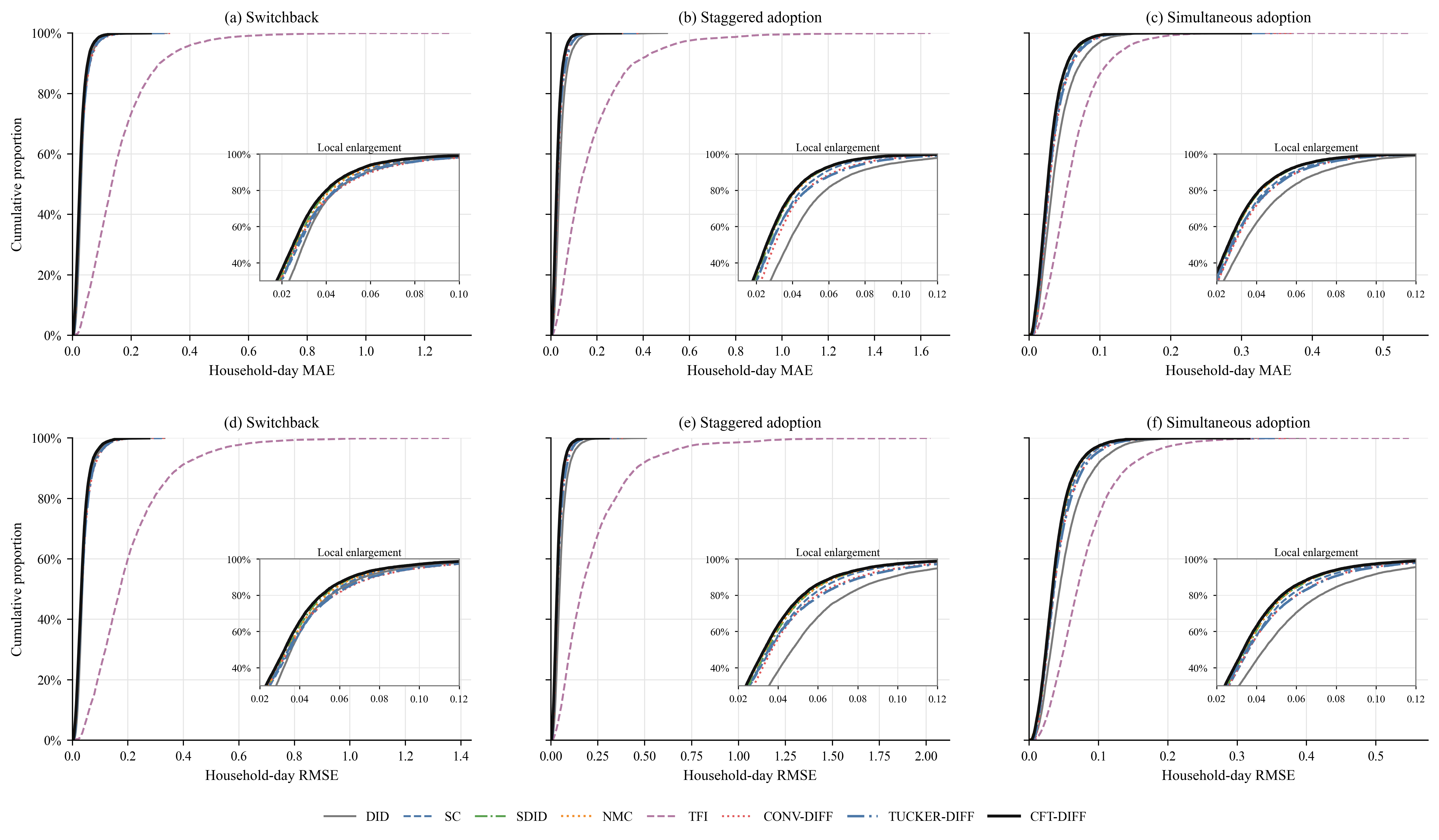}}	
	{Distribution of household--day counterfactual recovery errors.
	\label{fig:appendix_recovery_ecdf}}
	{The figure reports empirical cumulative distribution functions of household--day recovery errors at the available missingness level closest to the empirical treated share. Panels (a)--(c) report household--day MAE and Panels (d)--(f) report household--day RMSE under the switchback, staggered-adoption, and simultaneous-adoption patterns, respectively. Errors are computed separately for each artificially masked household--day over its 24-hour trajectory. Curves farther to the left indicate smaller recovery errors. The inset in each panel provides a local enlargement of the low-error region.}
\end{figure}

Figure~\ref{fig:appendix_recovery_ecdf} reports the empirical distributions of household--day MAE and RMSE at the missingness level closest to the empirical treated share. The distributions for \CFTDiff are concentrated in the low-error region under all three designs. The differences are particularly visible relative to \TFI and remain present relative to the two diffusion benchmarks. Hence, the favorable average performance of \CFTDiff reflects broadly lower recovery errors across the masked household--days rather than improvements concentrated in a small subset of observations.

\begin{figure}[htb]
	\FIGURE
	{\includegraphics[scale=0.52]{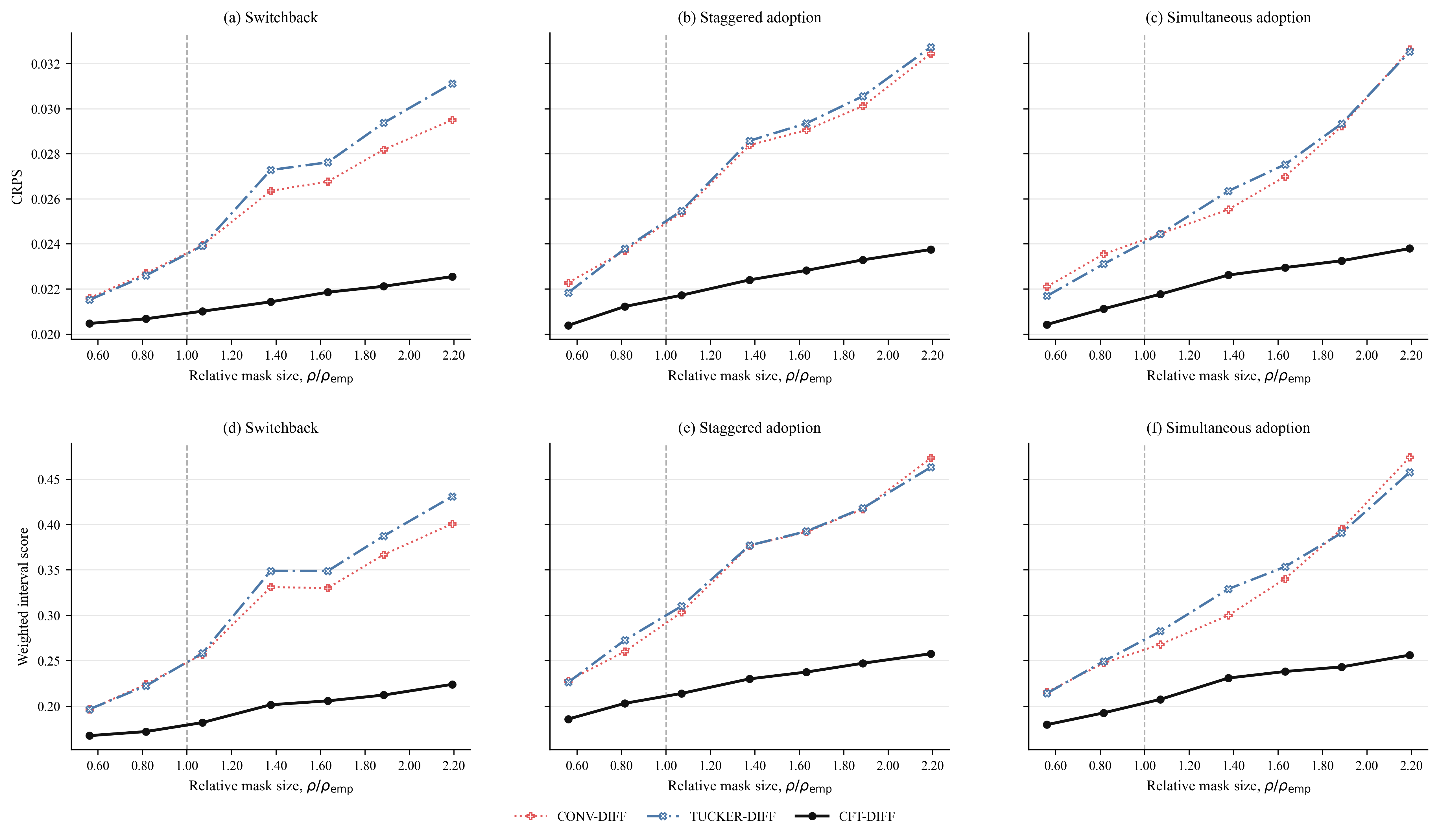}}	
	{Additional counterfactual distribution recovery across missingness levels.
	\label{fig:appendix_recovery_crps_wis}} 
	{The figure evaluates counterfactual distribution recovery for the three diffusion methods across different missingness levels. Panels (a)--(c) report CRPS for individual hourly outcomes, and Panels (d)--(f) report WIS for household--day totals under the switchback, staggered-adoption, and simultaneous-adoption patterns, respectively. The horizontal axis reports $\rho/\rho_{\mathrm{emp}}$, and the vertical dashed line indicates $\rho=\rho_{\mathrm{emp}}$. Lower values indicate better distributional recovery.}
\end{figure}

Finally, we provide additional evidence on distributional recovery. Whereas the main figure uses the energy score to evaluate the joint 24-hour trajectory, Figure~\ref{fig:appendix_recovery_crps_wis} considers two complementary criteria. CRPS evaluates the marginal distribution of individual hourly outcomes, while WIS evaluates the distribution of household--day totals through multiple central intervals. Across all three missingness patterns, both scores increase as the missing region expands, but substantially more slowly for \CFTDiff. \CFTDiff has the lowest CRPS and WIS at every reported missingness level, showing that its distributional advantage is present for both hourly marginals and aggregate counterfactual outcomes.

\begin{table}[htbp]
\TABLE
{Counterfactual distribution recovery across missingness patterns.
\label{tab:appendix_distributional_recovery}}
{
\begin{tabular*}{\textwidth}{@{\extracolsep{\fill}}lrrrrrr}
\hline
\multicolumn{7}{l}{\textit{Panel A: Switchback}} \\
Method & \multicolumn{1}{c}{Energy score} & \multicolumn{1}{c}{CRPS} & \multicolumn{1}{c}{90\% coverage} & \multicolumn{1}{c}{90\% width} & \multicolumn{1}{c}{IS} & \multicolumn{1}{r}{WIS} \\
\hline
\ConvDiff & 0.1479 & 0.0239 & 0.887 & \underline{1.7246} & \underline{2.6548} & \underline{0.2569} \\
\TuckerDiff & \underline{0.1476} & \underline{0.0239} & \underline{0.907} & 1.8352 & 2.6898 & 0.2583 \\
\CFTDiff & \textbf{0.1314} & \textbf{0.0210} & \textbf{0.893} & \textbf{1.2559} & \textbf{2.0126} & \textbf{0.1819} \\
\hline
\multicolumn{7}{l}{\textit{Panel B: Staggered adoption}} \\
Method & \multicolumn{1}{c}{Energy score} & \multicolumn{1}{c}{CRPS} & \multicolumn{1}{c}{90\% coverage} & \multicolumn{1}{c}{90\% width} & \multicolumn{1}{c}{IS} & \multicolumn{1}{r}{WIS} \\
\hline
\ConvDiff & \underline{0.1546} & \underline{0.0254} & 0.950 & 2.8132 & 3.1717 & \underline{0.3034} \\
\TuckerDiff & 0.1553 & 0.0255 & \textbf{0.905} & \underline{2.2037} & \underline{3.0402} & 0.3102 \\
\CFTDiff & \textbf{0.1358} & \textbf{0.0217} & \underline{0.920} & \textbf{1.7198} & \textbf{2.2905} & \textbf{0.2140} \\
\hline
\multicolumn{7}{l}{\textit{Panel C: Simultaneous adoption}} \\
Method & \multicolumn{1}{c}{Energy score} & \multicolumn{1}{c}{CRPS} & \multicolumn{1}{c}{90\% coverage} & \multicolumn{1}{c}{90\% width} & \multicolumn{1}{c}{IS} & \multicolumn{1}{r}{WIS} \\
\hline
\ConvDiff & 0.1510 & \underline{0.0244} & 0.942 & 2.2955 & \underline{2.7606} & \underline{0.2680} \\
\TuckerDiff & \underline{0.1509} & 0.0245 & \textbf{0.896} & \underline{1.9824} & 2.8150 & 0.2825 \\
\CFTDiff & \textbf{0.1364} & \textbf{0.0218} & \underline{0.919} & \textbf{1.6321} & \textbf{2.2463} & \textbf{0.2075} \\
\hline
\end{tabular*}
}
{The table evaluates the three diffusion methods at the available missingness level closest to the empirical treated share, $\rho_{\mathrm{emp}}=0.265$. The common target missing proportion is 0.284, with realized missing proportions of 0.283, 0.284, and 0.284 under the switchback, staggered-adoption, and simultaneous-adoption patterns, respectively. The energy score evaluates the joint 24-hour trajectory, CRPS evaluates individual hourly outcomes, and the interval-based measures are computed for household--day totals. Errors, scores, and interval widths are reported in the normalized experimental scale. Lower values are better for the energy score, CRPS, IS, and WIS; coverage is evaluated by closeness to 0.90, and a smaller width indicates a sharper interval when coverage is comparable. Boldface and underlining indicate the best and second-best values within each panel, respectively.}
\end{table}

Table~\ref{tab:appendix_distributional_recovery} reports all distributional measures at the missingness level closest to that observed in the application. Under each missingness pattern, \CFTDiff has the lowest energy score, CRPS, interval score, and WIS. Its 90\% intervals are also narrower than those of the alternative diffusion models, with coverage remaining reasonably close to 0.90. Under staggered adoption, for example, the interval width decreases from 2.2037 for \TuckerDiff to 1.7198 for \CFTDiff, with coverage of 0.920. The switchback and simultaneous-adoption designs show the same combination of lower distributional scores and sharper intervals.

\subsubsection{Counterfactual Trajectory Recovery}\label{sec:appendix_trajectory_recovery}

The preceding analyses summarize counterfactual recovery using scalar error and distributional measures. These measures do not reveal whether a method recovers the intraday shape of the missing outcome, including the timing and magnitude of demand peaks. We therefore complement them with trajectory-level comparisons. For each artificially masked household group, the first column of the figures compares the observed no-policy trajectory with the five point estimators. The remaining columns report \ConvDiff, \TuckerDiff, and \CFTDiff. The solid curve is the median of the generated group-average trajectories, and the shaded regions are the central 50\% and 90\% counterfactual prediction intervals.

\begin{figure}[htb]
	\FIGURE
	{\includegraphics[scale=0.52]{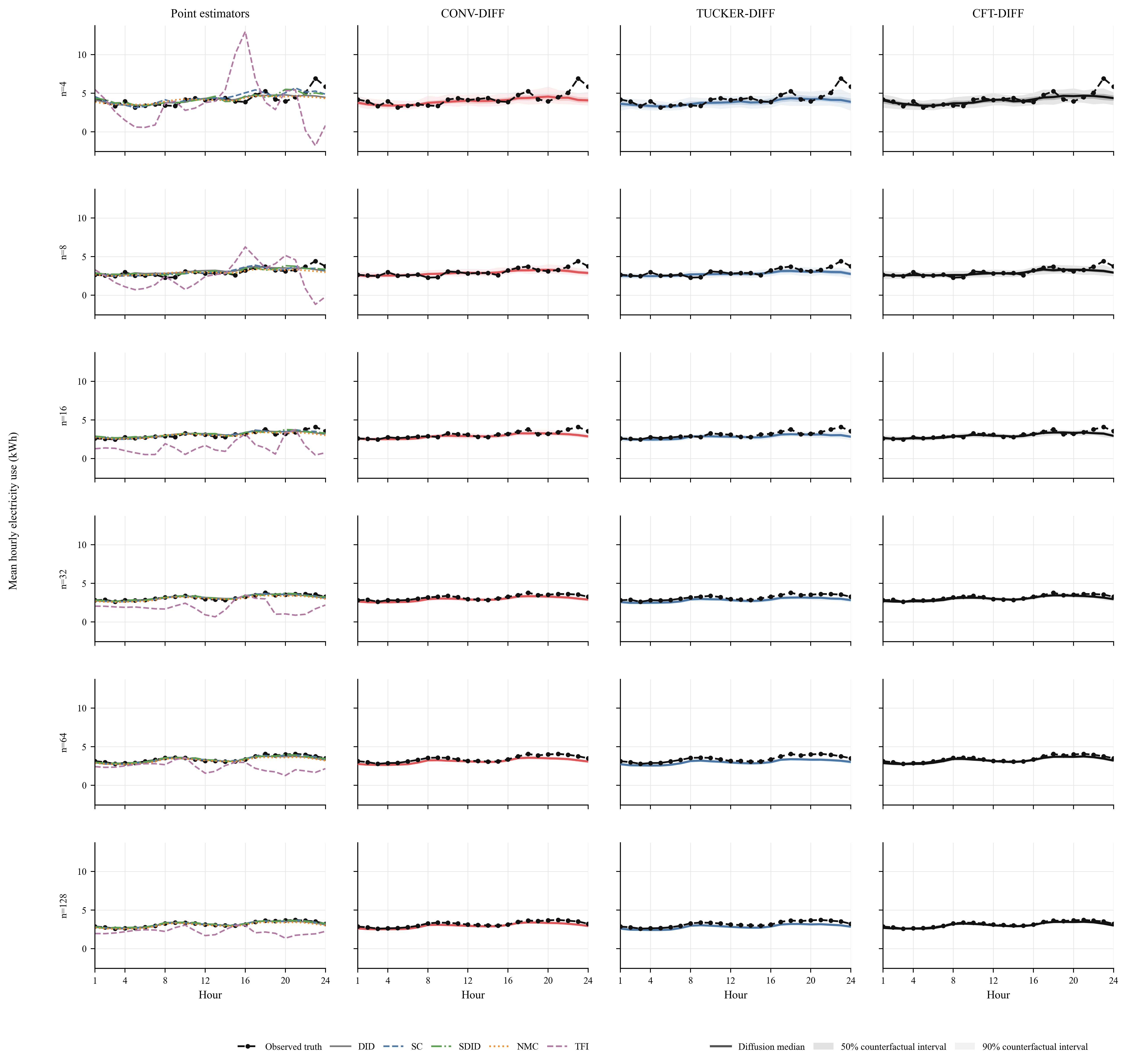}}	
	{Counterfactual trajectories across nested household groups under switchback missingness.
	\label{fig:trajectory_switchback_nested}} 
	{The figure compares recovered 24-hour counterfactual trajectories as the number of households included in the group average increases under the switchback pattern. Rows correspond to nested household groups of increasing size. The first column reports the observed trajectory together with the five point estimators; the remaining columns report \ConvDiff, \TuckerDiff, and \CFTDiff, respectively. For each diffusion method, the solid line denotes the median generated trajectory and the shaded regions denote the 50\% and 90\% central counterfactual prediction intervals.}
\end{figure}

\begin{figure}[htb]
	\FIGURE
	{\includegraphics[scale=0.52]{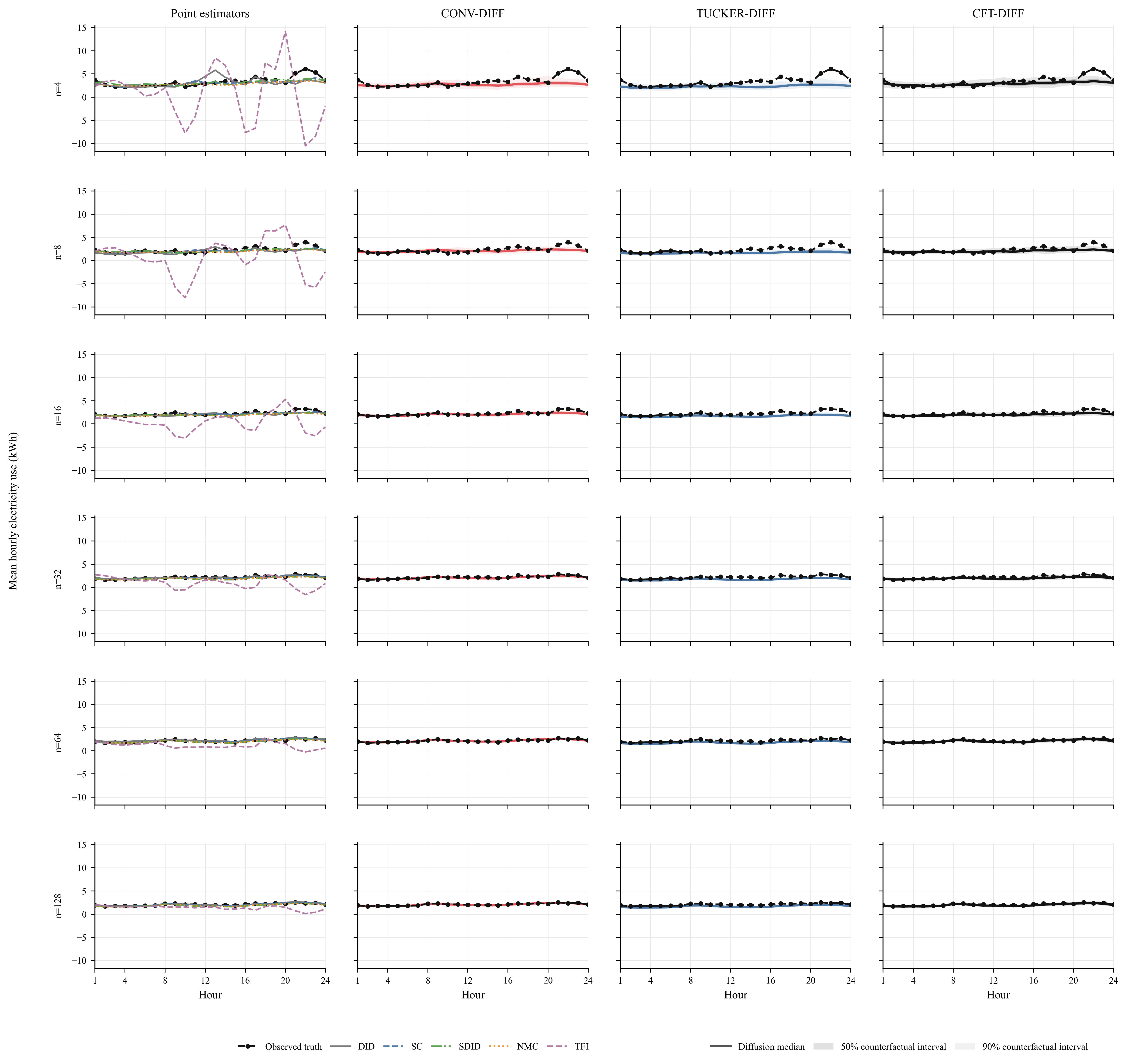}}	
	{Counterfactual trajectories across nested household groups under staggered adoption.
	\label{fig:trajectory_stagger_nested}} 
	{The figure compares recovered 24-hour counterfactual trajectories as the number of households included in the group average increases under staggered adoption. Rows correspond to nested household groups of increasing size. The first column reports the observed trajectory together with the five point estimators; the remaining columns report \ConvDiff, \TuckerDiff, and \CFTDiff, respectively. For each diffusion method, the solid line denotes the median generated trajectory and the shaded regions denote the 50\% and 90\% central counterfactual prediction intervals.}
\end{figure}

\begin{figure}[htb]
	\FIGURE
	{\includegraphics[scale=0.52]{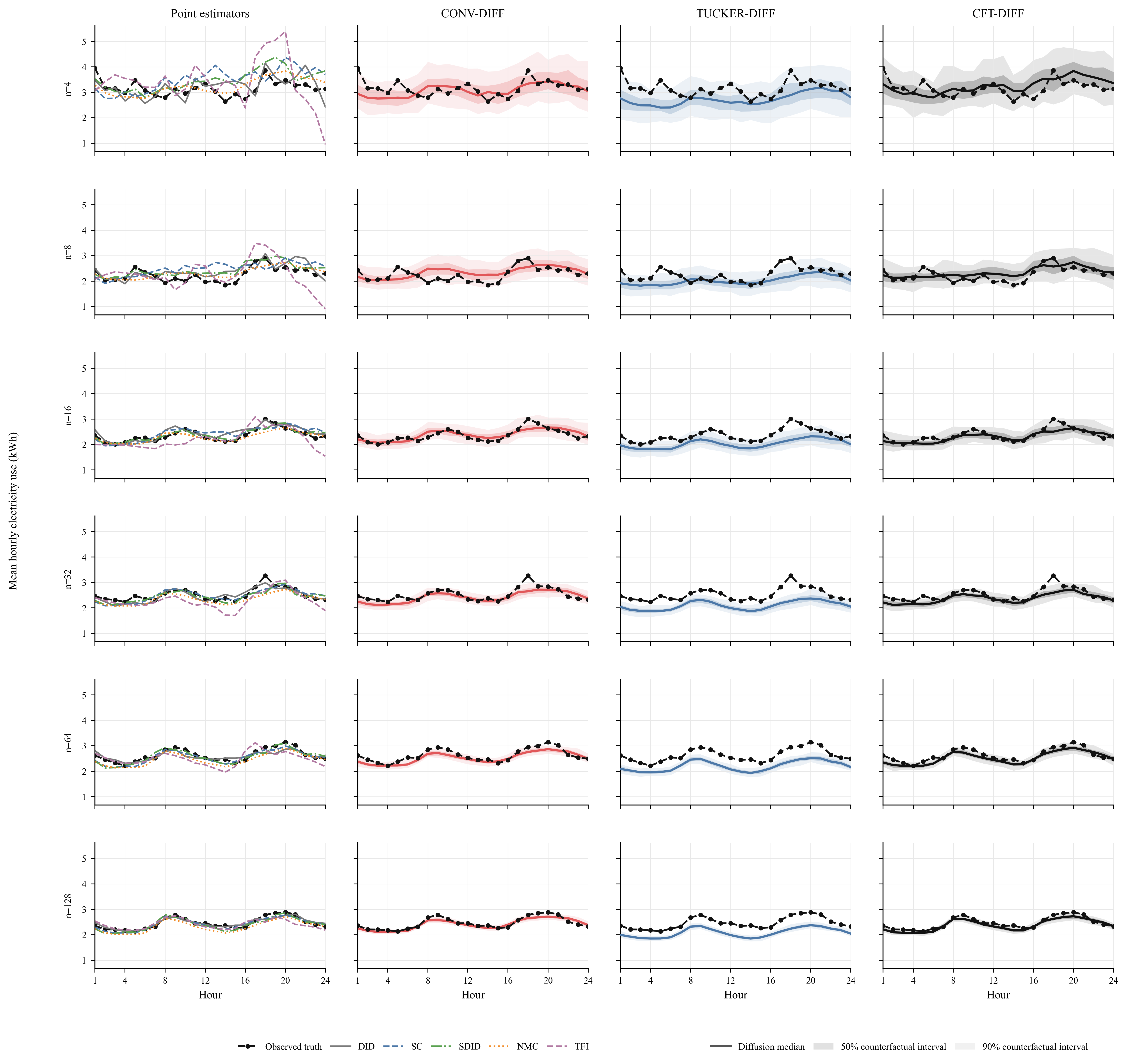}}	
	{Counterfactual trajectories across nested household groups under simultaneous adoption.
	\label{fig:trajectory_block_nested}} 
	{The figure compares recovered 24-hour counterfactual trajectories as the number of households included in the group average increases under simultaneous adoption. Rows correspond to nested household groups of increasing size. The first column reports the observed trajectory together with the five point estimators; the remaining columns report \ConvDiff, \TuckerDiff, and \CFTDiff, respectively. For each diffusion method, the solid line denotes the median generated trajectory and the shaded regions denote the 50\% and 90\% central counterfactual prediction intervals.}
\end{figure}

Figures~\ref{fig:trajectory_switchback_nested}, \ref{fig:trajectory_stagger_nested}, and \ref{fig:trajectory_block_nested} examine trajectory recovery as the number of households in each group increases. This design separates errors that are specific to small groups from errors that persist after aggregation. For small groups, the realized trajectories contain substantial hour-to-hour variation, and the differences across methods are correspondingly more visible. Among the point estimators, \DID, \SC, \SDID, and \NMC generally recover the overall level and shape, whereas \TFI can be unstable when only a few households are averaged. The three diffusion models produce smoother counterfactual trajectories, but \ConvDiff and especially \TuckerDiff tend to attenuate local peaks. \CFTDiff follows the observed trajectory more closely while retaining the main intraday variation.

As the group size increases, idiosyncratic household variation is averaged out and the trajectories become progressively smoother. The \CFTDiff median becomes tightly aligned with the observed group-average trajectory, and its counterfactual prediction intervals also become more concentrated. The same pattern is visible under switchback, staggered adoption, and simultaneous adoption. Thus, the recovery performance is not confined to large aggregates: \CFTDiff remains informative for relatively small groups and becomes increasingly precise as the counterfactual object is aggregated over more households.


\begin{figure}[htb]
	\FIGURE
	{\includegraphics[scale=0.52]{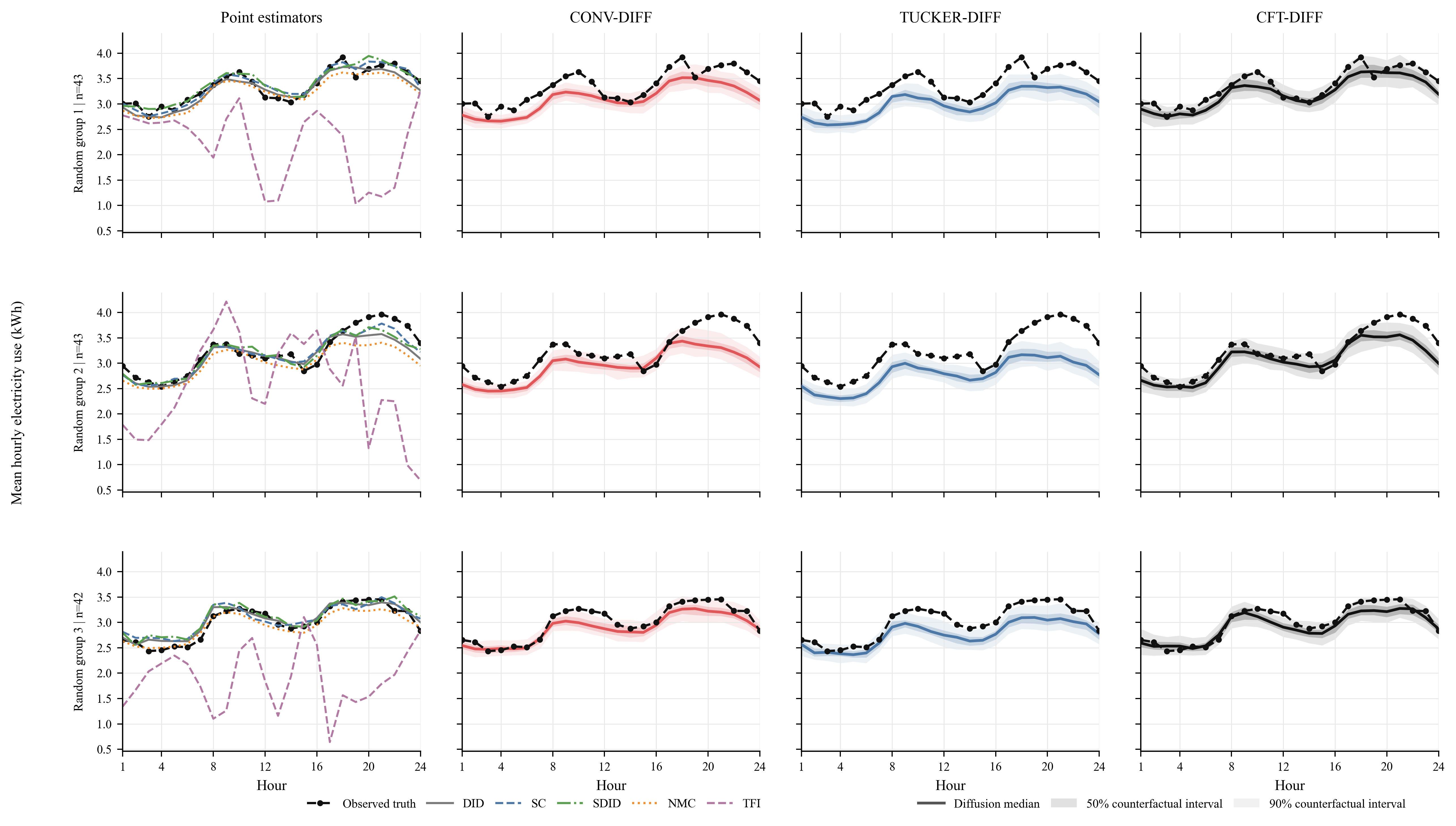}}	
	{Counterfactual trajectories across random household groups under switchback missingness.
	\label{fig:trajectory_switchback_random}} 
	{The figure examines whether trajectory recovery depends on the composition of the household group under the switchback pattern. Each row corresponds to a different randomly selected household group. The first column reports the observed trajectory and the five point estimators; the remaining columns report the three diffusion methods. Solid diffusion curves denote median generated trajectories, with 50\% and 90\% central counterfactual prediction intervals shown by the shaded regions.}
\end{figure}

\begin{figure}[htb]
	\FIGURE
	{\includegraphics[scale=0.52]{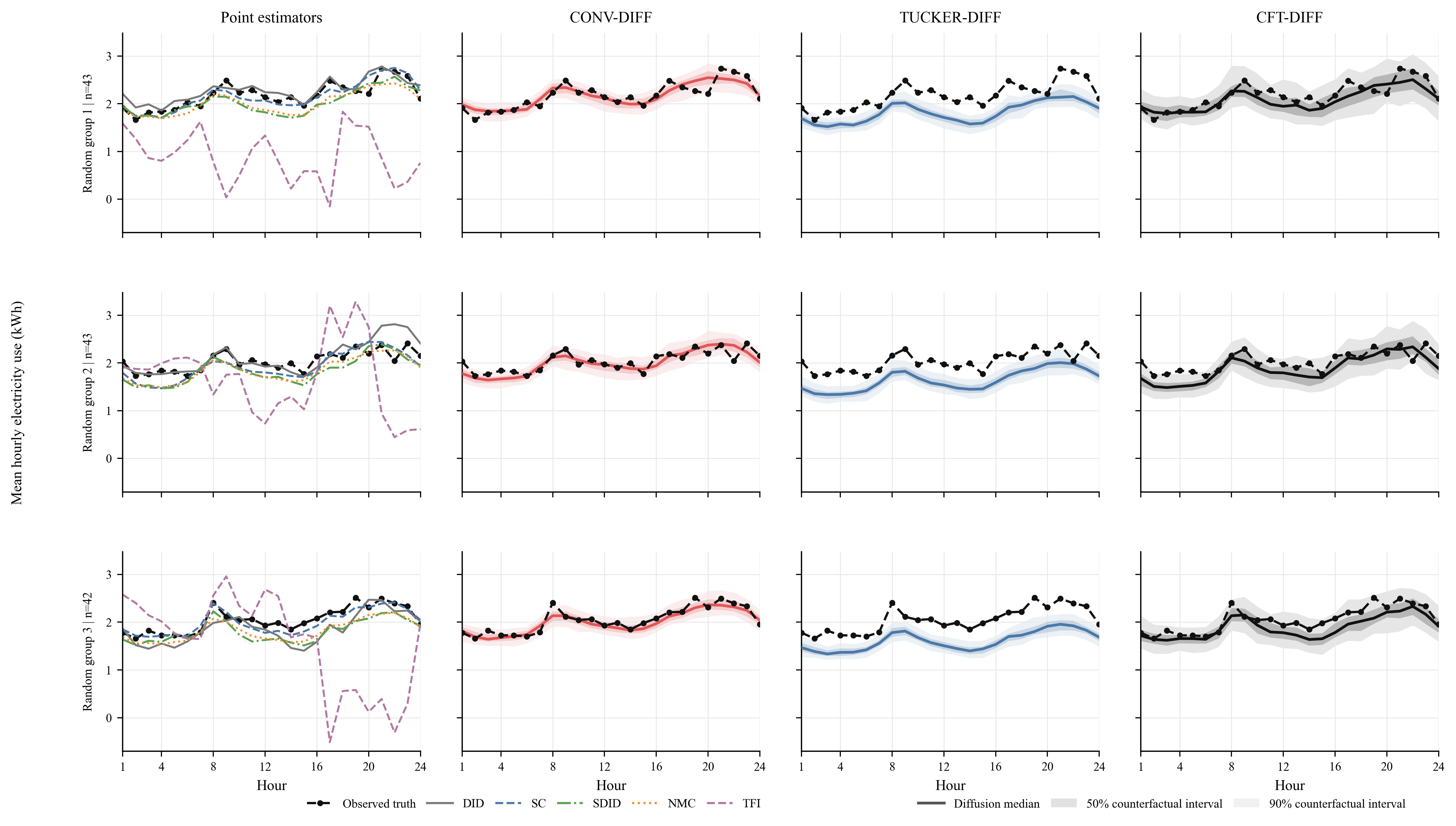}}	
	{Counterfactual trajectories across random household groups under staggered adoption.
	\label{fig:trajectory_stagger_random}} 
	{The figure examines whether trajectory recovery depends on the composition of the household group under staggered adoption. Each row corresponds to a different randomly selected household group. The first column reports the observed trajectory and the five point estimators; the remaining columns report the three diffusion methods. Solid diffusion curves denote median generated trajectories, with 50\% and 90\% central counterfactual prediction intervals shown by the shaded regions.}
\end{figure}

\begin{figure}[htb]
	\FIGURE
	{\includegraphics[scale=0.52]{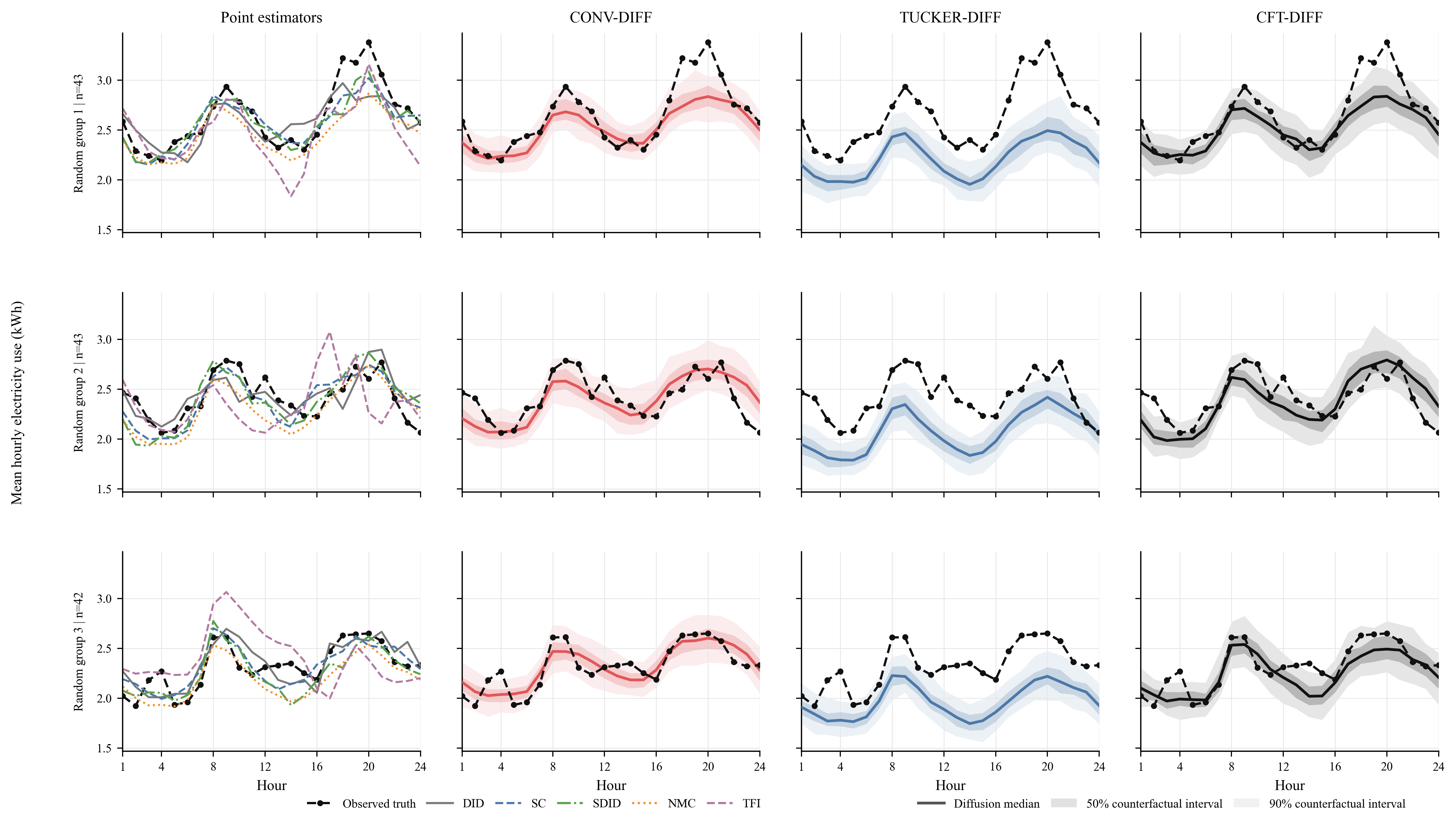}}	
	{Counterfactual trajectories across random household groups under simultaneous adoption.
	\label{fig:trajectory_block_random}} 
	{The figure examines whether trajectory recovery depends on the composition of the household group under simultaneous adoption. Each row corresponds to a different randomly selected household group. The first column reports the observed trajectory and the five point estimators; the remaining columns report the three diffusion methods. Solid diffusion curves denote median generated trajectories, with 50\% and 90\% central counterfactual prediction intervals shown by the shaded regions.}
\end{figure}

Figures~\ref{fig:trajectory_switchback_random}, \ref{fig:trajectory_stagger_random}, and \ref{fig:trajectory_block_random} repeat the comparison for nonoverlapping random household groups. The resulting trajectories vary meaningfully across groups in both their levels and peak patterns, providing a more demanding check on the cross-sectional stability of the recovery methods. The main conclusions are similar across the three groups and missingness patterns. The stronger point estimators generally reproduce the broad group-average profile, while \TFI exhibits noticeably larger deviations in several cases. Among the diffusion models, \CFTDiff consistently preserves more of the observed intraday variation. By comparison, \ConvDiff and \TuckerDiff tend to smooth the morning and evening peaks and, for some groups, shift the trajectory downward. The similarity of these results across independently formed groups indicates that the trajectory-level performance is not driven by a particular household composition.

\begin{figure}[htb]
	\FIGURE
	{\includegraphics[scale=0.52]{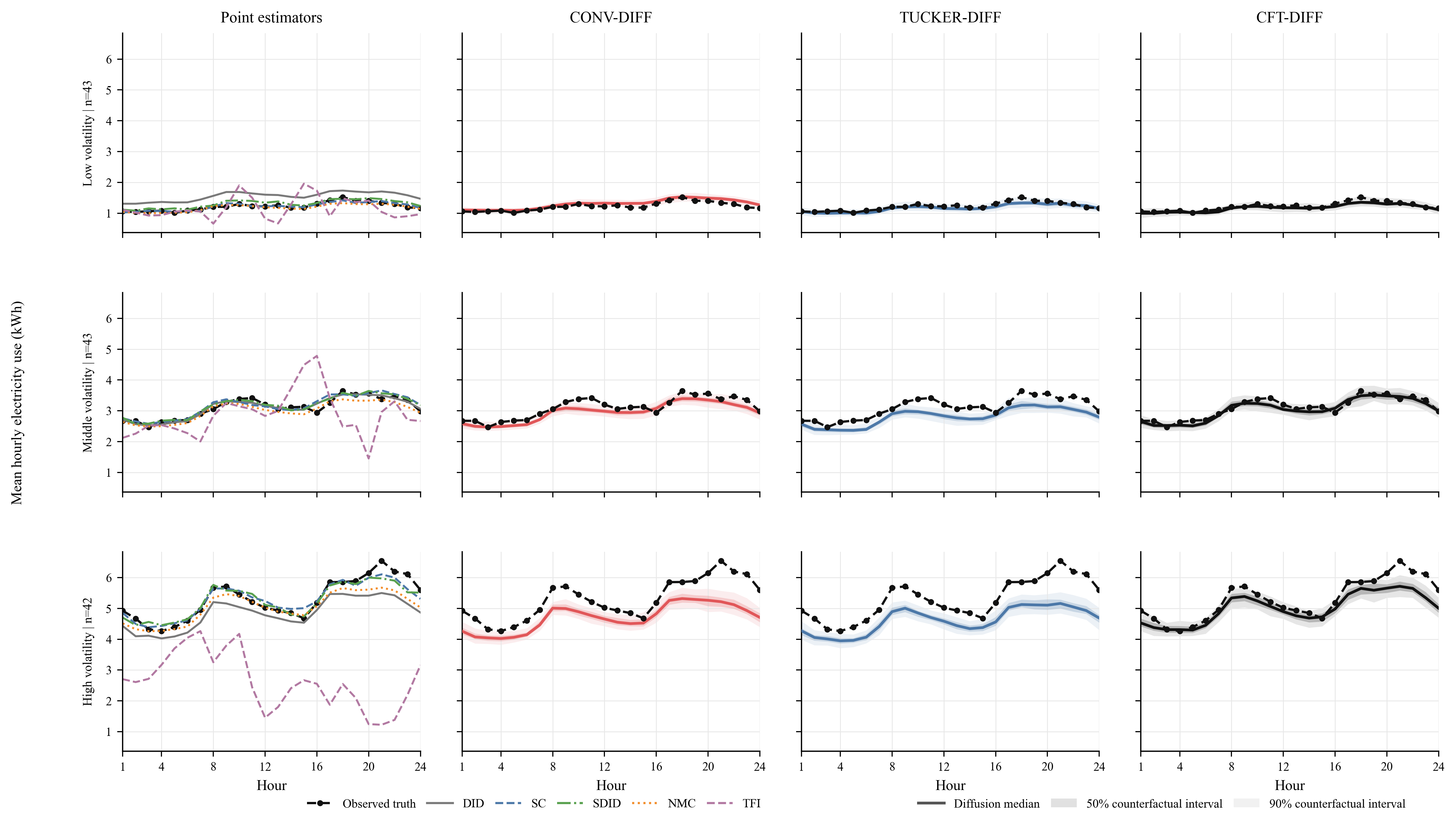}}	
	{Counterfactual trajectories across household demand-volatility groups under switchback missingness.
	\label{fig:trajectory_switchback_volatility}} 
	{The figure evaluates trajectory recovery across household groups with different levels of demand volatility under the switchback pattern. Households are grouped using demand variation measured from the observed portion of the data. Rows correspond to the resulting volatility groups. The first column reports the observed trajectory and the five point estimators; the remaining columns report the three diffusion methods together with their median trajectories and 50\% and 90\% central counterfactual prediction intervals.}
\end{figure}

\begin{figure}[htb]
	\FIGURE
	{\includegraphics[scale=0.52]{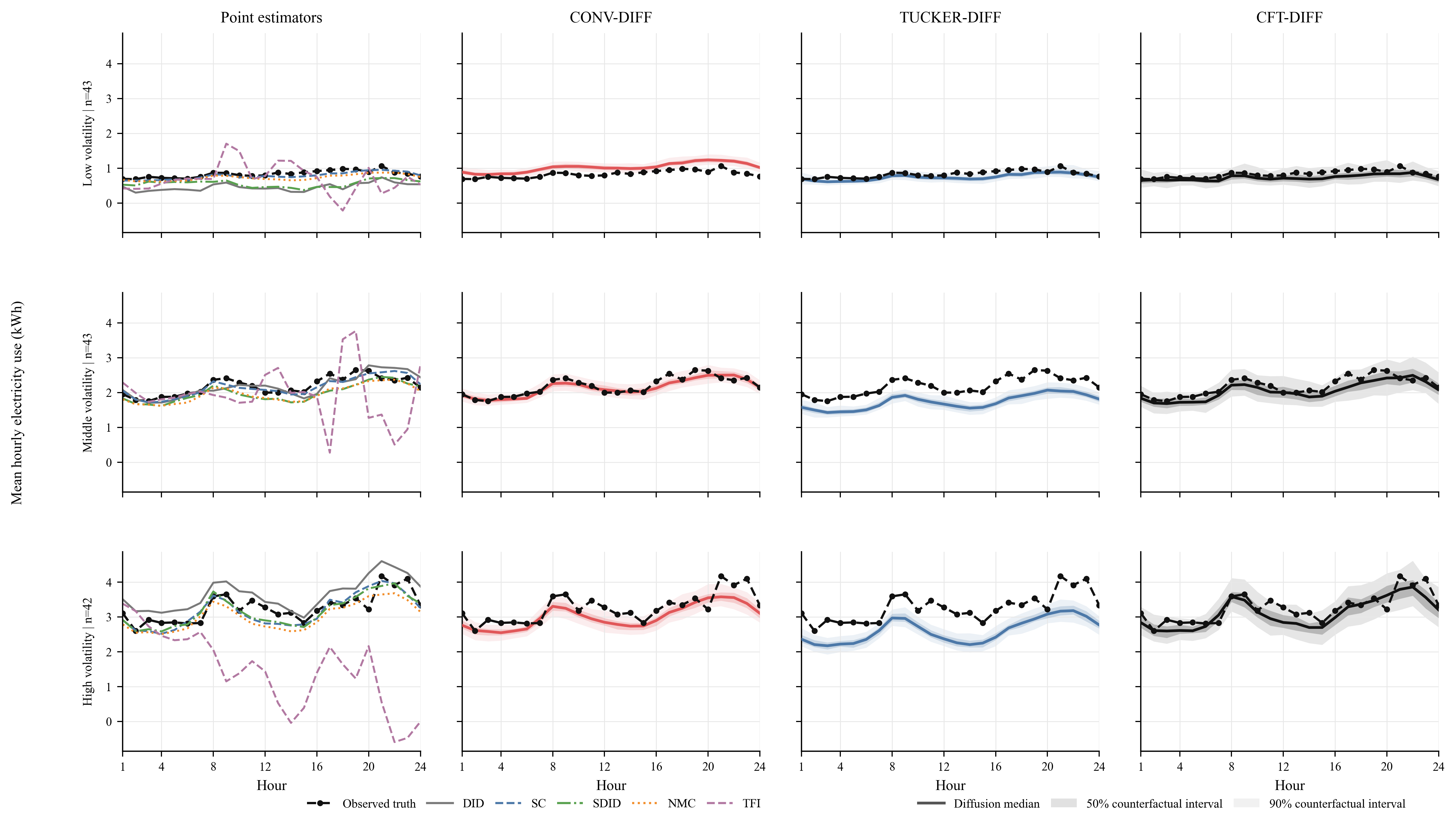}}	
	{Counterfactual trajectories across household demand-volatility groups under staggered adoption.
	\label{fig:trajectory_stagger_volatility}} 
	{The figure evaluates trajectory recovery across household groups with different levels of demand volatility under staggered adoption. Households are grouped using demand variation measured from the observed portion of the data. Rows correspond to the resulting volatility groups. The first column reports the observed trajectory and the five point estimators; the remaining columns report the three diffusion methods together with their median trajectories and 50\% and 90\% central counterfactual prediction intervals.}
\end{figure}

\begin{figure}[htb]
	\FIGURE
	{\includegraphics[scale=0.52]{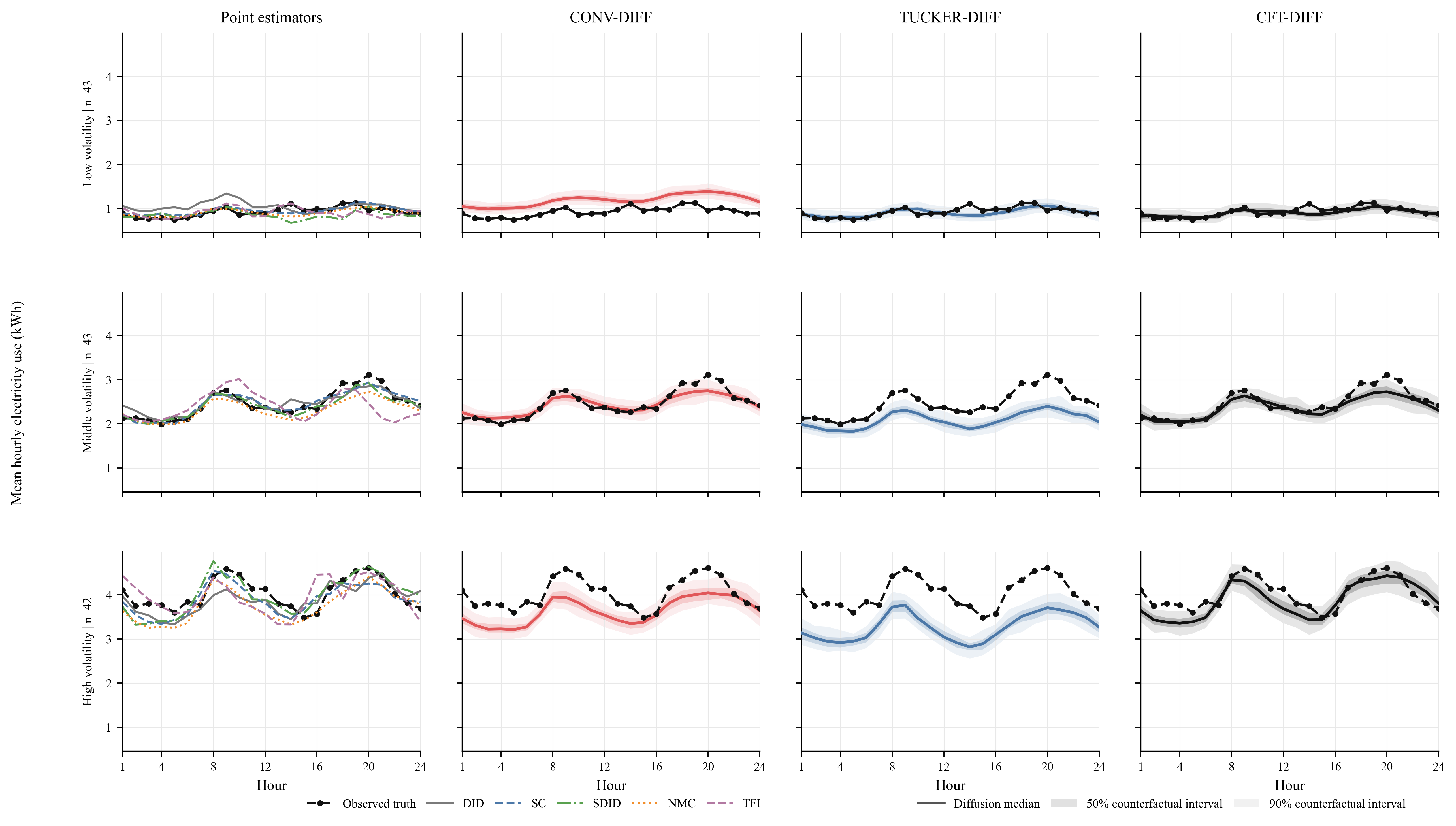}}	
	{Counterfactual trajectories across household demand-volatility groups under simultaneous adoption.
	\label{fig:trajectory_block_volatility}} 
	{The figure evaluates trajectory recovery across household groups with different levels of demand volatility under simultaneous adoption. Households are grouped using demand variation measured from the observed portion of the data. Rows correspond to the resulting volatility groups. The first column reports the observed trajectory and the five point estimators; the remaining columns report the three diffusion methods together with their median trajectories and 50\% and 90\% central counterfactual prediction intervals.}
\end{figure}

Figures~\ref{fig:trajectory_switchback_volatility}, \ref{fig:trajectory_stagger_volatility}, and \ref{fig:trajectory_block_volatility} stratify households by demand volatility measured from the observed outcomes. Differences across methods are relatively small for the low-volatility groups but become more pronounced as volatility increases. In the high-volatility groups, the observed trajectory displays larger intraday movements and more pronounced peaks. \ConvDiff and \TuckerDiff tend to attenuate these movements, whereas the \CFTDiff median remains much closer to the observed trajectory. The counterfactual bands also become wider for the more volatile groups, reflecting the greater uncertainty associated with recovering their missing trajectories. The same qualitative pattern appears under all three missingness structures.


\begin{figure}[htb]
	\FIGURE
	{\includegraphics[scale=0.52]{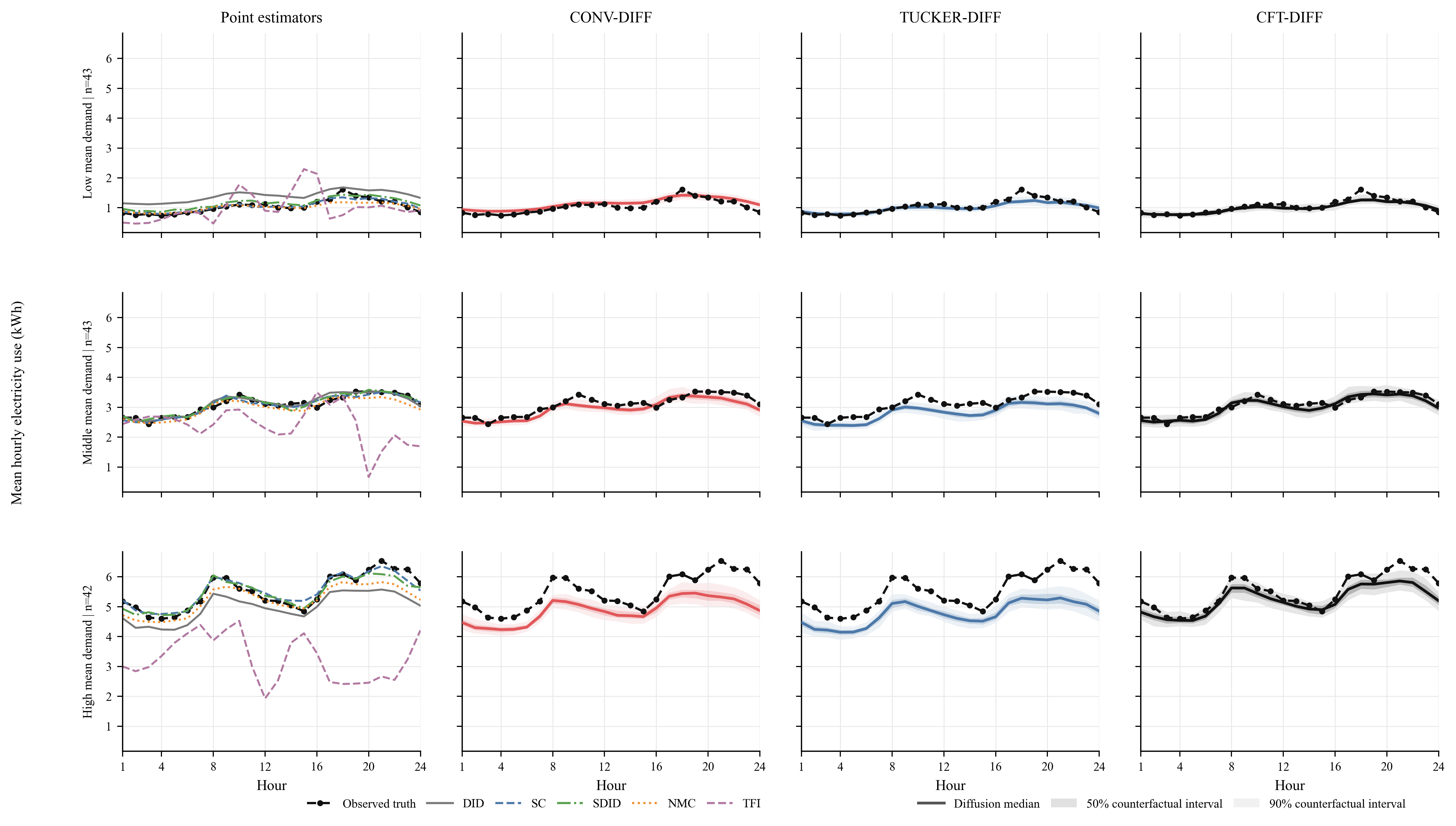}}	
	{Counterfactual trajectories across household demand-level groups under switchback missingness.
	\label{fig:trajectory_switchback_mean}} 
	{The figure evaluates trajectory recovery across household groups with different levels of average electricity use under the switchback pattern. Households are grouped according to their mean demand computed from the observed portion of the data. Rows correspond to the resulting demand-level groups. The first column reports the observed trajectory and the five point estimators; the remaining columns report the three diffusion methods together with their median trajectories and 50\% and 90\% central counterfactual prediction intervals.}
\end{figure}

\begin{figure}[htb]
	\FIGURE
	{\includegraphics[scale=0.52]{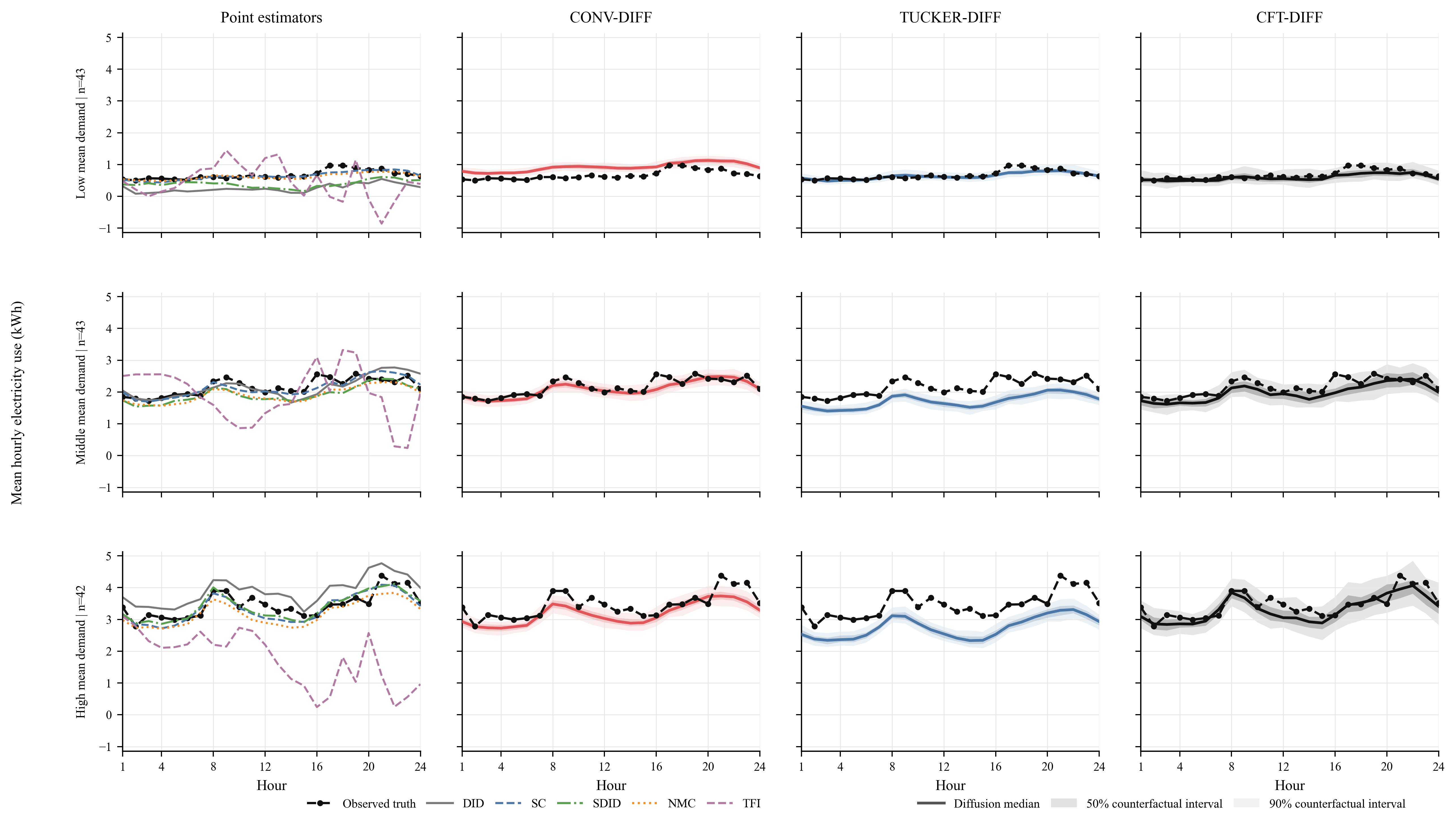}}	
	{Counterfactual trajectories across household demand-level groups under staggered adoption.
	\label{fig:trajectory_stagger_mean}} 
	{The figure evaluates trajectory recovery across household groups with different levels of average electricity use under staggered adoption. Households are grouped according to their mean demand computed from the observed portion of the data. Rows correspond to the resulting demand-level groups. The first column reports the observed trajectory and the five point estimators; the remaining columns report the three diffusion methods together with their median trajectories and 50\% and 90\% central counterfactual prediction intervals.}
\end{figure}

\begin{figure}[htb]
	\FIGURE
	{\includegraphics[scale=0.52]{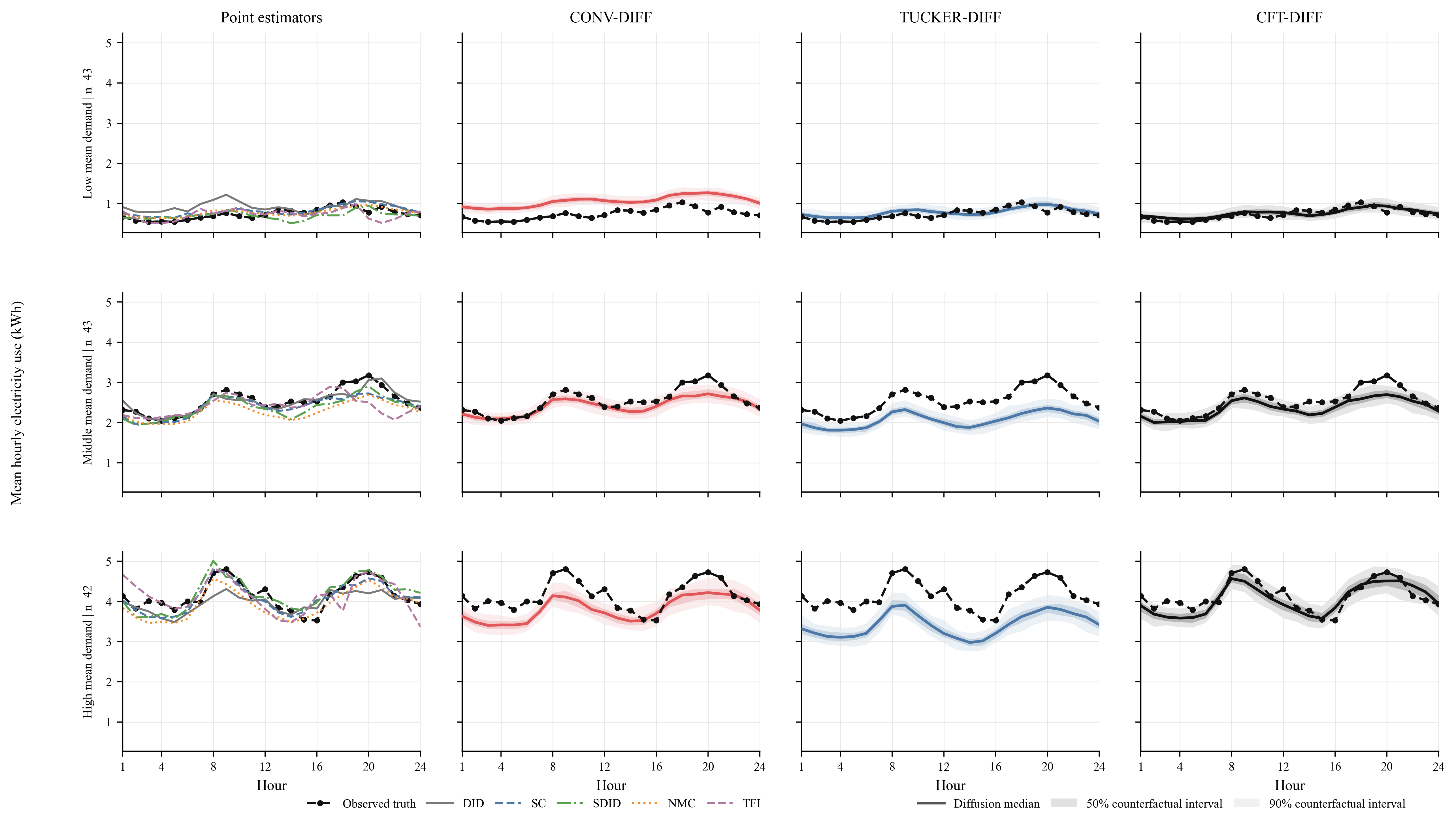}}	
	{Counterfactual trajectories across household demand-level groups under simultaneous adoption.
	\label{fig:trajectory_block_mean}} 
	{The figure evaluates trajectory recovery across household groups with different levels of average electricity use under simultaneous adoption. Households are grouped according to their mean demand computed from the observed portion of the data. Rows correspond to the resulting demand-level groups. The first column reports the observed trajectory and the five point estimators; the remaining columns report the three diffusion methods together with their median trajectories and 50\% and 90\% central counterfactual prediction intervals.}
\end{figure}

Figures~\ref{fig:trajectory_switchback_mean}, \ref{fig:trajectory_stagger_mean}, and \ref{fig:trajectory_block_mean} divide households into low-, middle-, and high-demand groups using their observed outcomes. Recovery is relatively similar across methods for the low-demand groups, where the trajectories are comparatively flat. The differences become clearer for the middle- and high-demand groups, whose trajectories exhibit larger morning and evening peaks. \CFTDiff adjusts to these changes in both level and shape and remains close to the observed trajectory. In contrast, the two nested diffusion benchmarks tend to understate the higher-demand portions of the curve, particularly around peak hours. The point estimators other than \TFI also track the broad demand profile reasonably well, but they do not provide the conditional distribution around the recovered trajectory. These results suggest that the gains from counterfactual conditioning are particularly useful when the missing trajectory has more pronounced systematic variation.


\begin{figure}[htb]
	\FIGURE
	{\includegraphics[scale=0.52]{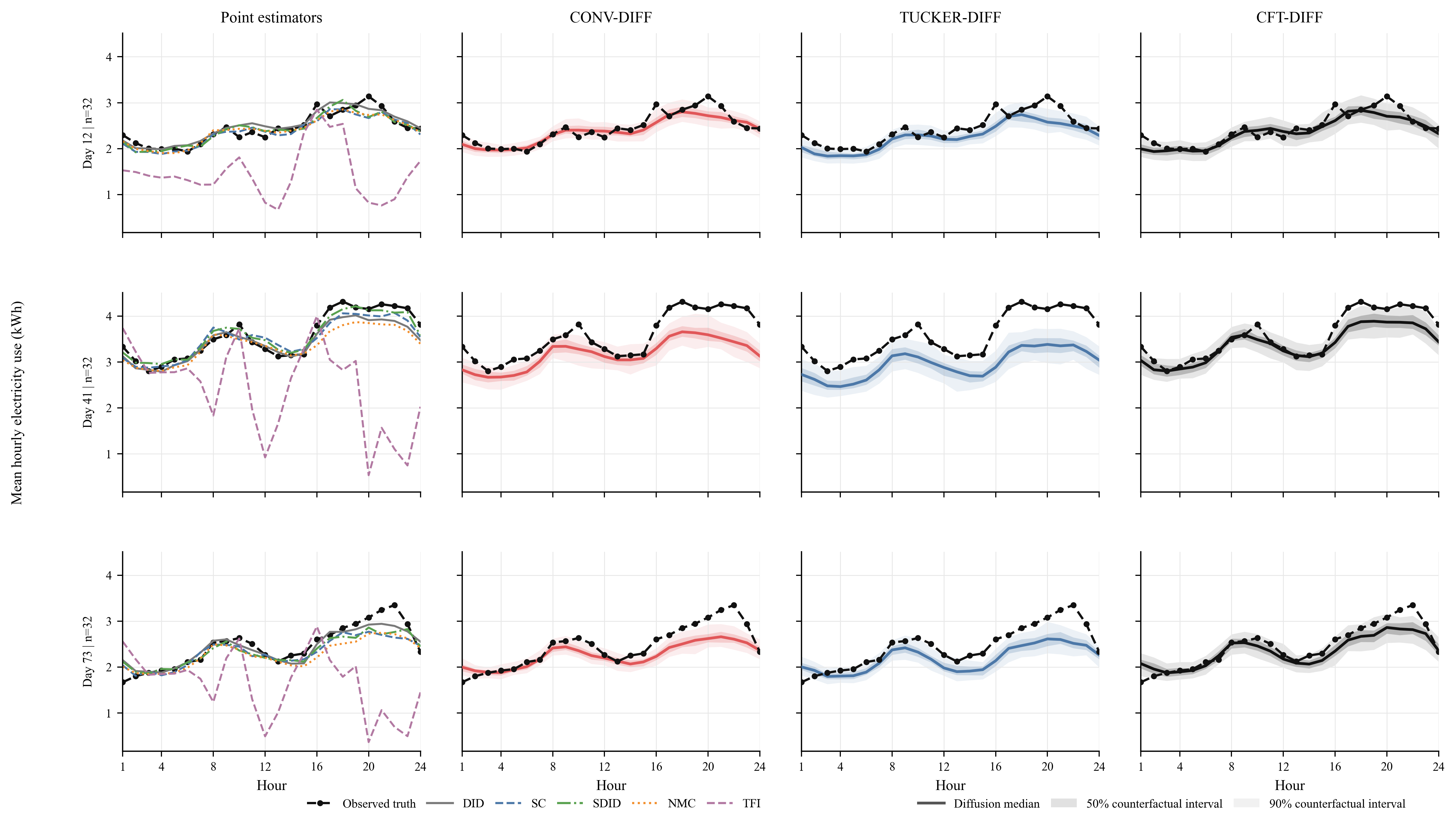}}	
	{Counterfactual trajectories across missing days under switchback missingness.
	\label{fig:trajectory_switchback_days}} 
	{The figure evaluates trajectory recovery on different artificially masked days under the switchback pattern. Each row corresponds to a different missing day and reports the average trajectory for the selected households on that day. The first column reports the observed trajectory and the five point estimators; the remaining columns report \ConvDiff, \TuckerDiff, and \CFTDiff, respectively. For each diffusion method, the solid line denotes the median generated trajectory and the shaded regions denote the 50\% and 90\% central counterfactual prediction intervals.}
\end{figure}

\begin{figure}[htb]
	\FIGURE
	{\includegraphics[scale=0.52]{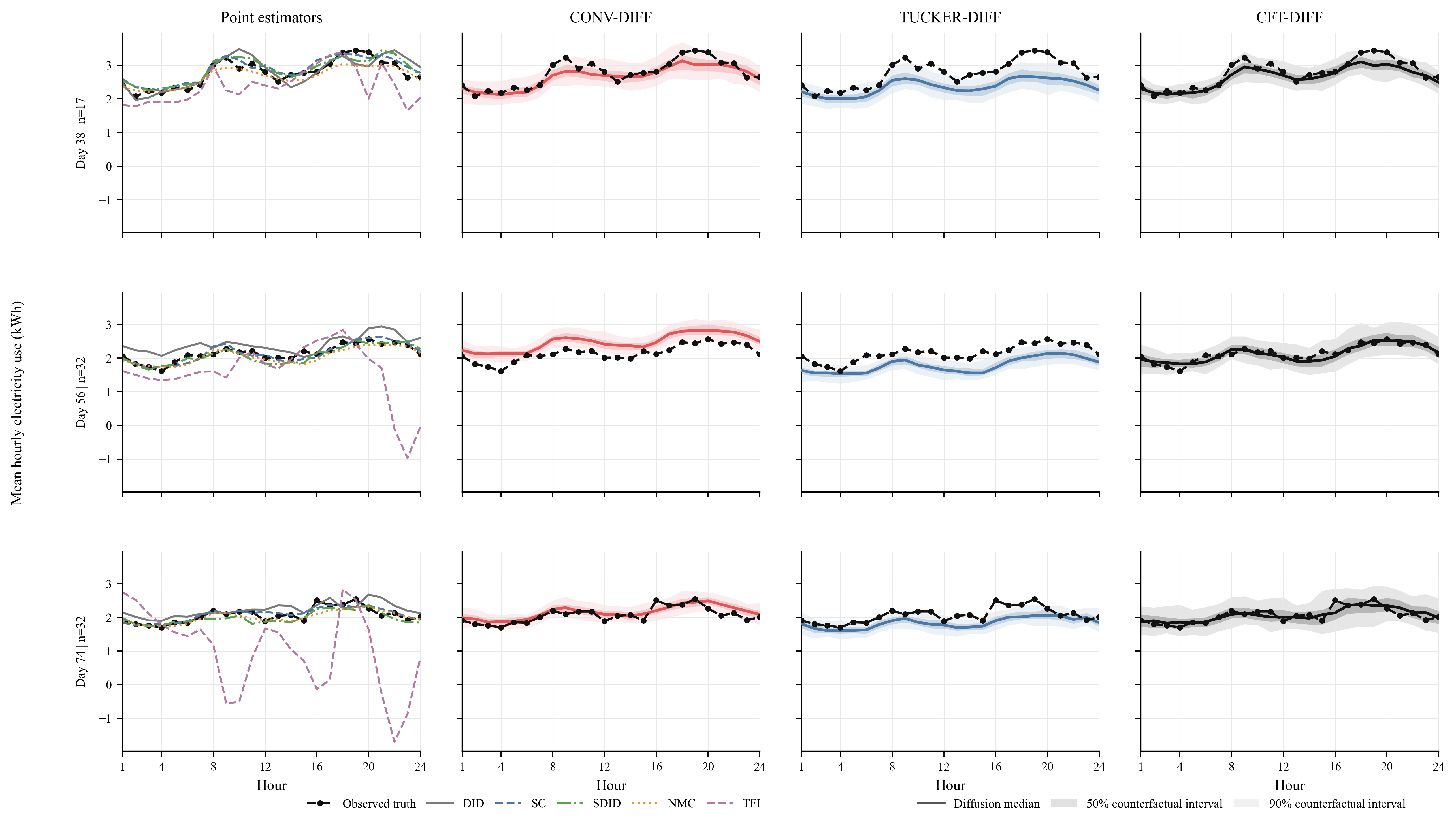}}	
	{Counterfactual trajectories across missing days under staggered adoption.
	\label{fig:trajectory_stagger_days}} 
	{The figure evaluates trajectory recovery on different artificially masked days under staggered adoption. Each row corresponds to a different missing day and reports the average trajectory for the selected households on that day. The first column reports the observed trajectory and the five point estimators; the remaining columns report \ConvDiff, \TuckerDiff, and \CFTDiff, respectively. For each diffusion method, the solid line denotes the median generated trajectory and the shaded regions denote the 50\% and 90\% central counterfactual prediction intervals.}
\end{figure}

\begin{figure}[htb]
	\FIGURE
	{\includegraphics[scale=0.52]{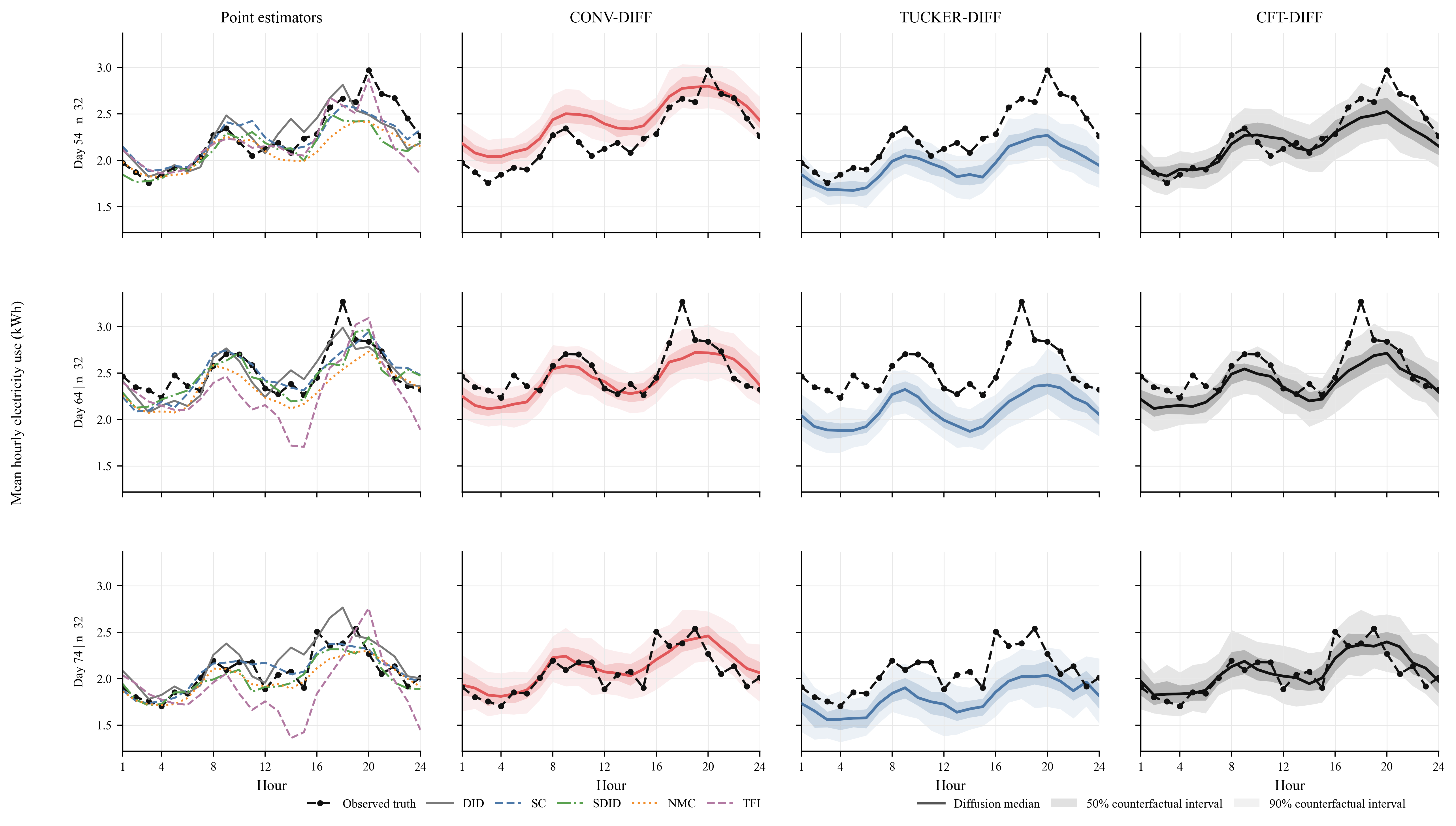}}	
	{Counterfactual trajectories across missing days under simultaneous adoption.
	\label{fig:trajectory_block_days}} 
	{The figure evaluates trajectory recovery on different artificially masked days under simultaneous adoption. Each row corresponds to a different missing day and reports the average trajectory for the selected households on that day. The first column reports the observed trajectory and the five point estimators; the remaining columns report \ConvDiff, \TuckerDiff, and \CFTDiff, respectively. For each diffusion method, the solid line denotes the median generated trajectory and the shaded regions denote the 50\% and 90\% central counterfactual prediction intervals.}
\end{figure}

Figures~\ref{fig:trajectory_switchback_days}, \ref{fig:trajectory_stagger_days}, and \ref{fig:trajectory_block_days} repeat the comparison across different artificially masked days. The selected days differ in both average electricity use and the shape of the intraday profile. Nevertheless, the same broad ranking remains visible. \CFTDiff closely follows changes in the level, timing, and magnitude of the observed demand peaks across days, while \ConvDiff and \TuckerDiff generally produce smoother trajectories and can understate relatively sharp movements. Several point estimators remain competitive for the average trajectory, although \TFI again exhibits substantial instability on some days. Overall, the trajectory evidence complements the scalar recovery measures in Section~\ref{sec:appendix_additional_recovery_results}: the advantage of \CFTDiff is not limited to average error reductions, but is also reflected in its ability to recover the shape of the missing 24-hour counterfactual trajectory across different aggregation levels, household groups, and time periods.

\subsection{Additional Results on Demand Response}\label{sec:appendix_treatment_results}

This section provides additional analyses of the demand responses reported in Section~\ref{sec:demand_response}. We examine hourly response heterogeneity, price intensity and profile design, repeated intervention exposure, and heterogeneity across temperature, regions, and household characteristics.

\begin{figure}[htb]
	\FIGURE
	{\includegraphics[scale=0.44]{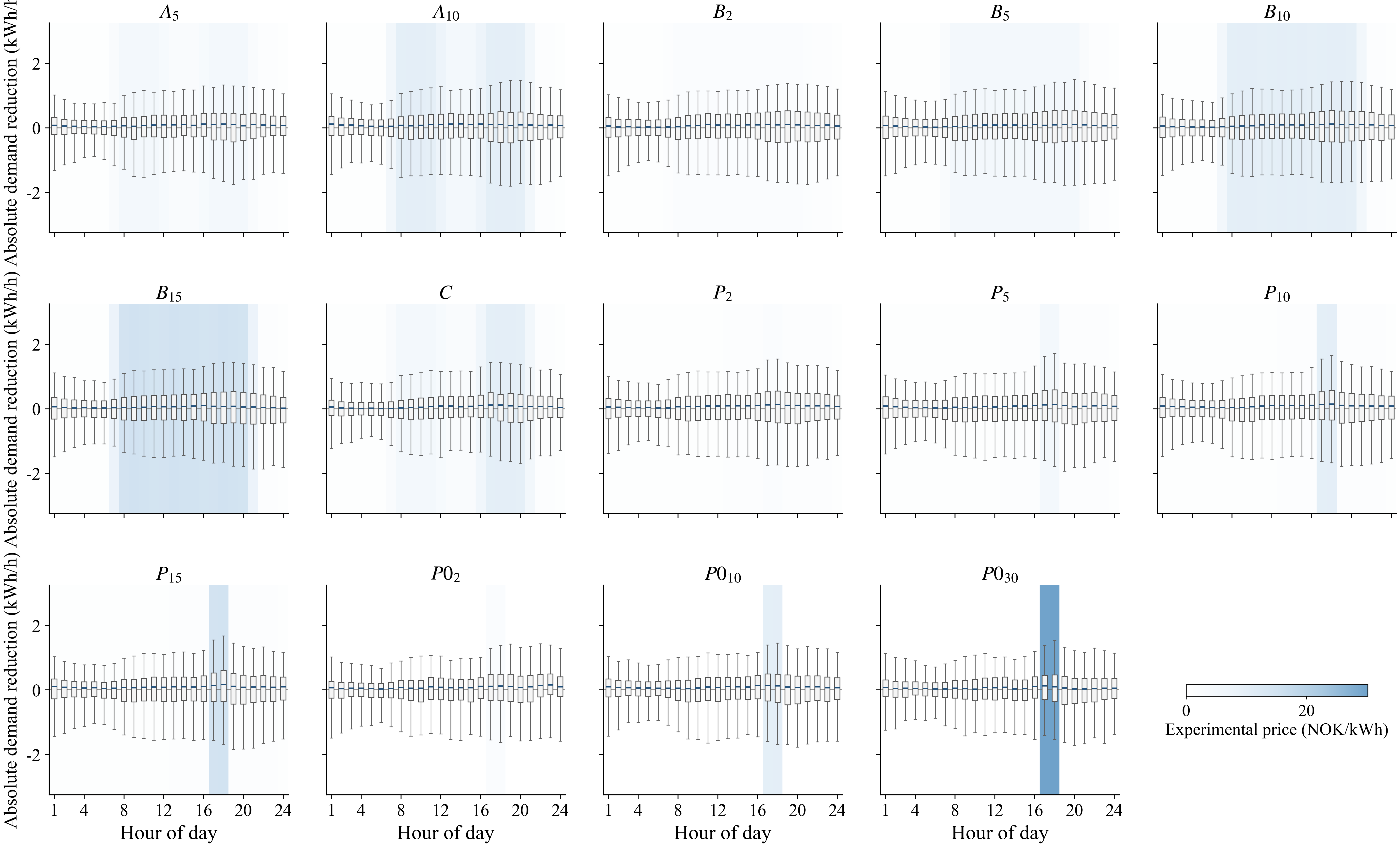}}	
	{Distribution of hourly demand reductions by price signal.
	\label{fig:causal_hourly_effect_distribution}} 
    {Each boxplot reports the distribution of hourly demand reductions across household--policy-day observations. The no-policy counterfactual is first averaged over 100 \CFTDiff conditional draws, and the demand reduction is defined as counterfactual demand minus observed demand, so positive values indicate lower demand under the assigned price signal. Boxes denote the interquartile range, the center line denotes the median, and whiskers extend from the 5th to the 95th percentile. Individual outliers are omitted. Background shading indicates the experimental electricity price.}
\end{figure}

Aggregate effects can conceal substantial variation across treated observations. Figure~\ref{fig:causal_hourly_effect_distribution} therefore reports the distribution of hourly demand reductions across household--policy-day observations, using the mean of 100 \CFTDiff conditional draws for each observation. The medians are generally close to zero, while the distributions remain wide at many hours, indicating considerable variation across households and intervention days. This variation is especially visible during the daytime and evening periods, when electricity use is relatively high. The aggregate estimates therefore do not characterize all treated observations uniformly: under the same price signal, some household--day observations exhibit sizable demand reductions, whereas others show little change or an increase at particular hours.

\begin{figure}[htb]
	\FIGURE
	{\includegraphics[scale=0.45]{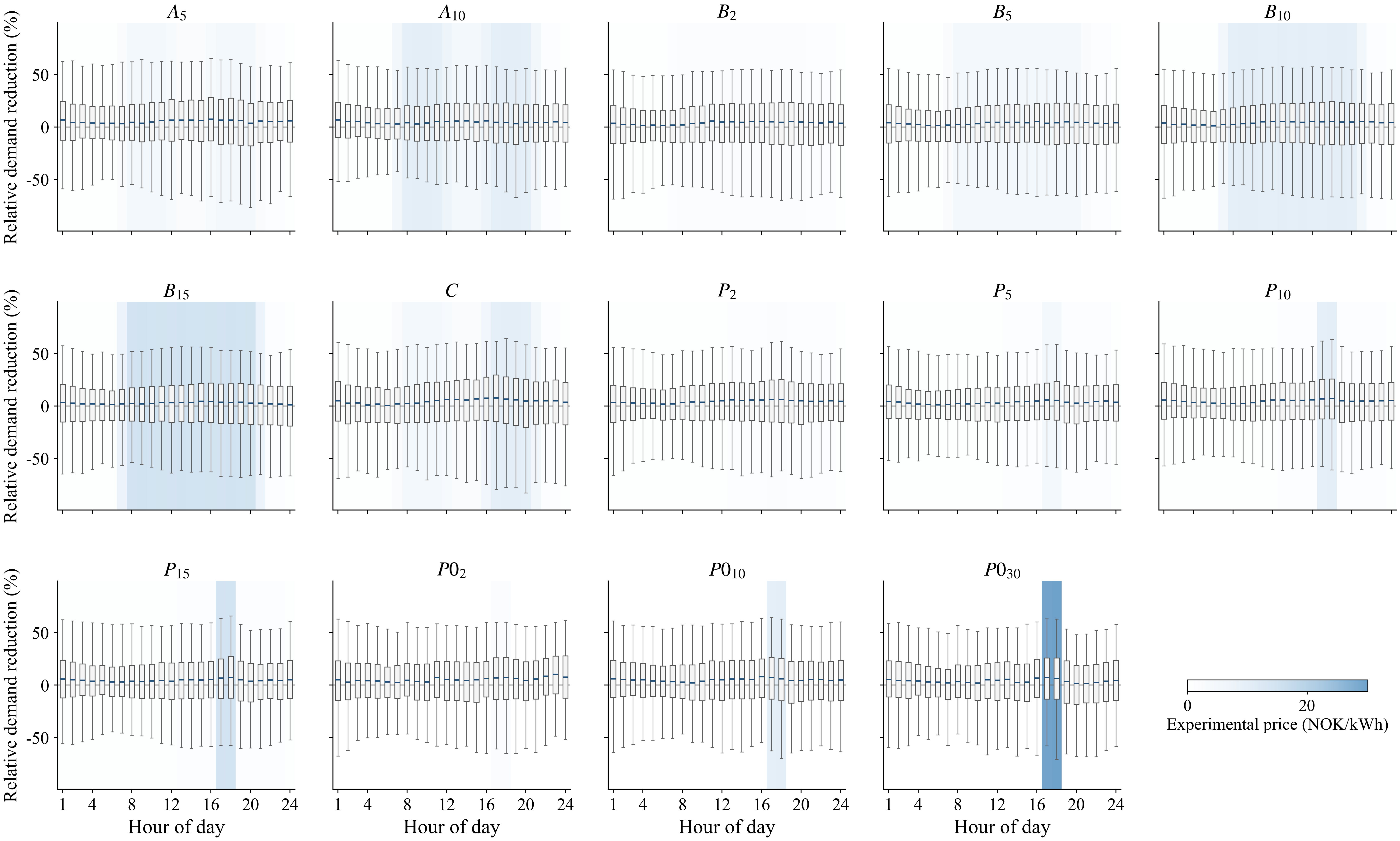}}	
	{Distribution of hourly relative demand reductions by price signal.
	\label{fig:appendix_hourly_relative_effect_distribution}} 
	{Each boxplot reports the distribution of hourly relative demand reductions across household--policy-day observations. For each observation, the no-policy counterfactual is first averaged over the 100 \CFTDiff draws. Positive values indicate lower observed demand relative to the estimated no-policy counterfactual. Boxes denote the interquartile range, the center line denotes the median, and whiskers extend from the 5th to the 95th percentile. Individual outliers are omitted. Background shading indicates the experimental electricity price.}
\end{figure}

Figure~\ref{fig:appendix_hourly_relative_effect_distribution} reports the corresponding distributions of relative demand reductions. The results are qualitatively similar to the absolute-effect distributions in Figure~\ref{fig:appendix_hourly_relative_effect_distribution}. The medians remain generally close to zero, while the distributions are wide across households and intervention days, indicating substantial heterogeneity in hourly demand responses. Thus, expressing the effects relative to the estimated no-policy counterfactual does not materially change the main patterns documented in Section~\ref{sec:demand_response}.

\begin{figure}[htb]
	\FIGURE
	{\includegraphics[scale=0.45]{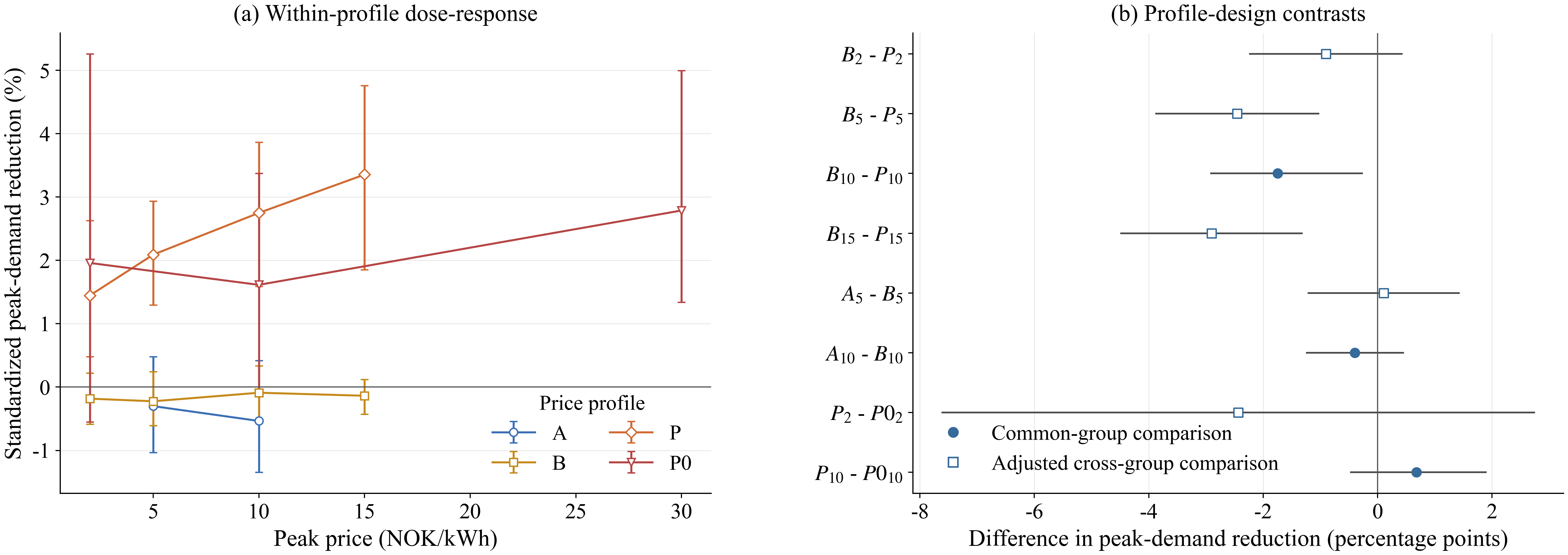}}	
	{Demand response across price intensity and profile design.
	\label{fig:appendix_price_intensity_design}} 
	{Panel (a) reports peak-period relative demand reductions across peak-price levels within each price profile, using experimental groups observed under all price levels within the corresponding profile. Profile C is omitted because it does not have a single peak-price level. Panel (b) reports pairwise differences in peak-period relative demand reductions across price-signal designs. Filled circles denote common-group comparisons, and open squares denote covariate-adjusted cross-group comparisons. Error bars report 95\% intervals. In Panel (b), positive values indicate a larger demand reduction for the first price signal in each comparison.}
\end{figure}

Figure~\ref{fig:appendix_price_intensity_design} further examines whether the demand response varies with price intensity and profile design. Panel~(a) shows that the peak-demand reduction increases with the peak price for profile $P$. The estimates for profile $B$ remain close to zero, while the responses for profiles $A$ and $P0$ are not monotone and several estimates remain imprecise. Panel~(b) shows larger estimated reductions for the $P$ profile than for the B profile at peak prices of 5, 10, and 15 NOK/kWh, with the corresponding 95\% intervals excluding zero. Most other profile-design comparisons remain imprecise, with intervals including zero. These results indicate that the relationship between price intensity and demand response depends on the price profile, with the clearest dose--response pattern occurring for profile $P$.

\begin{figure}[htb]
	\FIGURE
	{\includegraphics[scale=0.55]{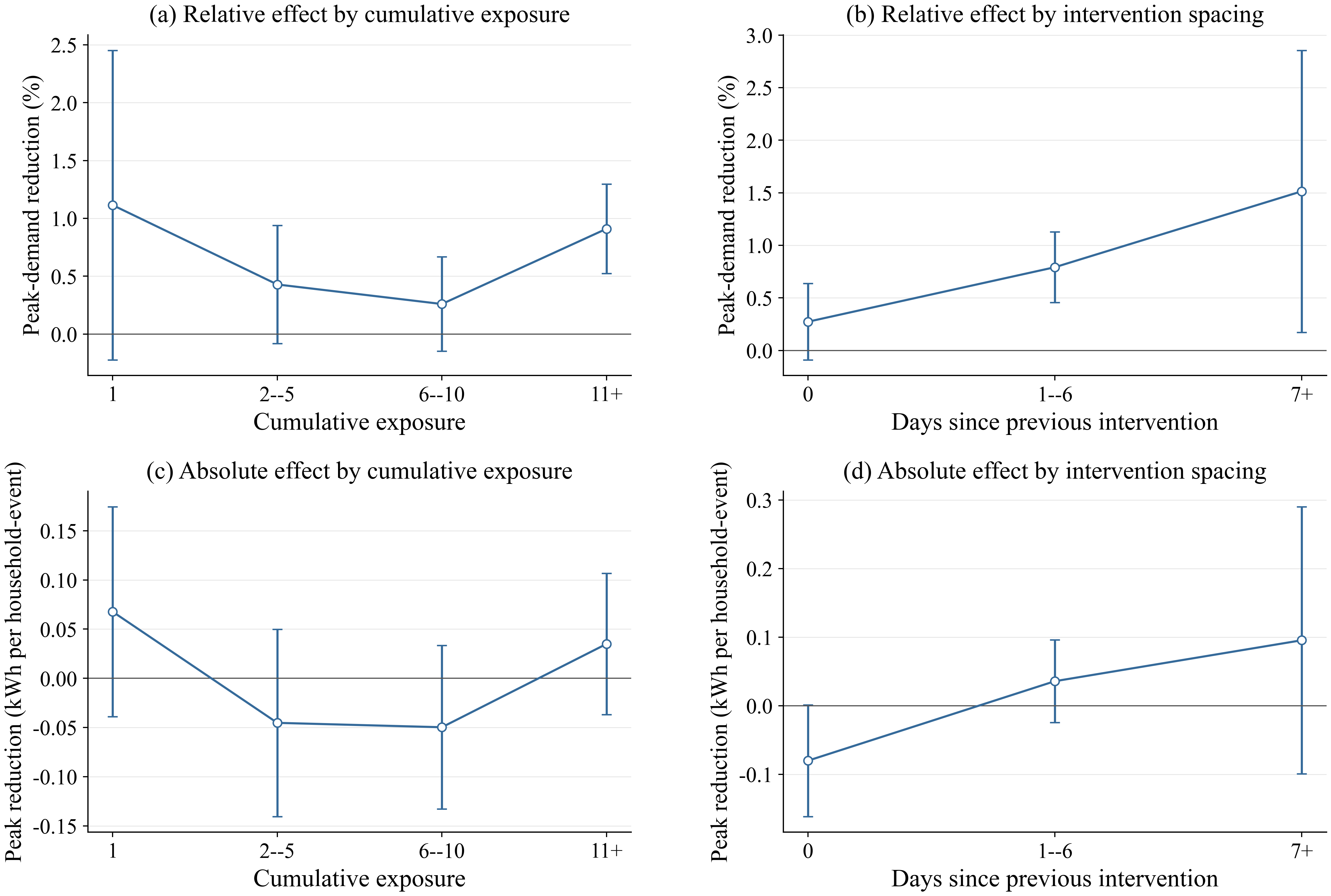}}	
	{Demand response by cumulative exposure and intervention spacing.
	\label{fig:appendix_repeated_exposure}} 
	{Panels (a) and (c) report relative and absolute peak-demand reductions by cumulative intervention exposure, grouped into 1, 2--5, 6--10, and 11 or more prior and current intervention events. Panels (b) and (d) report the corresponding effects by the number of days since the previous intervention, grouped into 0, 1--6, and 7 or more days. Estimates adjust for price signal, region, and baseline demand, as well as temperature when available; the intervention-spacing specifications additionally control for intervention order. Error bars report 95\% intervals. Positive values indicate demand reductions.}
\end{figure}

Figure~\ref{fig:appendix_repeated_exposure} next examines whether the demand response changes with repeated intervention exposure. Panels~(a) and~(c) show no monotone relationship between the response and cumulative exposure. The estimates decline after the first intervention, remain similar for the 2--5 and 6--10 exposure groups, and then increase for the 11+ group. Panels~(b) and~(d) show a clearer pattern for intervention spacing: the estimated peak-demand reduction increases with the time since the previous intervention and is largest for the 7+ day group, although the intervals remain wide. Taken together, these results provide limited evidence of a systematic learning or fatigue pattern across repeated interventions, but suggest a stronger response when interventions are spaced farther apart.

The preceding analysis shows that demand response varies with the design and timing of the intervention. We next examine whether it also varies with outdoor temperature, an important source of variation in residential electricity demand. We classify the previous 24-hour average outdoor temperature into six intervals: $\leq -15^\circ$C, $(-15,-10]^\circ$C, $(-10,-5]^\circ$C, $(-5,0]^\circ$C, $(0,5]^\circ$C, and $>5^\circ$C. We first estimate the average peak-demand response within each temperature interval, controlling for price signal, region, baseline demand, and intervention order. We then examine whether the temperature relationship differs across price-profile designs. For this analysis, the price signals are grouped into three profile classes: short profiles ($P$ and $P0$), extended profiles ($B$), and dual-peak profiles ($A$ and $C$). Panels (b) and (d) report the corresponding temperature-response relationships for these three profile classes, controlling for region, baseline demand, and intervention order. The analysis is conducted at the region--intervention-event level, with observations weighted by the number of households in each event.

\begin{figure}[htb]
	\FIGURE
	{\includegraphics[scale=0.52]{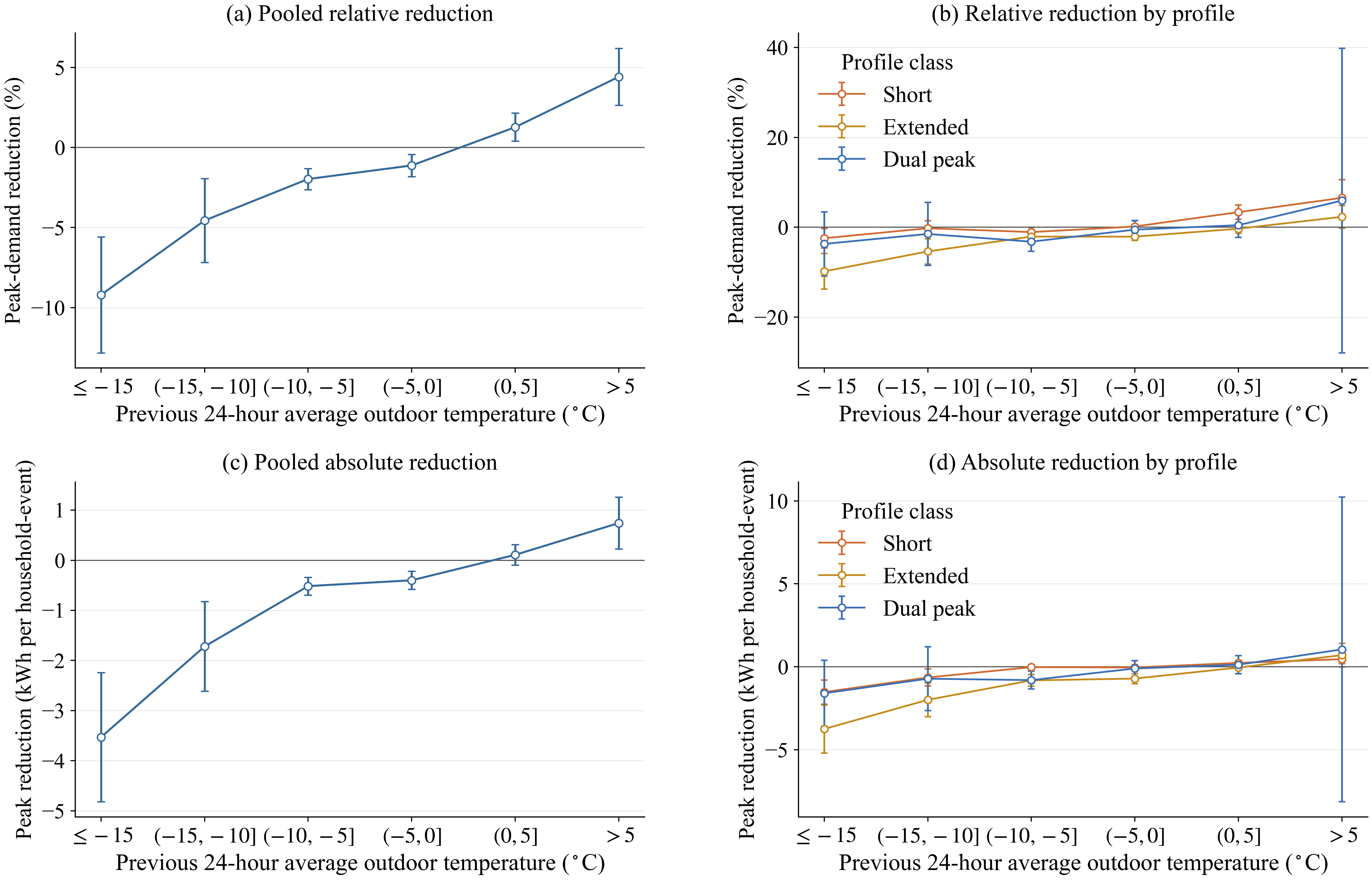}}	
	{Demand response by outdoor temperature.
	\label{fig:appendix_temperature_heterogeneity}} 
	{Panels (a) and (c) report pooled relative and absolute peak-demand reductions across outdoor-temperature bins. Panels (b) and (d) report the corresponding estimates separately for short, extended, and dual-peak price profiles. The pooled estimates adjust for price signal, region, baseline demand, and intervention order; the profile-specific estimates adjust for region, baseline demand, and intervention order. Error bars report 95\% intervals. Positive values indicate demand reductions.}
\end{figure}

Figure~\ref{fig:appendix_temperature_heterogeneity} shows a clear relationship between outdoor temperature and demand response. Panels (a) and (c) reveal a strong monotone pattern in the pooled estimates: as outdoor temperature increases, both relative and absolute peak-demand reductions increase steadily. At temperatures below $-15^\circ$C, the estimated response is strongly negative, whereas it approaches zero around $0^\circ$C and becomes positive at higher temperatures. Panels (b) and (d) show that this pattern is not driven by a single price-profile design. The short, extended, and dual-peak profiles all exhibit generally increasing demand reductions as temperature rises, although the estimates are less precise for some temperature--profile combinations. Overall, the results indicate that households respond more strongly to dynamic pricing under warmer conditions, while peak-period electricity use is less responsive during very cold periods.

\begin{figure}[htb]
	\FIGURE
	{\includegraphics[scale=0.55]{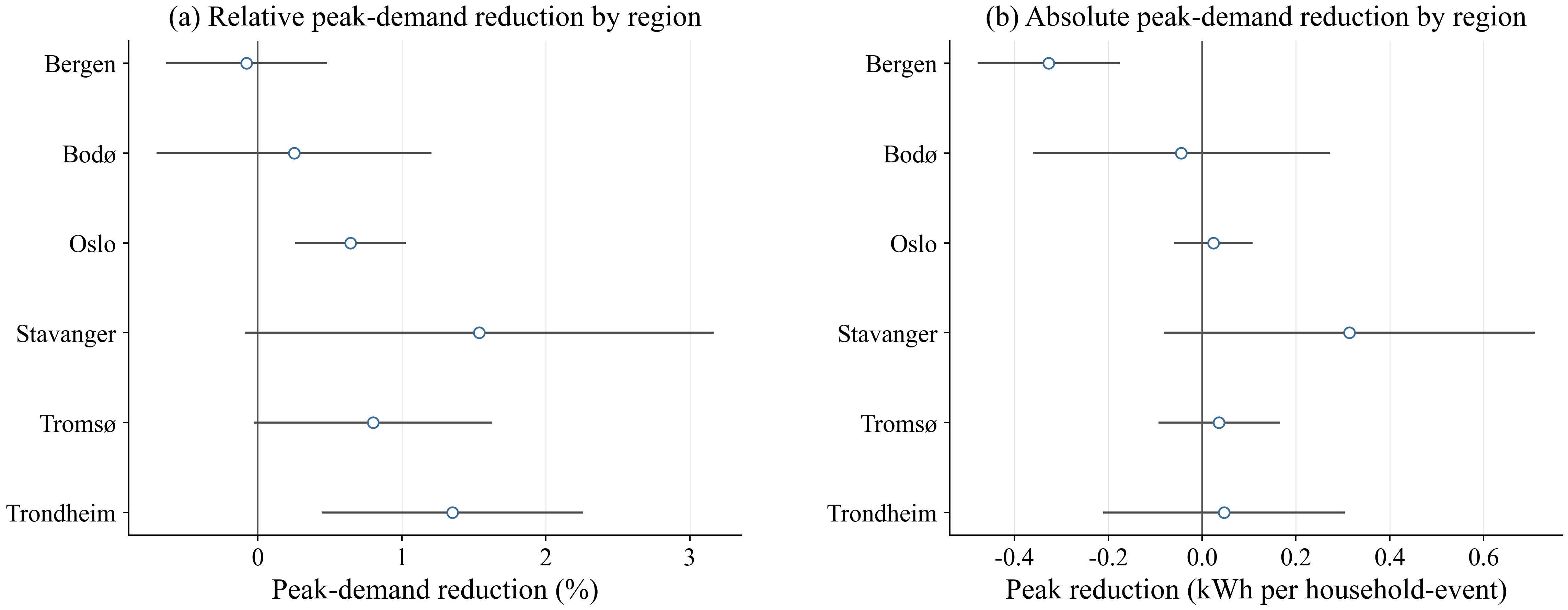}}	
	{Regional heterogeneity in peak-demand reductions.
	\label{fig:appendix_region_heterogeneity}} 
	{Panel (a) reports relative peak-demand reductions by region, and Panel (b) reports the corresponding absolute reductions. Estimates adjust for price signal, baseline demand, intervention order, and temperature when available. Error bars report 95\% intervals. Positive values indicate demand reductions.}
\end{figure}

Given the strong temperature pattern in Figure~Figure~\ref{fig:appendix_temperature_heterogeneity}, we next examine whether regional differences remain after controlling for temperature. Figure~\ref{fig:appendix_region_heterogeneity} shows that some regional heterogeneity persists. Bergen has a relative-effect estimate close to zero but a negative absolute-effect estimate whose confidence interval excludes zero. Relative peak-demand reductions are positive in the other regions and are largest in Stavanger and Trondheim, although the estimate for Stavanger is imprecise. The corresponding intervals exclude zero for Oslo and Trondheim. In absolute terms, the estimates for the other regions are close to zero or positive, with relatively wide confidence intervals. The estimates therefore do not indicate a simple geographic pattern. Because the specifications already control for outdoor temperature, these differences are unlikely to reflect temperature alone. They may instead capture remaining regional differences in household characteristics, housing conditions, heating technologies, or electricity-use patterns that are not separately identified in our data. We therefore interpret the results as evidence of residual regional heterogeneity rather than as causal effects of location.

\begin{figure}[htb]
	\FIGURE
	{\includegraphics[scale=0.48]{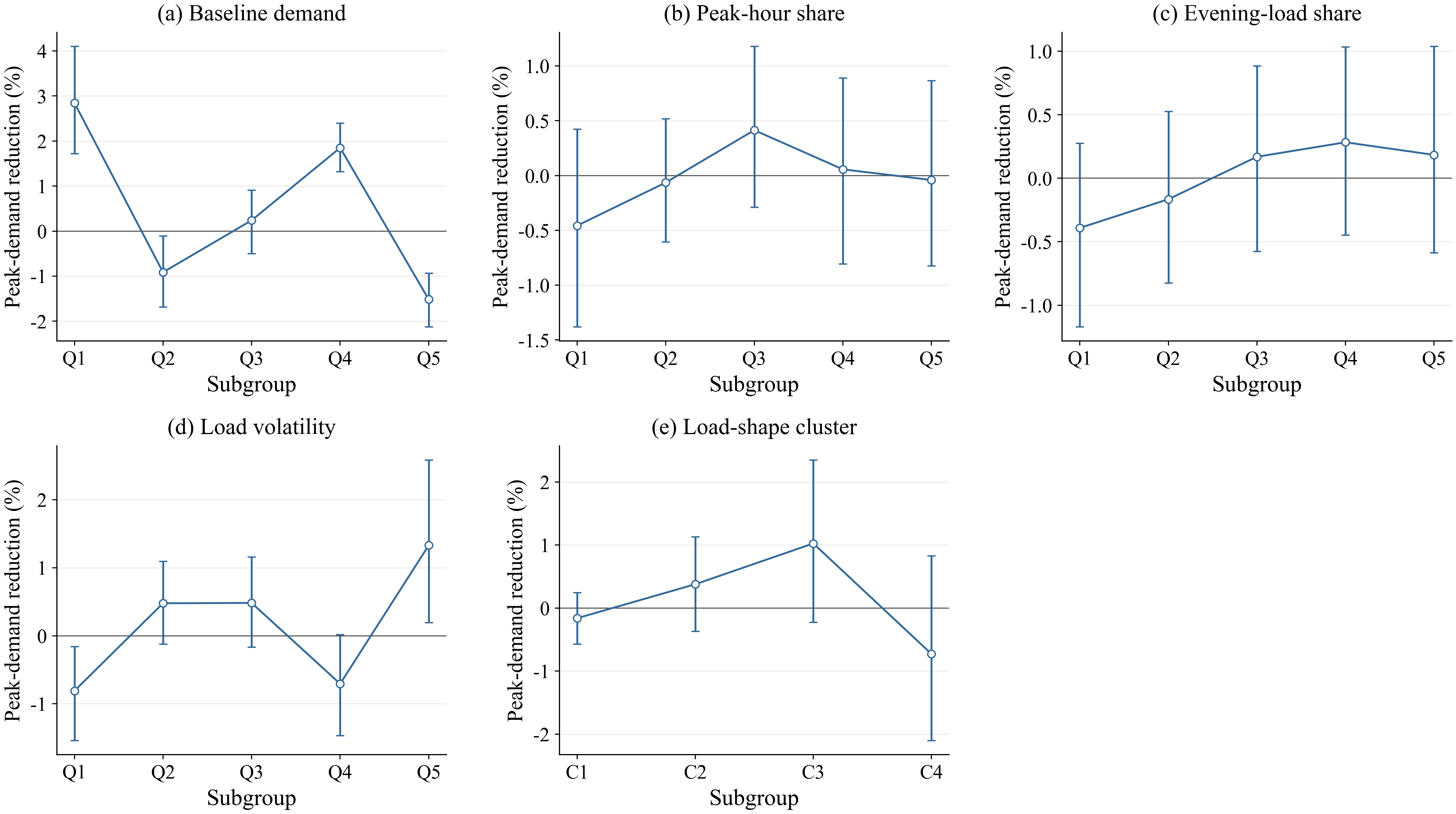}}	
	{Household heterogeneity in peak-demand reductions.
	\label{fig:appendix_household_heterogeneity}} 
	{Panels (a)--(e) report relative peak-demand reductions across subgroups defined by pre-intervention baseline demand, peak-hour load share, evening-load share, load volatility, and load shape, respectively. Household characteristics are constructed using electricity use observed before each household's first intervention. Q1--Q5 denote quintiles, and C1--C4 denote clusters of normalized 24-hour pre-intervention load profiles. Error bars report 95\% intervals obtained from a household bootstrap. Positive values indicate demand reductions.}
\end{figure}

We finally examine heterogeneity across household electricity-use characteristics measured before the first intervention. For each household, we construct four pre-intervention characteristics: average electricity demand, the share of electricity use during peak hours, the share during evening hours, and load volatility. Households are ranked separately by each characteristic and divided into five equally sized groups, with Q1 denoting the lowest quintile and Q5 the highest. We also construct each household's average 24-hour pre-intervention load profile and normalize it by total daily use, so that the resulting profile captures the timing rather than the level of electricity consumption. We then apply $K$-means clustering to these normalized profiles and classify households into four load-shape clusters, C1--C4. The cluster labels are ordered by the center of the daily load profile, from earlier to later concentration of electricity use. These classifications are constructed using only electricity use observed before treatment and therefore do not depend on the estimated treatment effects.

Figure~\ref{fig:appendix_household_heterogeneity} shows substantial heterogeneity in peak-demand response across these household groups, although the patterns are generally nonmonotone. Panel~(a) exhibits the largest differences across baseline-demand quintiles: households in Q1 and Q4 show positive peak-demand reductions, whereas Q2 and Q5 show negative responses and Q3 is close to zero. The differences across peak-hour and evening-load shares in Panels~(b) and~(c) are more modest, and all corresponding confidence intervals include zero. Panel~(d) also shows a nonmonotone pattern across load volatility, with Q5 exhibiting the largest positive response. Panel~(e) shows variation across intraday load-shape clusters, with the highest point estimate for C3 and the lowest for C4, although the intervals are wide and overlap substantially. Overall, the results indicate meaningful household heterogeneity in demand response, but no single pre-intervention characteristic provides a simple ordering of households by their response to dynamic pricing.

\begin{table}[htbp]
\TABLE
{Detailed estimates of household heterogeneity in peak-demand reductions.
\label{tab:appendix_household_heterogeneity}}
{\begin{tabular}{llrrrr}
\hline
Characteristic & Subgroup & Households & Relative \% & 95\% CI & Absolute kWh \\
\hline
Baseline demand quintile & 1 & 621 & 2.84 & [1.72, 4.10] & 0.168 \\
Baseline demand quintile & 2 & 620 & -0.91 & [-1.68, -0.10] & -0.133 \\
Baseline demand quintile & 3 & 621 & 0.24 & [-0.50, 0.91] & 0.051 \\
Baseline demand quintile & 4 & 620 & 1.85 & [1.32, 2.40] & 0.581 \\
Baseline demand quintile & 5 & 621 & -1.52 & [-2.13, -0.94] & -0.676 \\
Peak-hour share quintile & 1 & 621 & -0.46 & [-1.38, 0.42] & -0.103 \\
Peak-hour share quintile & 2 & 620 & -0.06 & [-0.60, 0.52] & -0.017 \\
Peak-hour share quintile & 3 & 621 & 0.41 & [-0.29, 1.18] & 0.108 \\
Peak-hour share quintile & 4 & 620 & 0.06 & [-0.80, 0.89] & 0.013 \\
Peak-hour share quintile & 5 & 621 & -0.04 & [-0.82, 0.87] & -0.008 \\
Evening-load share quintile & 1 & 621 & -0.39 & [-1.17, 0.27] & -0.100 \\
Evening-load share quintile & 2 & 620 & -0.17 & [-0.83, 0.53] & -0.045 \\
Evening-load share quintile & 3 & 621 & 0.17 & [-0.58, 0.88] & 0.046 \\
Evening-load share quintile & 4 & 620 & 0.28 & [-0.45, 1.04] & 0.066 \\
Evening-load share quintile & 5 & 621 & 0.18 & [-0.59, 1.04] & 0.026 \\
Load-volatility quintile & 1 & 621 & -0.81 & [-1.54, -0.16] & -0.258 \\
Load-volatility quintile & 2 & 620 & 0.48 & [-0.12, 1.10] & 0.142 \\
Load-volatility quintile & 3 & 621 & 0.48 & [-0.17, 1.16] & 0.121 \\
Load-volatility quintile & 4 & 620 & -0.71 & [-1.47, 0.02] & -0.147 \\
Load-volatility quintile & 5 & 621 & 1.33 & [0.19, 2.58] & 0.133 \\
Load-shape cluster & 1 & 1893 & -0.16 & [-0.57, 0.24] & -0.043 \\
Load-shape cluster & 2 & 521 & 0.38 & [-0.37, 1.13] & 0.097 \\
Load-shape cluster & 3 & 392 & 1.02 & [-0.23, 2.35] & 0.140 \\
Load-shape cluster & 4 & 297 & -0.73 & [-2.10, 0.83] & -0.098 \\
\hline
\end{tabular}}
{Subgroups are constructed from electricity use observed before each household's first intervention. Relative effects are pooled ratio-of-sums estimates. Absolute effects are kWh reductions per household--policy day over the corresponding signal-specific peak-price hours. Confidence intervals are obtained from a household bootstrap. Positive values indicate demand reductions.}
\end{table}

Table~\ref{tab:appendix_household_heterogeneity} provides the corresponding estimates for the household groups in Figure~\ref{fig:appendix_household_heterogeneity}. As described above, the four continuous pre-intervention characteristics are divided into quintiles across treated households, while normalized 24-hour load profiles are classified into four load-shape clusters. For each household, we first average the 100 \CFTDiff draws to obtain the estimated no-policy counterfactual and then aggregate the counterfactual and observed demand over all of that household's intervention days and the corresponding signal-specific peak-price hours. For a subgroup $\mathcal{G}$, let $Y_i^{0}$ and $Y_i^{\mathrm{obs}}$ denote these aggregated counterfactual and observed outcomes for household $i$, and let $n_i$ denote its number of intervention days. We report the pooled relative peak-demand reduction
\[
\widehat{R}_{\mathcal{G}}=100\,\frac{\sum_{i\in\mathcal{G}}\left(Y_i^{0}-Y_i^{\mathrm{obs}}\right)}{\sum_{i\in\mathcal{G}}Y_i^{0}},
\]
and the corresponding absolute reduction per household--policy day,
\[
\widehat{A}_{\mathcal{G}}=\frac{\sum_{i\in\mathcal{G}}\left(Y_i^{0}-Y_i^{\mathrm{obs}}\right)}{\sum_{i\in\mathcal{G}}n_i}.
\]
Positive values therefore indicate lower observed demand relative to the estimated no-policy counterfactual. To quantify sampling uncertainty, we resample households with replacement within each subgroup, keeping each sampled household's complete intervention history together, and recompute $\widehat{R}_{\mathcal{G}}$ in each of 1,000 bootstrap samples. The reported 95\% confidence interval is given by the 2.5th and 97.5th percentiles of these bootstrap estimates.

The estimates confirm substantial and nonmonotone household heterogeneity. The clearest differences occur across baseline-demand quintiles. Q1 has the largest positive relative reduction, at 2.84\%, with a 95\% confidence interval of [1.72, 4.10], while Q4 also has a positive reduction of 1.85\%. By contrast, Q2 and Q5 have negative estimates, and the estimate for Q3 is close to zero. The differences across peak-hour and evening-load shares are more modest, with all corresponding confidence intervals including zero. Load volatility exhibits a nonmonotone pattern: Q1 has a negative response, whereas Q5 has a positive response, and both intervals exclude zero. The estimates also vary across load-shape clusters, although their confidence intervals are wide and include zero. The absolute estimates show that relative and absolute responses need not rank households in the same way. For example, Q1 of baseline demand has the largest relative reduction, whereas Q4 has the largest absolute reduction, at 0.581 kWh per household--policy day. Overall, the table reinforces the conclusion that household response to dynamic pricing varies with pre-intervention electricity-use patterns, but this heterogeneity cannot be summarized by a simple monotone relationship with any single household characteristic.



\begingroup
\def\bibfont{\fontsize{10pt}{12pt}\selectfont}
\def\bibsep{0pt}
\bibliographystyleEC{informs2014}
\bibliographyEC{ref}
\endgroup

\end{document}